\begin{document}

\title{Adaptive and Robust Multi-Task Learning}\blfootnote{Author names are sorted alphabetically.}

\author{Yaqi Duan\thanks{Leonard N. Stern School of Business, New York University. Email: \texttt{yaqi.duan@stern.nyu.edu}.}
	\and Kaizheng Wang\thanks{Department of IEOR and Data Science Institute, Columbia University. Email: \texttt{kaizheng.wang@columbia.edu}.}
}

\date{This version: September 2023}

\maketitle

\begin{abstract}
We study the multi-task learning problem that aims to simultaneously analyze multiple datasets collected from different sources and learn one model for each of them. We propose a family of adaptive methods that automatically utilize possible similarities among those tasks while carefully handling their differences. We derive sharp statistical guarantees for the methods and prove their robustness against outlier tasks. Numerical experiments on synthetic and real datasets demonstrate the efficacy of our new methods.
\end{abstract}

\noindent{\bf Keywords:} multi-task learning, adaptivity, robustness, model misspecification, clustering, low-rank model.

\section{Introduction}\label{sec-intro}

Multi-task learning (MTL) solves a number of learning tasks simultaneously. It has become increasingly popular in modern applications with data generated by multiple sources.
When the tasks share certain common structures, a properly chosen MTL algorithm can leverage that to improve the performance. 
However, task relatedness is usually unknown and hard to quantify in practice; heterogeneity can even make multi-task approaches perform worse than independent task learning, which trains models separately on their individual datasets. In this paper, we study MTL from a statistical perspective and develop a family of reliable approaches that adapt to the unknown task relatedness and are robust against outlier tasks with possibly contaminated data.

To set the stage, let $m \geq 1$ be the number of tasks and $\{ \cX_j \}_{j=1}^m$ be sample spaces. For every $j \in [m]$, let $\cP_j$ be a probability distribution over $\cX_j$, $\cD_j = \{ \bxi_{ji} \}_{i=1}^{n_j}$ be samples drawn from $\cP_j$, and $\ell_j :~ \RR^d \times \cX_j \to \RR$ be a loss function. The $j$-th task is to estimate the population loss minimizer
\[
\btheta^{\star}_j \in \argmin_{\btheta \in \RR^d} \EE_{\bxi \sim \cP_j} \ell_j ( \btheta , \bxi )
\]
from the data. For instance, in multi-task linear regression, each sample $\bxi_{ji}$ can be written as $(\bx_{ji},y_{ji})$, where $\bx_{ji} \in \RR^d$ is a covariate vector and $y_{ji}$ is a response. The loss function is $\ell_j ( \btheta, (\bx, y) ) = ( \bx^{\top} \btheta - y )^2$.

Define the empirical loss function of the $j$-th task as $f_j (\btheta) = \frac{1}{n_j} \sum_{ i = 1 }^{n_j} \ell_j ( \btheta , \bxi_{ji} )$. 
Many MTL methods \citep{Car97} are formulated as constrained minimization problems of the form
\begin{align}
\min_{ \bTheta \in \Omega } \bigg\{ \sum_{ j = 1 }^m w_j f_j ( \btheta_j ) \bigg\},
\label{eqn-intro-mtl}
\end{align}
where $\bTheta = (\btheta_1,\cdots,\btheta_m) \in \RR^{d\times m}$, $\{ w_j \}_{j=1}^m$ are weight parameters (e.g., $w_j = n_j$), and $\Omega \subseteq \RR^{d\times m}$ encodes the prior knowledge of task relatedness. Independent task learning corresponds to $\Omega = \RR^{d\times m}$. Setting $\Omega = \{ \bbeta \bm{1}_m^{\top}:~\bbeta \in \RR^d \}$ yields the data pooling strategy, where we simply merge all datasets to train a single model. It is also easy to construct parameter spaces so that the learned parameter vectors share part of their coordinates, cluster around a few points, lie in a low-dimensional subspace, etc.
In general, the hard constraint $\bTheta \in \Omega$ in \eqref{eqn-intro-mtl} is overly rigid. 
When $\Omega$ fails to reflect the task structures, the model misspecification may have a huge negative impact on the performance.

To resolve the aforementioned issue, we propose to solve an augmented program
\begin{align}
\min_{ \bTheta \in \RR^{d\times m},~\bGamma \in \Omega } \bigg\{ \sum_{ j = 1 }^m w_j [ f_j ( \btheta_j ) + \lambda_j \| \btheta_j - \bgamma_j \|_2 ] \bigg\},
\label{eqn-intro-armul}
\end{align}
obtain an optimal solution $(\widehat{\bTheta} , \widehat{\bGamma})$ and then use $\widehat{\bTheta}$ as the final estimate. Here $\{ \lambda_j \}_{j=1}^m$ are regularization parameters and $\bGamma = (\bgamma_1,\cdots,\bgamma_m) $. 
Each task receives its own estimate $\widehat{\btheta}_j$, while the penalty terms shrink $\widehat\bTheta$ toward a prototype $\widehat{\bGamma}$ in the prescribed model space $\Omega$ so as to promote relatedness among tasks.
Our framework \eqref{eqn-intro-armul} can deal with different levels of task relatedness if we properly tune the regularization parameters $\{ \lambda_j \}_{j=1}^m$.
When $\Omega$ nicely captures the underlying structure, we can pick sufficiently large $\{ \lambda_j \}_{j=1}^m$ so that the cusp of the $\ell_2$ penalty at zero enforces the strict equality $\widehat{\bTheta} = \widehat{\bGamma}$. The new procedure then reduces to the classical formulation \eqref{eqn-intro-mtl}. On the other hand, when $\Omega$ fails to reflect the structure, we take small $\lambda_j$'s to guarantee each $\widehat{\btheta}_j$'s fidelity to its associated data. Observe that
\[
\widehat{\btheta}_j \in \argmin\nolimits_{\btheta \in \RR^d} \{ f_j(\btheta) + \lambda_j \| \btheta - \widehat\bgamma_j \|_2 \}, \qquad\forall j \in [m].
\]
In words, $\widehat{\btheta}_j$ minimizes a perturbed version of the loss function $f_j$ associated to the $j$-th task. When $\lambda_j$ is not too large, the perturbation has limited influence and $\widehat{\btheta}_j$ stays close to the output of independent task learning $ \argmin_{\btheta \in \RR^d} f_j(\btheta) $.
This provides a safenet in case $\Omega$ is significantly misspecified.
We see that strong regularization helps utilize task relatedness if that exists, while weak regularization better deals with heterogeneity. 

Interestingly, there is a simple choice of $\{ \lambda_j \}_{j=1}^m$ that provides the best of both worlds, regardless of whether the prescribed model space $\Omega$ captures the underlying structure or not.
Roughly speaking, when $n_1 = \cdots = n_j = n$, our theory suggests choosing $w_j = 1$ and $\lambda_j = c \sqrt{ \frac{ d }{n} }$ for some constant $c$; when $\{ n_j \}_{j=1}^m$ are different, our general results recommend $w_j = n_j$ and $\lambda_j = c \sqrt{ \frac{ d }{n_j} }$. In both cases, the factor $c$ is shared by all of the $m$ tasks. The estimator has a single tuning parameter rather than $m$ different ones, which is practically appealing.
Thanks to the \emph{unsquared} $\ell_2$ penalties in \eqref{eqn-intro-armul}, the procedure automatically enforces an appropriate degree of relatedness among the learned models.

Moreover, the method can tolerate a reasonable fraction of exceptional tasks that are dissimilar to others or even have their data contaminated. Given the above merits, we name the framework as \emph{\underline{A}daptive and \underline{R}obust \underline{MU}lti-task \underline{L}earning}, or ARMUL for short. 

\paragraph*{Main contributions}
Our contributions are two-fold. \vspace{-0.6em}
\begin{itemize} \itemsep = -0.1em
\item (Methodology) We introduce a flexible framework for multi-task learning. It works as a wrapper around any MTL method of the form \eqref{eqn-intro-mtl}, enhancing its ability to handle heterogeneous tasks. 
\item (Theory) We establish sharp guarantees for the framework on its adaptivity and robustness. Our analysis provides one customized statistical error bound for every single task.
\end{itemize}

\paragraph*{Related work} 
Our work relates to a vast literature on integrative data analysis \citep{Len02}. A classical example is simultaneous estimation of multiple Gaussian means. 
The specification of our method in this scenario is related to various shrinkage estimates \citep{JSt61,EMo72,DJH92}.
See \Cref{sec-warmup} for more discussions.
An extension of multi-task mean estimation is linear regression with multiple responses \citep{BFr97}, which is a special form of multi-task linear regression with shared covariates.
\cite{COS15} studied Stein-type shrinkage estimates for multi-task linear regression with Gaussian data.
\cite{NWa08}, \cite{OWJ11}, \cite{LPV11}, \cite{TSo16}, \cite{AOC19}, \cite{MSB19}, \cite{XBa21} and \cite{CLX21} investigated high-dimensional (generalized) linear MTL where the tasks have similar sparsity patterns.
There are also MTL approaches proposed to enforce other types of model similarities such as clustering structures \citep{EPo04,EMP05,JBV08,MPS16}, low-rank structures \citep{AZh05,AEP08,KDa12}, among others.
The above list is far from being exhaustive.

Our study is largely motivated by the great empirical success in MTL with parameter augmentation \citep{EPo04,JSR10,CZY11}. Our idea of non-smooth regularization originates from the seminal works by \cite{DJH92} and \cite{DJo94} on adaptive sparse estimation.
Beyond the coordinate-wise sparsity of vectors, recent studies have developed the sum of $\ell_2$ penalties to promote column-wise sparsity in matrix estimation problems such as
robust PCA \citep{LLY10,MTr11,XCS12} and robust low-rank MTL \citep{PTJ10,CZY11}. 
Our design of the penalty is closely related to theirs.
The ARMUL penalty also looks similar to the group lasso penalty $\sum_{ \ell = 1 }^d ( \sum_{ j = 1 }^m |\theta_{j \ell}|^2 )^{1/2}$ for variable selection in sparse MTL \citep{LPV11}. 
While the group lasso sums up the norms of rows (variables), ours does that to the columns (tasks).

Below we provide a selective overview of existing theories that are closely connected to our analysis of adaptivity and robustness.
\cite{WZR20} and \cite{MFA20} analyzed the impact of task relatedness on linear models and one-hidden-layer neural networks when there are two tasks.
\cite{BKT19,DCG19} studied online MTL and showed the benefit of task relatedness.
\cite{KFA20} investigated multi-task PAC learning with adversarial corruptions. They assumed homogeneous tasks and focused on robustness against different types of adversaries.
\cite{HKp20} studied the adaptation in nonparametric MTL under the Bernstein class condition. 
\cite{DHK20} and \cite{TJJ20} considered representation learning from multiple datasets when the true statistical models share common latent structures. 
In the agnostic learning framework, \cite{Bax00} and \cite{MPR16} presented generalization bounds on the average risk across tasks, and \cite{BDS03} studied task-specific error bounds.


\paragraph*{Outline} The rest of the paper is organized as follows. \Cref{sec-warmup} studies multi-task Gaussian mean estimation as a warm-up example.
\Cref{sec-methodologies} presents the methodology. \Cref{sec-theory} conducts a sharp analysis of adaptivity and robustness. \Cref{sec-numerical} verifies the theories and tests the methodology through numerical experiments.
Finally, \Cref{sec-discussions} concludes the paper and discusses possible future directions.

\paragraph*{Notation}
The constants $c_1, c_2, C_1, C_2, \cdots$ may differ from line to line. 
Define $x_+ = \max\{ x, 0 \}$ for $x \in \RR$.
We use the symbol $[n]$ as a shorthand for $\{ 1, 2, \cdots, n \}$ and $| \cdot |$ to denote the absolute value of a real number or cardinality of a set. 
For nonnegative sequences $\{ a_n \}_{n=1}^{\infty}$ and $\{ b_n \}_{n=1}^{\infty}$, we write $a_n \lesssim b_n$ or $a_n = O(b_n)$ or $b_n = \Omega(a_n)$ if there exists a positive constant $C$ such that $a_n \leq C b_n$. In addition, we write $a_n \asymp b_n$ if $a_n \lesssim b_n$ and $b_n \lesssim a_n$; $a_n = o(b_n)$ if $a_n = O(c_n b_n)$ for some $c_n \to 0$.
Let $\bm{1}_d$ be the $d$-dimensional all-one vector and $\{ \be_j \}_{j=1}^d$ canonical bases of $\RR^d$.
Define $\SSS^{d-1} = \{ \bx \in \RR^d:~ \| \bx \|_2 = 1 \}$ and $B(\bx, r) = \{ \by \in \RR^d:~ \| \by - \bx \|_2 \leq r \}$ for $\bx \in \RR^d$ and $r \geq 0$. 
For any matrix $\bA$, we use $\ba_j$ to refer to its $j$-th column and let $\Range(\bA)$ be its column space. $\| \bA \|_2 = \sup_{\|\bx\|_2 = 1 } \| \bA \bx \|_2$ denotes the spectral norm and $\| \bA \|_{\mathrm{F}}$ denotes the Frobenius norm. 
Define $\| X \|_{\psi_2} = \sup_{p \geq 1} \{ p^{-1/2} \EE^{1/p} |X|^p \}$ and $\| X \|_{\psi_1} = \sup_{p \geq 1} \{ p^{-1} \EE^{1/p} |X|^p \}$ for a random variable $X$; $\|\bX\|_{\psi_2} = \sup_{\| \bu \|_2 = 1} \|\dotp{\bu}{\bX}\|_{\psi_2} $ for a random vector $\bX$.

\section{Warm-up: estimation of multiple Gaussian means}\label{sec-warmup}

In this section, we consider the multi-task mean estimation problem as a warm-up example. We first introduce the setup and a simple estimation procedure. We relate the estimator to soft thresholding and Huber's location estimator. Then, we show that it automatically adapts to the unknown task relatedness and is robust against a small fraction of tasks with contaminated data. 
Finally, we discuss the connection between our estimator and several fundamental topics in statistics and machine learning.

\subsection{Problem setup}\label{sec-warmup-setup}

Suppose we want to simultaneously estimate the mean parameters of $m \geq 2$ Gaussian distributions $\{ N(\theta^{\star}_j, 1)  \}_{j=1}^m$. For each $j \in [m]$, we collect $n$ i.i.d.~samples $\{  x_{ji} \}_{i=1}^n$ from $N( \theta^{\star}_j, 1)$. The $m$ datasets $\{  x_{1i} \}_{i=1}^n, \cdots, \{  x_{mi} \}_{i=1}^n$ are independent.
This is an extensively studied problem in statistics \cite{Ste81,EHa16} and a canonical example in multi-task learning, where the $j$-th learning task is to estimate $\theta^{\star}_j$.
\vspace{-0.6em}
\begin{itemize} \itemsep = -0.2em
	\item Without additional assumptions, it is natural to conduct maximum likelihood estimation (MLE). Due to the independence of datasets, MLE amounts to estimating each $\theta^{\star}_j$ by the sample mean $\bar{x}_j = \frac{1}{n} \sum_{ i = 1 }^n  x_{ji}$ of its associated data. The mean squared error is $\EE (\bar{x}_j - \theta^{\star}_j)^2 = \frac{1}{n}$.
	
	\item  If the parameters are very close, we may estimate them by the pooled sample mean $\bar{x} = \frac{1}{mn}
	\sum_{j=1}^m \sum_{i=1}^n  x_{ji}$. In the ideal case $\theta^{\star}_1 =\cdots = \theta^{\star}_m$, data pooling reduces the mean squared error to $\frac{1}{mn}$.

	\item We may use Bayesian procedures if $\{ \theta^{\star}_j \}_{j=1}^m$ are independently drawn from some known prior distribution. When the prior itself has unknown parameters, empirical Bayes methods \citep{JSt61,EMo73} can be applied.
\end{itemize}

Since it is often hard to precisely quantify the prior knowledge in practice, we want an estimation procedure that automatically adapts to the unknown similarity among the tasks. 
Ideally, the procedure should also be robust against outlier tasks that are dissimilar to others or even contain corrupted data. 
To introduce our method, we first present optimization perspectives of MLE and its pooled version. Up to an affine transform, the negative log-likelihood function for the $j$-th task is equal to
\[
f_j (\theta) = \frac{1}{2n}  \sum_{i=1}^n (  x_{ji} - \theta )^2  , \qquad\forall \theta \in \RR.
\]
MLE returns one estimator $\bar{x}_j = \argmin_{\theta_j \in \RR }  f_j (\theta_j) $ for each task, whereas data pooling outputs the same estimator $\bar{x} = \argmin_{\theta \in \RR } 
\sum_{ j=1 }^m f_j (\theta) 
 $ for all tasks.

We propose to solve a convex optimization problem
\begin{align}
(\widehat{\theta}_1, \cdots, \widehat{\theta}_m , \widehat{\theta}) \in \argmin_{ \theta_1,\cdots,\theta_m,\theta \in \RR } \bigg\{ \sum_{j=1}^{m} [ f_j (\theta_j) + \lambda |\theta_j - \theta | ] \bigg\} 
\label{eqn-warmup-armul}
\end{align}
and use $\{ \widehat{\theta}_j \}_{j=1}^m$ to estimate $\{ \theta^{\star}_j \}_{j=1}^m$. Here $\lambda \geq 0$ is a penalty parameter and $\widehat\theta$ serves as a global coordinator. Similar to MLE, each task receives one individual estimator based on its loss function. Moreover, the penalty terms drive those estimators toward a common center. When $\lambda = 0$, $\widehat{\theta}_j = \bar{x}_j$. When $\lambda = \infty$, $\widehat{\theta}_j = \widehat{\theta} = \bar{x}$. Therefore, the method interpolates between MLE and its pooled version. We will derive a simple choice of $\lambda$ with guaranteed quality outputs. 


\subsection{Adaptivity and robustness}\label{sec-warmup-theory}

It is easily seen from \eqref{eqn-warmup-armul} that
\begin{align}
 \widehat{\theta} \in \argmin_{\theta \in \RR}  \sum_{j=1}^{m} \widetilde{f}_j(\theta)
\qquad\text{and}\qquad
\widehat{\theta}_j \in \argmin_{\theta \in \RR} \{ f_j (\theta) + \lambda |\theta - \widehat\theta | \}, ~~\forall j \in [m] ,
\label{eqn-warmup-expressions}
\end{align}
where $\widetilde{f}_j(\theta) = \min_{\xi \in \RR} \{ f_j(\xi) + \lambda |\theta - \xi| \}$ is the infimal convolution \citep{HLe13} of a quadratic loss $f_j(\cdot)$ and an absolute value penalty $\lambda|\cdot|$. 
It is well-known that such infimal convolution is closely related to the Huber loss function \citep{Hub64} with parameter $\lambda$:
\begin{align*}
	\rho_{\lambda}(x) = \begin{cases}
		x^2/2 &,\mbox{ if } |x| \leq \lambda \\
		\lambda (|x| - \lambda/2) &,\mbox{ if } |x| > \lambda 
	\end{cases}.
\end{align*}
See, for example, Section 6.1 of \cite{DMo16}.
Based on that, we have the following elementary characterizations of $\widehat{\theta}$ and $\{ \widehat{\theta}_j \}_{j=1}^m$. The proof is deferred to Appendix C.1.

\begin{lemma}\label{thm-warmup-analytical}
We have $ \widetilde{f}_j (\theta) = \rho_{\lambda}(\theta - \bar{x}_j) + \frac{1}{2n}  \sum_{i=1}^n (  x_{ji} - \bar{x}_j )^2$,
\begin{align*}
&\widehat{\theta} \in \argmin_{\theta \in \RR}  \sum_{j=1}^{m} \rho_{\lambda}(\theta - \bar{x}_j) ,\\
&\widehat{\theta}_j 
= \widehat{\theta} + \sgn( \bar{x}_j - \widehat{\theta} )  (  |\bar{x}_j - \widehat{\theta}| -  \lambda   )_{+}  = \bar{x}_j - \min \{ \lambda, |  \bar{x}_j - \widehat{\theta} | \} \sgn( \bar{x}_j - \widehat{\theta}), \quad\forall j \in [m].
\end{align*}
\end{lemma}

According to \Cref{thm-warmup-analytical}, the global coordinator $\widehat{\theta}$ in \eqref{eqn-warmup-armul} is a Huber estimator applied to sample means $\{ \bar{x}_j \}_{j=1}^m$ of individual datasets. 
The estimators $\{ \widehat{\theta}_j \}_{j=1}^m$ for $\{ \theta^{\star}_j \}_{j=1}^m$ are shrunk toward $\widehat{\theta}$ by soft thresholding. Intuitively, we may view the procedure \eqref{eqn-warmup-armul} as a combination of hypothesis testing and parameter estimation. The first step is to test the homogeneity hypothesis $H_0:~\theta^{\star}_1=\cdots=\theta^{\star}_m$, with $\lambda$ controlling the significance level.
When $\{ \bar{x}_j \}_{j=1}^m$ are close enough, e.g.~$\max_{j \neq k} | \bar{x}_j - \bar{x}_k | \leq \lambda$, the parameters $\{ \theta^{\star}_j \}_{j=1}^m$ do not seem to be significantly different. 
We apply data pooling and get $ \widehat{\theta}_1 = \cdots = \widehat{\theta}_m  = \widehat{\theta} = \bar{x}$. The exact equality $\widehat{\theta}_j = \widehat{\theta}$ is enforced by the cusp of the absolute value penalty $|\cdot|$ at zero.
When all but a small fraction of $\{ \bar{x}_j \}_{j=1}^m$ are close, the robustness property of the Huber loss makes $\widehat{\theta}$ a good summary of the majority; their corresponding $\widehat{\theta}_j$'s are equal to $\widehat{\theta}$.
In general, the estimators $\{ \widehat{\theta}_j \}_{j=1}^m$ can be different. It is worth pointing out that $| \widehat{\theta}_j - \bar{x}_j | \leq \lambda$ always holds, thanks to the Lipschitz smoothness of $|\cdot|$. This guarantees $\widehat{\theta}_j$'s fidelity to its associated dataset $\{ x_{ji} \}_{i=1}^n$. Hence the proposed method easily handles heterogeneous tasks.

To analyze the statistical property of \eqref{eqn-warmup-armul}, we need to gauge the relatedness among tasks. 
\begin{definition}[Parameter space]\label{defn-warmup-space}
For any $\varepsilon \in [0, 1]$ and $\delta \geq 0$, define
\begin{align*}
\Omega (\varepsilon, \delta ) = \Big\{ \btheta^{\star} \in \RR^m:~\min_{\theta \in \RR}\max_{j \in S} |\theta^{\star}_j - \theta  | \leq \delta \text{ and }
|S^c| / m \leq \varepsilon 
\text{ for some } S \subseteq [m]
\Big\}.
\end{align*}
We associate every $\btheta^{\star} \in \Omega (\varepsilon, \delta ) $ with a subset $S = S (\btheta^{\star} )$ of $[m]$ that satisfies the above requirements.
\end{definition}


\begin{assumption}[Task relatedness]\label{as-warmup-similarity}
The $m$ datasets $\{ x_{1i} \}_{i=1}^n,\cdots,\{ x_{mi} \}_{i=1}^n$ are statistically independent and there exists $\btheta^{\star} \in \Omega (\varepsilon, \delta )$ such that for any $j \in [m]$, $ \{ x_{ji} \}_{i=1}^n$ are i.i.d.~$N(\theta^{\star}_j , 1)$.
\end{assumption}

We say the $m$ tasks are $(\varepsilon,\delta)$-related when Assumption \ref{as-warmup-similarity} holds.
In words, all but an $\varepsilon$ fraction of the mean parameters $\{ \theta^{\star}_j \}_{j=1}^m$ live in an interval with half-width $\delta$; the others can be arbitrary. Smaller $\varepsilon$ and $\delta$ imply more similarity among tasks. The extreme case $\varepsilon = \delta = 0$ corresponds to $\theta^{\star}_1=\cdots = \theta^{\star}_m$. 
Any $m$ tasks of Gaussian mean estimation are $(0,  \max_{j \in [m] } |\theta^{\star}_j  | )$-related.

\Cref{thm-warmup-armul} below characterizes the estimation errors. The proof can be found in Appendix C.2.

\begin{theorem}[Adaptivity and robustness]\label{thm-warmup-armul}
Let Assumption \ref{as-warmup-similarity} hold. Choose any $t \geq 2$ and $\lambda = 6 \sqrt{\frac{2 ( \log m + t ) }{n}}$.
There is a universal constant $C>0$ such that with probability at least $1 - e^{-t}$,
\begin{align*}
& \max_{j \in S } | \widehat{\theta}_j  - \theta^{\star}_j |
< C \bigg(
\sqrt{ \frac{t}{mn} } +   \min\bigg\{   \delta ,~ \sqrt{\frac{\log m + t}{n} } \bigg\}  +   \varepsilon  \sqrt{\frac{\log m + t}{n} } 
\bigg) , 
\\
& \max_{j \in [m] \backslash S  } | \widehat{\theta}_j  - \theta^{\star}_j |
< C \sqrt{\frac{\log m + t}{n} } , \notag\\
&  	\frac{1}{m} \sum_{j=1}^{m}  | \widehat{\theta}_j  - \theta^{\star}_j |^2  
 \leq  C \bigg(
 \frac{t}{mn}  +   \min\bigg\{   \delta^2 ,~   \frac{\log m + t}{n}   \bigg\}  +    \varepsilon \cdot \frac{\log m + t}{n} 
\bigg) .
\end{align*}
\end{theorem}

\begin{remark}[Data contamination]
We can further relax the assumption on task relatedness to allow the datasets $\{ \cD_{j} \}_{j \notin S   }$ to be arbitrarily contaminated. In that case, the results for $ \max_{j \in S  } | \widehat{\theta}_j  - \theta^{\star}_j |$ in \Cref{thm-warmup-armul} continue to hold.
\end{remark}


\Cref{thm-warmup-armul} provides maximum error bounds for the ``good'' tasks in $S $ and ``bad'' tasks in $[m] \backslash S  $, as well as the mean squared error (MSE) over all tasks. 
A crude error bound $\max_{j \in [m]} | \widehat{\theta}_j  - \theta^{\star}_j | \lesssim \sqrt{\frac{\log m}{n}}$ always holds regardless of $( \varepsilon, \delta)$. On the other hand, elementary calculation shows that $\max_{j \in [m]} | \bar{x}_j  - \theta^{\star}_j | \gtrsim \sqrt{\frac{\log m}{n}}$ with constant probability. Therefore, the new method is always comparable to MLE. That provides a safe net. 

Moreover, the suggested penalty parameter $\lambda = 6 \sqrt{\frac{2 ( \log m + t ) }{n}}$ in \Cref{thm-warmup-armul} does not depend on $\varepsilon$ or $\delta$ at all. The estimator automatically adapts to the unknown task relatedness, achieving higher accuracy when $\varepsilon$ and $\delta$ are small. Up to logarithmic factors, the MSE bound reads
\begin{align}
\frac{1}{m} \sum_{j=1}^{m}  | \widehat{\theta}_j  - \theta^{\star}_j |^2  
\lesssim
\frac{1}{mn}  +   \min\bigg\{   \delta^2 ,~   \frac{1}{n}   \bigg\}  +   \frac{\varepsilon}{n} .
\label{eqn-warmup-mse}
\end{align}
The first term $ \frac{1}{mn } $ is the MSE of the pooled sample mean $\bar{x}$ in the most homogeneous case $\theta^{\star}_1=\cdots=\theta^{\star}_m$. When $\varepsilon = \delta = 0$, only this term exists and our procedure reduces to data pooling. 
The second term $ \min \{   \delta^2 ,  \frac{1}{n}  \}  $ is non-decreasing in the discrepancy $\delta$ among $\{ \theta^{\star}_j \}_{j \in S}$. It increases first and then flattens out, never exceeding the error rate of MLE. When $\varepsilon = 0$, we have
	\[
\frac{1}{m} \sum_{j=1}^{m}  | \widehat{\theta}_j  - \theta^{\star}_j |^2  
	\lesssim 
	\frac{1}{ mn } 
	+
	\min\bigg\{   \delta^2 ,~ \frac{1}{n}  \bigg\}  
	\asymp 
	\min\bigg\{  \frac{1}{mn } + \delta^2 ,~ \frac{1}{n}  \bigg\} .
	\]
Here $\frac{1}{n} $ and $\frac{1}{ mn } + \delta^2 $ are the MSEs of the MLE and its pooled version, respectively. Therefore, the new method achieves the smaller error between the two. 
It is closely related to robust inference procedures considered by \cite{HLe52}, \cite{Bic84} and others that $\mathrm{(i)}$ perform well when the parameter of interest $\btheta^{\star}$ truly lives in a small set (e.g. $\theta_1^{\star} = \cdots = \theta_m^{\star}$), and $\mathrm{(ii)}$ are nearly minimax optimal over a larger parameter space (e.g. $\RR^m$). Our analysis covers a continuum of parameter spaces $\{ \Omega( \varepsilon , \delta ) :~ 0\leq \varepsilon \leq 1 , \delta \geq 0 \}$ while those studies mostly look at the two extremes.


When $\varepsilon > 0$, the third term $\frac{ \varepsilon }{n} $ in \eqref{eqn-warmup-mse} is the price we pay for not knowing the index set $S^c$ of tasks that may be very different from the others. As an illustration, suppose that $\delta = 0$ and $\{ \theta^{\star}_j \}_{j \in S}$ are all equal to some $\theta^{\star}$. Then, $\{ \bar{x}_j \}_{j \in S}$ are i.i.d.~$N(\theta^{\star},1/n)$ and $\{ \bar{x}_j \}_{j \in S^c}$ can be arbitrary. $\widehat{\theta}$ is a Huber estimator of $\theta^{\star}$ based on $\varepsilon$-contaminated data $\{ \bar{x}_j \}_{j=1}^m$. Our error bound has optimal dependence on $\varepsilon$ up to a logarithmic factor \citep{Hub04}, whereas the pooled MLE can be ruined by a single outlier task.


We now present a minimax lower bound for an idealized problem with known $\varepsilon$ and $\delta$.
It is a special case ($d = 1$) of \Cref{thm-minimax-00} for multivariate Gaussians.

\begin{theorem}[Minimax lower bound]\label{thm-warmup-minimax}
There exist universal constants $C,c>0$ such that for any $\varepsilon \in [0, 1]$ and $\delta \geq 0$,
\begin{align*}
\inf_{\widehat{\btheta}} \sup_{ \btheta^{\star} \in \Omega (\varepsilon, \delta) } \PP_{\btheta^{\star} } \bigg[
\frac{1}{m} \sum_{j=1}^{m} | \widehat{\theta}_j  - \theta^{\star}_j |^2
\geq C \bigg( \frac{1}{ mn } 
+
\min\bigg\{   \delta^2 ,~  \frac{1}{n}  \bigg\}  
+  \frac{\varepsilon}{  n } \bigg)
\bigg] \geq c.
\end{align*}
\end{theorem}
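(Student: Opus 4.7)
The plan is to split the desired lower bound into its three additive pieces, $1/\sqrt{mn}$, $\min\{\delta,\sqrt{\log m/n}\}$, and $\varepsilon/\sqrt{n}$, and prove each in isolation by restricting the supremum over $\btheta^\star \in \Omega(\varepsilon,\delta)$ to a convenient sub-family. Since each sub-family realizes a particular term, and $C(a+b+c) \leq 3\max\{Ca,Cb,Cc\}$, three separate applications of Le Cam's lemma or Fano's inequality will combine into the claimed bound (after relabeling constants).

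For the pooling rate $1/\sqrt{mn}$, I would run Le Cam's two-point test with $\btheta^\star_0 = \zero$ and $\btheta^\star_1 = (c_1/\sqrt{mn})\one_m$. Both vectors lie in $\Omega(0,0) \subseteq \Omega(\varepsilon,\delta)$, witnessed by $S = [m]$. The total KL between the two product Gaussian laws is $mn \cdot (c_1/\sqrt{mn})^2/2 = c_1^2/2$, which is $O(1)$, so Le Cam yields a constant-probability failure at separation $\Omega(1/\sqrt{mn})$. For the adaptivity term $\min\{\delta,\sqrt{\log m/n}\}$, I would use Fano's inequality with $m+1$ hypotheses $\btheta^\star_0 = \zero$ and $\btheta^\star_k = \rho \be_k$ for $k \in [m]$, where $\rho = c_2 \min\{\delta,\sqrt{\log m/n}\}$. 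Each hypothesis sits in $\Omega(0,\delta)$ with $S = [m]$ because $\rho \leq \delta$, and any two product-Gaussian hypotheses have KL equal to $n\rho^2/2 \leq c_2^2 (\log m)/2$, which is small relative to $\log m$. Fano's inequality then gives a lower bound of order $\rho$ on $\max_j|\widehat\theta_j - \theta^\star_j|$.

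For the robustness rate $\varepsilon/\sqrt{n}$, the natural route is to reduce to a classical Huber contamination problem on the sufficient statistics $\bar x_j \sim N(\theta^\star_j, 1/n)$. With $(1-\varepsilon)m$ of these ``observations'' centered at a common $\mu$ and $\varepsilon m$ arbitrary, the minimax rate for $\mu$ is $\sigma\varepsilon = \varepsilon/\sqrt{n}$ with noise scale $\sigma = 1/\sqrt{n}$. The lower bound would be realized by two Bayesian priors: in both, the identity of $S^c$ is drawn uniformly among subsets of size $\lceil \varepsilon m\rceil$, and the ``clean'' coordinates take the values $0$ or $\eta := c_3 \varepsilon/\sqrt{n}$ respectively; the contaminating components on $S^c$ are chosen so that the marginal mixture $(1-\varepsilon) N(0,1/n) + \varepsilon Q_0 = (1-\varepsilon) N(\eta,1/n) + \varepsilon Q_1$ holds in total variation up to $O(\varepsilon)$. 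A standard Huber density-shift construction provides valid $Q_0, Q_1$ for $\eta$ of order $\varepsilon/\sqrt{n}$, forcing constant Bayes error.

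The delicate step is the third one. A naive two-point argument that fixes $S$ and separates $\btheta^\star$ only on $S$ never yields more than $1/\sqrt{mn}$, because the clean $(1-\varepsilon)mn$ samples then reveal the common center. To obtain $\varepsilon/\sqrt{n}$ one must genuinely exploit the adversary's freedom on $S^c$ to disguise the shift, which forces the Huber-style matching of contaminated mixtures and the randomization of $S$. Assembling the three parts, one then combines the resulting probabilistic lower bounds (each with a uniform constant $c$) to conclude the stated inequality.
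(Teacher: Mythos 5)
Your decomposition and two of your three constructions essentially coincide with the paper's proof (which derives this theorem as the $d=1$ case of its Theorem~\ref{thm-minimax}): the paper also reduces to the sub-families $\Omega(\varepsilon,0)$ and $\Omega(0,\delta)$, and its treatment of the $\min\{\delta,\sqrt{\log m/n}\}$ term uses exactly your $m+1$ hypotheses $\bm{0}$ and $\rho\be_k$ with $\rho=c\min\{\delta,\sqrt{\log m/n}\}$ (via Tsybakov's Theorem~2.4 rather than Fano, an immaterial difference). Where you diverge: the paper obtains the $\sqrt{d/(mn)}$ and $\varepsilon/\sqrt{n}$ terms in one stroke by citing the robust Gaussian mean estimation lower bound (Theorem~2.2 of Chen--Gao--Ren), whereas you prove the pooling rate by an elementary Le Cam two-point argument (which is correct and self-contained for $d=1$) and sketch the contamination rate yourself. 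One caution on that third step: in $\Omega(\varepsilon,\delta)$ the outlier tasks still generate $N(\theta^{\star}_j,1)$ data, so the contaminating distributions of the sufficient statistics $\bar{x}_j$ are constrained to the Gaussian location family with variance $1/n$ (or mixtures thereof under a prior), not arbitrary $Q$. The textbook Huber construction $Q_0\propto(p_\eta-p_0)_+$, $Q_1\propto(p_0-p_\eta)_+$ produces measures outside this family, and exact matching $(1-\varepsilon)\delta_0+\varepsilon\rho_0=(1-\varepsilon)\delta_\eta+\varepsilon\rho_1$ of the mixing measures is impossible for $\varepsilon<1/2$; one needs an approximate (e.g., moment-matching after Gaussian smoothing) version to make the two $m$-coordinate mixtures statistically indistinguishable at $\eta\asymp\varepsilon/\sqrt{n}$. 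You correctly identify that the naive two-point argument fails here, but ``a standard Huber density-shift construction'' does not apply verbatim; to be fair, the paper glosses over the same reduction with ``we easily obtain,'' so your proposal is no less complete than the published argument on this point.
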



The ARMUL estimator achieves the oracle error up to a $\log m$ factor without knowing $\varepsilon$ and $\delta$.
It would be interesting to investigate whether the logarithmic term is a fundamental price of adaptation, as is the case with sparse Gaussian mean estimation \citep{DJo94}.

For any given $\btheta^{\star} \in \RR^m$, there exist infinitely many pairs of $( \varepsilon , \delta )$ that make Assumption \ref{as-warmup-similarity} hold. For instance, when $\btheta^{\star} = \be_1$, we can take any $(\varepsilon, \delta)$ in the set
\[
\{ (\varepsilon , \delta) \in [0, 1] \times [0, +\infty) :~ \varepsilon \geq 1/m \text{ or } \delta \geq 1 \}.
\]
The MSE bound in \Cref{thm-warmup-armul} holds simultaneously for all of those $(\varepsilon , \delta)$.
Unfortunately, the bound is not directly computable from data. On the one hand, $\varepsilon$ and $\delta$ are not uniquely defined. On the other hand, even if we set $\varepsilon = 0$, the estimation error of $\delta$ will be of order $1/\sqrt{n}$. This results in an error up to $O(1/n)$ in the estimated MSE bound and makes it meaningless, because $O(1/n)$ is the largest possible value of our MSE bound (up to a $\log m$ factor).
Similar phenomenon arises in nonparametric estimation. As \cite{Low97} pointed out, ``although an estimate may be adaptive for squared error loss it may be impossible to make a data dependent claim on how well you have done''.

\subsection{Discussions}\label{sec-warmup-discussions}

The estimation procedure and theory in this section have deep connections to several fundamental topics in statistics and machine learning.

\subsubsection{James-Stein estimators}

For the Gaussian mean estimation problem in \Cref{sec-warmup-setup}, a sufficient statistic is $\sqrt{n} (\bar{x}_1,\cdots,\bar{x}_m)^{\top} \sim N(\btheta^{\star}, \bI_m)$. Therefore, we may assume $n = 1$ in the original problem without loss of generality. The goal then becomes estimating $\btheta^{\star} \in \RR^m$ from a single sample $\bx \sim N(\btheta^{\star}, \bI_m)$. The MLE is $\widehat{\btheta}^{\mathrm{MLE}} = \bx$. In a seminal paper, \cite{JSt61} proposed to shrink the MLE toward zero and introduce a new estimator $\widehat{\btheta}^{\mathrm{JS,0}} = (1 - \frac{m-2}{\| \bx \|_2^2}) \bx$. Surprisingly, when $m \geq 3$, the $\ell_2$ risk of $\widehat{\btheta}^{\mathrm{JS,0}}$ is always strictly smaller than that of $\widehat{\btheta}^{\mathrm{MLE}}$:
\begin{align}
\EE_{\btheta^{\star}} \| \widehat{\btheta}^{\mathrm{JS,0}} -  \btheta^{\star} \|_2^2
< \EE_{\btheta^{\star}} \| \widehat{\btheta}^{\mathrm{MLE}} - \btheta^{\star} \|_2^2 , \qquad\forall \btheta^{\star} \in \RR^d.
\label{eqn-warmup-JS}
\end{align}
The shrinking point does not have to be $\bm{0}$. They also introduced another estimator 
\[
\widehat{\theta}^{\mathrm{JS}}_j = \bar{x} + \bigg( 1 - \frac{m - 3}{ \sum_{ j = 1 }^m (x_j - \bar{x})^2 } \bigg) (x_j - \bar{x}) , \qquad\forall j \in [m],
\] 
whose entries are shrunk toward the pooled sample mean $\bar{x}$. They proved the same dominance as \eqref{eqn-warmup-JS} for $\widehat{\btheta}^{\mathrm{JS}} $ when $m \geq 4$. The gain is the most significant when $\{ \theta^{\star}_j \}_{j=1}^m$ are close and $m$ is large. In the ideal case $\theta^{\star}_1 = \cdots = \theta^{\star}_m$, we derive from equation (7.14) in \cite{EHa16} that $\EE_{\btheta^{\star}} \| \widehat{\btheta}^{\mathrm{JS}} -  \btheta^{\star} \|_2^2 = 3$, which is within a constant factor (3) times the $\ell_2$ risk of the pooled sample mean. The MLE has risk $\EE_{\btheta^{\star}} \| \widehat{\btheta}^{\mathrm{MLE}} - \btheta^{\star} \|_2^2 = m$.

\cite{EMo73} adopted an empirical Bayes approach to the simultaneous estimation problem and derived class of estimators that dominate the MLE. The positive part version of the James-Stein estimator
\[
\widehat{\theta}^{\mathrm{JS+}}_j = \bar{x} + \bigg( 1 - 
\frac{m - 3}{ \sum_{ j = 1 }^m (x_j - \bar{x})^2 } 
\bigg)_{+} (x_j - \bar{x}) , \qquad\forall j \in [m]
\] 
is one example, which avoids negative shrinkage factor.
The lemma below connects $\widehat{\btheta}^{\mathrm{JS}} $ and $\widehat{\btheta}^{\mathrm{JS+}} $ to multi-task learning with ridge regularization \citep{EPo04}. See the proof in Appendix C.3.

\begin{lemma}\label{lem-warmup-mean}
Let $\lambda > 0$ and  
\begin{align}
(\widetilde{\theta}_1, \cdots, \widetilde{\theta}_m , \widetilde{\theta}) \in \argmin_{ \theta_1,\cdots,\theta_m,\theta \in \RR } \bigg\{ \sum_{j=1}^{m} [ (\theta_j - x_j)^2 + \lambda ( \theta_j - \theta )^2 ] \bigg\} .
\label{eqn-lem-warmup-mean}
\end{align}
We have $\widetilde{\theta} = \bar{x} $ and $\widetilde{\theta}_j  = \bar{x} + \frac{1}{1+\lambda} (x_j - \bar{x})$, $\forall j \in [m]$. If we define $S = \sum_{ j = 1 }^m (x_j - \bar{x})^2 $, then
\begin{itemize} \itemsep = -0.2em
\item $\widetilde{\btheta}  = \widehat{\btheta}^{\mathrm{JS}}$ when $ S > m - 3 $ and $\lambda = \frac{m - 3}{S -(m-3)}$; 
\item $\widetilde{\btheta}  = \widehat{\btheta}^{\mathrm{JS+}}$ when $\lambda = \frac{ \min \{ S , m - 3 \} }{  S - \min \{ S , m - 3 \} }
$, with the convention that $c/0 = +\infty$ for any $c>0$.
\end{itemize}
\end{lemma}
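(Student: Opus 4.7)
The plan is to derive a closed-form solution of the quadratic program in \eqref{eqn-lem-warmup-mean} and then match parameters with the James-Stein formulas.

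First I would exploit strong convexity of the objective in $(\theta_1,\ldots,\theta_m,\theta)$ to conclude the minimizer is unique and characterized by the first-order conditions. Differentiating with respect to $\theta_j$ gives $2(\theta_j - x_j) + 2\lambda(\theta_j - \theta) = 0$, hence $\widetilde\theta_j = (x_j + \lambda\widetilde\theta)/(1+\lambda)$. Differentiating with respect to $\theta$ gives $\sum_{j=1}^m \lambda(\theta - \theta_j) = 0$, so $\widetilde\theta = m^{-1}\sum_j \widetilde\theta_j$. Plugging the first equation into the second and simplifying yields $(1+\lambda)\widetilde\theta = \bar{x} + \lambda\widetilde\theta$, i.e.\ $\widetilde\theta = \bar{x}$. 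Substituting back gives $\widetilde\theta_j = \bar{x} + (1+\lambda)^{-1}(x_j - \bar{x})$, which is the first claim.

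Next I would identify the shrinkage factor $(1+\lambda)^{-1}$ with the James-Stein one. For $\widehat{\btheta}^{\mathrm{JS}}$ the shrinkage factor is $1 - (m-3)/S = (S-(m-3))/S$. Setting $(1+\lambda)^{-1}$ equal to this and solving yields $\lambda = (m-3)/(S-(m-3))$, which is positive exactly when $S > m-3$, matching the stated hypothesis. For $\widehat{\btheta}^{\mathrm{JS+}}$ I would split into cases: when $S > m-3$ the positive part is active and the analysis coincides with the $\widehat{\btheta}^{\mathrm{JS}}$ case, with $\min\{S,m-3\} = m-3$ reproducing the same $\lambda$. When $S \leq m-3$ the positive part vanishes so $\widehat\theta_j^{\mathrm{JS+}} = \bar{x}$, which must be matched by sending the shrinkage factor $(1+\lambda)^{-1}$ to $0$, i.e.\ $\lambda = +\infty$; and indeed the formula $\lambda = \min\{S,m-3\}/(S-\min\{S,m-3\}) = S/0$ delivers $+\infty$ under the stated convention.

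There is essentially no hard step: the main thing to watch is bookkeeping of the two cases $S>m-3$ and $S\leq m-3$ in the $\widehat{\btheta}^{\mathrm{JS+}}$ part, and the harmless limiting interpretation $\lambda = +\infty$ (which can be justified either by continuity, since $\widetilde\theta_j \to \bar{x}$ as $\lambda \to \infty$ in the explicit formula $\widetilde\theta_j = \bar{x} + (x_j-\bar{x})/(1+\lambda)$, or by a direct inspection of \eqref{eqn-lem-warmup-mean} after rescaling). No additional tools beyond quadratic optimization and arithmetic are needed.
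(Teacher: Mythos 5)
Your proof is correct and follows essentially the same route as the paper's: both reduce \eqref{eqn-lem-warmup-mean} to the closed form $\widetilde\theta=\bar{x}$, $\widetilde\theta_j=\bar{x}+(1+\lambda)^{-1}(x_j-\bar{x})$ and then match the shrinkage factor $(1+\lambda)^{-1}$ with the James--Stein factors (the paper profiles out the $\theta_j$'s first via the partial minimization $g_j(\theta)=\frac{\lambda}{1+\lambda}(\theta-x_j)^2$, while you solve the joint stationarity conditions, but this is an immaterial difference for a strictly convex quadratic). Your explicit case split for $\widehat{\btheta}^{\mathrm{JS+}}$ and the limiting interpretation of $\lambda=+\infty$ fill in exactly what the paper dismisses as ``simple algebra.''
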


Our estimator $\widehat{\btheta}$ is defined by the $\ell_1$-regularized program \eqref{eqn-warmup-armul} that differs from \eqref{eqn-lem-warmup-mean} in the penalty function. As a result, the entries $\{ \widehat{\theta}_j \}_{j=1}^m$ are shrunk toward a Huber estimator $\widehat{\theta}$ instead of the pooled sample mean used by James-Stein estimators, see \Cref{thm-warmup-analytical}. The non-smooth $\ell_1$ penalty can shrink the difference $\widehat{\theta}_j - \widehat{\theta}$ to exact zero.
The relation between Huber loss, quadratic loss and $\ell_1$ penalty function has also been used by \cite{Gan07}, \cite{Ant07}, \cite{SOw11} and \cite{CDa19} in wavelet thresholding and robust statistics. 

The James-Stein estimators $\widehat{\btheta}^{\mathrm{JS,0}}$, $\widehat{\btheta}^{\mathrm{JS}}$ and $\widehat{\btheta}^{\mathrm{JS+}}$ are tailored for the Gaussian mean problem. Their strong theoretical guarantees such as \eqref{eqn-warmup-JS} are built upon analytical calculations of the $\ell_2$ risk under the Gaussianity assumption. In contrast, our estimator $\widehat{\btheta}$ is constructed from penalized MLE framework \eqref{eqn-warmup-armul}, which easily extends to general multivariate $M$-estimation problems. We want the estimator to benefit from possible similarity among tasks while still being reliable in unfavorable circumstances, see \Cref{thm-warmup-armul}. In the worst-case, the price of generality is an extra logarithm factor in the risk.

\subsubsection{Limited translation estimators}
James-Stein estimators improve over the MLE in terms of $\ell_2$ risk, which measures the average performance over parameters $\{ \theta^{\star}_j \}_{j=1}^m$. There is no guarantee on the individuals. It is well-known that the estimators underperform MLE by a large margin for $ \theta^{\star}_j $'s far from the bulk. To make matters worse, such exceptional cases also significantly reduce the overall $\ell_2$ efficacy.
\cite{EMo72} and \cite{Ste81} proposed limited translation estimators that restrict the amount of shrinkage. Hence, those estimators cannot deviate far from the MLE. By carefully setting the restrictions, they are able to control the maximum ($\ell_{\infty}$) error over all parameters. 
According to \Cref{thm-warmup-analytical}, our estimator $\widehat{\btheta}$ also has limited translation bounded by $\lambda$. \Cref{thm-warmup-armul} presents a sharp bound on the $\ell_{\infty}$ error.

\subsubsection{Soft-thresholding for sparse estimation}
When the mean vector $\btheta^{\star}$ is assumed to be sparse, it is natural to shrink many entries of the estimator to exact zero. \cite{DJH92} studied the $\ell_1$-regularized estimator
\[
\widehat{\btheta}^{ \ell_1} \in \argmin_{ \btheta  \in \RR^m } \bigg\{ \sum_{j=1}^{m} [ (\theta_j - x_j)^2 + \lambda | \theta_j | ] \bigg\} 
\]
and its minimax optimality. For each $j \in [m]$, $\widehat{\theta}^{\ell_1}_j = x_j - \min\{ \lambda, |x_j| \} \sgn(x_j)$ is a soft-thresholded version of $x_j$. If $|x_j| \leq \lambda$, then $\widehat{\theta}^{\ell_1}_j = 0$.
Soft-thresholding and $\ell_1$ regularization have wide applications in statistics, including parameter estimation subject to good risk properties at zero \citep{Bic83}, ideal spatial adaptation \citep{DJo94}, variable selection \citep{Tib96}, etc.
Our use of the $\ell_1$ penalty in \eqref{eqn-warmup-armul} is inspired by this line of research.
By \Cref{thm-warmup-analytical}, the difference $\widehat{\theta}_j - \widehat{\theta}$ between individual estimator and the global coordinator is soft-thresholded. Merging some $\widehat{\theta}_j$'s to $\widehat{\theta}$ pools the information across similar tasks.
Soft-thresholding has been used for combining the information in a small, high-quality dataset and a less costly one with a possibly different distribution, see \cite{COS15} and \cite{CZY21}.
Our formulation \eqref{eqn-warmup-armul} handles multiple datasets.

\subsubsection{Homogeneity of parameters} An extension of sparsity is homogeneity, which refers to the phenomenon that parameters in similar subgroups are close to each other. Various methods are developed to exploit such structure in high-dimensional regression, including fused lasso \citep{TSR05}, grouping pursuit \citep{SHu10} and CARDS \citep{KFW15}. Our method \eqref{eqn-warmup-armul} uses one global coordinator to utilize the homogeneity when a majority of parameters live within the same small region. In \Cref{sec-methodologies} we will incorporate more than one coordinators to deal with multiple clusters of parameters.

\subsubsection{Minimax lower bounds}
For sparse Gaussian mean estimation, \cite{DJH92} derived the minimax lower bound on the $\ell_2$ risk over $ \{ \btheta^{\star}  \in \RR^m:~ \| \btheta^{\star} \|_0 \leq \varepsilon m \}$ for $\varepsilon \in (0, 1)$, with precise constant factors. Here $\| \bx \|_0 = |\{ i:~x_i \neq 0 \}|$ is the $\ell_0$ pseudo-norm. Their parameter space is a subset of ours with $\delta = 0$. We aim to cover broader regimes but make no endeavor to optimize the constants.
In a recent work, \cite{CZL21} studied fundamental limits of multi-task and federated learning. Their definition of task relatedness is similar to ours in Assumption \ref{as-warmup-similarity} with $\varepsilon = 0$. They construct $m$ logistic models whose discrepancies are quantified by some parameter $\delta$, and derive a minimax lower bound on the estimation error of the form $  \min \{ \frac{1}{\sqrt{mn} }  +  \delta ,~ \frac{1}{\sqrt{n}} \}  $. From there they show that the optimal rate is achieved by either MLE or its pooled version. 
Our lower bound in \Cref{thm-warmup-minimax} is proved for the canonical Gaussian mean problem and allows an $\varepsilon$ fraction of the tasks to be arbitrarily different from the others. In that case, neither MLE nor pooled MLE is optimal.

\section{Methodologies}\label{sec-methodologies}

In this section, we present our framework for Adaptive and Robust MUlti-task Learning (ARMUL). We focus on three important cases and provide algorithms for their efficient implementations. 

\subsection{Adaptive and robust multi-task learning}\label{sec-methodologies-armul}

Let $m \in \ZZ_+$. For every $j \in [m]$, let $\cP_j$ be a probability distribution over a sample space $\cX_j$ and $\ell_j:~\RR^d \times \cX_j \to \RR$ be a loss function. 
Suppose that we collect $n_j$ i.i.d.~samples $\cD_j = \{ \bxi_{ji} \}_{i=1}^{n_j}$ from $\cP_j$ for every $j$, and the $m$ datasets $\{ \cD_j \}_{j=1}^m$ are independent. The $j$th learning task is to minimize the population risk $\EE_{\bxi \sim \cP_j} \ell_j ( \btheta ; \bxi )$ by estimating the population risk minimizer $\btheta^{\star}_j \in \argmin_{ \btheta  \in \RR^d } \EE_{\bxi \sim \cP_j} \ell_j ( \btheta ; \bxi )$
based on $\cD_j$. For statistical estimation in well-specified models, $\btheta^{\star}_j$ is the true parameter and $\ell_j$ can be the negative log-likelihood function.
Multi-task learning (MTL) targets all of the $m$ tasks simultaneously. The difficulty comes from the unknown task relatedness. It is often unclear whether and how a task can be better resolved by incorporating the information in other tasks.

Define the $j$th empirical loss function $f_j(\btheta) = \frac{1}{n_j} \sum_{ i = 1 }^{n_j} \ell( \btheta; \bxi_{ji} )$.
Many MTL algorithms can be formulated as constrained loss minimization problems of the form
\begin{align}
\min_{\bTheta \in \Omega} \bigg\{ \sum_{j=1}^{m} w_j f_j (\btheta_j) \bigg\},
\label{eqn-mtl-obj}
\end{align}
where $w_j$ and $\btheta_j$ are the weight and the model parameter of the $j$th task; $\bTheta = (\btheta_1,\cdots,\btheta_m) \in \RR^{d\times m}$; $\Omega \subseteq \RR^{d\times m}$ encodes the prior knowledge of task relatedness. Below are several examples.

\begin{example}[Independent task learning]\label{eg-stl}
A na\"{i}ve approach is independent task learning which minimizes the $m$ empirical loss functions separately. That is equivalent to \eqref{eqn-mtl-obj} with $\Omega = \RR^{d\times m}$.
\end{example}

\begin{example}[Data pooling]\label{eg-pooling}
In the other extreme, one may pool all the data together, solve the consensus program $\min_{\bbeta \in \RR^d }  \{ 
\sum_{j=1}^{m} w_j  f_j ( \bbeta )  \} $ and output one estimate for all tasks. We have $\Omega = \{ \bbeta \bm{1}_m^{\top} :~\bbeta \in \RR^d \}$.
\end{example}

\begin{example}[Clustered MTL]\label{eg-clustered}
The one-size-fits-all strategy above can be extended to clustered MTL, which handles multiples clusters of similar tasks. One may solve the program
\begin{align}
\min_{ 
	\substack{
		\bbeta_1,\cdots, \bbeta_K \in \RR^d \\
		z_1,\cdots,z_m \in [K]
} } \bigg\{ 
\sum_{j=1}^{m} w_j f_j ( \bbeta_{z_j}  ) 
\bigg\}
\label{eqn-obj-clustered-mtl}
\end{align}
to get the estimated labels $\{ \widehat z_j \}_{j=1}^m$ and cluster centers $\{ \widehat\bbeta_j \}_{j=1}^K$. The estimated model parameters for the $m$ tasks are $\{ \widehat{\bbeta}_{ \widehat z_j } \}_{j=1}^m$. This method corresponds to $\Omega = \{ \bB \bZ :~ \bB \in \RR^{d\times K},~ \bZ \in \{ 0, 1 \}^{K \times m},~\bZ^{\top} \bm{1}_K = \bm{1}_m  \}$.
\end{example}

\begin{example}[Low-rank MTL]\label{eg-lowrank}
By further relaxing the discrete class indicators in \eqref{eqn-obj-clustered-mtl} to continuous latent variables, one gets a formulation for low-rank MTL
\begin{align}
\min_{ 
	\bB \in \RR^{d\times K}, ~\bZ \in \RR^{K \times m}
} \bigg\{ 
\sum_{j=1}^{m} w_j  f_j ( \bB \bz_j  ) 
\bigg\}.
\label{eqn-obj-low-rank-mtl}
\end{align}
An optimal solution $(\widehat\bB , \widehat\bZ)$ yields estimated model parameters $\{ \widehat{\bB} \widehat{\bz}_j \}_{j=1}^m$ that lie in the range of $\widehat\bB$. We have $\Omega = \{ \bB \bZ :~ \bB \in \RR^{d\times K},~ \bZ \in \RR^{K\times m} \}$. 
\end{example}

\begin{example}[Hard parameter sharing]
A popular approach of MTL with neural networks is to learn a network shared by all tasks for feature extraction, plus task-specific linear functions that map features to final predictions \citep{Car97}. Thus, models of the $m$ tasks share part of their parameters. It can be viewed as \eqref{eqn-mtl-obj} with $\Omega$ of the form
\[
\bigg\{
\begin{pmatrix}
\bbeta \bm{1}_m^{\top} \\
\bGamma
\end{pmatrix}:~ \bbeta \in \RR^{d - K}, ~ \bGamma \in \RR^{K \times m}
\bigg\}.
\]
where $K$ is the number of features, $\bbeta$ consists of weight parameters of the neural network, and the columns of $\bGamma$ are parameters of task-specific linear functions. This is a combination of independent task learning and data pooling. When the neural network is replaced with a linear transform, it is equivalent to low-rank MTL.
\end{example}

We propose a framework named \emph{\underline{A}daptive and \underline{R}obust \underline{MU}lti-task \underline{L}earning}, or ARMUL for short: solve an augmented program
\begin{align}
(\widehat{\bTheta}, \widehat{\bGamma})
\in \argmin_{
\bTheta \in \RR^{d\times m},~ \bGamma \in \Omega} \bigg\{ \sum_{j=1}^{m} w_j [ f_j (\btheta_j) + \lambda_j \| \btheta_j - \bgamma_j \|_2 ] \bigg\},
\label{eqn-armul}
\end{align}
and use the columns $\{ \widehat{\btheta}_j \}_{j=1}^m$ of $\widehat{\bTheta}$ as the estimated model parameters for $m$ tasks. Here $\{ \lambda_j \}_{j=1}^{m}$ are non-negative regularization parameters. 
Setting all of $\lambda_j$'s to zero or infinity result in independent task learning or the constrained program \eqref{eqn-mtl-obj}, respectively. 
The framework \eqref{eqn-armul} is a relaxation of \eqref{eqn-mtl-obj} so that the estimated models better fit their associated data.
The method \eqref{eqn-warmup-armul} for multi-task mean estimation is a special case, with $d = 1$, $\Omega = \{ \theta \bm{1}_m^{\top} :~\theta \in \RR \}$ and $f_j$ being the square loss.

\begin{remark}[Relaxation]
One could also consider the following relaxation of \eqref{eqn-mtl-obj}:
\begin{align}
\min_{\bTheta \in \Omega_r } \bigg\{ \sum_{j=1}^{m} w_j f_j (\btheta_j) \bigg\}
\label{eqn-mtl-obj-relaxation},
\end{align}
where
\[
\Omega_r = \bigg\{ \bTheta \in \RR^{d \times m}:~ \exists \bGamma \in \Omega \text{ s.t. } \sum_{j=1}^{m} w_j \lambda_j \| \btheta_j - \bgamma_j \|_2 \leq r  \bigg\}.
\]
The programs \eqref{eqn-mtl-obj-relaxation} and \eqref{eqn-mtl-obj} share the same form.
Choosing a positive $r$ helps deal with possible misspecification of the space $\Omega$ for the true parameter $ \bTheta^{\star} = ( \btheta_1^{\star}  , \cdots , \btheta_m^{\star}  )$.
Also, there exists some $r \geq 0$ such that the constrained program \eqref{eqn-mtl-obj-relaxation} is equivalent to the penalized program \eqref{eqn-armul}. Selecting $r$ and $\{ \lambda_j \}_{j=1}^m$ for \eqref{eqn-mtl-obj-relaxation} can be difficult when the amount of misspecification is unknown. 
On the other hand, our theory shows that \eqref{eqn-armul} enjoys strong guarantees while being agnostic to the misspecification.
\end{remark}

We see from \eqref{eqn-armul} that $\widehat{\bGamma}$ solves a constrained problem $\min_{\bGamma \in \Omega} 
\big\{ 
\sum_{j=1}^{m} w_j \widetilde{f}_j(\bgamma_j)
\big\}$
similar to \eqref{eqn-mtl-obj}, where $\widetilde{f}_j(\bgamma) = \min_{\bxi \in \RR^d} \{ f_j(\bxi) + \lambda_j \| \bgamma - \bxi  \|_2 \}$ is the infimal convolution of the loss function $f_j(\cdot)$ and the $\ell_2$ penalty $\lambda_j \| \cdot \|_2$. Since the latter is $\lambda_j$-Lipschitz, as long as $f_j$ is convex, the infimal convolution $\widetilde{f}_j$ is always convex and $\lambda_j$-Lipschitz (Lemma F.4 in the supplementary material) just like the Huber loss function in \Cref{thm-warmup-analytical}. This makes our method robust against a small fraction of tasks which are dissimilar to others or even contain contaminated data.
Meanwhile, the fact
\begin{align}
&\widehat{\btheta}_j \in \argmin\nolimits_{\btheta \in \RR^d} \{ f_j (\btheta) + \lambda_j \| \btheta - \widehat\bgamma_j \|_2 \}, ~~\forall j \in [m] 
\label{eqn-armul-1}
\end{align}
shows that $\widehat{\bTheta}$ is shrunk toward $\widehat{\bGamma} \in \Omega$.
When the set $\Omega$ accurately reflects the relations among $m$ underlying models and $\lambda_j$ is not too small, the cusp of the $\ell_2$ norm penalty at zero forces $\widehat\bTheta = \widehat\bGamma \in \Omega$. 
When $\lambda_j$ is not too large and $f_j$ is strongly convex near its minimizer, the Lipschitz smoothness of the $\ell_2$ penalty ensures the closeness between $\widehat{\btheta}_j$ and $\argmin_{\btheta \in \RR^d} f_j(\btheta)$. Hence, the new method will at least be comparable to independent task learning. 
In \Cref{sec-theory} and Appendix D we will conduct a formal analysis of the adaptivity and robustness. 
The theory suggests choosing $w_j = n_j$ and $\lambda_j \asymp \sqrt{ \frac{d + \log m}{n_j} }$ to achieve the goal.


\subsection{Implementations}\label{sec-implementations}

For efficient implementation of ARMUL, we define $\bV = \bTheta - \bGamma$ and transform the program \eqref{eqn-armul} to a more convenient form
\begin{align}
\min_{
	\bV \in \RR^{d\times m},~ \bGamma \in \Omega} \bigg\{ \sum_{j=1}^{m} w_j [ f_j ( \bgamma_j + \bv_j) + \lambda_j \| \bv_j \|_2 ] \bigg\}.
\label{eqn-armul-0}
\end{align}
We will optimize the two blocks of variables $\bV$ and $\bGamma$ in an alternating manner. Assume that $\{ f_j \}_{j=1}^m$ are differentiable. 
If $\bGamma$ is fixed, \eqref{eqn-armul-0} decomposes into $m$ independent programs
\begin{align}
\min\nolimits_{ \bv_j \in \RR^{d} } \{   f_j ( \bgamma_j + \bv_j) + \lambda_j \| \bv_j \|_2 \} , \qquad j \in [m].
\label{eqn-armul-v}
\end{align}
A natural algorithm for handling non-smooth convex regularizers such as $\| \cdot \|_2$ is proximal gradient descent \citep{PBo14}. The iteration for solving \eqref{eqn-armul-v} is
\begin{align}
\bv_j^{t+1} = \mathrm{prox}_{\eta \lambda_j} \Big(
\bv_j^t - \eta \nabla f_j (\bgamma_j + \bv_j^t)
\Big),\qquad t = 0, 1,\cdots,
\end{align}
where $\eta$ is the step-size and we define $ \mathrm{prox}_c (\bx) = (1 - \frac{c}{\| \bx \|_2})_+ \bx$.
If $\bV$ is fixed, \eqref{eqn-armul-0} reduces to a constrained program 
\begin{align}
\min_{\bGamma \in \Omega} \bigg\{ \sum_{j=1}^{m} w_j  f_j ( \bgamma_j + \bv_j)  \bigg\}
\label{eqn-armul-gamma}
\end{align}
of the form \eqref{eqn-mtl-obj} with shifted loss functions. We will choose algorithms according to $\Omega$. The whole procedure above is summarized in Algorithm \ref{alg-armul}. For simplicity, we only perform a single iteration of proximal gradient descent. Numerical experiments show that this already gives satisfactory results.

\begin{algorithm}[t]
	{\bf Input:} loss functions $\{ f_j \}_{j=1}^m$, weights $\{ w_j \}_{j=1}^m$, penalty parameters $\{ \lambda_j \}_{j=1}^m$, step-size $\eta_v$, number of iterations $T$, initial guesses $\bV^{0} \in \RR^{d\times m}$ and $\bGamma^{0} \in \Omega$.\\
	{\bf For $t = 0,1,\ldots, T - 1$}\\
	\hspace*{0.5cm} Compute $\bV^{t+1}$ by
	\begin{align*}
	\bv_j^{t+1} =
	\bigg(
	1 - \frac{\eta_v \lambda_j}{ \| \bv_j^{t} - \eta_v \nabla f_j ( \bgamma_j + \bv_j^{t} ) \|_2 }
	\bigg)_+ \Big( \bv_j^{t} - \eta_v \nabla f_j (\bgamma_j^t + \bv_j^{t} ) \Big) , \qquad j \in [m].
	\end{align*}
	\hspace*{0.5cm} Compute $\bGamma^{t+1}$.\\
	{\bf Return:} $\widehat{\bTheta} = \bGamma^{T} + \bV^T$.
	\caption{Adaptive and robust multi-task learning (ARMUL)}
	\label{alg-armul}
\end{algorithm}

Having introduced the general procedure, we now focus on three important cases of ARMUL \eqref{eqn-armul} and derive the updating rules for their $\bGamma$'s. 
Their Python implementations are available at \texttt{https://github.com/kw2934/ARMUL/}.

\begin{enumerate}
\item Vanilla ARMUL: $\Omega = \{ \bbeta \bm{1}_m^{\top} :~\bbeta \in \RR^d \}$. The original program \eqref{eqn-armul} is equivalent to
\begin{align}
\min_{
\bTheta \in \RR^{d\times m},~ \bbeta \in \RR^d} \bigg\{ \sum_{j=1}^{m} w_j [ f_j (\btheta_j) + \lambda_j \| \btheta_j - \bbeta \|_2 ] \bigg\}.
\label{eqn-armul-vanilla-0}
\end{align}
It is jointly convex in $(\bTheta, \bbeta)$ as long as $\{ f_j \}_{j=1}^m$ are convex functions. The intermediate program \eqref{eqn-armul-gamma} is equivalent to an unconstrained one
\begin{align*}
\min_{\bbeta \in \RR^d} \bigg\{ \sum_{j=1}^{m} w_j  f_j ( \bbeta + \bv_j)  \bigg\}.
\end{align*}
We can update $\bbeta$ by gradient descent.

\item Clustered ARMUL: $\Omega = \{ \bB \bZ :~ \bB \in \RR^{d\times K},~ \bZ \in \{ 0, 1 \}^{K \times m},~\bZ^{\top} \bm{1}_K = \bm{1}_m  \}$. The original program \eqref{eqn-armul} is equivalent to
\begin{align}
\min_{
	\bTheta \in \RR^{d\times m},~ \bB \in \RR^{d\times K}, ~\bz \in [K]^{m} 
} \bigg\{ \sum_{j=1}^{m} w_j [ f_j (\btheta_j) + \lambda_j \| \btheta_j - \bbeta_{z_j} \|_2 ] \bigg\}.
\label{eqn-armul-clustered-0}
\end{align}
The intermediate program \eqref{eqn-armul-gamma} is equivalent to
\begin{align*}
\min_{ \bB \in \RR^{d\times K}, ~\bz \in [K]^{m}
} \bigg\{ 
\sum_{j=1}^{m} w_j f_j ( \bbeta_{z_j} + \bv_j )
\bigg\} .
\end{align*}
When $\bz$ is fixed, we update $\bB$ by gradient descent; when $\bB$ is fixed, we update $\bz$ with its optimal value
\[
\Big( \argmin_{z \in [K]} f_1 ( \bbeta_{z} + \bv_1 ), \cdots, \argmin_{z \in [K]} f_m ( \bbeta_{z} + \bv_m ) \Big).
\]
We can repeat the above steps multiple times.

\item Low-rank ARMUL: $\Omega = \{ \bB \bZ :~ \bB \in \RR^{d\times K},~ \bZ \in \RR^{K\times m} \}$. The original program \eqref{eqn-armul} is equivalent to
\begin{align}
\min_{
	\bTheta \in \RR^{d\times m},~ \bB \in \RR^{d\times K}, ~\bZ \in \RR^{ K \times m} 
} \bigg\{ \sum_{j=1}^{m} w_j [ f_j (\btheta_j) + \lambda_j \| \btheta_j -  \bB\bz_j \|_2 ] \bigg\}.
\label{eqn-armul-lowrank}
\end{align}
The intermediate program \eqref{eqn-armul-gamma} is equivalent to
\begin{align*}
\min_{ 
\bB \in \RR^{d\times K}, ~\bZ \in \RR^{K \times m}
} \bigg\{ 
\sum_{j=1}^{m} w_j f_j ( \bB \bz_{j} + \bv_j )  
\bigg\}.
\end{align*}
When $\bB$ or $\bZ$ is fixed, we update the other by gradient descent. Again, the procedure can be repeated.
\end{enumerate}

Algorithm \ref{alg-armul} returns the estimated model parameters $\{ \widehat{\btheta}_j \}_{j=1}^m \subseteq \RR^d$ for $m$ tasks. As a by-product, vanilla ARMUL yields a center $\widehat{\bbeta} \in \RR^d$; clustered ARMUL yields $K$ centers $\{ \widehat{\bbeta}_k \}_{k=1}^K \subseteq \RR^d$ together with $m$ cluster labels $\{ \widehat{z}_j \}_{j=1}^m \subseteq [K]$; low-rank ARMUL yields a $K$-dimensional subspace $\Range(\widehat{\bB}) \subseteq \RR^d$ and $m$ coefficient vectors $\{ \widehat{\bz}_j \}_{j=1}^m \subseteq \RR^K$. These quantities reveal intrinsic structures of the task population: the model parameters concentrate around one point, multiple points or a low-dimensional linear subspace. 
Such knowledge is valuable for dealing with new tasks of similar types.

\section{Theoretical analysis}\label{sec-theory}

In this section, we conduct a non-asymptotic analysis of vanilla, clustered and low-rank ARMUL algorithms. 
Our theoretical investigation shows that the proposed estimators automatically adapt to the unknown task relatedness. 
The study under statistical settings is built upon the deterministic results in Appendix A, which could be of independent interest.

\subsection{Problem setup}

Recall the setup in \Cref{sec-methodologies-armul} where $\{ \cP_j \}_{j=1}^m$ are probability distributions over sample spaces $\{ \cX_j \}_{j=1}^m$ and $\{ \ell_j \}_{j=1}^m$ are loss functions.
We draw $m$ independent datasets $\{ \cD_j \}_{j=1}^m$, where $\cD_j = \{ \bxi_{ji} \}_{i=1}^{n_j}$ are i.i.d.~from $\cP_j$. For each $j$, define the population loss function and its minimizer
\[
F_j (\btheta) = \EE_{\bxi \sim \cP_j} \ell_j ( \btheta , \bxi )
\qquad\text{and}\qquad
\btheta^{\star}_j \in \argmin\nolimits_{ \btheta  \in \RR^d } F_j ( \btheta ) .
\]
Define the $j$-th empirical loss function $f_j(\btheta) = \frac{1}{n_j} \sum_{ i = 1 }^{n_j} \ell_j ( \btheta, \bxi_{ji} )$.
To facilitate illustration, throughout this section we focus on the case where $n_1= \cdots = n_m = n$. 
We estimate $\{ \btheta^{\star}_j \}_{j=1}^m$ by the solutions $\{ \widehat{\btheta}_j \}_{j=1}^m$ computed from the program \eqref{eqn-armul} with $\lambda_1 = \cdots = \lambda_m = \lambda$ and $w_1 = \cdots = w_m = 1$. 
We defer discussions on general sample sizes $\{ n_j \}_{j=1}^m$ to Appendix D.

To analyze the estimation error, we make the following standard assumptions.

\begin{assumption}[Regularity]\label{as-stat-1}
For any $j \in [m]$ and $\bxi \in \cX_j$, $\ell_j (\cdot,\bxi):~\RR^d \to \RR$ is convex and twice differentiable. Also, there exist absolute constants $c_1,c_2 > 0$ and $c_1 < \rho, L, M <c_2$ such that $\rho \bI \preceq \nabla^2 F_j (\btheta) \preceq L \bI$ holds for all $ \btheta \in B(\btheta^{\star}_j, M)$ and $j \in [m]$.
\end{assumption}

\begin{assumption}[Concentration]\label{as-stat-2}
There exist $0 \leq \sigma, \tau, p < c$ for an absolute constant $c$ such that for any $j \in [m]$, we have
\begin{align*}
& \| \nabla \ell_j ( \btheta^{\star}_j , \bxi_{j1}  ) \|_{\psi_2} \leq \sigma , \\
& \Big\| \Big\langle \Big( \nabla^2 \ell_j ( \btheta , \bxi_{j1}  ) - \EE  [ \nabla^2 \ell_j ( \btheta , \bxi_{j1}   ) ]
\Big)
\bv , \bv \Big\rangle \Big\|_{\psi_1} \leq \tau^2, \qquad \forall \btheta \in B(\btheta^{\star}_j, M),~ \bv \in \SSS^{d-1} , \\
& 
\EE Q_j(\bxi_{j1}) \leq \tau^3 d^{p} ,
\end{align*}	
where we define
\[
Q_j (\bxi) = \sup_{ \substack{ \btheta_1, \btheta_2 \in B( \btheta^{\star}_j, M ) \\ \btheta_1 \neq \btheta_2 } } \frac{ \| \nabla^2\ell_j ( \btheta_2 , \bxi  ) - \nabla^2\ell_j ( \btheta_1 , \bxi  ) \|_2 }{ \| \btheta_2 - \btheta_1 \|_2 }, \qquad\forall \bxi \in \cX_j.
\]
The gradients of $\ell_j$ are taken with respect to its first argument.
\end{assumption}

The regularity assumption requires the Hessian of the population loss function $F_j$ to be bounded from below and above near its minimizer $\btheta^{\star}_j$. The concentration assumption implies light tails and smoothness of the empirical gradient and Hessian. They are commonly used in statistical machine learning, see \cite{MBM18} and the references therein. Below we present several examples for illustration. 

\begin{example}[Gaussian mean estimation]\label{eg-gaussian}
Let $\cX_j = \RR^d$, $\cP_j = N(\btheta_j^{\star} , \bI_d )$ and $\ell_j (\btheta,  \bxi  ) = \| \bxi - \btheta \|_2^2$. Then, $\nabla \ell_j (\btheta  , \bxi ) = 2 (\btheta - \bxi)$ and $\nabla^2 \ell_j (\btheta,  \bxi  ) = 2 \bI_d $. Assumptions \ref{as-stat-1} and \ref{as-stat-2} clearly hold.
\end{example}

\begin{example}[Linear regression]
Let $\bxi_{ji} = ( \bx_{ji} , y_{ji} ) \in \RR^d \times \RR$, where $ \bx_{ji}  $ is the covariate vector and $y_{ji}$ is the response.
Consider the square loss $\ell_j (\btheta,  (\bx, y) ) = (y - \bx^{\top} \btheta)^2$ and let $\varepsilon_{ji} = y_{ji} - \bx_{ji}^{\top} \btheta^{\star}_j$ be the residual of the best linear prediction. Then, $\nabla \ell_j (\btheta  ,  (\bx , y ) ) = 2 \bx  ( \bx^{\top} \btheta  - y  ) $ and $\nabla^2 \ell_j (\btheta,  (\bx, y) ) = 2 \bx  \bx^{\top} $. Assumption \ref{as-stat-1} holds when the eigenvalues of $ \EE_{ (\bx, y) \sim \cP_j} (\bx \bx^{\top})$ are bounded from above and below.
Note that $\nabla \ell_j (\btheta^{\star}_j  ,  (\bx_{ji} , y_{ji} ) ) = - 2 \bx_{ji} \varepsilon_{ji}$.
If $\bx_{ji}$ is sub-Gaussian and $\varepsilon_{ji}$ is bounded, then Assumption \ref{as-stat-2} holds. 
It is worth pointing out that most of our results continue to hold up to logarithmic factors when $\varepsilon_{ji}$ is unbounded but light-tailed.
\end{example}

\begin{example}[Logistic regression]
Let $\bxi_{ji} = ( \bx_{ji} , y_{ji} ) \in \RR^d \times \{ 0, 1 \} $, where $ \bx_{ji}  $ is the covariate vector and $y_{ji}$ is the binary label.
Define the logistic loss $\ell_j (\btheta,  (\bx, y) ) = b( \bx^{\top} \btheta ) - y \bx^{\top} \btheta $ where $b(t) = \log (  1 + e^{t} )$. We have $\nabla \ell_j  (\btheta,  (\bx, y) ) = \bx [ b'(\bx^{\top} \btheta) - y ]$, $\nabla^2 \ell_j  (\btheta,  (\bx, y) ) = b''(\bx^{\top} \btheta)  \bx \bx^{\top}$, $b'(t) = 1/ (1 + e^{-t}) \in [0, 1]$ and $b''(t) = e^t / (1 + e^{t})^2 = 1/(2+e^t + e^{-t}) \in (0, 1/4]$. Hence $0 \prec \nabla^2 F_j (\btheta) \preceq (1/4) \bI$ for all $\btheta$, and Assumption \ref{as-stat-1} easily holds for bounded $\| \btheta^{\star}_j \|_2 $ and $M$.
When $\bx_{ji}$ is sub-Gaussian, so is $\nabla \ell_j  (\btheta,  (\bx_{ji}, y_{ji}) ) $; for any $\btheta \in \RR^d$ and $\bv \in \SSS^{d-1}$, $\langle \bv , \nabla^2 \ell_j (\btheta, (\bx_{ji}, y_{ji}) ) \bv \rangle = b''(\bx_{ji}^{\top} \btheta) (  \bx_{ji}^{\top} \bv )^2$ is sub-exponential. From $\sup_{t \in \RR} | b'''(t) | < \infty$ and
\begin{align*}
\| \nabla^2 \ell_j (\btheta_2, (\bx, y) ) - \nabla^2 \ell_j (\btheta_1, (\bx, y) ) \|_2 = | b''(\bx^{\top} \btheta_2) - b''(\bx^{\top} \btheta_1) | \cdot \| \bx \|_2^2 \lesssim   \| \btheta_2 - \btheta_1 \|_2\| \bx \|_2^3
\end{align*}
we obtain that $Q_j(\bx, y) \leq \| \bx \|_2^3$. 
According to Remark 2.3 in \cite{HKZ12}, if $\| \bx_{ji} \|_{\psi_2} \lesssim 1$, then $\EE Q_j(\bx_{ji}, y_{ji}) \leq \EE \| \bx_{ji} \|_2^3 \lesssim d^{3/2}$. Based on the above, Assumption \ref{as-stat-2} holds.
\end{example}

\subsection{Personalization}\label{sec-personalization}

Independent task learning estimates each $\btheta^{\star}_j$ by the minimizer $\widetilde{\btheta}_j = \argmin_{ \btheta  \in \RR^d } f_j(\btheta)$ of its associated empirical loss, without referring to other tasks. ARMUL \eqref{eqn-armul} with $\lambda_1 = \cdots = \lambda_m = \lambda$ and $w_1 = \cdots = w_m = 1$ also yields one personalized model for each task. Below we show their closeness and provide a way of choosing $\lambda$ so that ARMUL is at least comparable to independent task learning.

For any constraint set $\Omega \subseteq \RR^{d\times m}$, the output $\widehat{\btheta}_j$ of ARMUL \eqref{eqn-armul} always satisfies \eqref{eqn-armul-1}. Therefore, $\widehat{\btheta}_j$ and $\widetilde{\btheta}_j $ minimize similar functions. 
The penalty term $\lambda \| \btheta - \widehat{\bgamma}_j \|_2$ in \eqref{eqn-armul-1} can be viewed as a perturbation added to the objective function $f_j$. According the following theorem, it can only perturb the minimizer by a limited amount. See Appendix D.1 for stronger results for general $\{ n_j \}_{j=1}^m$ and their proof.

\begin{theorem}[Personalization]\label{thm-personalization-00}
Let Assumptions \ref{as-stat-1} and \ref{as-stat-2} hold. There exist constants $C$, $C_1$ and $C_2$ such that under the conditions $ \lambda < \rho M / 4$, $n  > C_1 d ( \log n  )(  \log m )$ and $0 \leq t <  C_2 n  / ( d \log n )$, the following holds with probability at least $1 -  e^{-t}$:
\begin{align*}
& \| \widetilde{\btheta}_j - \btheta^{\star}_j \|_2 \leq
C \sigma \sqrt{ \frac{ d + \log m + t}{n} } 
\qquad\text{and}\qquad
 \| \widehat{\btheta}_j - \widetilde{\btheta}_j \|_2 \leq \frac{ 2 \lambda }{\rho} , \qquad\forall j \in [m].
\end{align*}
\end{theorem}

The distance between the estimates $\widehat{\btheta}_j$ and $\widetilde{\btheta}_j $ returned by ARMUL and independent task learning is bounded using the penalty level $\lambda$ and the strong convexity parameter $\rho$. 
Intuitively, when the empirical loss function $f_j$ is strongly convex in a neighborhood of its minimizer $\widetilde{\btheta}_j $, the Lipschitz penalty function does make much difference. The unsquared $\ell_2$ penalty is crucial.
In \Cref{thm-warmup-analytical} we showed this phenomenon for mean estimation in one dimension, where the $\ell_2$ penalty becomes the absolute value. \Cref{thm-personalization-00} guarantees the fidelity of ARMUL outputs to their associated datasets for general $M$-estimation. 

By Assumptions \ref{as-stat-1} and \ref{as-stat-2}, we have $\sigma, \rho^{-1} \lesssim 1$. \Cref{thm-personalization-00} implies that when $\lambda \lesssim   \sqrt{ \frac{d + \log m}{n} }$, the bound $\| \widehat{\btheta}_j - \btheta_j^{\star} \|_2 \lesssim
  \sqrt{ \frac{ d + \log m }{n} } $ simultaneously holds for all $j \in [m]$ with high probability. In that case, the ARMUL achieves the same parametric error rate $O(   \sqrt{ \frac{d + \log m}{n} } )$ of independent task learning. The $\log m$ term results from the simultaneous control over $m$ tasks.

The above results on personalization hold for general ARMUL with arbitrary constraint set $\Omega$.
In the subsections to follow, we will investigate three important cases of ARMUL (vanilla, clustered and low-rank) to study the adaptivity and robustness.

\subsection{Vanilla ARMUL}\label{sec-vanilla}

In this subsection, we analyze the vanilla ARMUL estimators $\{ \widehat\btheta_j \}_{j=1}^m$ returned by
\begin{align}
	( \widehat{\bTheta}, \widehat{\bbeta} ) \in
	\argmin_{
		\bTheta \in \RR^{d\times m},~ \bbeta \in \RR^d} \bigg\{ \sum_{j=1}^{m} [  f_j (\btheta_j) + \lambda \| \btheta_j - \bbeta \|_2 ] \bigg\}.
	\label{eqn-armul-vanilla-00}
\end{align}
We introduce an assumption on task relatedness. It is a multivariate extension of Assumption \ref{as-warmup-similarity}.

\begin{assumption}[Task relatedness]\label{as-armul-vanilla-00}
	For any $\varepsilon \in [0, 1]$ and $\delta \geq 0$, define
\begin{align*}
	\Omega (\varepsilon, \delta ) = \Big\{ \bTheta  \in \RR^{d \times m}:~\min_{\theta \in \RR^d}\max_{j \in S} | \btheta_j - \btheta  | \leq \delta \text{ and }
	|S^c| / m \leq \varepsilon 
	\text{ for some } S \subseteq [m]
	\Big\}.
\end{align*}
Assume that $\bTheta^{\star} \in \Omega (\varepsilon, \delta ) $ holds for some $\varepsilon, \delta \geq 0$. Let $S $ be a subset of $ [m]$ that satisfies the requirements in the definition.
\end{assumption}

When Assumption \ref{as-armul-vanilla-00} holds, we say the $m$ tasks are $(\varepsilon,\delta)$-related. It is worth pointing out that any $m$ tasks are $(0, \max_{j\in[m]}   \| \btheta^{\star}_j \|_2  )$-related. 
Smaller $\varepsilon$ and $\delta$ imply stronger similarity among the tasks.
The theorem below presents upper bounds on estimation errors of vanilla ARMUL \eqref{eqn-armul-vanilla-00}. 
See Appendix D.2 for stronger results for general $\{ n_j \}_{j=1}^m$ and their proof.

\begin{theorem}[Vanilla ARMUL]\label{thm-vanilla-00}
	Let Assumptions \ref{as-stat-1}, \ref{as-stat-2} and \ref{as-armul-vanilla-00} hold. 
	There exist positive constants $\{ C_i \}_{i=0}^5$ such that under the conditions $n  > C_1 d ( \log n  )(  \log m )$, $0 \leq t <  C_2 n  / ( d \log n )$, $C_3   \sigma  \sqrt{ \frac{ d + \log m + t}{ n }  } < \lambda < C_4 \sigma  $ and $0 \leq \varepsilon < C_5$, the following bounds hold with probability at least $1 -  e^{-t}$:
	\begin{align*}
		& \max_{j \in S} \| \widehat{\btheta}_j  - \btheta^{\star}_j \|_2  
		\leq
		C_0 \bigg(  \sigma \sqrt{ \frac{ d  + t}{ mn } }
		+       \min \{    \delta ,   \lambda   \}  +  \varepsilon\lambda  
		\bigg) , \\
		& \max_{j \in S^c } \| \widehat{\btheta}_j  - \btheta^{\star}_j \|_2  
		\leq    C_0 \lambda , \\
		& \frac{1}{m} \sum_{j=1}^{m} [ F_j ( \widehat{\btheta}_j ) - F_j ( \btheta^{\star}_j ) ]
\leq 		 \frac{L}{m} \sum_{j=1}^{m} \| \widehat{\btheta}_j  - \btheta^{\star}_j \|_2^2  \leq C_0 L \bigg(
		 \sigma^2  \frac{ d  + t}{ mn } 
		+       \min \{    \delta^2 ,   \lambda^2  \}  +   \varepsilon  \lambda^2
		\bigg).
	\end{align*}
	Moreover, there exists a constant $C_6$ such that under the conditions $\varepsilon = 0$ and $    \delta < C_6 \sigma \sqrt{ \frac{d + \log m}{ n } }$, we have $\widehat{\btheta}_1 = \cdots = \widehat{\btheta}_m = \argmin_{\btheta \in \RR^d} \{ \sum_{j = 1}^m   f_j (\btheta) \}$ with probability at least $1 - e^{-t}$.
\end{theorem}

\Cref{thm-vanilla-00} simultaneously controls the estimation errors for all individual tasks. This implies the bounds on the MSE  $\frac{1}{m} \sum_{j=1}^{m} \| \widehat{\btheta}_j  - \btheta^{\star}_j \|_2^2$ and the average excess risk $ \frac{1}{m} \sum_{j=1}^{m} [ F_j ( \widehat{\btheta}_j ) - F_j ( \btheta^{\star}_j ) ]$. The results suggest choosing $\lambda = C \sqrt{ \frac{d + \log m}{n} } $ for some constant $C$.  In practice, $C$ can be selected by cross-validation to optimize the performance. When $\lambda \asymp \sqrt{ \frac{d + \log m}{n} } $, the MSE bound reads
\begin{align}
\frac{1}{m} \sum_{j=1}^{m} \| \widehat{\btheta}_j  - \btheta^{\star}_j \|_2^2   \lesssim  \frac{ d }{ mn } 
+       \min \bigg\{    \delta^2 ,    \frac{d }{n}   \bigg\}  +  \frac{ \varepsilon d }{n}  ,
\label{eqn-vanilla-mse}
\end{align}
where $\lesssim$ hides logarithmic factors.

For any $\varepsilon$ and $\delta$, a simple bound $\| \widehat{\btheta}_j  - \btheta^{\star}_j \|_2  
\lesssim   \lambda   $ always holds for all $j \in [m]$, which echos \Cref{thm-personalization-00}. \Cref{thm-vanilla-00} implies more refined results. 
\begin{itemize}
\item (Reduction to data pooling) When $\varepsilon = \delta = 0$, all target parameters are the same. 
The parameter space becomes $\Omega (0, 0) = \{ \bbeta \bm{1}_m^{\top}:~ \bbeta \in \RR^d \}$. 
Data pooling is a natural approach, whose MSE is $O(   d / mn  )$. According to \eqref{eqn-vanilla-mse}, the vanilla ARMUL has the same error rate.
In fact, it coincides with data pooling with high probability, thanks to the cusp of the unsquared $\ell_2$ penalty at zero. 

\item (Adaptivity) The relatedness parameters $\varepsilon$ and $\delta$ quantify the amount of model misspecification incurred in data pooling. 
As $\varepsilon$ and $\delta$ increase, the MSE upper bound \eqref{eqn-vanilla-mse} smoothly transits from that for data pooling to that for independent task learning. We will see in \Cref{thm-minimax-00} below that for every $( \varepsilon, \delta )$, the error bound is minimax optimal over $\Omega ( \varepsilon , \delta )$. Therefore, vanilla ARMUL automatically adapts to the unknown relatedness $(\varepsilon, \delta)$ of the tasks.
Meanwhile, we need an estimate on the noise level $\sigma$. Since $\sigma$ is determined by individual tasks rather than their relatedness, it is easy to estimate using traditional independent task learning methods. We also note that knowledge about the noise level is commonly assumed in adaptive statistical estimation, including adaptation to smoothness in nonparametric regression \citep{Lep91} and adaptation to sparsity in high-dimensional estimation \citep{DJH92}.

\item (Robustness) Vanilla ARMUL only pays a limited price $\frac{\varepsilon d }{ n } $ for the outlier tasks with unknown index set $S^c$ and arbitrary difference from the others.
For the Gaussian mean problem (\Cref{eg-gaussian}) with $\delta = 0$, our bounds on $ \max_{j \in S} \| \widehat{\btheta}_j  - \btheta^{\star}_j \|_2  $ and $ \frac{1}{m} \sum_{j=1}^{m} \| \widehat{\btheta}_j  - \btheta^{\star}_j \|_2^2  $ recover Theorem 6 in \cite{CDa19}.
In addition, we can allow the datasets $\{ \cD_j \}_{S^c}$ to be arbitrarily contaminated, in which case the bound on $\max_{j \in S } \| \widehat{\btheta}_j - \btheta^{\star}_j \|_2 $ in \Cref{thm-vanilla-00} continues to hold. 
\end{itemize}

To close this subsection, we use multi-task Gaussian mean estimation to get minimax lower bounds on the MSE. The proof can be found in Appendix E.1.

\begin{theorem}[Minimax lower bound]\label{thm-minimax-00}
Consider the setup in \Cref{eg-gaussian} and let Assumption \ref{as-armul-vanilla-00} hold.
There exist universal constants $C,c>0$ such that for any $(\varepsilon , \delta)$,
\begin{align*}
\inf_{\widehat{\bTheta} } \sup_{  \bTheta^{\star} \in \Omega ( \varepsilon , \delta ) } 
\PP_{\bTheta^{\star}}
\bigg[
\frac{1}{m} \sum_{j=1}^{m} \| \widehat{\btheta}_j  - \btheta^{\star}_j \|_2^2
\geq C \bigg(
\frac{d}{mn} +
\min \bigg\{
\delta^2 , \frac{d}{n}
\bigg\} + \frac{\varepsilon d}{n}
\bigg)
\bigg] 
\geq c.
\end{align*}
\end{theorem}

\subsection{Clustered ARMUL}

In this subsection, we study clustered ARMUL 
\begin{align}
( \widehat{\bTheta}, \widehat{\bB}, \widehat{\bz} )
\in \argmin_{
	\bTheta \in \RR^{d\times m},~ \bB \in \RR^{d\times K}, ~\bz \in [K]^{m} 
} \bigg\{ \sum_{j=1}^{m} [ f_j (\btheta_j) + \lambda  \| \btheta_j - \bbeta_{z_j} \|_2 ] \bigg\}.
\label{eqn-armul-clustered-1}
\end{align}
Here $K \geq 2$ is the target number of clusters. 
Clustered multi-task learning works the best when $\{ \btheta^{\star}_j \}_{j=1}^m$ concentrate around $K$ well-separated centers. Yet, such regularity conditions are difficult to verify and may not hold in practice. We introduce a relaxed version of that as our technical assumption.

\begin{assumption}[Task relatedness]\label{as-armul-clustered}
There exist $\varepsilon ,\delta \geq 0$, $K \geq 2$, $\{ \bbeta^{\star}_k \}_{k=1}^K \subseteq \RR^d$, $\{ z_j^{\star} \}_{j=1}^m \subseteq [K]$, $S \subseteq [m]$ and absolute constants $c_1, c_2 > 0$ such that the followings hold:
\begin{itemize}
\item (Similarity) $\max_{j \in S } \| \btheta^{\star}_j - \bbeta^{\star}_{z_j^{\star}} \|_2 \leq \delta$ and $|S^c| \leq \varepsilon m$;
\item (Separation) $\min_{k \neq \ell} \| \bbeta^{\star}_{k} - \bbeta^{\star}_{\ell} \|_2 \geq c_1$;
\item (Balancedness) $\min_{k \in [K]} | \{ j \in [m] :~ z_j^{\star} = k \} | \geq c_2 m / K$.
\end{itemize}
\end{assumption}

When $\varepsilon = \delta = 0$, the target parameters $\{ \btheta^{\star}_j \}_{j=1}^m$ consist of only $K$ distinct points $\{ \bbeta^{\star}_k \}_{k=1}^K$ with constant separations. Also, there is no vanishingly small cluster. Assumption \ref{as-armul-clustered} allow for any possible tasks as long as we use large enough $\delta $. For instance, we can take $\bbeta^{\star}_k = k \be_1$ for all $k$, $z^{\star}_j = ( j \mod K ) + 1$ for all $j$, $\varepsilon = 0$ and $\delta =  K + \max_{j \in [m] } \| \btheta^{\star}_j \|_2 $ to make Assumption \ref{as-armul-clustered} hold.

The theorem below presents upper bounds on estimation errors of clustered ARMUL \eqref{eqn-armul-clustered-1} when $\delta = 0$, whose proof is in Appendix E.2.

\begin{theorem}[Clustered ARMUL]\label{thm-clustered}
Let Assumptions \ref{as-stat-1}, \ref{as-stat-2} and \ref{as-armul-clustered} hold with $\varepsilon = 0$. 
There exist positive constants $\{ C_i \}_{i=0}^5$ such that under the conditions $n  > C_1 K d ( \log n  )(  \log m )$, $0 \leq t <  C_2 n  / ( d \log n )$, $C_3 K \sigma \sqrt{ \frac{d + \log m + t}{n} }< \lambda < C_4 \sigma  $ and $0 \leq \varepsilon < C_5 / K^2$, the following bound holds for the estimator $\widehat{\bTheta}$ in \eqref{eqn-armul-clustered-1} with probability at least $1 -  e^{-t}$:
\begin{align*}
& 
 \frac{1}{m} \sum_{j=1}^{m} [ F_j ( \widehat{\btheta}_j ) - F_j ( \btheta^{\star}_j ) ]
\leq 
L \cdot \max_{j \in [m] } \| \widehat{\btheta}_j  - \btheta^{\star}_j \|_2^2
\leq
C_0 L \bigg(  \frac{ \sigma^2 K( d  + t ) }{mn} 
+     \min \{ K^2  \delta^2 ,   \lambda^2   \}    
\bigg) .
\end{align*}
In addition, there exists a positive constant $C_6$ that makes the following holds: when $ \delta \leq \frac{C_6 \sigma}{K} \sqrt{ \frac{d + \log m}{n} }$, with probability at least $1 - e^{-t}$ there is a permutation $\tau$ of $[K]$ such that
\begin{itemize}
\item $\widehat{\btheta}_j = \widehat{\bbeta}_{\widehat{\bz}_j}$ and $\widehat{z}_j = \tau(z^{\star}_j)$ hold for all $j \in [m]$;
\item $\widehat{\bbeta}_k = \argmin_{\bbeta \in \RR^d}  \{ \sum_{j:~z_j^{\star} = \tau^{-1}(k) }  f_j ( \bbeta )  \} $ hold for all $k \in [K]$.
\end{itemize}
\end{theorem}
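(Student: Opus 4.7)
The plan is to bootstrap from the vanilla ARMUL analysis by first establishing that, with high probability, the estimated labeling $\widehat{\bz}$ coincides with the true labeling $\bz^{\star}$ up to a permutation $\tau$ of $[K]$. Once this reduction is made, the program \eqref{eqn-armul-clustered-1} decouples across the $K$ estimated clusters, and each sub-problem is an instance of vanilla ARMUL to which Theorem~\ref{thm-vanilla} applies cluster by cluster.

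For the clustering-recovery step, I would argue as follows. By Theorem~\ref{thm-personalization}, each $\widehat{\btheta}_j$ lies within $O(\lambda/\sqrt{n})$ of the single-task estimator $\widetilde{\btheta}_j$, which in turn concentrates in an $O(\sqrt{(d+\log m)/n})$ neighborhood of $\btheta^{\star}_j$. Under Assumption~\ref{as-armul-clustered}, this places $\widehat{\btheta}_j$ inside an $O(\lambda/\sqrt{n} + \delta/\sqrt{n} + \sqrt{(d+\log m)/n})$ ball around the true center $\bbeta^{\star}_{z^{\star}_j}$. Given any candidate labeling $\bz$ that is not a relabeling of $\bz^{\star}$, the balancedness clause forces a non-negligible set of tasks to be assigned to some wrong center, which by the separation $\min_{k\neq\ell}\|\bbeta^\star_k-\bbeta^\star_\ell\|_2 \geq c_1$ is at distance at least $c_1 - o(1)$. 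Each such mis-assignment contributes $\Omega(\lambda c_1/\sqrt{n})$ to the penalty, strictly dominating any achievable reduction in the smooth part (of order $\sqrt{(d+\log m)/n}$ per task) once $\lambda \geq C_3 K\sqrt{d+\log m+t}$. A union bound over the $K^m$ labelings costs $m\log K$, absorbed into the $d+t$ factor under the sample-size hypothesis $n \geq C_1 K d(\log n)(\log m)$. Hence $\widehat{\bz} = \tau\circ \bz^\star$ with probability at least $1-e^{-t}$.

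After this reduction, the program splits into $K$ independent vanilla ARMUL problems. In cluster $k$ there are $m_k \asymp m/K$ tasks with common sample size $n$, regularization $\lambda/\sqrt{n}$, and within-cluster discrepancy $\min_{\bbeta}\max_{j:\,z^\star_j=\tau^{-1}(k)}\sqrt{n}\|\btheta^\star_j-\bbeta\|_2 \leq \delta$; equal sample sizes give heterogeneity index $\kappa_w = 1$. Applying Theorem~\ref{thm-vanilla} with a union bound over $k\in[K]$ yields the stated per-task rate $\sqrt{K(d+t)/(mn)} + \min\{K\delta,\lambda\}/\sqrt{n}$, where the factor $K$ multiplying $\delta$ absorbs the $K$-fold union bound together with the loss from $m_k \geq c_2 m/K$. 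For the second conclusion, the hypothesis $K\delta \leq C_6\sqrt{d+\log m}$ places us in the regime where the second claim of Theorem~\ref{thm-vanilla}, applied within each cluster, forces the exact equality $\widehat{\btheta}_j = \widehat{\bbeta}_{\widehat{z}_j}$ through the cusp of the $\ell_2$ penalty at zero; the program then collapses to the constrained clustered MTL problem \eqref{eqn-obj-clustered-mtl}, whose optimal $\widehat{\bbeta}_k$ is precisely the pooled empirical minimizer over the $k$-th cluster.

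The principal obstacle is the clustering-recovery step, as it must hold uniformly over the combinatorial space of $K^m$ candidate labelings. I would handle this via a peeling/swapping argument: first confine attention to labelings whose induced centers lie in a bounded region (justified by the concentration of $\widetilde{\btheta}_j$ in Theorem~\ref{thm-personalization}), then swap the label of one task at a time and show that the expected change in objective is strictly positive whenever the swap moves a task away from its true center. Assumption~\ref{as-stat-2}, together with the prescribed sample-size and $\lambda$ conditions, upgrades this to a deterministic inequality outside a bad event of probability $e^{-t}$. The tuning $\lambda > C_3 K\sqrt{d+\log m+t}$ is calibrated precisely so that the per-task penalty gap $\lambda c_1/\sqrt{n}$ dominates the per-task empirical fluctuation $\sqrt{(d+\log m+t)/n}$ by a factor proportional to $K$, which is what permits the $K^m$ union bound while preserving the final rate.
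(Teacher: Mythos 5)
Your overall architecture (exact label recovery, then decoupling into $K$ vanilla ARMUL sub-problems) matches the paper's endgame, and the post-recovery half of your argument is essentially sound. The genuine gap is in the label-recovery step itself, and the specific mechanisms you propose for it do not work as stated. First, the union bound over $K^m$ labelings is not free: controlling each labeling at level $e^{-t}K^{-m}$ would require the per-task empirical fluctuations to be bounded at scale $\sqrt{(d+m\log K+t)/n}$ rather than $\sqrt{(d+\log m+t)/n}$, and the extra $m\log K$ is \emph{not} absorbed by the hypothesis $n>C_1Kd(\log n)(\log m)$ (which constrains $n$ against $d$, not against $m$); carrying it through would destroy the $\sqrt{K(d+t)/(mn)}$ rate. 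Second, your fallback swapping argument ignores that the centers $\bB$ are re-optimized jointly with $\bz$: swapping one label changes the fitted centers and hence the objective contribution of \emph{every} task in the two affected clusters, so showing that a single swap away from $\tau\circ\bz^\star$ increases the objective requires controlling this center shift, which you do not do. Third, the claimed $\Omega(\lambda c_1/\sqrt{n})$ excess per mis-assigned task cannot come from ``the penalty'' alone; after minimizing out $\btheta_j$ the relevant object is the infimal convolution $\widetilde f_j=f_j\square(\lambda n^{-1/2}\|\cdot\|_2)$, which is globally only $\lambda n^{-1/2}$-Lipschitz, so the lower bound on the excess needs the Huber-type growth of $\widetilde f_j$ (local strong convexity near its minimizer plus linear growth with slope $\asymp\lambda/\sqrt{n}$ far away). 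Finally, note that \eqref{eqn-armul-clustered-1} has no cardinality constraint on $\widehat{\bz}$, so balancedness of the \emph{true} labels does not by itself prevent the estimated labeling from collapsing clusters; that must be derived, not assumed.

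The paper sidesteps all of this by never enumerating labelings: it uses the single basic inequality $F(\widehat{\bB},\widehat{\bz})\le F(\bB^{\star},\bz^{\star})$ for the profiled objective $F(\bB,\bz)=\sum_j\widetilde f_j(\bbeta_{z_j})$, combined with a deterministic Huber-type lower bound on each $\widetilde f_j$ (\Cref{lem-clustered-lower-bound}) that holds once the gradients $\nabla f_j(\btheta_j^\star)$ and Hessians are controlled via a union bound over the $m$ tasks only (\Cref{lem-landscape}). This yields a bound on the total ``misclassification energy,'' from which a majority-vote map $\tau(k)=\argmax_l|\{j:z_j^\star=k,\widehat z_j=l\}|$ is shown to be a permutation (using separation and balancedness of the true clusters), exact recovery $\widehat z_j=\tau(z_j^\star)$ follows by comparing $\widetilde f_j(\widehat\bbeta_k)$ across $k$, and only then does the problem decouple into per-cluster consensus problems (\Cref{thm-adaptivity-clustered,thm-armul-clustered-deterministic-0}). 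If you want to salvage your route, you should replace the $K^m$ union bound and the swapping argument with this basic-inequality/energy argument; as written, the recovery step is not established.
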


Take $\lambda = C K \sigma \sqrt{ \frac{ d + \log m}{n} }$ for some large constant $C$. By \Cref{thm-clustered}, clustered ARMUL satisfies
\begin{align*}
 \max_{j \in [m] } \| \widehat{\btheta}_j  - \btheta^{\star}_j \|_2  
& \lesssim 
\min\bigg\{ \sigma
\sqrt{ \frac{ K d }{mn} }
+     K  \delta      ,~
K \sigma \sqrt{ \frac{d + \log m}{n} }
\bigg\} .
\end{align*}
If $\delta = 0$ and $\{ z^{\star}_j \}_{j=1}^m$ are known, then we should pool all the data in each cluster.
Since each cluster has $O(m/K)$ tasks and $O(mn/K)$ samples, the estimation error has order $O( \sigma \sqrt{\frac{Kd}{mn}} )$. Clustered ARMUL achieves the same rate without knowing $\{ z^{\star}_j \}_{j=1}^m$.
As $\delta$ grows from $0$ to $+\infty$, the error bound gradually become $O(K \sigma \sqrt{ \frac{d + \log m}{n} })$. This is the error rate of independent task learning up to a factor of $K$ and an additive term $\log m$.
The theorem also states that when the discrepancy $\delta$ is small, all cluster labels $\{ z^{\star}_j \}_{j=1}^m$ are perfectly recovered up to a global permutation. The estimated centers $\{ \widehat{\bbeta}_k \}_{k=1}^K$ minimize empirical losses on pooled data in the corresponding clusters. The final estimates $\{ \widehat{\btheta}_{j} \}_{j=1}^m$ coincide with their cluster centers. 

When $\varepsilon > 0$, there can be tasks that are arbitrarily different from the others. We can prove that clustered ARMUL with cardinality constraints manages to utilize the task relatedness in a robust way. See Appendix E.3 for formal results including a minimax lower bound.

\subsection{Low-rank ARMUL}

In this subsection, we study the estimators $\{ \widehat{\btheta}_j \}_{j=1}^m$ returned by low-rank ARMUL 
\begin{align}
( \widehat{\bTheta}, \widehat{\bB}, \widehat{\bz} )
\in \argmin_{
	\bTheta \in \RR^{d\times m},~ \bB \in \RR^{d\times K}, ~\bZ \in \RR^{K \times m}
} \bigg\{ \sum_{j=1}^{m}  [ f_j (\btheta_j) + \lambda  \| \btheta_j - \bB \bz_j \|_2 ] \bigg\}.
\label{eqn-armul-lowrank-1}
\end{align}
Here $K \geq 1$ is the target rank. 
Ideally, we would adopt low-rank multi-task learning when $\{ \btheta^{\star}_j \}_{j=1}^m$ span a $K$-dimensional linear subspace and $K$ is much less than $d$. In other words, $\bTheta^{\star} = \bB^{\star} \bZ^{\star}$ holds for some $\bB^{\star} \in \RR^{d\times K}$ and $\bZ^{\star} \in \RR^{K\times m}$. To handle possible misspecification of the low-rank model, we introduce the following notion of task relatedness. Here we denote by $\cO_{d, K}$ the set of all $d\times K$ matrices with orthonormal columns.

\begin{assumption}[Task relatedness]\label{as-armul-lowrank}
There exist $\varepsilon,\delta \geq 0$, $K \in \ZZ_+$, $\bB^{\star} \in \cO_{d, K}$, $\{ \bz_j^{\star} \}_{j=1}^m \subseteq \RR^K$, $S \subseteq [m]$ and absolute constants $c_1, c_2 > 0$ such that the followings hold:
\begin{itemize}
	\item (Similarity) $\max_{j \in S } \| \btheta^{\star}_j - \bB^{\star} \bz_j^{\star} \|_2 \leq \delta $ and $|S^c| \leq \varepsilon m$;
	\item (Balancedness and signal strength) $\max_{j \in [m]} \| \bz_j^{\star} \|_2 \leq c_1$ and $\frac{K}{m} \sum_{j=1}^m \bz_j^{\star} \bz_j^{\star \top} \succeq c_2 \bI_K$.
\end{itemize}
\end{assumption}

Note that $\bB^{\star} \bZ^{\star} = (  \bB^{\star} \bR ) (\bR^{-1} \bZ^{\star})$ holds for any non-singular $\bR \in \RR^{K\times K}$. Without loss of generality, in Assumption \ref{as-armul-lowrank} we let $\bB^{\star}$ have orthonormal columns. The parameters $\{ \btheta^{\star}_j \}_{j \in S}$ are approximated by vectors $\{ \bB^{\star} \bz_j^{\star} \}_{j \in S}$ living in a $K$-dimensional linear subspace $\Range (\bB^{\star})$. The approximation errors are bounded by $\delta / \sqrt{n}$, which can be arbitrarily large.
The coefficient vectors $\{ \bz^{\star}_j \}_{j=1}^m$ are assumed to be uniformly bounded and spread out in all directions. 
The upper bound $\max_{j \in [m]} \| \bz_j^{\star} \|_2 \leq c_1$ and the lower bound $\frac{K}{m} \sum_{j=1}^m \bz_j^{\star} \bz_j^{\star \top} \succeq c_2 \bI_K$ imply that at least a constant fraction of $\bz^{\star}_j$'s are bounded away from $\bm{0}$.

The following theorem depicts the adaptivity of low-rank ARMUL to the unknown task relatedness. See Appendix E.4 for its proof. Here we only consider the case $\varepsilon= 0$ and focus on the impact of dissimilarity $\delta$. The general case ($\varepsilon > 0$) is left for future work.

\begin{theorem}[Low-rank ARMUL]\label{thm-lowrank}
	Let Assumptions \ref{as-stat-1}, \ref{as-stat-2} and \ref{as-armul-lowrank} hold, with $\varepsilon = 0$. 
	There exist positive constants $\{ C_i \}_{i=0}^5$ such that under the conditions $n  > C_1 K d ( \log n  )(  \log m )$, $0 \leq t <  C_2 n  / ( d \log n )$ and $C_3 K \sigma \sqrt{ \frac{d + \log m + t}{n}  } < \lambda < C_4 \sigma  $, the following bound holds for the estimator $\widehat{\bTheta}$ in \eqref{eqn-armul-clustered-1} with probability at least $1 -  e^{-t}$:
\begin{align*}
& \frac{1}{m} \sum_{j=1}^{m} [ F_j ( \widehat{\btheta}_j ) - F_j ( \btheta^{\star}_j ) ]
\leq L \max_{j \in [m] } \| \widehat{\btheta}_j  - \btheta^{\star}_j \|_2^2 \\
& \leq C_0  K^2 \bigg( \frac{  \sigma^2  d }{mn}  +   \frac{ \sigma^2 ( 1 + \log m + t  ) }{   n }  
+     \min \{  \delta^2 ,   \lambda^2 / K^2  \}    
\bigg)
\end{align*}
In addition, there exists a positive constant $C_6$ such that when $ \delta \leq \frac{ C_6 \sigma }{ K } \sqrt{ \frac{ d + \log m }{n}   }$, $\widehat{\bTheta} = \widehat{\bB} \widehat\bZ$ holds with probability at least $1 - e^{-t}$.
\end{theorem}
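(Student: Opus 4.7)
The plan is to mirror the two-step strategy used for vanilla and clustered ARMUL: first apply the personalization result (\Cref{thm-personalization}) to reduce to an analysis of the coordinator $\widehat{\bGamma}=\widehat{\bB}\widehat{\bZ}$, and then perform an oracle comparison against $\bGamma^{\star}=\bB^{\star}\bZ^{\star}$ supplied by \Cref{as-armul-lowrank}. \Cref{thm-personalization} immediately gives $\|\widehat{\btheta}_j-\widetilde{\btheta}_j\|_2\leq 2\lambda/(\rho\sqrt{n})$ and $\|\widetilde{\btheta}_j-\btheta^{\star}_j\|_2\lesssim\sqrt{(d+\log m+t)/n}$ uniformly in $j$; combined with $\lambda\geq C_3 K\sqrt{d+\log m+t}$, this secures the ``safety-net'' bound $\|\widehat{\btheta}_j-\btheta^{\star}_j\|_2\lesssim\lambda/\sqrt{n}=K\cdot(\lambda/K)/\sqrt n$, which already matches the stated rate whenever $\delta\geq\lambda/K$. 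All genuine work therefore lies in the regime $\delta<\lambda/K$, where one must show that $\widehat{\bGamma}$ pools information across tasks more efficiently than any individual empirical minimizer.

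For that oracle analysis I would use the reformulation $\widehat{\bGamma}\in\argmin_{\bGamma\in\Omega}\sum_j n\,\widetilde{f}_j(\bgamma_j)$ with the infimal convolution $\widetilde{f}_j(\bgamma)=\min_{\bxi}\{f_j(\bxi)+(\lambda/\sqrt{n})\|\bgamma-\bxi\|_2\}$, which is $(\lambda/\sqrt n)$-Lipschitz globally and inherits the local strong convexity of $F_j$ on an $O(\lambda/(\rho\sqrt n))$-neighborhood of $\widetilde{\btheta}_j$ (the generic lemma from \Cref{sec-deterministic}). Optimality of $\widehat{\bGamma}$ against the competitor $\bGamma^{\star}=\bB^{\star}\bZ^{\star}$ gives
\[
\sum_{j=1}^m n\bigl[\widetilde{f}_j(\widehat{\bgamma}_j)-\widetilde{f}_j(\btheta^{\star}_j)\bigr]\leq \sum_{j=1}^m n\bigl[\widetilde{f}_j(\bB^{\star}\bz_j^{\star})-\widetilde{f}_j(\btheta^{\star}_j)\bigr]\leq \lambda\sqrt{n}\cdot m\delta/\sqrt n,
\]
while a second-order Taylor expansion of $\widetilde{f}_j$ around $\widetilde{\btheta}_j$ lower-bounds the left-hand side by $(\rho/2)\|\widehat{\bGamma}-\bTheta^{\star}\|_{\mathrm{F}}^2$ minus a stochastic cross term $\langle\bE,\widehat{\bGamma}-\bGamma^{\star}\rangle$. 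The decisive observation is that $\widehat{\bGamma}-\bGamma^{\star}$ has rank at most $2K$, so the cross term can be bounded by $\|\widehat{\bGamma}-\bGamma^{\star}\|_{\mathrm F}$ times $\sup\{\langle\bE,\bDelta\rangle:\mathrm{rank}(\bDelta)\leq 2K,\,\|\bDelta\|_{\mathrm F}\leq 1\}$, and the latter supremum is controlled via an $\epsilon$-net over the Grassmannian of $K$-planes in $\RR^d$ at cost $\sqrt{K(d+\log m+t)/n}$. Rearranging yields $\|\widehat{\bGamma}-\bTheta^{\star}\|_{\mathrm F}^2\lesssim K(d+\log m+t)+m\delta^2$.

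Converting this Frobenius bound to the uniform column-wise bound is where the balancedness assumption $\tfrac{K}{m}\sum_j\bz_j^{\star}\bz_j^{\star\top}\succeq c_2\bI_K$ is crucial: it makes $\Range(\bB^{\star})$ identifiable from the columns of $\bGamma^{\star}$, so a Procrustes alignment and a Wedin-type argument give $\|\widehat{\bB}\widehat{\bR}-\bB^{\star}\|_{\mathrm F}$ bounded by $\sqrt{K/m}\cdot\|\widehat{\bGamma}-\bGamma^{\star}\|_{\mathrm F}$ for a suitable orthogonal $\widehat{\bR}$. Combined with $\max_j\|\bz_j^{\star}\|_2\lesssim 1$, the column-wise error picks up at most a $\sqrt{K}$ amplification over the per-column average, producing the claimed $K$-factor in front of the three terms. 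The triangle inequality $\|\bB^{\star}\bz_j^{\star}-\btheta^{\star}_j\|_2\leq\delta/\sqrt n$ and the personalization bound then finish the main estimate. For the second assertion, once $K\delta\lesssim\sqrt{d+\log m}$ the oracle bound places $\widehat{\bgamma}_j$ well inside the local strong-convexity region of $\widetilde{\btheta}_j$; the subdifferential optimality condition for $\widehat{\btheta}_j$, namely $\nabla f_j(\widehat{\btheta}_j)\in -(\lambda/\sqrt n)\,\partial\|\cdot\|_2(\widehat{\btheta}_j-\widehat{\bgamma}_j)$, admits $\widehat{\btheta}_j=\widehat{\bgamma}_j$ as a solution because $\|\nabla f_j(\widehat{\bgamma}_j)\|_2\leq\lambda/\sqrt n$, and this solution is unique by strict convexity, forcing $\widehat{\bTheta}=\widehat{\bB}\widehat{\bZ}$.

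The main obstacle will be the non-convex two-factor parametrization $\bB\bZ$: one cannot compare $\widehat{\bB}$ to $\bB^{\star}$ directly because of a $\mathrm{GL}_K$ gauge ambiguity, so the column-wise bound has to be routed through the matrix-level Frobenius bound and then amplified by balancedness. Keeping the $K$-dependence sharp in the rank-$2K$ $\epsilon$-net step, and avoiding a spurious $\sqrt m$ loss in the Frobenius-to-$\ell_\infty$ reduction, are the two places where the analysis is most delicate and where the quantitative form of \Cref{as-armul-lowrank} is essential.
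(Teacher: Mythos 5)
Your high-level architecture (personalization as a safety net, an oracle comparison against $\bB^{\star}\bZ^{\star}$, the rank-$2K$ structure of $\widehat{\bGamma}-\bGamma^{\star}$, and balancedness plus a Wedin-type bound to recover $\Range(\bB^{\star})$) matches the paper's, which proves the theorem by combining concentration lemmas (\Cref{lem-landscape}, \Cref{lem-thm-lowrank}) with the deterministic result \Cref{thm-armul-lowrank-deterministic} via \Cref{lem-lowrank}. But there is a genuine gap at exactly the step you flag as delicate: the passage from the Frobenius bound to $\max_{j\in[m]}\|\widehat{\btheta}_j-\btheta^{\star}_j\|_2$. A Procrustes/Wedin argument controls the subspace, i.e.\ $\|\widehat{\bP}-\bP^{\star}\|_{\mathrm F}$ and hence $\|(\widehat{\bP}-\bP^{\star})\bB^{\star}\bz_j^{\star}\|_2$ uniformly in $j$, but it says nothing about the individual coefficients $\widehat{\bz}_j$, and an average (Frobenius) bound on $\sum_j\|\widehat{\bB}\widehat{\bz}_j-\bB^{\star}\bz_j^{\star}\|_2^2$ cannot control the worst column no matter what $\sqrt K$ factor you allow --- a single column can absorb the entire Frobenius budget. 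The missing ingredient is the per-task optimality of $\widehat{\bz}_j$ given $\widehat{\bB}$: since $\widehat{\bz}_j\in\argmin_{\bz}\widetilde f_j(\widehat{\bB}\bz)$ and $\widehat{\bP}\bB^{\star}\bz_j^{\star}\in\Range(\widehat{\bB})$, one compares $\widetilde f_j(\widehat{\bB}\widehat{\bz}_j)\le \widetilde f_j(\widehat{\bP}\bB^{\star}\bz_j^{\star})$ and uses local strong convexity of $f_j$ to pin $\widehat{\bB}\widehat{\bz}_j$ to within $O\big(\rho^{-1}\|\bP^{\star}\nabla f_j(\btheta_j^{\star})\|_2+\|(\widehat{\bP}-\bP^{\star})\bB^{\star}\bz_j^{\star}\|_2\big)$ of $\bB^{\star}\bz_j^{\star}$ for each $j$ separately (Claims \ref{claim-lowrank-z} and \ref{claim-lowrank-z-sharp} in the proof of \Cref{lem-lowrank}); note also that $\|\bP^{\star}\nabla f_j(\btheta_j^{\star})\|_2$ concentrates at rate $\sqrt{(K+\log m+t)/n}$ rather than $\sqrt{(d+\log m+t)/n}$, which is what keeps $d$ out of the second term of the claimed bound. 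Without this per-column argument your plan does not yield a $\max_j$ bound of the stated order.

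Two further issues. First, your oracle inequality handles the misspecification through the $(\lambda/\sqrt n)$-Lipschitz property, producing a term of order $m\lambda\delta$; balancing this against the quadratic gives a per-column error of order $\sqrt{\lambda\delta}/\sqrt n$, which is strictly larger than the claimed $\delta/\sqrt n$ whenever $\delta<\lambda$. The paper instead treats misspecification as a gradient perturbation, $\|\nabla f_j(\bB^{\star}\bz_j^{\star})-\nabla f_j(\btheta_j^{\star})\|_2\le L\delta/\sqrt n$, so that $\delta$ enters the error linearly. Second, the quadratic lower bound $(\rho/2)\|\widehat{\bGamma}-\bTheta^{\star}\|_{\mathrm F}^2$ is only available once each $\widehat{\bgamma}_j$ is known to lie in the region where $\widetilde f_j=f_j$; since the infimal convolution is merely Lipschitz away from its minimizer, a preliminary crude localization is required (the paper runs a Huber-type lower bound together with \Cref{lem-frac} and the balancedness of $\{\bz_j^{\star}\}$ before invoking strong convexity).
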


Suppose that $K$ is bounded and take $\lambda = C \sigma \sqrt{ \frac{d + \log m}{n}  }$ for some large constant $C$. By \Cref{thm-lowrank}, low-rank ARMUL satisfies
\begin{align}
\max_{j \in [m] } \| \widehat{\btheta}_j  - \btheta^{\star}_j \|_2  
\lesssim  
\min \bigg\{
\sigma \sqrt{ \frac{d}{mn} } + 
\sigma \sqrt{ \frac{1 }{n} } + \delta ,~
\sigma \sqrt{ \frac{d}{n} }
\bigg\}.
\label{eqn-lowrank-upper}
\end{align}
with high probability. 
We provide a matching minimax lower bound in Appendix E.4.
Again, the error rate never exceeds that for independent task learning. When $\delta$ is small, low-rank ARMUL adapts to the task relatedness. Note that $O( \sigma \sqrt{ \frac{d}{mn} } + \frac{\sigma }{\sqrt{n}} )$ is the best rate one can achieve when the low-rank model is true ($\delta = 0$). In that case, the unknown matrix $\bTheta^{\star} = \bB^{\star} \bZ^{\star}$ has $O( d+m )$ unknown parameters. The $mn$ samples imply an error bound $O(  \sigma\sqrt{\frac{d+m}{mn}} ) = O( \sigma \sqrt{ \frac{d}{mn} } + \frac{\sigma }{\sqrt{n}} )$. The two terms can be viewed as estimation errors of bases $\bB^{\star}$ and coefficients $\bZ^{\star}$, respectively.

\section{Numerical experiments}\label{sec-numerical}

We conduct simulations to verify our theories and real data experiments to test the efficacy our proposed approaches. Our implementations of ARMUL follow the description in \Cref{sec-implementations}.
The code and all numerical results are available at \texttt{https://github.com/kw2934/ARMUL/}.

\subsection{Simulations}

We generate synthetic data for multi-task linear regression. Throughout our simulations, the number of tasks is $m = 30$. For any $j \in [m]$, the dataset $\cD_j$ consists of $n = 200$ samples $\{ (\bx_{ji}, y_{ji}) \}_{i=1}^n$. The covariate vectors $\{ \bx_{ji} \}_{(i, j) \in [n] \times [m]}$ are i.i.d.~$N(\bm{0}, \bI_d)$ with $d = 50$, given which we sample each response $y_{ji} = \bx_{ji}^{\top} \btheta^{\star}_j + \varepsilon_{ji}$ from a linear model with noise term $\varepsilon_{ji} \sim N(0,1)$ being independent of the covariates. To study vanilla, clustered and low-rank ARMUL, we determine the coefficient vectors $\{ \btheta^{\star}_{j} \}_{j=1}^m$ in three different ways. The parameters $\varepsilon$ and $\delta$ below characterize task relatedness, similar to those in Assumptions \ref{as-armul-vanilla-00}, \ref{as-armul-clustered} and \ref{as-armul-lowrank}.
Below we write $r \SSS^{d-1} $ as a shorthand notation for the sphere $\{ \bx \in \RR^d:~ \| \bx \|_2 = r \}$. 

\begin{enumerate}
\item Vanilla case 
\begin{itemize}
\item Data generation: Set $\bbeta^{\star} = 2 \be_1$, sample i.i.d.~random vectors $\{ \bdelta_j \}_{j=1}^m$ uniformly from the sphere $\delta \SSS^{d-1}$ and set $\btheta_j^{\star} = \bbeta^{\star} + \bdelta_j$ for all $j \in [m]$. Next, draw $\lceil \varepsilon m \rceil$ elements $\{ j_{s} \}_{s = 1}^{ \lceil \varepsilon m \rceil }$ uniformly at random from $[m]$ without replacement. Replace $\{ \btheta_{j_{s}}^{\star} \}_{s = 1}^{ \lceil \varepsilon m \rceil }$ with i.i.d.~random vectors from $2 \SSS^{d-1}$. Denote by $S = [m] \backslash \{ j_{1}, \cdots, j_{ \lceil \varepsilon m \rceil } \}$.
\item Methods for comparison: vanilla ARMUL \eqref{eqn-armul-vanilla-00}, independent task learning (\Cref{eg-stl}) and data pooling (\Cref{eg-pooling}).
\end{itemize}

\item Clustered case
\begin{itemize}
\item Set $K = 3$, $\bbeta_k^{\star} = 2 \be_k$ for $k \in [K]$ and $\bz_j^{\star} = (j \mod K) + 1$ for $j \in [m]$. 
Sample i.i.d.~random vectors $\{ \bdelta_j \}_{j=1}^m$ uniformly from the sphere $\delta \SSS^{d-1}$ and set $\btheta_j^{\star} = \bbeta_{z^{\star}_j}^{\star} \!+ \bdelta_j$ for all $j \in [m]$. Replace an $\varepsilon$-fraction of the coefficient vectors by the corresponding procedure in the vanilla case.
\item Methods for comparison: clustered ARMUL \eqref{eqn-armul-clustered-1}, clustered MTL (\Cref{eg-clustered}), independent task learning (\Cref{eg-stl}) and data pooling (\Cref{eg-pooling}).
\end{itemize}

\item Low-rank case 
\begin{itemize}
\item Set $K = 3$ and $\bB^{\star} = (\be_1,\be_2,\be_3) \in \RR^{m\times K}$. Samples $\{ \bz_j^{\star} \}_{j=1}^m$ independently~from $N(\bm{0}, \bI_K)$ and another set of i.i.d.~vectors $\{ \bdelta_j \}_{j=1}^m$ uniformly from the sphere $\delta \SSS^{d-1}$. Let $\btheta_j^{\star} = \bB^{\star} \bz_j^{\star} + \bdelta_j$ for all $j \in [m]$. Replace an $\varepsilon$-fraction of the coefficient vectors by the corresponding procedure in the vanilla case.
\item Methods for comparison: low-rank ARMUL \eqref{eqn-armul-lowrank-1}, low-rank MTL (\Cref{eg-lowrank}), independent task learning (\Cref{eg-stl}) and data pooling (\Cref{eg-pooling}).
\end{itemize}
\end{enumerate}

Guided by the theories in \Cref{sec-theory}, we set the regularization parameter $\lambda$ 
in ARMUL algorithms \eqref{eqn-armul-vanilla-00}, \eqref{eqn-armul-clustered-1}, \eqref{eqn-armul-lowrank-1} to be $c \sqrt{d / n} $ and select the optimal pre-constant $c$ from $\{ 0.2, 0.4, 0.6, \cdots, 2 \}$ by 5-fold cross-validation. Below is how we evaluate the quality of each $c$: 
\begin{itemize}
\item Step 1: Randomly partition each dataset $\cD_j\!$ into 5 (approximately) equally-sized subsets $\{ \cD_{j\ell} \}_{\ell=1}^5$.\!\!\!\!\!\!\!
\item Step 2: For $\ell = 1,\cdots, 5$, define $\widetilde{\cD}_j^{(\ell)} = \cup_{s \neq \ell} \cD_s$, conduct ARMUL on $\{ \widetilde{\cD}_j^{(\ell)}  \}_{j=1}^m$ with $\lambda = c \sqrt{d / n}$, test the obtained models on $\{ \widetilde{\cD}_{j \ell}  \}_{j=1}^m$.
\item Step 3: Get the average of mean squared prediction errors over all tasks.
\end{itemize}

We vary $\varepsilon$ in $\{ 0,  0.2\}$ and $\delta$ in $\{ 0, 0.1,  0.2,\cdots, 1 \}$ to obtain tasks with different degrees of relatedness. When $\varepsilon = 0$, we measure the maximum estimation error $\max_{j \in [m]} \| \widehat{\btheta}_j - \btheta^{\star}_j \|_2$. For $\varepsilon = 0$, we measure the maximum estimation error $\max_{j \in [m]} \| \widehat{\btheta}_j - \btheta^{\star}_j \|_2$ and its restricted version $\max_{j \in S} \| \widehat{\btheta}_j - \btheta^{\star}_j \|_2$ on the set $S$ of similar tasks. Figures \ref{fig-epsilon_0} and \ref{fig-epsilon_2} demonstrate how the estimation errors grow with the heterogeneity parameter $\delta$. The curves and error bands show the means and standard deviations over 100 independent runs, respectively. 

\begin{figure}[h]
	\centering
	\includegraphics[width=0.8\linewidth]{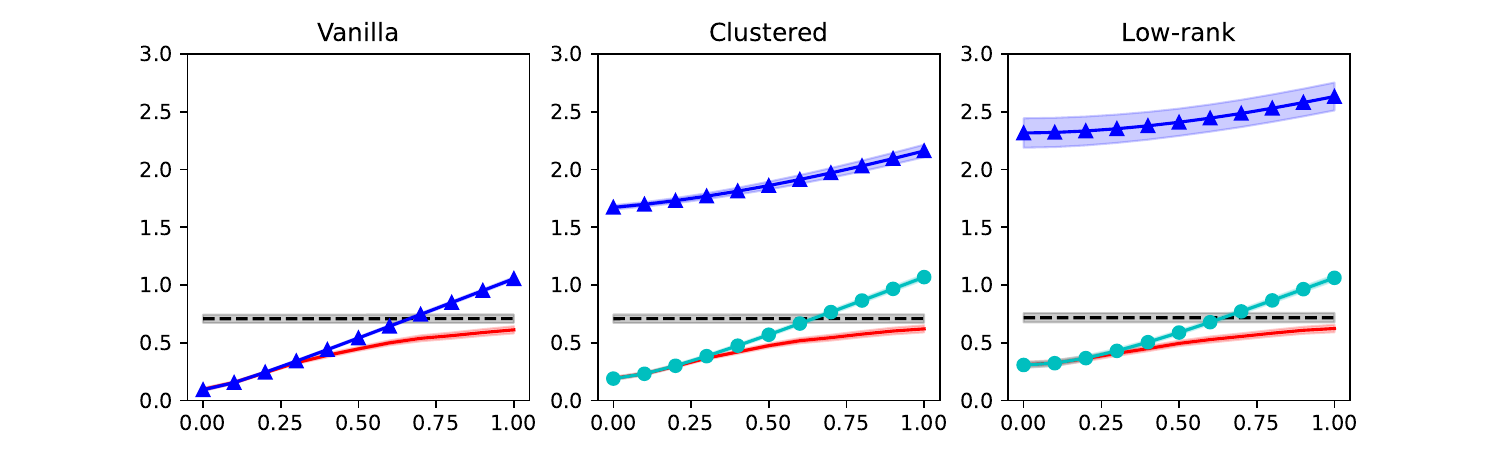}   
	\caption{Impact of task relatedness when $\varepsilon = 0$. From left to right: vanilla, clustered and low-rank cases. $x$-axis: $\delta$. $y$-axis: $\max_{j \in [m]} \| \widehat{\btheta}_j - \btheta^{\star}_j \|_2 $. Red solid lines: ARMUL. Blue triangles: data pooling. Black dashed lines: independent task learning. Cyan circles: clustered MTL (middle) or low-rank MTL (right).}\label{fig-epsilon_0}
\end{figure}

\begin{figure}[h]
	\centering
	\subfigure{
		\includegraphics[width=0.8\linewidth]{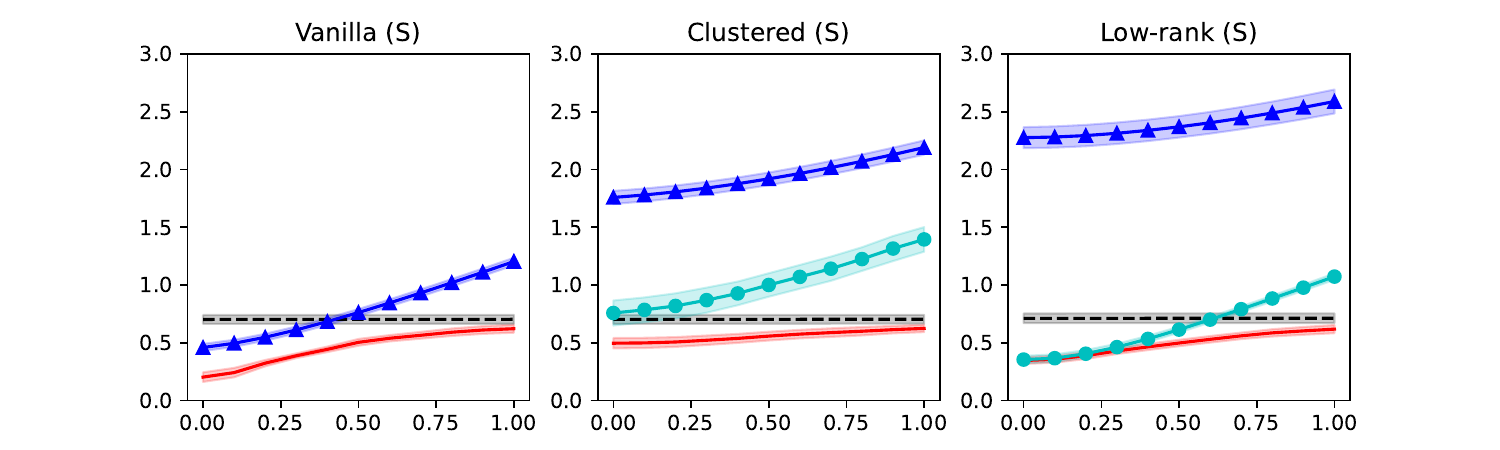}   
	}
	\\
	\subfigure{
		\includegraphics[width=0.8\linewidth]{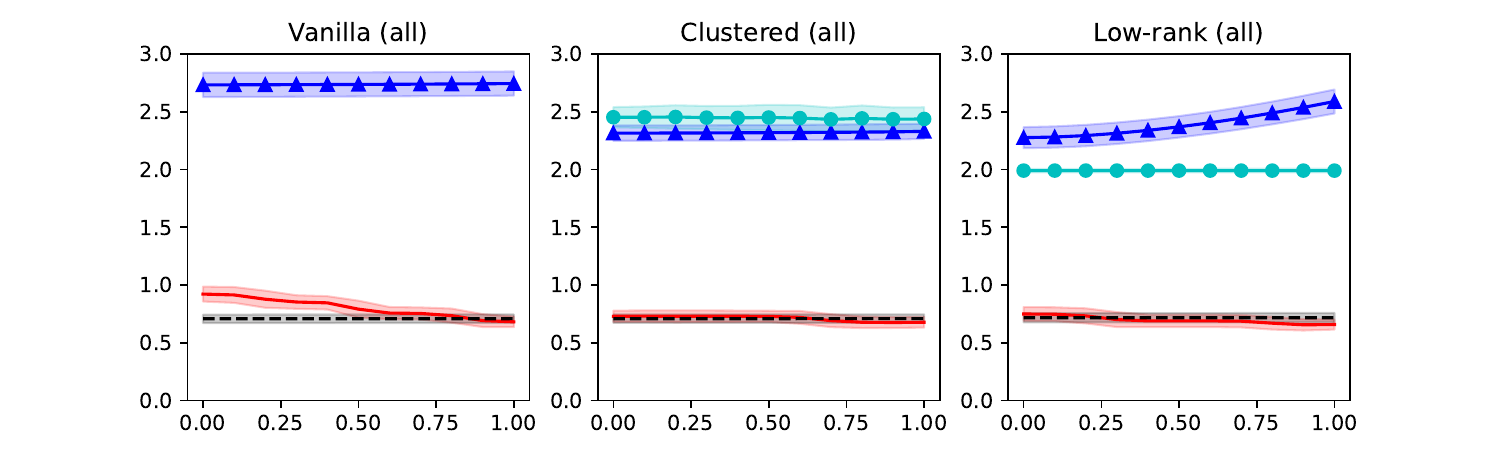}   
	}
	\caption{Impact of task relatedness when $\varepsilon = 0.2$.  From left to right: vanilla, clustered and low-rank cases. $x$-axis: $\delta$. $y$-axis: $\max_{j \in S} \| \widehat{\btheta}_j - \btheta^{\star}_j \|_2 $ (top) or $\max_{j \in [m]} \| \widehat{\btheta}_j - \btheta^{\star}_j \|_2 $ (bottom). Red solid lines: ARMUL. Blue triangles: data pooling. Black dashed lines: independent task learning. Cyan circles: clustered MTL (middle) or low-rank MTL (right).}\label{fig-epsilon_2}
\end{figure}

The simulations confirm the adaptivity and robustness of ARMUL methods, as stated in \Cref{thm-personalization-00,thm-vanilla-00,thm-clustered,thm-lowrank}. When $\varepsilon = 0$ and $\delta$ is small, the vanilla, clustered and low-rank ARMUL coincide with data pooling, clustered MTL and low-rank MTL, respectively. However, the latter are too rigid and therefore deteriorate quickly as $\delta$ grows. ARMUL methods, on the other hand, nicely handle model misspecifications and never underperform independent task learning. When $\varepsilon$ becomes $0.2$, ARMUL methods continue to work well on the set $S$ of similar tasks while data pooling and clustered MTL are badly affected. For the exceptional tasks in $S^c$, the error curves for $\max_{j \in [m]} \| \widehat{\btheta}_j - \btheta^{\star}_j \|_2$ in Figure \ref{fig-epsilon_2} implies that ARMUL methods are still comparable to independent task learning. As we have studied in \Cref{thm-personalization-00}, ARMUL estimators always stay close to the loss minimizers associated to individual tasks. They are generalizations of limited translation estimators \citep{EMo72,Ste81} to multivariate $M$-estimation. In contrast, data pooling, clustered MTL and low-rank MTL perform poorly on $S^c$.


\subsection{Real data}

We evaluate the proposed ARMUL methods on a real-world dataset. The Human Activity Recognition (HAR) database is built by \cite{AGO13} from the recordings of $30$ volunteers performing activities of daily living while carrying a waist-mounted smartphone with embedded inertial sensors. On average, each volunteer has 343.3 samples (min: 281, max: 409). Each sample corresponds to one of six activities (walking, walking upstairs, walking downstairs, sitting, standing, and laying) and has a 561-dimensional feature vector with time and frequency domain variables. 

We model each volunteer as a task and aim to distinguish between sitting and the other activities. The problem is therefore formulated as multi-task logistic regression with $m = 30$ tasks. We conduct Principal Component Analysis to reduce the dimension to 100. Together with the intercept term, the preprocessed data have $d = 101$ variables in total. We randomly select 20\% of the data from each task for testing, and train logistic models on the rest of the data. The sample sizes $\{ n_j \}_{j=1}^m$ for training range from 225 to 328. 
We apply three ARMUL methods (vanilla, clustered and low-rank) and four benchmark approaches (independent task learning, data pooling, clustered MTL and low-rank MTL) to standardized data. For each ARMUL method, we set $w_j = n_j$ and $\lambda_j = c \sqrt{ d / n_j }$ in \eqref{eqn-armul}, as is suggested by our results for general sample sizes (Theorems D.1 and D.2). The constant factor $c$ is chosen from $\{ 0.05, 0.1, 0.15,\cdots, 0.5 \}$ using 5-fold cross-validation. We use the same procedure to select the number of clusters $K$ in clustered methods from $\{2, 3, 4, 5 \}$ and the rank $K$ in low-rank methods from $\{1, 2, 3, 4, 5 \}$. Finally, we compute the misclassification error on testing data for each method.

Table \ref{table-real} summarizes the means and standard deviations (in parentheses) of test error rates (in percentage) over 100 independent runs, where ITL stands for independent task learning. The randomness comes from train/test splits and cross-validation.
We see that ARMUL methods significantly outperform benchmarks. In addition, we observe several interesting phenomena.
\begin{itemize}
\item The tasks are rather heterogeneous, since data pooling and clustered MTL are even worse than independent task learning. As the method becomes more flexible (from data pooling to clustered MTL and then low-rank MTL), the performance gets better. The same trend appears in ARMUL methods as well.

\item An ARMUL method augments a basic multi-task learning method with models for individual tasks. Such augmentation brings great benefits: even the augmented version of data pooling (i.e. vanilla ARMUL) works better than the raw version of low-rank MTL.
\end{itemize}


\section{Discussions}\label{sec-discussions}

We introduced a framework for multi-task learning named ARMUL that can be used as a wrapper around any multi-task learning algorithm of the form \eqref{eqn-mtl-obj}. 
We analyzed its adaptivity to unknown task relatedness, where the unsquared $\ell_2$ penalty function plays a crucial role. We also verified the theories by extensive numerical experiments.
We hope that our framework can spur further research in related fields.
It would be interesting to develop methods for high-dimensional problems with sparsity or other structures, and build inferential tools for uncertainty quantification. Since heterogeneous datasets are often collected and stored at multiple sites, communication-efficient procedures for distributed statistical inference are desirable. 
Another direction is to extend our methods to meta-learning, also known as learning to learn \citep{TPr12}. The goal is to extract from existing tasks useful knowledge (e.g., common representation) that facilitates learning future tasks of similar type. Our framework could provide a principled way of dealing with misspecified similarity structure.

\begin{table}
	\caption{\label{table-real}Test error rates (in percentage) on the HAR dataset.} 
	\begin{tabular}{|c|c|c|c|c|c|c|}
	\hline
	\multicolumn{3}{|c|}{ARMUL}   &  \multicolumn{4}{c|}{Benchmarks}  \\  \hline
	Vanilla  &  Clustered &  Low-rank & ITL & Data pooling & Clustered & Low-rank \\ \hline
	\!{\bf 1.12 \!(0.25)}\! & \!{\bf 0.84 \!(0.22)}\! & \!{\bf 0.80 \!(0.19)}\! & \!1.95 \!(0.32)\! & \!3.48 \!(0.39)\! & \!2.15 \!(0.33)\! & \!1.30 \!(0.23)\! \\
	\hline
\end{tabular}
\end{table}

\section*{Acknowledgement}
We are grateful to two anonymous referees and the associate editor for their helpful comments.
We thank Chen Dan, Dongming Huang, Yuhang Wu and Yichen Zhang for discussions. 
Kaizheng Wang's research is supported by an NSF grant DMS-2210907 and a startup grant at Columbia University. 
We acknowledge computing resources from Columbia University's Shared Research Computing Facility project, which is supported by NIH Research Facility Improvement Grant 1G20RR030893-01, and associated funds from the New York State Empire State Development, Division of Science Technology and Innovation (NYSTAR) Contract C090171, both awarded April 15, 2010. 
Part of the research was conducted when Yaqi Duan was affiliated with the Laboratory for Information and Decision Systems at Massachusetts Institute of Technology and the Department of Operations Research and Financial Engineering at Princeton University.

\newpage 

\appendix

\section{Deterministic analysis of ARMUL}\label{sec-deterministic}

In this subsection, we present deterministic results for ARMUL
\begin{align*}
(\widehat{\bTheta}, \widehat{\bGamma})
\in \argmin_{
	\bTheta \in \RR^{d\times m},~ \bGamma \in \Omega} \bigg\{ \sum_{j=1}^{m} w_j [ f_j (\btheta_j) + \lambda_j \| \btheta_j - \bgamma_j \|_2 ] \bigg\},
\end{align*}
with loss functions $\{ f_j \}_{j=1}^m$, weights $\{ w_j \}_{j=1}^m$ and regularization parameters $\{ \lambda_j \}_{j=1}^m$. Denote by $\{ \btheta^{\star}_j \}_{j=1}^m \subseteq \RR^d$ the target parameters. We estimate them by $\{ \widehat{\btheta}_j \}_{j=1}^m$.
We will first study the general case ($\Omega$ can be any non-empty subset of $\RR^{d\times m}$) and then come down to vanilla, clustered and low-rank versions.

\subsection{Personalization}

\begin{definition}[Regularity]\label{defn-armul-regularity}
	Let $ \btheta^{\star} \in \RR^d$, $0 < M \leq +\infty$ and $0 < \rho \leq L \leq +\infty$.
	A function $f:~\RR^d \to \RR$ is said to be $(\btheta^{\star}, M, \rho, L)$-regular if
	\begin{itemize}
		\item $f$ is convex and twice differentiable;
		\item $\rho \bI \preceq \nabla^2 f (\btheta) \preceq L \bI$ holds for all $\btheta \in B(\btheta^{\star}, M)$;
		\item $ \| \nabla f (\btheta^{\star}) \|_2  \leq \rho M / 2$.
	\end{itemize}
\end{definition}

When the loss functions satisfy the regularity condition above, we can control the difference between the ARMUL estimate $\widehat{\btheta}_j$ and its target $\btheta^{\star}_j$.

\begin{theorem}[Personalization]\label{thm-armul-personalization}
	If $f_j$ is $(\btheta_j^{\star}, M, \rho, +\infty)$-regular and $0\leq \lambda_j < \rho M /2$, then 
	\[
	\| \widetilde{\btheta}_j - \btheta^{\star}_j \|_2 \leq \frac{ \| \nabla f_j (\btheta_j^{\star}) \|_2 }{\rho}
	\qquad\text{and}\qquad
	\| \widehat{\btheta}_j - \widetilde{\btheta}_j \|_2 \leq \frac{ \lambda_j }{\rho}.
	\]
\end{theorem}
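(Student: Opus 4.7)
The plan is to exploit local strong convexity twice in a parallel fashion: once to control $\widetilde{\btheta}_j - \btheta^{\star}_j$ and once to control $\widehat{\btheta}_j - \widetilde{\btheta}_j$. Each part splits into (i) localizing the relevant minimizer inside the regularity ball $B(\btheta^{\star}_j, M)$ so that the $\rho$-strong convexity inequality can be applied between the two points being compared, and (ii) combining strong convexity with the appropriate first-order (or subgradient) optimality condition.

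For the first bound I begin with a boundary comparison. For any $\btheta$ with $\| \btheta - \btheta^{\star}_j \|_2 = M$, the quadratic lower bound implied by $\rho \bI \preceq \nabla^2 f_j$ on $B(\btheta^{\star}_j, M)$ together with $\| \nabla f_j(\btheta^{\star}_j) \|_2 \leq \rho M/2$ yields
\[
f_j(\btheta) - f_j(\btheta^{\star}_j) \geq - \| \nabla f_j(\btheta^{\star}_j) \|_2 M + (\rho/2) M^2 \geq 0.
\]
Hence the infimum of $f_j$ over $\overline{B(\btheta^{\star}_j, M)}$ is attained in the interior, and since $f_j$ is globally convex and $C^1$ this interior point is a global minimizer, so $\widetilde{\btheta}_j \in B(\btheta^{\star}_j, M)$ and $\nabla f_j(\widetilde{\btheta}_j) = \bm 0$. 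Applying the standard strong-convexity inequality along the segment between $\widetilde{\btheta}_j$ and $\btheta^{\star}_j$ then gives
\[
\rho \| \widetilde{\btheta}_j - \btheta^{\star}_j \|_2^2 \leq \langle \nabla f_j(\widetilde{\btheta}_j) - \nabla f_j(\btheta^{\star}_j), \widetilde{\btheta}_j - \btheta^{\star}_j \rangle \leq \| \nabla f_j(\btheta^{\star}_j) \|_2 \cdot \| \widetilde{\btheta}_j - \btheta^{\star}_j \|_2,
\]
which delivers the first inequality.

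For the second bound I set $g(\btheta) = f_j(\btheta) + \lambda_j \| \btheta - \widehat{\bgamma}_j \|_2$ and first show $\widehat{\btheta}_j \in B(\btheta^{\star}_j, M)$. A plain boundary comparison for $g$ does not close, because the Lipschitz penalty can subtract as much as $\lambda_j M$. Instead I argue by contradiction using directional derivatives: if the minimum of $g$ over $\overline{B(\btheta^{\star}_j, M)}$ were attained at some boundary point $\btheta^*$, then taking the inward radial direction $\bv = -(\btheta^* - \btheta^{\star}_j)/M$ and using that $\| \cdot - \widehat{\bgamma}_j \|_2$ has one-sided directional derivative bounded by $\| \bv \|_2 = 1$ forces $\langle \nabla f_j(\btheta^*), \btheta^* - \btheta^{\star}_j \rangle / M \leq \lambda_j$. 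On the other hand, strong convexity between $\btheta^*$ and $\btheta^{\star}_j$ combined with $\| \nabla f_j(\btheta^{\star}_j) \|_2 \leq \rho M/2$ gives $\langle \nabla f_j(\btheta^*), \btheta^* - \btheta^{\star}_j \rangle / M \geq \rho M - \rho M/2 = \rho M/2$, contradicting the strict hypothesis $\lambda_j < \rho M/2$. With $\widetilde{\btheta}_j$ and $\widehat{\btheta}_j$ both in $B(\btheta^{\star}_j, M)$, subgradient optimality at $\widehat{\btheta}_j$ gives $\nabla f_j(\widehat{\btheta}_j) = -\lambda_j \bu$ with $\| \bu \|_2 \leq 1$, and strong convexity yields
\[
\rho \| \widehat{\btheta}_j - \widetilde{\btheta}_j \|_2^2 \leq \langle \nabla f_j(\widehat{\btheta}_j) - \nabla f_j(\widetilde{\btheta}_j), \widehat{\btheta}_j - \widetilde{\btheta}_j \rangle = -\lambda_j \langle \bu, \widehat{\btheta}_j - \widetilde{\btheta}_j \rangle \leq \lambda_j \| \widehat{\btheta}_j - \widetilde{\btheta}_j \|_2.
\]

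The main obstacle is precisely the localization of $\widehat{\btheta}_j$: the penalty $\lambda_j \| \cdot - \widehat{\bgamma}_j \|_2$ is only $\lambda_j$-Lipschitz rather than strongly convex, and its cusp at $\widehat{\bgamma}_j$ prevents a clean objective-value comparison on the boundary. The strict inequality $\lambda_j < \rho M/2$ is tight for the directional-derivative contradiction and is the reason the perturbation of the single-task minimizer cannot escape the regularity ball.
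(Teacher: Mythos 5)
Your proposal is correct and follows essentially the same route as the paper: localize each minimizer inside the regularity ball, then combine $\rho$-strong convexity with first-order (resp.\ subgradient) optimality, using that the unsquared $\ell_2$ penalty contributes a subgradient of norm at most $\lambda_j$ at $\widehat{\btheta}_j$. The only cosmetic difference is that the paper routes the second bound through \Cref{lem-consensus-min} and \Cref{lem-cvx-reg}, centering the strong-convexity ball at $\widetilde{\btheta}_j$ with radius $M/2$ (which is where the threshold $\rho M/2$ comes from), whereas you localize $\widehat{\btheta}_j$ directly in $B(\btheta^{\star}_j, M)$ via a boundary directional-derivative contradiction --- two equivalent implementations of the same idea.
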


\begin{proof}[\bf Proof of \Cref{thm-armul-personalization}]
	By \Cref{lem-consensus-min} and $\| \nabla f_j (\btheta^{\star}_j) \|_2 < \rho M / 2$, $f_j$ has a unique minimizer $\widetilde\btheta_j$ and $\| \widetilde\btheta_j - \btheta^{\star}_j \|_2 \leq \| \nabla f_j (\btheta^{\star}_j) \|_2 / \rho < M / 2$. Hence $ \nabla^2 f_j(\btheta) \succeq \rho \bI $ holds for $\btheta \in B(\widetilde\btheta_j,  M / 2  )$. By \Cref{lem-cvx-reg} and $\lambda_j < \rho M /2 $, $\| \widehat{\btheta}_j - \widetilde\btheta_j \|_2 \leq \lambda_j / \rho$. 
\end{proof}

\subsection{Vanilla ARMUL}

We study the vanilla ARMUL estimators computed from
\begin{align}
(\widehat{\bTheta},\widehat{\bbeta}) \in \argmin_{ \bTheta \in \RR^{d\times m},~\bbeta \in \RR^d } \bigg\{ 
\sum_{j=1}^{m} w_j \bigg(  f_j ( \btheta_j ) + \frac{ \lambda }{\sqrt{w_j}} \|  \bbeta - \btheta_j  \|_2 \bigg)
\bigg\} .
\label{eqn-armul-deterministic}
\end{align}

%
%
%

\begin{definition}[Task relatedness]\label{defn-armul-relatedness}
	Let $\varepsilon ,\delta \geq 0$, $\{ \btheta^{\star}_j \}_{j=1}^m \subseteq \RR^d$, $0 < M \leq +\infty$ and $0 < \rho \leq L < +\infty$. $\{ f_j, w_j \}_{j=1}^m$ are said to be $(\varepsilon,\delta )$-related with regularity parameters $(\{ \btheta_j^{\star} \}_{j=1}^m, M, \rho, L)$ if there exists $S \subseteq [m]$ such that
	\begin{itemize}
		\item for any $j \in S$, $f_j$ is $(\btheta_j^{\star}, M, \rho, L)$-regular (\Cref{defn-armul-regularity});
		\item $\min_{\btheta \in \RR^d}  \max_{j \in S} \{ \sqrt{w_j}\| \btheta^{\star}_j - \btheta  \|_2 \} \leq \delta$ and $  \sum_{ j \in S^c } \sqrt{w_j} \leq \varepsilon \sum_{j \in S} w_j / (\max_{j \in S } \sqrt{w_j} ) $.
	\end{itemize}
\end{definition}

\begin{theorem}[Adaptivity and robustness]\label{thm-armul-deterministic}
Let $\{ f_j , w_j \}_{j=1}^m$ be $(\varepsilon,\delta )$-related with regularity parameters $(\{ \btheta_j^{\star} \}_{j=1}^m, M, \rho, L)$. Define $\kappa = L / \rho$ and
\[
\kappa_w = \max_{j \in S } \sqrt{w_j} \cdot \sum_{ j \in S } \sqrt{w_j} /\sum_{j \in S} w_j .
\]
Suppose that $\kappa \varepsilon < 1 $ and
\[
\frac{5 \kappa \kappa_w }{ 1 -  \kappa \varepsilon } \cdot  \max_{j \in S} \{ \sqrt{w_j}  \| \nabla f_j (\btheta_j^{\star}) \|_2 \} < \lambda < \frac{ \rho M}{2}  \cdot \min_{j \in S } \sqrt{w_j}  
.
\]
Then, the estimator $\widehat{\bTheta}$ in \eqref{eqn-armul-deterministic} satisfies
\begin{align*}
 \| \widehat{\btheta}_j  - \btheta^{\star}_j \|_2  
& \leq \frac{ \| \sum_{k \in S} w_k \nabla f_k (\btheta^{\star}_k) \|_2 }{ \rho \sum_{k \in S} w_k }
+     \frac{6 }{ ( 1 -   \kappa \varepsilon ) \sqrt{w_j} } \min\bigg\{ 3 \kappa^2 \kappa_w \delta ,~    \frac{ 2 \lambda }{ 5 \rho }   \bigg\} \\
& ~~~~ +    \frac{  \varepsilon \lambda  }{ \rho \max_{k \in S} \sqrt{w_k } }
, \qquad\forall j \in S.
\end{align*}
Moreover, there exists a constant $C$ such that under the conditions $\varepsilon = 0$ and $C \kappa \kappa_w  L \delta < \lambda $, we have $\widehat{\btheta}_1 = \cdots = \widehat{\btheta}_m = \argmin_{\btheta \in \RR^d} \{ \sum_{j=1}^m w_j f_j (\btheta) \}$.
\end{theorem}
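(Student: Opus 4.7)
I would build the argument on top of the personalization result (Theorem A.2), which already gives $\|\widehat{\btheta}_j - \widetilde{\btheta}_j\|_2 \le \lambda/(\rho\sqrt{w_j})$ for the individual minimizers $\widetilde{\btheta}_j$ of $f_j$. The personalization bound alone is too crude (it ignores cross-task averaging), so my main task is to show that the global coordinator $\widehat{\bbeta}$ is close to the pooled minimizer $\bar{\btheta} \in \argmin_{\btheta} \sum_{k\in S} w_k f_k(\btheta)$, and to use this to sharpen the error for $j \in S$.

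\textbf{Step 1 (pooled loss control).} Let $\btheta^\star \in \RR^d$ be a common center realizing $\max_{j \in S} \sqrt{w_j}\|\btheta^\star_j - \btheta^\star\|_2 \le \delta$, guaranteed by the $(\varepsilon,\delta)$-relatedness assumption. Since $\bar{\btheta}$ minimizes the $\rho$-strongly convex function $\bar F(\btheta) = \frac{1}{\sum_{k\in S} w_k} \sum_{k\in S} w_k f_k(\btheta)$, I can bound $\|\bar{\btheta} - \btheta^\star\|_2 \le \|\nabla \bar F(\btheta^\star)\|_2 / \rho$ by a standard strong-convexity argument (cf.\ Lemma on $\argmin$ perturbations used in Theorem A.2). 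Writing
\[
\nabla \bar F(\btheta^\star) = \tfrac{1}{\sum_{k \in S} w_k}\!\sum_{k\in S} w_k \nabla f_k(\btheta^\star_k) + \tfrac{1}{\sum_{k\in S} w_k}\!\sum_{k \in S} w_k [\nabla f_k(\btheta^\star) - \nabla f_k(\btheta^\star_k)],
\]
and applying $\|\nabla f_k(\btheta^\star) - \nabla f_k(\btheta^\star_k)\|_2 \le L\|\btheta^\star - \btheta^\star_k\|_2 \le L\delta/\sqrt{w_k}$, the second term becomes $O(L\delta \cdot \frac{\sum_{k \in S} \sqrt{w_k}}{\sum_{k\in S} w_k}) = O(\kappa L \delta \kappa_w / \max_S \sqrt{w_k})$, which is the source of the $\kappa^2 \kappa_w \delta / \sqrt{w_j}$ term in the final bound (after combining with $\|\btheta^\star - \btheta^\star_j\|_2 \le \delta/\sqrt{w_j}$).

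\textbf{Step 2 (coordinator close to pooled minimizer).} This is the main technical hurdle. The KKT conditions of \eqref{eqn-armul-deterministic} read $\sqrt{w_j}\nabla f_j(\widehat{\btheta}_j) + \lambda \bu_j = 0$ and $\sum_{j=1}^m \sqrt{w_j} \bu_j = 0$, with $\bu_j \in \partial \|\cdot\|_2(\widehat{\btheta}_j - \widehat{\bbeta})$, hence $\|\bu_j\|_2 \le 1$. For $j \in S^c$ the penalty subgradients contribute at most $\sum_{j \in S^c}\sqrt{w_j} \le \varepsilon \sum_{j\in S} w_j / \max_S \sqrt{w_j}$ in the aggregated optimality condition — this is where the robustness of the unsquared $\ell_2$ penalty (Lipschitz smoothness of $\widetilde{f}_j$, as in Lemma inf-conv-lip) buys us the $\varepsilon\lambda/\max_S\sqrt{w_k}$ term. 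Coupling the stationarity at $\widehat{\bbeta}$ with the gradient identity for $\bar{\btheta}$ and the strong convexity of the pooled loss on $S$, I obtain $\|\widehat{\bbeta} - \bar{\btheta}\|_2 \lesssim \varepsilon \lambda / (\rho\max_S \sqrt{w_k}) + (\text{terms already handled in Step 1})$; the factor $(1 - \kappa\varepsilon)^{-1}$ arises from absorbing an $O(\kappa\varepsilon)$-correction back to the left-hand side after controlling the Hessian of the empirical pooled loss along the line from $\bar{\btheta}$ to $\widehat{\bbeta}$ using $\nabla^2 f_k \preceq L \bI$.

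\textbf{Step 3 (distance from $\widehat\btheta_j$ to $\widehat\bbeta$, and the $\min$ structure).} For $j \in S$, Theorem A.2 (applied with shift $\widehat{\bgamma}_j = \widehat{\bbeta}$) immediately gives $\|\widehat{\btheta}_j - \widehat{\bbeta}\|_2 \le \lambda/(\rho\sqrt{w_j})$ once I check the hypotheses — this provides the ``$2\lambda/(5\rho)$'' branch of the $\min$. For the ``$3\kappa^2\kappa_w\delta$'' branch, I argue directly from the KKT condition $\sqrt{w_j}\nabla f_j(\widehat{\btheta}_j) = -\lambda \bu_j$: when $\widehat{\bbeta}$ is already close to $\btheta^\star_j$ (thanks to Steps 1--2), the per-task gradient $\|\nabla f_j(\widehat{\bbeta})\|_2$ is at most $\|\nabla f_j(\btheta^\star_j)\|_2 + L\|\widehat{\bbeta} - \btheta^\star_j\|_2$, which is below $\lambda/\sqrt{w_j}$; then the subgradient inclusion at $\widehat{\btheta}_j = \widehat{\bbeta}$ can be satisfied with $\|\bu_j\|<1$, and a local Lipschitz/strong-convexity argument turns this into $\|\widehat{\btheta}_j - \widehat{\bbeta}\|_2 \lesssim L/\rho \cdot \|\widehat{\bbeta} - \btheta^\star_j\|_2 \lesssim \kappa^2 \kappa_w \delta / \sqrt{w_j}$. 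Taking the smaller of the two bounds yields the $\min$.

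\textbf{Step 4 (collapse to pooled estimator when $\varepsilon=0$ and $\delta$ is small).} This is a KKT verification: try the candidate $\widehat{\btheta}_1 = \cdots = \widehat{\btheta}_m = \widehat{\bbeta} = \bar{\btheta}_{\mathrm{pool}} := \argmin\sum_{j=1}^m w_j f_j(\btheta)$. The inner-block conditions $\sqrt{w_j}\nabla f_j(\bar{\btheta}_{\mathrm{pool}}) + \lambda \bu_j = 0$ demand $\bu_j = -\sqrt{w_j}\nabla f_j(\bar{\btheta}_{\mathrm{pool}})/\lambda$, which belongs to the unit ball iff $\|\nabla f_j(\bar{\btheta}_{\mathrm{pool}})\|_2 \le \lambda/\sqrt{w_j}$ for every $j$; the $\bbeta$-block condition $\sum_j \sqrt{w_j}\bu_j = 0$ reduces to $\sum_j w_j \nabla f_j(\bar{\btheta}_{\mathrm{pool}}) = 0$, which holds by definition. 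The feasibility of the inclusion follows from bounding $\|\nabla f_j(\bar{\btheta}_{\mathrm{pool}})\|_2 \le L\|\bar{\btheta}_{\mathrm{pool}} - \btheta^\star_j\|_2 + \|\nabla f_j(\btheta^\star_j)\|_2 \lesssim L\delta/\sqrt{w_j} + (\text{negligible})$, and choosing $C$ large enough so that $C\kappa\kappa_w L\delta < \lambda$ implies the required inequality for all $j$. Convexity of the ARMUL objective then upgrades this stationary point to a global minimizer.

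\textbf{Main obstacle.} The hardest step is Step 2, the $\varepsilon$-robust control of $\widehat{\bbeta}$: one must simultaneously exploit the Lipschitz cap of the unsquared $\ell_2$ penalty on outlier contributions and the strong convexity of the clean pooled loss, while keeping the constants sharp enough to produce the precise prefactor $(1 - \kappa\varepsilon)^{-1}$. The rest of the proof is a careful but mostly mechanical combination of the above pieces with Theorem A.2 and the regularity in Definition A.1.
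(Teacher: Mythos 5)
Your plan is correct and follows essentially the same route as the paper: the paper likewise splits the error into (i) the oracle error of $\argmin\sum_{k\in S}w_kf_k$ at the common center $\btheta^\star$, (ii) the coordinator's drift caused by outliers, bounded by treating the $S^c$-contribution as a $\sum_{j\in S^c}w_j\lambda_j$-Lipschitz convex perturbation of the strongly convex clean pooled loss (Lemmas \ref{thm-consensus}--\ref{thm-robustness}), (iii) the personalization bound for the $\lambda$-branch of the $\min$, and (iv) a first-order-condition verification for the exact collapse when $\varepsilon=0$ and $\delta$ is small. The only difference is presentational: you phrase Step 2 via the KKT system of \eqref{eqn-armul-deterministic}, whereas the paper works with the reduced objective $\sum_j w_j\,f_j\square(\lambda_j\|\cdot\|_2)$ and a generic ``strongly convex plus Lipschitz'' minimizer-perturbation lemma, which are equivalent formulations of the same argument.
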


\begin{proof}[\bf Proof of \Cref{thm-armul-deterministic}]
See \Cref{sec-thm-armul-deterministic-proof}.
\end{proof}

\subsection{Clustered ARMUL}

In this subsection, we analyze the clustered ARMUL estimators $\{ \widehat{\btheta}_j \}_{j=1}^m$ returned by
\begin{align}
(\widehat{\bTheta}, \widehat{\bB},\widehat{\bz}) \in
\argmin_{ 
	\bTheta \in \RR^{d\times m},~ \bB\in \RR^{d\times K}, \bz \in [K]^m
} \bigg\{ 
\sum_{j=1}^{m} [ f_j ( \btheta_{j}  ) + \lambda \| \bbeta_{z_j} - \btheta_j  \|_2 ]
\bigg\} 
\label{eqn-obj-clustered-0}
\end{align}
and its variant
\begin{align}
(\widehat{\bTheta}, \widehat{\bB},\widehat{\bz}) \in
\argmin_{ 
\substack{ \bTheta \in \RR^{d\times m},~ \bB\in \RR^{d\times K}, \bz \in [K]^m \\ 
\min_{k \in [K]} |\{ j \in [m] :~ z_j = k \}| \geq \alpha m / K
 }
} \bigg\{ 
\sum_{j=1}^{m} [ f_j ( \btheta_{j}  ) + \lambda \| \bbeta_{z_j} - \btheta_j  \|_2 ]
\bigg\} 
\label{eqn-obj-clustered}
\end{align}
with an additional cardinality constraint on the cluster labels. Here $\alpha \in (0, 1]$ is a tuning parameter.
Note that $\{ f_j \}_{j=1}^m$ can be general loss functions and not necessarily sample averages. 
While the penalty parameters are rescaled by $1/\sqrt{n}$ in \Cref{eqn-armul-clustered-1}, we do not do that here for notational simplicity. The results immediately translate to the rescaled case.

\begin{definition}[Task relatedness]\label{defn-armul-clustered-relatedness}
Let $\varepsilon ,\delta \geq 0$, $\{ \btheta^{\star}_j \}_{j=1}^m \subseteq \RR^d$, $0 < M \leq +\infty$, $0 < \rho \leq L < +\infty$, $K \geq 2$, $\{ \bbeta^{\star}_k \}_{k=1}^K \subseteq \RR^d$ and $\{ z_j^{\star} \}_{j=1}^m \subseteq [K]$. $\{ f_j \}_{j=1}^m$ are said to be $(\varepsilon,\delta )$-related with regularity parameters $(\{ \btheta_j^{\star} \}_{j=1}^m, M, \rho, L, \{ \bbeta^{\star}_k \}_{k=1}^K, \{ z_j^{\star} \}_{j=1}^m )$ if there exists $S \subseteq [m]$ such that
\begin{itemize}
\item for any $j \in S$, $f_j$ is $(\btheta_j^{\star}, M, \rho, L)$-regular (\Cref{defn-armul-regularity});
\item $\max_{j \in S } \| \btheta^{\star}_j - \bbeta^{\star}_{z_j^{\star}} \|_2 \leq \delta $ and $|S^c| \leq \varepsilon m$;
\end{itemize}
\end{definition}

\begin{theorem}[Adaptivity of the estimator \eqref{eqn-obj-clustered-0}]\label{thm-armul-clustered-deterministic-0}
Let $\{ f_j \}_{j=1}^m$ be $(0,\delta )$-related with regularity parameters
\[
(\{ \btheta_j^{\star} \}_{j=1}^m, M, \rho, L, \{ \bbeta^{\star}_k \}_{k=1}^K, \{ z_j^{\star} \}_{j=1}^m ).
\]
Define $\kappa = L/\rho$, $T(k) = \{ j \in [m]:~z_j^{\star} = k \}$ for $k \in [K]$, and $\kappa_m =  m / ( K \min_{k \in [K]} |T(k)| ) $.
Suppose that
\begin{align*}
10 \sqrt{\kappa  \kappa_m} K \max_{j \in [m]}   \| \nabla f_j (\btheta_j^{\star}) \|_2   < \lambda < \frac{\rho}{2}   \min \Big\{ M,~ \min_{k \neq l}
\| \bbeta^{\star}_k - \bbeta^{\star}_l \|_2 \Big\}.
\end{align*}
Then, the estimator $\widehat{\bTheta}$ in \eqref{eqn-obj-clustered-0} satisfies
\begin{align*}
& \| \widehat{\btheta}_j  - \btheta^{\star}_j \|_2    \leq \frac{ \| \sum_{j \in T( z^{\star}_j ) }   \nabla f_j (  \btheta^{\star}_{ j }  ) \|_2 }{  \rho |T( z^{\star}_j )| }
+     \min\bigg\{ 77 \sqrt{\kappa^3 \kappa_m} K \delta ,~    \frac{ 11 \lambda }{ 5 \rho }   \bigg\}  ,\qquad\forall j \in [m].
\end{align*}
Moreover, if $\delta \leq  \lambda / ( 70 \sqrt{\kappa \kappa_m} KL )$, there exists a permutation $\tau$ of $[K]$ such that
\begin{align*}
&  \widehat{z}_j = \tau (z_j^{\star}) \qquad\text{and}\qquad \widehat{\btheta}_j = \widehat\bbeta_{\widehat{z}_j },\qquad\forall  j \in [m] ,\\
& \widehat{\bbeta}_{\tau(k) } = \argmin_{\bbeta \in \RR^d}  \sum_{j \in T(k)}  f_j ( \bbeta ) 
, \qquad \forall k \in [K] .
\end{align*}
\end{theorem}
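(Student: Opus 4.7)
The plan is a two-stage reduction: first recover the cluster structure up to a permutation using the separation of the true centers, then reduce the problem within each identified cluster to an instance of vanilla ARMUL analyzed by Theorem \ref{thm-armul-deterministic}.

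To begin, I apply the personalization result (Theorem \ref{thm-armul-personalization}) to every $j \in [m]$. Since each $f_j$ is $(\btheta^{\star}_j, M, \rho, L)$-regular and $\lambda < \rho M/2$, we obtain $\| \widehat{\btheta}_j - \widetilde{\btheta}_j \|_2 \leq \lambda/\rho$, where $\widetilde{\btheta}_j = \argmin_{\btheta} f_j(\btheta)$ satisfies $\| \widetilde{\btheta}_j - \btheta^{\star}_j \|_2 \leq \| \nabla f_j(\btheta^{\star}_j) \|_2 / \rho$. Combined with $\| \btheta^{\star}_j - \bbeta^{\star}_{z^{\star}_j} \|_2 \leq \delta$, this confines each $\widehat{\btheta}_j$ to a ball of radius $O(\lambda/\rho + \delta)$ around its true cluster center. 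The inner-loop optimality for the $\bTheta$ block of \eqref{eqn-obj-clustered-0} also yields $\| \widehat{\btheta}_j - \widehat{\bbeta}_{\widehat z_j} \|_2 \leq \lambda/\rho$, so any $\widehat{\bbeta}_k$ that is actually used lies near a true center. The upper bound $\lambda < (\rho/2) \min_{k\neq\ell} \| \bbeta^{\star}_k - \bbeta^{\star}_\ell \|_2$ prevents one $\widehat{\bbeta}_k$ from straddling two true centers, producing a candidate map $\tau: [K] \to [K]$.

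The crucial step is to prove that $\tau$ is a permutation and that $\widehat z_j = \tau(z^{\star}_j)$ for every $j$. I would establish this by a comparison argument: if some task $j$ were assigned to an estimated center far from $\bbeta^{\star}_{z^{\star}_j}$, one could construct a competitor solution that keeps the labels at $\{z^{\star}_j\}$ and sets each center to the pooled minimizer $\argmin_{\bbeta} \sum_{j \in T(k)} f_j(\bbeta)$; the penalty saving of order $\lambda$ from aligning each task with its true center must outweigh the loss paid in the $f_j$ terms, whose Hessian is at most $L \bI$. The balancedness factor $\kappa_m$ enters because an imbalanced distribution of tasks among estimated clusters amplifies the pooled gradient; the lower bound $\lambda > 10 \sqrt{\kappa \kappa_m}\, K \max_j \| \nabla f_j(\btheta^{\star}_j) \|_2$ is calibrated so that this comparison yields strict suboptimality regardless of the imbalance.

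With labels correctly identified, the program \eqref{eqn-obj-clustered-0} splits along the true clusters: restricted to any fixed $k$, the sub-program in $\{\widehat{\btheta}_j\}_{j \in T(k)}$ and $\widehat{\bbeta}_{\tau(k)}$ is exactly a vanilla ARMUL instance \eqref{eqn-armul-deterministic} with uniform weights $w_j = 1$, common proxy $\bbeta^{\star}_k$, $(\varepsilon, \delta)$-relatedness parameters $(0, \sqrt{|T(k)|}\,\delta)$, and the associated $\kappa_w$ equal to $1$. Applying Theorem \ref{thm-armul-deterministic} cluster-wise then yields the stated error bound: the first term $\| \sum_{j \in T(z^{\star}_j)} \nabla f_j(\btheta^{\star}_j) \|_2 / (\rho |T(z^{\star}_j)|)$ comes from the pooled-gradient term there, and the $\sqrt{\kappa^3 \kappa_m}\, K$ factor in the bias term accumulates through tracking the constants in the reduction together with $|T(k)| \geq m/(K\kappa_m)$. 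The exact-recovery conclusion ($\widehat{\btheta}_j = \widehat{\bbeta}_{\widehat z_j}$ and $\widehat{\bbeta}_{\tau(k)} = \argmin_{\bbeta} \sum_{j \in T(k)} f_j(\bbeta)$ when $\delta \lesssim \lambda/(\sqrt{\kappa \kappa_m}\, K L)$) is inherited from the cusp-activation clause of the same theorem. I expect the cluster-recovery step to be the main obstacle, as its combinatorial nature requires carefully ruling out configurations that merge or split true clusters, and this is where the precise calibration of $\lambda$ against both gradient noise and separation becomes essential.
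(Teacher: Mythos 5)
Your overall architecture (localize, recover labels up to a permutation, then solve each cluster as a pooled/vanilla problem) matches the paper's, and your final step -- invoking Theorem \ref{thm-armul-deterministic} cluster-wise with $w_j=1$, $\kappa_w=1$ after labels are fixed -- is a legitimate, slightly more modular variant of what the paper does (the paper instead applies its consensus lemma, \Cref{thm-consensus}, directly to each cluster). Two bookkeeping points: the per-cluster relatedness parameter is $\delta$, not $\sqrt{|T(k)|}\,\delta$ (with $w_j=1$ the definition carries no $\sqrt{|T(k)|}$ factor, and taking your value literally would inflate the bias term by $\sqrt{m/K}$); and you should state explicitly that when $\delta$ is too large for label recovery you fall back on the personalization bound $\|\widehat{\btheta}_j-\btheta^{\star}_j\|_2\le (\max_j\|\nabla f_j(\btheta^{\star}_j)\|_2+\lambda)/\rho\le \tfrac{11\lambda}{10\rho}$, which is what produces the $\min\{\cdot,\,11\lambda/(5\rho)\}$ in the statement.

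The genuine gap is in your localization of the fitted centers. You assert that first-order optimality of the $\bTheta$ block gives $\|\widehat{\btheta}_j-\widehat{\bbeta}_{\widehat z_j}\|_2\le\lambda/\rho$, and from this that ``any $\widehat{\bbeta}_k$ that is actually used lies near a true center.'' That inequality is false in general: the unsquared penalty yields \emph{limited translation}, i.e. $\|\widehat{\btheta}_j-\widetilde{\btheta}_j\|_2\le\lambda/\rho$ where $\widetilde{\btheta}_j=\argmin f_j$, so if $\widehat{\bbeta}_{\widehat z_j}$ is far from $\widetilde{\btheta}_j$ then $\widehat{\btheta}_j$ simply stays near $\widetilde{\btheta}_j$ and pays the penalty, leaving $\|\widehat{\btheta}_j-\widehat{\bbeta}_{\widehat z_j}\|_2$ unbounded. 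Moreover, a center whose assigned tasks straddle two true clusters could sit between them, so no local argument pins the $\widehat{\bbeta}_k$ down. The paper closes this hole with the global optimality comparison $F(\widehat{\bB},\widehat{\bz})\le F(\bB^{\star},\bz^{\star})$ combined with a Huber-type strong-convexity lower bound on the infimal convolutions $\widetilde f_j=f_j\square(\lambda\|\cdot\|_2)$ (\Cref{lem-clustered-lower-bound}): this controls $\sum_j H\bigl((\|\widehat{\bbeta}_{\widehat z_j}-\bbeta^{\star}_{z^{\star}_j}\|_2-\eta/\rho)_+\bigr)$, and the balancedness factor $\kappa_m$ (via a pigeonhole count $S_{k\tau(k)}\ge m_{\min}/K$) converts this aggregate bound into a per-center bound of order $(\eta/\rho)\sqrt{\kappa\kappa_m}K$, after which the separation condition makes $\tau$ a bijection and the smoothness-versus-strong-convexity comparison of $\widetilde f_j(\widehat{\bbeta}_k)$ across $k$ yields exact label recovery. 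Your sketch contains the right ingredients for the last comparison, but without the global-comparison-plus-lower-bound step there is no way to even define a valid $\tau$ or to know where the centers are, so the reduction in your final step cannot get off the ground as written.
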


\begin{proof}[\bf Proof of \Cref{thm-armul-clustered-deterministic-0}]
See \Cref{sec-thm-armul-clustered-deterministic-0-proof}.
\end{proof}

\begin{theorem}[Adaptivity and robustness of the estimator  \eqref{eqn-obj-clustered}]\label{thm-armul-clustered-deterministic}
	Let $\{ f_j \}_{j=1}^m$ be $(\varepsilon,\delta )$-related with regularity parameters
	\[
	(\{ \btheta_j^{\star} \}_{j=1}^m, M, \rho, L, \{ \bbeta^{\star}_k \}_{k=1}^K, \{ z_j^{\star} \}_{j=1}^m ).
	\]
Define $\kappa = L/\rho$ and $T (k) = \{ j \in [m]:~z_j^{\star} = k \}$ for $k \in [K]$. 
Suppose that $\varepsilon \leq \frac{\alpha}{6 \kappa K^2}$, $  \min_{k \in [K]} |T(k)| \geq \frac{7 \alpha m}{6 K}$ and
\begin{align*}
	\frac{ 12 \kappa K }{\sqrt{\alpha}} \max_{j \in [m]}   \| \nabla f_j (\btheta_j^{\star}) \|_2   < \lambda < \frac{ \rho }{6}   \min \Big\{ M,~ \min_{k \neq l}
\| \bbeta^{\star}_k - \bbeta^{\star}_l \|_2 \Big\}.
\end{align*}
Then, the estimator $\widehat{\bTheta}$ in \eqref{eqn-obj-clustered-0} satisfies
\begin{align*}
 \| \widehat{\btheta}_j  - \btheta^{\star}_j \|_2   
& \leq 
\frac{
	\|  \sum_{j \in T(z_j^{\star}) \cap S } \nabla f_j ( \btheta^{\star}_{ j } )  \|_2
}{\rho |T(z_j^{\star}) \cap S| }
+     \min\bigg\{ \frac{ 55 \kappa^2 K \delta }{  \sqrt{\alpha} } ,~    \frac{ 11 \lambda }{ 5 \rho }   \bigg\}  \\
&~~~~ + \frac{ \lambda |S^c|   }{  \rho |T(z_j^{\star}) \cap S| }  
,\qquad\forall j \in S.
\end{align*}
Moreover, if $\delta \leq \sqrt{\alpha} \lambda / ( 100 \kappa  KL )$, there exists a permutation $\tau$ of $[K]$ such that $ \widehat{z}_j = \tau (z_j^{\star})$ and $\widehat{\btheta}_j = \widehat\bbeta_{\widehat{z}_j }$ hold for all $j \in S$.
\end{theorem}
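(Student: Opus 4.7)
The plan is to extend the proof of \Cref{thm-armul-clustered-deterministic-0} (which handles $\varepsilon=0$) by using the explicit cardinality constraint in \eqref{eqn-obj-clustered} to prevent the $\varepsilon m$ outlier tasks in $S^c$ from hijacking any estimated cluster. The backbone is identical: recover the cluster labels of tasks in $S$ up to a global permutation, bound the estimated centres, then apply the personalization bound \Cref{thm-armul-personalization} on each $j\in S$. What changes is that every aggregated estimate must be split into a contribution from good tasks in $S$ and an ``adversarial'' contribution from $S^c$; the latter is what produces the extra term $\lambda|S^c|/(\rho|T(z^\star_j)\cap S|)$ in the final error bound.

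The first major step is cluster matching. Let $\widehat T(k)=\{j:\widehat z_j=k\}$ and $\widehat T_S(k)=\widehat T(k)\cap S$. The cardinality constraint forces $|\widehat T(k)|\geq\alpha m/K$, while $|S^c|\leq \alpha m/(6\kappa K^2)$ ensures $|\widehat T_S(k)|\geq (1-1/(6\kappa K))\alpha m/K$. By pigeonhole across the $K$ true labels, each $\widehat T_S(k)$ has a dominant true label $\tau(k)$ with $|\widehat T_S(k)\cap T(\tau(k))|\gtrsim \alpha m/K^2$. Combined with the separation $\min_{k\neq\ell}\|\bbeta^\star_k-\bbeta^\star_\ell\|_2>6\lambda/\rho$ and strong convexity of the aggregated loss on $T(\tau(k))\cap S$ in a ball of radius $M$ around $\bbeta^\star_{\tau(k)}$, the centre $\widehat\bbeta_k$ is forced to lie strictly closer to $\bbeta^\star_{\tau(k)}$ than to any other true centre, which upgrades $\tau$ to a permutation of $[K]$. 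Since $|T(k)|\geq 7\alpha m/(6K)>\alpha m/K$, the oracle labelling $\bz^\star$ is itself feasible for \eqref{eqn-obj-clustered}, which simplifies the optimality comparison underlying this step.

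The second step quantifies $\|\widehat\bbeta_k-\bbeta^\star_{\tau(k)}\|_2$ via the aggregated first-order condition. Summing the stationarity conditions for $\{\widehat\btheta_j\}_{j\in\widehat T(k)}$ in \eqref{eqn-obj-clustered}, the $\ell_2$-penalty subgradients at the task level cancel against those at the centre, leaving $\sum_{j\in\widehat T(k)}\nabla f_j(\widehat\btheta_j)=\mathbf{0}$. Linearizing around $\btheta^\star_j$ and partitioning $\widehat T(k)=(\widehat T_S(k)\cap T(\tau(k)))\cup(\widehat T_S(k)\setminus T(\tau(k)))\cup(\widehat T(k)\setminus S)$, one obtains
\[
\|\widehat\bbeta_k-\bbeta^\star_{\tau(k)}\|_2\lesssim \frac{\|\sum_{j\in T(\tau(k))\cap S}\nabla f_j(\btheta^\star_j)\|_2}{\rho|T(\tau(k))\cap S|}+\frac{\kappa^2 K\delta}{\sqrt{\alpha}}+\frac{\lambda|S^c|}{\rho|T(\tau(k))\cap S|},
\]
with the three summands corresponding respectively to mean gradient noise, model misspecification within $S$, and the Lipschitz-bounded influence of outliers (each $\nabla f_j(\widehat\btheta_j)$ for $j\in S^c$ is at most $\lambda$ in norm thanks to the $\ell_2$ penalty, via \Cref{lem-inf-conv-lip}). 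Feeding this centre-error bound into \Cref{thm-armul-personalization} applied to each $j\in S$ with shrinkage target $\widehat\bbeta_{\tau(z^\star_j)}$ then produces the stated estimation error.

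The main obstacle will be the exact label-recovery claim under the refined condition $\delta\leq\sqrt{\alpha}\lambda/(100\kappa KL)$. I would argue by contradiction: if some $j\in S$ satisfied $\widehat z_j\neq\tau(z^\star_j)$, then because the candidate shrinkage targets $\widehat\bbeta_{\widehat z_j}$ and $\widehat\bbeta_{\tau(z^\star_j)}$ remain separated by essentially $\min_{k\neq\ell}\|\bbeta^\star_k-\bbeta^\star_\ell\|_2>6\lambda/\rho$ modulo the just-bounded centre perturbations, one of the two assignments would incur a penalty exceeding the other by strictly more than the maximum possible decrease of $f_j$ between the two corresponding minimizers, contradicting the optimality of $(\widehat\bTheta,\widehat\bB,\widehat\bz)$. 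The delicate bookkeeping is that the outlier-induced centre perturbation $\lambda|S^c|/(\rho|T(\tau(k))\cap S|)=O(1/K)$ must stay well below the centre separation; this is precisely what the hypothesis $\varepsilon\leq\alpha/(6\kappa K^2)$ buys.
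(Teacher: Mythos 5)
Your high-level architecture is right (cardinality constraint prevents hijacking; split every aggregate into an $S$ part and an $S^c$ part; the $\lambda$-Lipschitz infimal convolution caps each outlier's influence at $\lambda$; personalization as the fallback), and your aggregated first-order-condition identity $\sum_{j\in\widehat T(k)}\nabla f_j(\widehat\btheta_j)=\bm{0}$ is a genuinely different and valid way to extract the centre bound from the one the paper uses (the paper instead applies \Cref{thm-robustness} to the infimal convolutions $\{\widetilde f_j\}_{j\in\widehat T(k)}$). But there are two genuine gaps. The first and most serious is the a priori localization of the estimated centres, which is the crux of the paper's \Cref{thm-robustness-clustered} and which you essentially assume. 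Your remark that ``the oracle labelling $\bz^{\star}$ is itself feasible, which simplifies the optimality comparison'' points at a comparison $F(\widehat\bB,\widehat\bz)\leq F(\bB^{\star},\bz^{\star})$ that does not work: for $j\in S^c$ the loss $f_j$ satisfies no regularity condition, so the only available control on $\widetilde f_j(\widehat\bbeta_{\widehat z_j})-\widetilde f_j(\bbeta^{\star}_{z^{\star}_j})$ is the Lipschitz bound $-\lambda\|\widehat\bbeta_{\widehat z_j}-\bbeta^{\star}_{z^{\star}_j}\|_2$, and that distance involves the diameter $\max_{k\neq l}\|\bbeta^{\star}_k-\bbeta^{\star}_l\|_2$, which is bounded below by the separation condition but not above. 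The paper avoids this by comparing against a hybrid labelling $\widetilde z_j=z^{\star}_j$ on $S$ and $\widetilde z_j=\sigma(\widehat z_j)$ on $S^c$ (where $\sigma(k)$ is the dominant true label in $\{j\in S:\widehat z_j=k\}$), so that the $S^c$ contribution is $-\lambda|S^c|\cdot\max_k\|\widehat\bbeta_k-\bbeta^{\star}_{\sigma(k)}\|_2$, a self-referential quantity that closes via the quadratic lower bound of \Cref{lem-clustered-lower-bound} on the $S$ side. Relatedly, your appeal to ``strong convexity of the aggregated loss ... forces $\widehat\bbeta_k$ to lie strictly closer to $\bbeta^{\star}_{\tau(k)}$'' is circular: strong convexity only operates inside $B(\bbeta^{\star}_{\tau(k)},M)$, and proving $\widehat\bbeta_k$ lands in that ball is exactly the missing step.

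The second gap is downstream of the first but still real. Your centre bound via $\sum_{j\in\widehat T(k)}\nabla f_j(\widehat\btheta_j)=\bm{0}$ requires linearizing $\nabla f_j(\widehat\btheta_j)$ around $\btheta^{\star}_j$ for $j\in\widehat T_S(k)$ and identifying $\widehat\btheta_j$ with $\widehat\bbeta_k$; both need the prior localization plus the consensus step $\widehat\btheta_j=\widehat\bbeta_{\widehat z_j}$ (which the paper gets from \Cref{lem-inf-conv} only after the centre is localized within $(\lambda-\|\nabla f_j\|_2)/L$ of the single-task minimizer). Your displayed centre bound also silently drops the block $\widehat T_S(k)\setminus T(\tau(k))$ of mislabelled good tasks, which is only empty after exact recovery is proved. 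Finally, the theorem's error bound holds for all $\delta$ through the $\min\{\cdot,11\lambda/(5\rho)\}$ structure; the second branch does not come from the clustering argument at all but from a separate case $10L\delta>g+\sqrt{\alpha}\lambda/(10\kappa K)$ handled directly by \Cref{thm-armul-personalization}, and your writeup does not produce that branch.
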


\begin{proof}[\bf Proof of \Cref{thm-armul-clustered-deterministic}]
See \Cref{sec-thm-armul-clustered-deterministic-proof}.
\end{proof}

\subsection{Low-rank ARMUL}

In this subsection, we analyze the low-rank ARMUL estimators $\{ \widehat{\btheta}_j \}_{j=1}^m$ returned by
\begin{align}
(\widehat{\bTheta}, \widehat{\bB},\widehat{\bZ}) \in
\argmin_{ 
	\bTheta \in \RR^{d\times m},~ \bB\in \RR^{d\times K}, \bZ \in \RR^{K\times m} 
} \bigg\{ 
\sum_{j=1}^{m} [ f_j ( \btheta_{j}  ) + \lambda \| \bB \bz_j - \btheta_j  \|_2 ]
\bigg\} .
\label{eqn-obj-lowrank}
\end{align}

\begin{definition}[Task relatedness]\label{defn-armul-lowrank-relatedness}
Let $\delta \geq 0$, $\{ \btheta^{\star}_j \}_{j=1}^m \subseteq \RR^d$, $0 < M \leq +\infty$, $0 < \rho \leq L < +\infty$, $K \in \ZZ_+$, $\bB^{\star} \in \RR^{d \times K}$ and $\bZ^{\star} \in \RR^{K \times m}$. $\{ f_j \}_{j=1}^m$ are said to be $ \delta$-related with regularity parameters $(\{ \btheta_j^{\star} \}_{j=1}^m, M, \rho, L, \bB^{\star} , \bZ^{\star}  )$ if for any $j \in [m]$, $f_j$ is $(\btheta_j^{\star}, M, \rho, L)$-regular (\Cref{defn-armul-regularity}) and $ \| \btheta^{\star}_j - \bB^{\star} \bz_j^{\star} \|_2 \leq \delta $.
\end{definition}

\begin{theorem}[Adaptivity of the estimator \eqref{eqn-obj-lowrank}]\label{thm-armul-lowrank-deterministic}
	Let $\{ f_j \}_{j=1}^m$ be $\delta$-related with regularity parameters $(\{ \btheta_j^{\star} \}_{j=1}^m, M, \rho, L, \bB^{\star} , \bZ^{\star}  )$.
	Define $\bG \in \RR^{d\times m} $ with $\bg_j = \nabla f_j (\btheta^{\star}_j) $, $\kappa = L/\rho$, $\alpha_1 = \max_{j \in [m] } \| \bz^{\star}_j \|_2 $, $\alpha_2 = \sqrt{K/m} \sigma_{\min} (\bZ^{\star})$ and $\mu = \alpha_1/\alpha_2$. Here $\sigma_{\min}(\cdot)$ refers to the smallest positive singular value of a non-zero matrix. 
Denote by $\bP^{\star} \in \RR^{d\times d}$ the projection onto $\Range(\bB^{\star})$. There exist positive constants $C_0$ and $C$ such that when
	\begin{align*}
2 C_0   \kappa^{3/2}  \mu^2 K  \max_{j \in [m]} \| \bg_j \|_2 
< \lambda \leq \rho \min \{ M / 2,~\alpha_1 \},
\end{align*}
 the estimator $\widehat{\bTheta}$ in \eqref{eqn-obj-lowrank} satisfies
	\begin{align*}
	& \max_{j \in [m] } \| \widehat{\btheta}_j  - \btheta^{\star}_j \|_2    \leq 
C \bigg( \frac{ \max_{j \in [m] }  \| \bP^{\star}  \bg_j \|_2  }{\rho} + \frac{3 \kappa^{3/2} \mu K  \| \bG \|_2 }{\sqrt{m}} +  \min\bigg\{  6C_0^2 \kappa^{5/2} \mu^2 K  \delta,~ 
\frac{3\lambda }{ \rho }  
\bigg\} \bigg).
	\end{align*}
In addition, if $ \delta \leq  \frac{\lambda }{ 4 C_0^2 \kappa^{3/2} \mu^2 K L } $, then $\widehat{\bTheta} = \widehat{\bB} \widehat\bZ$.
\end{theorem}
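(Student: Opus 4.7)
The stated bound involves $\min\{6C_0^2\kappa^{5/2}\mu^2 K\delta,\, 3\lambda/\rho\}$ in its last term, which I would handle by two separate arguments and keep the better one for each $j$. When $\delta$ is large relative to $\lambda$, personalization (\Cref{thm-armul-personalization}) directly gives $\|\widehat\btheta_j - \btheta^\star_j\|_2 \lesssim \lambda/\rho$, matching the $3\lambda/\rho$ alternative. When $\delta$ is small, I would exploit the low-rank structure, comparing $\widehat\bB\widehat\bZ$ against the oracle $\bB^\star\bZ^\star$ to produce the finer bound with a $\kappa^{5/2}\mu^2 K\delta$ dependence plus the additive terms $\max_j\|\bP^\star\bg_j\|_2/\rho$ and $\kappa^{3/2}\mu K\|\bG\|_2/\sqrt{m}$.

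\textbf{Personalization and the equality $\widehat\bTheta=\widehat\bB\widehat\bZ$.} Applying \Cref{thm-armul-personalization} with uniform $\lambda_j=\lambda$ and using the hypothesis $\max_j\|\bg_j\|_2 \leq \lambda/(2C_0\kappa^{3/2}\mu^2 K)$ yields $\|\widehat\btheta_j-\btheta^\star_j\|_2 \leq \|\bg_j\|_2/\rho+\lambda/\rho \lesssim \lambda/\rho$ for every $j$, which secures the coarse alternative. For the final assertion of the theorem I would use the KKT condition in $\btheta_j$: $-\nabla f_j(\widehat\btheta_j) \in \lambda\,\partial\|\cdot\|_2(\widehat\btheta_j-\widehat\bB\widehat\bz_j)$. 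Since the subdifferential of $\|\cdot\|_2$ consists of unit vectors away from the origin and the unit ball at the origin, the strict inequality $\|\nabla f_j(\widehat\btheta_j)\|_2 < \lambda$ forces $\widehat\btheta_j=\widehat\bB\widehat\bz_j$. Using $L$-smoothness together with the fine bound below, $\|\nabla f_j(\widehat\btheta_j)\|_2 \leq \|\bg_j\|_2+L\|\widehat\btheta_j-\btheta^\star_j\|_2 < \lambda$ precisely when $\delta \leq \lambda/(4C_0^2\kappa^{3/2}\mu^2 KL)$.

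\textbf{Fine low-rank bound.} Write $\bv_j=\widehat\btheta_j-\widehat\bB\widehat\bz_j$ and compare the ARMUL objective at $(\widehat\bTheta,\widehat\bB,\widehat\bZ)$ with the oracle $(\bB^\star\bZ^\star,\bB^\star,\bZ^\star)$, whose penalty vanishes. Regularity of $f_j$ gives $f_j(\bB^\star\bz^\star_j)-f_j(\btheta^\star_j)\leq \langle\bg_j,\bB^\star\bz^\star_j-\btheta^\star_j\rangle + (L/2)\delta^2$, hence
\begin{align*}
\sum_j [f_j(\widehat\btheta_j)-f_j(\btheta^\star_j)] + \lambda\sum_j\|\bv_j\|_2 \leq \sum_j \langle\bg_j,\bB^\star\bz^\star_j-\btheta^\star_j\rangle + \frac{Lm\delta^2}{2}.
\end{align*}
Combined with the strong-convexity inequality $f_j(\widehat\btheta_j)-f_j(\btheta^\star_j)\geq \langle\bg_j,\widehat\btheta_j-\btheta^\star_j\rangle + (\rho/2)\|\widehat\btheta_j-\btheta^\star_j\|_2^2$, this yields aggregate $\ell_2$ control of $\sum_j\|\widehat\btheta_j-\btheta^\star_j\|_2^2$ and $\sum_j\|\bv_j\|_2$. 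To upgrade to per-task bounds I would decompose
\begin{align*}
\widehat\btheta_j-\btheta^\star_j = \bv_j + (\widehat\bB-\bB^\star\bR)\widehat\bz_j + \bB^\star(\bR\widehat\bz_j-\bz^\star_j) + (\bB^\star\bz^\star_j-\btheta^\star_j),
\end{align*}
with $\bR\in\RR^{K\times K}$ chosen to align $\widehat\bB$ with $\bB^\star$. The fourth summand has norm $\leq\delta$. The second is a global basis error bounded by $\|\widehat\bB-\bB^\star\bR\|_2\alpha_1$; a Wedin-type argument, using the signal strength $\sigma_{\min}(\bZ^\star)\gtrsim\alpha_2\sqrt{m/K}$, converts the aggregate estimate into $\|\widehat\bB-\bB^\star\bR\|_2 \lesssim \kappa^{3/2}\|\bG\|_2/(\rho\sqrt{m}\,\alpha_2)$, producing the $\kappa^{3/2}\mu K\|\bG\|_2/\sqrt{m}$ contribution. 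The third is a per-task coefficient error: the first-order condition $\widehat\bB^\top\nabla f_j(\widehat\bB\widehat\bz_j)=\zero$ and linearization of $\nabla f_j$ around $\btheta^\star_j$ yield $\|\bB^\star(\bR\widehat\bz_j-\bz^\star_j)\|_2 \lesssim \|\bP^\star\bg_j\|_2/\rho$ up to lower-order corrections absorbed by the first two terms. Finally, the aggregate control on $\sum_j\|\bv_j\|_2$ transfers to $\max_j\|\bv_j\|_2 \lesssim \kappa^{5/2}\mu^2 K\delta$ in the low-$\delta$ regime via the bound on $\lambda$.

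\textbf{Main obstacle.} The crux is the rotation ambiguity in the factorization: only the product $\widehat\bB\widehat\bZ$ is identified, while the individual factors are determined only up to $\bR\in\mathrm{GL}_K$. One must pick $\bR$ to simultaneously align the basis globally with $\bB^\star$ and each coefficient vector with $\bz^\star_j$; this coupling is exactly where the conditioning $\mu=\alpha_1/\alpha_2$ enters, as the basis perturbation inherits the inverse signal strength $1/\alpha_2$ while the basis contribution scales as $\alpha_1$. A second delicate point is promoting the aggregate $\ell_2$ estimates (natural outputs of the variational argument) to the $\max_j$ control demanded by the statement, which requires bootstrapping from a crude per-task bound to a refined one via re-linearization of each KKT condition. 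The $\kappa^{3/2}$ and $\kappa^{5/2}$ powers in the final bound trace back to the conditioning losses accumulated along this chain of reductions.
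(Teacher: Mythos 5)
Your architecture matches the paper's: split into a large-$\delta$ regime handled by \Cref{thm-armul-personalization} and a small-$\delta$ regime handled by an oracle comparison, obtain aggregate control via strong convexity and the low-rank structure, convert it into a subspace estimate via Wedin, and finish with per-task first-order conditions. The paper implements the small-$\delta$ regime through \Cref{lem-lowrank}, and one stylistic difference is worth noting: the paper never introduces an alignment matrix $\bR$. It works exclusively with the projections $\widehat{\bP}$ and $\bP^{\star}$ and with the rotation-invariant quantities $\|(\bI-\widehat{\bP})\bB^{\star}\|_2$ and $\|\widehat{\bB}\widehat{\bz}_j-\widehat{\bP}\bB^{\star}\bz^{\star}_j\|_2$, which dissolves the identifiability obstacle you single out as the crux; the conditioning $\mu$ enters only through Wedin's theorem applied to $\widehat{\bB}\widehat{\bZ}-\bB^{\star}\bZ^{\star}$ and through the lower bound $\sigma_{\min}(\bZ^{\star})\gtrsim\alpha_2\sqrt{m/K}$.

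There are two genuine gaps. First, localization: every smoothness or strong-convexity expansion of $f_j$ at $\widehat{\bB}\widehat{\bz}_j$ requires $\widehat{\bB}\widehat{\bz}_j\in B(\bB^{\star}\bz^{\star}_j,M)$, and this does not follow from your crude personalization bound, which localizes $\widehat{\btheta}_j$ but says nothing per-task about $\bv_j=\widehat{\btheta}_j-\widehat{\bB}\widehat{\bz}_j$. The paper spends Claims \ref{claim-lowrank-subspace}--\ref{claim-lowrank-crude} on exactly this: the aggregate objective comparison is combined with the global Huber-type lower bound of \Cref{lem-clustered-lower-bound} (valid far from the minimizer, where only linear growth is available) and with the pigeonhole argument of \Cref{lem-frac} (a constant fraction of the $\bz^{\star}_j$ see any direction of $(\bI-\widehat{\bP})\bB^{\star}$) to first bound $\|(\bI-\widehat{\bP})\bB^{\star}\|_2$, then the per-task optimality of $\widehat{\bz}_j$ against the competitor $\bu_j$ with $\widehat{\bB}\bu_j=\widehat{\bP}\bB^{\star}\bz^{\star}_j$ bounds $\|\widehat{\bB}\widehat{\bz}_j-\widehat{\bP}\bB^{\star}\bz^{\star}_j\|_2$, and Pythagoras combines the two. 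Your sketch contains none of this machinery, and without it the ``re-linearization of each KKT condition'' cannot start. Second, the step ``the aggregate control on $\sum_j\|\bv_j\|_2$ transfers to $\max_j\|\bv_j\|_2\lesssim\kappa^{5/2}\mu^2K\delta$'' is not a valid inference: an $\ell_1$-type bound on a sum of $m$ nonnegative terms gives no uniform per-task bound of that order. What is actually true (and what the paper proves) is that in the fine regime every $\bv_j$ is exactly zero, because once $\widehat{\bB}\widehat{\bz}_j$ is localized and $\|\nabla f_j\|_2<\lambda$ there, the proximal map is the identity (\Cref{lem-inf-conv}); your subdifferential argument is the right tool for this, but it must be deployed to establish the fine bound itself, not only the final assertion. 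A minor related point: the stationarity condition you use for the coefficient error reads $\widehat{\bB}^{\top}\nabla f_j(\widehat{\btheta}_j)=\bm{0}$ in general, and equals $\widehat{\bB}^{\top}\nabla f_j(\widehat{\bB}\widehat{\bz}_j)=\bm{0}$ only after $\bv_j=\bm{0}$ has been shown.
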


\begin{proof}[\bf Proof of \Cref{thm-armul-lowrank-deterministic}]
	See \Cref{sec-thm-armul-lowrank-deterministic-proof}.
\end{proof}

\section{Proofs of deterministic results}

Throughout the proofs, we use $f\square g$ to refer to the infimal convolution of two convex functions $f$ and $g$ over $\RR^d$: $f\square g (\bx) = \inf_{\by \in \RR^d} \{ f(\by) + g(\bx - \by) \}$.

\subsection{Proof of \Cref{thm-armul-deterministic}}\label{sec-thm-armul-deterministic-proof}

We invoke the lemma below, whose proof is in \Cref{sec-thm-robustness-proof}.

\begin{lemma}\label{thm-robustness}
	Let $\{ f_j \}_{j=1}^m$ be convex and differentiable. Suppose there are $S \subseteq [m]$, $\btheta^{\star} \in \RR^d$, $0 < M \leq +\infty$ and $0 < \rho \leq L < +\infty$ such that
	\[
	\rho \bI \preceq \nabla^2 f_j (\btheta) \preceq L \bI , \qquad \forall \btheta \in B(\btheta^{\star}, M),~~\forall  j \in S.
	\]
	Take $\lambda_j = \lambda / \sqrt{w_j}$ for all $j \in [m]$ and some $\lambda > 0$.
	Define $\widetilde{\btheta}^{\mathrm{oracle}}  \in \argmin_{\btheta \in \RR^d}  \{  \sum_{j \in S} w_j f_j (\btheta)  \}$, $\kappa = L/\rho$, 
	\[
	\kappa_w = \frac{ (\max_{j \in S } \sqrt{w_j} ) \sum_{ j \in S } \sqrt{w_j} }{  \sum_{j \in S} w_j }
	\qquad\text{and}\qquad
	\varepsilon = \frac{ ( \max_{j \in S } \sqrt{w_j} ) \sum_{j \in S^c} \sqrt{w_j} }{   \sum_{j \in S} w_j }.
	\]
	When $\kappa \varepsilon < 1$ and
	\[
	\frac{ 3\kappa \kappa_w \max_{j \in S} \{ \sqrt{w_j}  \| \nabla f_j (\btheta^{\star}) \|_2 \} }{ 1 -  \kappa \varepsilon   } < \lambda < L M \min_{j \in S } \sqrt{w_j} ,
	\]
	we have $\widehat{\btheta}_j = \widehat\bbeta$ for all $j \in S$,
\begin{align*}
& \| \widehat{\bbeta} - \widetilde{\btheta}^{\mathrm{oracle}} \|_2 \leq \frac{ \sum_{j \in S^c} \sqrt{w_j} \lambda }{
	\sum_{j \in S} w_j \rho}
\leq \frac{\varepsilon \lambda}{ \rho \max_{j \in S } \sqrt{w_j} } , \\
& \| \widetilde{\btheta}^{\mathrm{oracle}} - \btheta^{\star} \|_2 \leq \frac{ \| \sum_{j \in S} w_j \nabla f_j (\btheta^{\star} ) \|_2 }{   \rho \sum_{j \in S} w_j }.
\end{align*}
\end{lemma}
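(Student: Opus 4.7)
The plan is to prove the three claims through a KKT-based construction. Introduce the auxiliary restricted program obtained by forcing $\btheta_j = \bbeta$ for all $j \in S$:
\[
\min_{\bbeta \in \RR^d,~\{\btheta_j\}_{j \in S^c}}\bigg\{ \sum_{j \in S} w_j f_j(\bbeta) + \sum_{j \in S^c} w_j [f_j(\btheta_j) + \lambda_j \|\btheta_j - \bbeta\|_2] \bigg\}.
\]
Eliminating $\{\btheta_j\}_{j \in S^c}$ via the infimal convolutions $\widetilde f_j = f_j \square (\lambda_j \|\cdot\|_2)$ reduces the program to minimizing the convex function $F(\bbeta) = \sum_{j \in S} w_j f_j(\bbeta) + \sum_{j \in S^c} w_j \widetilde f_j(\bbeta)$. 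Call its minimizer $\widehat{\bbeta}^{\mathrm{aux}}$. I would lift it to a candidate $(\widehat{\bTheta}, \widehat{\bbeta}^{\mathrm{aux}})$ for the original program by setting $\widehat{\btheta}_j = \widehat{\bbeta}^{\mathrm{aux}}$ for $j \in S$ and $\widehat{\btheta}_j \in \argmin_{\btheta}\{f_j(\btheta) + \lambda_j\|\btheta - \widehat{\bbeta}^{\mathrm{aux}}\|_2\}$ for $j \in S^c$, then verify that this candidate satisfies the KKT conditions of the original (unrestricted) program.

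For the localization step, the block $\sum_{j \in S} w_j f_j$ is $\rho \sum_{j \in S} w_j$-strongly convex on $B(\btheta^{\star}, M)$ with minimizer $\widetilde{\btheta}^{\mathrm{oracle}}$, while each $\widetilde f_j$ is convex and $\lambda_j$-Lipschitz by \Cref{lem-inf-conv-lip}, so the $S^c$-sum is a convex perturbation with Lipschitz constant at most $\sum_{j \in S^c} w_j \lambda_j = \lambda \sum_{j \in S^c}\sqrt{w_j}$. Invoking \Cref{lem-cvx-reg} yields
\[
\|\widehat{\bbeta}^{\mathrm{aux}} - \widetilde{\btheta}^{\mathrm{oracle}}\|_2 \leq \frac{\lambda \sum_{j \in S^c}\sqrt{w_j}}{\rho\sum_{j \in S} w_j} = \frac{\varepsilon \lambda}{\rho \max_{j \in S}\sqrt{w_j}},
\]
which is the second displayed claim; the third claim follows at once from \Cref{lem-consensus-min} applied to $\sum_{j \in S} w_j f_j$ at $\btheta^{\star}$.

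For the first claim $\widehat{\btheta}_j = \widehat{\bbeta}^{\mathrm{aux}}$ on $S$, the $\bbeta$-subgradient condition of the original program is already encoded in the FOC of $F$ at $\widehat{\bbeta}^{\mathrm{aux}}$, so I only need to check the $\btheta_j$-subgradient condition at the candidate, which amounts to $\sqrt{w_j}\|\nabla f_j(\widehat{\bbeta}^{\mathrm{aux}})\|_2 \leq \lambda$ for every $j \in S$. Applying $L$-smoothness, the triangle inequality on $\|\widehat{\bbeta}^{\mathrm{aux}} - \btheta^{\star}\|_2$, and the crude bound $\|\sum_{j \in S} w_j \nabla f_j(\btheta^{\star})\|_2 \leq G \sum_{j \in S}\sqrt{w_j}$ with $G = \max_{j \in S}\{\sqrt{w_j}\|\nabla f_j(\btheta^{\star})\|_2\}$, and then rewriting via the definition of $\kappa_w$, gives
\[
\sqrt{w_j}\|\nabla f_j(\widehat{\bbeta}^{\mathrm{aux}})\|_2 \leq G + \kappa(\varepsilon\lambda + \kappa_w G).
\]
The elementary inequality $\max_{j \in S}\sqrt{w_j}\cdot\sum_{j \in S}\sqrt{w_j} \geq \sum_{j \in S} w_j$ implies $\kappa_w \geq 1$, so the right-hand side is at most $2\kappa\kappa_w G + \kappa\varepsilon\lambda \leq \lambda$ under the standing hypothesis on $\lambda$, verifying the subgradient condition.

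The main obstacle is the circularity of requiring $\widehat{\bbeta}^{\mathrm{aux}} \in B(\btheta^{\star}, M)$: both the $\rho$-strong convexity used for the localization and the $L$-smoothness bound used for the KKT check are only valid inside this ball, yet the containment itself must be derived from those bounds. I would resolve this via the upper envelope $\lambda < LM\min_{j \in S}\sqrt{w_j}$ combined with $\kappa\varepsilon < 1$, which makes each piece of the triangle bound on $\|\widehat{\bbeta}^{\mathrm{aux}} - \btheta^{\star}\|_2$ strictly less than $M/2$, together with a continuity/homotopy argument in $\lambda$ starting from $\lambda = 0$ (where $\widehat{\bbeta}^{\mathrm{aux}} = \widetilde{\btheta}^{\mathrm{oracle}}$ already sits well inside the ball) to rule out escape from $B(\btheta^{\star}, M)$ along the solution path.
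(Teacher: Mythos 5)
Your architecture is close to the paper's but packaged differently. The paper never verifies KKT conditions for a constructed candidate: it observes that $\widehat{\bbeta}$ is \emph{exactly} the minimizer of the fully marginalized objective $F=\sum_j w_j\, f_j\square(\lambda_j\|\cdot\|_2)$, proves via \Cref{lem-inf-conv} that $f_j\square(\lambda_j\|\cdot\|_2)=f_j$ on a ball $B(\btheta^{\star},r_j)$ for every $j\in S$ (so $F$ agrees with $\sum_{j\in S}w_jf_j$ plus a $\sum_{j\in S^c}w_j\lambda_j$-Lipschitz convex term near $\btheta^{\star}$, exactly your decomposition), localizes $\widehat{\bbeta}$ into that ball, and then gets $\widehat{\btheta}_j=\widehat{\bbeta}$ for free from \Cref{lem-inf-conv}. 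Your KKT check $\sqrt{w_j}\|\nabla f_j(\widehat{\bbeta}^{\mathrm{aux}})\|_2\le\lambda$ is the same inequality the paper verifies (at all points of the ball rather than at the single point $\widehat{\bbeta}^{\mathrm{aux}}$), and your arithmetic $G+\kappa(\varepsilon\lambda+\kappa_w G)\le 2\kappa\kappa_w G+\kappa\varepsilon\lambda<\lambda$ matches Step 3 of the paper's proof. Your observation that the $\bbeta$-block stationarity is encoded in the first-order condition of $F^{\mathrm{aux}}$ is also correct (it is the envelope-theorem identity $\nabla \widetilde f_j(\bbeta)=-\lambda_j w_j\bu_j$), though you should add a line explaining why the candidate is the \emph{unique} optimum (local strong convexity of the marginalized objective), so that the conclusions apply to the estimator actually returned by the program.

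The one place your proposal does not close is the localization, and your proposed fix contains a false sub-claim. The bound $\|\widehat{\bbeta}^{\mathrm{aux}}-\widetilde{\btheta}^{\mathrm{oracle}}\|_2\le \varepsilon\lambda/(\rho\max_{j\in S}\sqrt{w_j})$ is only controlled by $\kappa\varepsilon<1$ and $\lambda<LM\min_{j\in S}\sqrt{w_j}$ up to $M$, not $M/2$ (take $w_j\equiv 1$, $\kappa=1$, $\varepsilon=0.99$, $\lambda=0.99LM$). What is true, and what you actually need, is that the \emph{sum} of the two pieces is strictly less than $M$: one piece is below $(1-\kappa\varepsilon)\lambda/(3L\max_{j\in S}\sqrt{w_j})$ and the other below $\kappa\varepsilon\lambda/(L\max_{j\in S}\sqrt{w_j})$, and $\tfrac{1-\kappa\varepsilon}{3}+\kappa\varepsilon<1$. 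More importantly, the homotopy-in-$\lambda$ argument is unnecessary: there is no circularity to break, because \Cref{lem-consensus-min} and \Cref{lem-cvx-reg} are already global statements — they assume strong convexity only on a ball together with convexity everywhere, and conclude that the global minimizer lies inside a smaller ball. Applying them in the right order (first \Cref{lem-consensus-min} to $\sum_{j\in S}w_jf_j$ at $\btheta^{\star}$ to place $\widetilde{\btheta}^{\mathrm{oracle}}$, then \Cref{lem-cvx-reg} around $\widetilde{\btheta}^{\mathrm{oracle}}$ with radius $M-\|\widetilde{\btheta}^{\mathrm{oracle}}-\btheta^{\star}\|_2$, checking that the Lipschitz constant $\lambda\sum_{j\in S^c}\sqrt{w_j}$ is below $\rho_0$ times that radius) delivers the containment directly; this is precisely how the paper's \Cref{thm-robustness-general} proceeds. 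With that substitution your proof is complete.
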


We first use \Cref{thm-robustness} to derive the following intermediate result.

\begin{claim}\label{thm-armul-deterministic-claim}
	Define $g = \max_{j \in S} \{ \sqrt{w_j}  \| \nabla f_j (\btheta_j^{\star}) \|_2 \}$ and $\btheta^{\star} = \argmin_{\btheta \in \RR^d} \max_{j \in S}\| \btheta^{\star}_j - \btheta  \|_2$. Under the assumptions in \Cref{thm-armul-deterministic}, if 
	\[
	3 L \delta \leq g + \frac{1 - \kappa \varepsilon }{ 5 \kappa \kappa_w } \lambda,
	\]
	then
	\[
	\| \widehat{\btheta}_j  - \btheta^{\star}_j \|_2  
	\leq   
	\frac{ \| \sum_{k \in S} w_k \nabla f_k (\btheta^{\star}_k) \|_2 }{ \rho \sum_{k \in S} w_k } +    \frac{ 2 \kappa \kappa_w \delta }{ \sqrt{w_j} } +  \frac{   \lambda \sum_{k \in S^c} \sqrt{w_k} }{ \rho \sum_{k \in S} w_k  } , \qquad\forall j \in S.
	\]
\end{claim}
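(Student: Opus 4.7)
The plan is to reduce the claim to Lemma~\ref{thm-robustness} by using the near-minimizer $\btheta^{\star}$ (introduced in the statement) as an artificial shared reference point for the tasks in $S$. By the $(\varepsilon,\delta)$-relatedness, $\|\btheta^{\star} - \btheta^{\star}_j\|_2 \leq \delta/\sqrt{w_j}$ for every $j \in S$, so $\btheta^{\star}$ lies close to each individual target. Shrinking the regularity ball slightly transfers the Hessian bounds from $B(\btheta^{\star}_j, M)$ to $B(\btheta^{\star}, M - \delta/\min_{k \in S}\sqrt{w_k})$; the upper bound on $\lambda$ from Theorem~\ref{thm-armul-deterministic} together with the hypothesis $3L\delta \leq g + \frac{1-\kappa\varepsilon}{5\kappa\kappa_w}\lambda$ will ensure that this shrinkage leaves enough slack for Lemma~\ref{thm-robustness} to apply.

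Next, I control the gradients at $\btheta^{\star}$ via Lipschitz smoothness,
\[
\sqrt{w_j}\,\|\nabla f_j(\btheta^{\star})\|_2 \leq \sqrt{w_j}\,\|\nabla f_j(\btheta^{\star}_j)\|_2 + L\sqrt{w_j}\,\|\btheta^{\star}-\btheta^{\star}_j\|_2 \leq g + L\delta, \qquad j \in S.
\]
Combining the claim's hypothesis $3L\delta \leq g + \frac{1-\kappa\varepsilon}{5\kappa\kappa_w}\lambda$ with the lower bound $\lambda > 5\kappa\kappa_w g/(1-\kappa\varepsilon)$ from Theorem~\ref{thm-armul-deterministic} yields $\frac{3\kappa\kappa_w(g+L\delta)}{1-\kappa\varepsilon} < \lambda$, which is precisely the condition Lemma~\ref{thm-robustness} demands at the reference point $\btheta^{\star}$. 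The lemma then delivers $\widehat{\btheta}_j = \widehat{\bbeta}$ for every $j \in S$, together with $\|\widehat{\bbeta} - \widetilde{\btheta}^{\mathrm{oracle}}\|_2 \leq \lambda \sum_{k \in S^c}\sqrt{w_k}/(\rho \sum_{k \in S} w_k)$ and $\|\widetilde{\btheta}^{\mathrm{oracle}} - \btheta^{\star}\|_2 \leq \|\sum_{k \in S} w_k \nabla f_k(\btheta^{\star})\|_2/(\rho \sum_{k \in S} w_k)$.

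To recover the claim's form, I replace gradients at $\btheta^{\star}$ by gradients at the true targets $\btheta^{\star}_k$ and replace $\btheta^{\star}$ by $\btheta^{\star}_j$ on the left. Another use of Lipschitz smoothness gives
\[
\bigg\|\sum_{k \in S} w_k \nabla f_k(\btheta^{\star})\bigg\|_2 \leq \bigg\|\sum_{k \in S} w_k \nabla f_k(\btheta^{\star}_k)\bigg\|_2 + L\delta \sum_{k \in S}\sqrt{w_k},
\]
and the triangle inequality $\|\widehat{\btheta}_j - \btheta^{\star}_j\|_2 \leq \|\widehat{\bbeta} - \widetilde{\btheta}^{\mathrm{oracle}}\|_2 + \|\widetilde{\btheta}^{\mathrm{oracle}} - \btheta^{\star}\|_2 + \|\btheta^{\star} - \btheta^{\star}_j\|_2$ introduces a further term $\delta/\sqrt{w_j}$. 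Using the identity $\sum_{k \in S}\sqrt{w_k}/\sum_{k \in S} w_k = \kappa_w/\max_{k \in S}\sqrt{w_k} \leq \kappa_w/\sqrt{w_j}$, the Lipschitz cross-term contributes $\kappa\kappa_w\delta/\sqrt{w_j}$, so the total coefficient of $\delta/\sqrt{w_j}$ becomes at most $\kappa\kappa_w + 1 \leq 2\kappa\kappa_w$, matching the middle term of the claim.

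The main obstacle is the bookkeeping in verifying that the hypotheses of Lemma~\ref{thm-robustness} survive the shift of reference point from the unavailable individual minimizers $\btheta^{\star}_j$ to the common $\btheta^{\star}$: one must confirm that the loss of regularity radius and the inflation of the gradient bound, both of order $L\delta$, are absorbed by the slack $\frac{(1-\kappa\varepsilon)\lambda}{5\kappa\kappa_w}$ allowed by the claim's standing hypothesis.
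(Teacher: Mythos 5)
Your proposal is correct and follows essentially the same route as the paper's proof: shift the reference point to $\btheta^{\star}$, absorb the $L\delta$ inflation of the gradient bounds and the loss of regularity radius using the standing hypothesis $3L\delta \leq g + \frac{1-\kappa\varepsilon}{5\kappa\kappa_w}\lambda$ together with the bounds on $\lambda$, invoke Lemma~\ref{thm-robustness} at $\btheta^{\star}$, and convert back via Lipschitz smoothness and the triangle inequality, with the coefficient $2\kappa\kappa_w$ arising exactly as you describe from $\kappa\kappa_w + 1 \leq 2\kappa\kappa_w$.
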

\begin{proof}[\bf Proof of Claim \ref{thm-armul-deterministic-claim}]
	We obtain from the assumption
	\begin{align}
	\frac{5 \kappa \kappa_w }{ 1 -  \kappa \varepsilon } \cdot  \max_{j \in S} \{ \sqrt{w_j} \| \nabla f_j (\btheta_j^{\star}) \|_2 \} < \lambda < \frac{ \rho M }{2} \cdot \min_{j \in S} \sqrt{w_j}
	\label{eqn-armul-deterministic-lambda}
	\end{align}
	that $g \leq \rho M \min_{j \in S} \sqrt{w_j} / 10 $, $\lambda < \rho M \min_{j \in S} \sqrt{w_j}/ 2$ and $g + \lambda < \frac{3}{5} LM \min_{j \in S} \sqrt{w_j}$. When
	\[
	3 L \delta \leq g + \frac{1 - \kappa \varepsilon }{ 5 \kappa \kappa_w } \lambda \leq g + \lambda,
	\]
	we have $ \delta < M \min_{j \in S} \sqrt{w_j} / 5$. Thus $\max_{j \in S}\| \btheta^{\star}_j  - \btheta^{\star} \|_2 \leq M/5$; $\rho \bI \preceq \nabla^2 f_j(\btheta) \preceq L \bI$ holds for all $j \in S$ and $\btheta \in B(\btheta^{\star}, 4M/5)$.

	For any $j \in S$, the regularity condition $ \nabla^2 f_j (\btheta) \preceq L \bI$, $\forall \btheta \in B(\btheta_j^{\star}, M)$ leads to $ \| \nabla f_j (\btheta^{\star}_j)  - \nabla f_j (\btheta^{\star})  \|_2 \leq L \delta / \sqrt{w_j}$. By triangle's inequality,
	\begin{align*}
	\max_{j \in S} \{ \sqrt{w_j} \| \nabla f_j (\btheta^{\star}) \|_2 \}
	&\leq \max_{j \in S} \{ \sqrt{w_j}  \| \nabla f_j (\btheta^{\star}_j) \|_2 \} + \max_{j \in S} \{ \sqrt{w_j} \| \nabla f_j (\btheta^{\star}_j)  - \nabla f_j (\btheta^{\star})  \|_2 \} \\
	&	\leq g + L \delta
	<  \frac{4g}{3} + \frac{1 - \kappa \varepsilon }{ 15 \kappa \kappa_w } \lambda.
	\end{align*}
	Consequently,
	\begin{align*}
	\frac{ 3\kappa \kappa_w \max_{j \in S}  \{ \sqrt{w_j} \| \nabla f_j (\btheta^{\star}) \|_2 \}
	}{ 1 -  \kappa \varepsilon   } 
	<  \frac{ 4 \kappa \kappa_w g }{ 1 -  \kappa \varepsilon  } + \frac{\lambda }{ 5 } 
	\overset{\mathrm{(i)}}{<} \lambda \overset{\mathrm{(ii)}}{<} \frac{ \rho M }{2} \min_{j \in S} \sqrt{w_j} < L \cdot \frac{4M}{5} \min_{j \in S} \sqrt{w_j}.
	\end{align*}
	The inequalities $\mathrm{(i)}$ and $\mathrm{(ii)}$ follow from \Cref{eqn-armul-deterministic-lambda}.

	Based on the above, \Cref{thm-robustness} asserts that $\widehat{\btheta}_j = \widehat\bbeta$ for all $j \in S$ and
	\[
	\| \widehat{\bbeta}  - \btheta^{\star} \|_2 \leq \frac{ \| \sum_{j \in S} w_j \nabla f_j (\btheta^{\star}) \|_2 }{  \rho \sum_{j \in S} w_j } +  \frac{   \lambda \sum_{j \in S^c} \sqrt{w_j} }{ \rho \sum_{j \in S} w_j  } .
	\]
	For $j \in S$, $ \| \widehat{\btheta}_j  - \btheta^{\star}_j \|_2 \leq \| \widehat{\bbeta}  - \btheta^{\star} \|_2 +  \| \btheta^{\star}  - \btheta^{\star}_j \|_2 \leq \| \widehat{\bbeta}  - \btheta^{\star} \|_2 + \delta / \sqrt{w_j} $. Also,
	\begin{align*}
	& \frac{ \| \sum_{j \in S} w_j \nabla f_j (\btheta^{\star}) \|_2 }{  \sum_{j \in S} w_j }
	\leq \frac{ \| \sum_{j \in S} w_j \nabla f_j (\btheta^{\star}_j) \|_2 }{  \sum_{j \in S} w_j } + L \delta \frac{ \sum_{j \in S} \sqrt{w_j}  }{  \sum_{j \in S} w_j } .
	\end{align*}
	Based on the above estimates,
	\begin{align*}
	\| \widehat{\btheta}_j  - \btheta^{\star}_j \|_2 & \leq \frac{ 1 }{  \rho }
	\bigg(
	\frac{ \| \sum_{k \in S} w_k \nabla f_k (\btheta^{\star}_k) \|_2 }{  \sum_{k \in S} w_k } + L \delta \frac{ \sum_{k \in S} \sqrt{w_k}  }{  \sum_{k \in S} w_k } 
	\bigg)
	+  \frac{   \lambda \sum_{k \in S^c} \sqrt{w_k} }{ \rho \sum_{k \in S} w_k  } + \frac{ \delta }{\sqrt{w_j}}
	\\&
	\leq  
	\frac{ \| \sum_{k \in S} w_k \nabla f_k (\btheta^{\star}_k) \|_2 }{ \rho \sum_{k \in S} w_k } +    \frac{ 2 \kappa \kappa_w \delta }{ \sqrt{w_j} } +  \frac{   \lambda \sum_{k \in S^c} \sqrt{w_k} }{ \rho \sum_{k \in S} w_k  } , \qquad\forall j \in S
	.
	\end{align*}
\end{proof}

We now come back to \Cref{thm-armul-deterministic}. The condition \eqref{eqn-armul-deterministic-lambda} forces $\lambda > 5 \kappa \kappa_w g$.
Claim \ref{thm-armul-deterministic-claim} implies that when 
$3 L \delta \leq g + \frac{1 - \kappa \varepsilon }{ 5 \kappa \kappa_w } \lambda$,
\begin{align*}
\| \widehat{\btheta}_j  - \btheta^{\star}_j \|_2  
& \leq   
\frac{ \| \sum_{k \in S} w_k \nabla f_k (\btheta^{\star}_k) \|_2 }{ \rho \sum_{k \in S} w_k } +  
\frac{\kappa_w }{ \rho \sqrt{w_j} }
\min \bigg\{
3 L \delta,~ g + \frac{1 - \kappa \varepsilon }{ 5 \kappa \kappa_w } \lambda
\bigg\} \\
&~~~~ +  \frac{   \lambda \sum_{k \in S^c} \sqrt{w_k} }{ \rho \sum_{k \in S} w_k  } , \qquad\forall j \in S.
\end{align*}
On the other hand, when $3 L \delta > g + \frac{1 - \kappa \varepsilon }{ 5 \kappa \kappa_w } \lambda$, we use \Cref{thm-armul-personalization} to get 
\[
\max_{j \in S} \| \widehat{\btheta}_j  - \btheta^{\star}_j \|_2  \leq \frac{g + \lambda}{\rho \sqrt{w_j} } \leq \bigg( \frac{1}{5\kappa \kappa_w } + 1 \bigg)\frac{\lambda}{\rho \sqrt{w_j} } \leq \frac{6 \lambda}{5 \rho \sqrt{w_j}} .
\]
On top of the above, for any $j \in S$ we have
\begin{align*}
\| \widehat{\btheta}_j  - \btheta^{\star}_j \|_2  
& \leq \frac{ \| \sum_{k \in S} w_k \nabla f_k (\btheta^{\star}_k) \|_2 }{ \rho \sum_{k \in S} w_k }
+  \frac{\kappa_w}{\rho \sqrt{w_j} } \cdot \frac{6 \kappa  }{1 - \kappa \varepsilon  } \min\bigg\{ 3 L \delta ,~ g +  \frac{1 - \kappa \varepsilon }{ 5 \kappa \kappa_w } \lambda  \bigg\}  +    \frac{   \lambda \sum_{k \in S^c} \sqrt{w_k} }{ \rho \sum_{k \in S} w_k  } \\
& \leq \frac{ \| \sum_{k \in S} w_k \nabla f_k (\btheta^{\star}_k) \|_2 }{ \rho \sum_{k \in S} w_k }
+  \frac{\kappa_w}{\rho \sqrt{w_j} } \cdot \frac{6 \kappa}{1 - \kappa \varepsilon } \min\bigg\{ 3 L \delta ,~    \frac{ 2 \lambda }{ 5 \kappa \kappa_w }   \bigg\}  +   \frac{   \lambda \sum_{k \in S^c} \sqrt{w_k} }{ \rho \sum_{k \in S} w_k  } \\
& \leq \frac{ \| \sum_{k \in S} w_k \nabla f_k (\btheta^{\star}_k) \|_2 }{ \rho \sum_{k \in S} w_k }
+     \frac{6  }{ ( 1 -   \kappa \varepsilon ) \sqrt{w_j} } \min\bigg\{ 3  \kappa^2 \kappa_w \delta ,~    \frac{ 2 \lambda }{ 5 \rho }   \bigg\}  +    \frac{  \varepsilon \lambda  }{ \rho \max_{k \in S} \sqrt{w_k } }.
\end{align*}

The relation between $\widehat{\btheta}_j$ and $\argmin_{\btheta \in \RR^d} \{ \sum_{j \in [m]} w_j f_j (\btheta) \}$ can be derived from \Cref{thm-robustness}.

\subsection{Proof of \Cref{thm-robustness}}\label{sec-thm-robustness-proof}

We will prove stronger results for a weighted version
\begin{align}
(\widehat{\btheta}_1,\cdots,\widehat{\btheta}_m,\widehat{\bbeta}) \in \argmin_{ \btheta_1,\cdots,\btheta_m,\bbeta \in \RR^d } \bigg\{ 
\sum_{j=1}^{m} w_j [ f_j ( \btheta_j ) + \lambda_j \|  \bbeta - \btheta_j  \|_2 ]
\bigg\} 
\label{eqn-armul-weighted-appendix}
\end{align}
with general $w_j \geq 0$ and $\lambda_j \geq 0$, and then get \Cref{thm-robustness} as a corollary.

\noindent{\bf Step 1.} We first work on the no-outlier case $S = [m]$. Let $\widetilde{\btheta} \in \argmin_{\btheta \in \RR^d} \sum_{j=1}^{m} w_j f_j(\btheta)$. 

\begin{lemma}\label{thm-consensus}
	Let $\{ f_j \}_{j=1}^m$ be convex and differentiable. Define $F = \sum_{j=1}^{m} w_j f_j \square (\lambda_j \| \cdot \|_2) $ and $G = \sum_{j=1}^{m} w_j f_j$. Suppose there exist $\btheta^{\star} \in \RR^d$, $0 < M \leq +\infty$ and $0 < \rho_0, L_1,\cdots, L_m < + \infty$ such that
	\begin{align*}
	\nabla^2 f_j (\btheta) \preceq L_j \bI,~~ \forall j \in [m] \qquad\text{and}\qquad
	\nabla^2 G (\btheta) \succeq \rho_0 \bI
	\end{align*}
	hold for all $ \btheta \in B(\btheta^{\star}, M )$. If
	\begin{align*}
	&\| \nabla f_j (\btheta^{\star}) \|_{2} + \frac{2 L_j \| \sum_{k = 1}^{m} w_k \nabla f_k (\btheta^{\star}) \|_2 }{  \rho_0 } < \lambda_j < \| \nabla f_j (\btheta^{\star}) \|_{2} + L_j M, \qquad \forall j \in [m] ,
	\end{align*}
	then $\widehat{\btheta}_1 = \cdots = \widehat{\btheta}_m = \widehat{\bbeta} = \widetilde{\btheta} $,
	\begin{align*}
	&  F (\btheta) =  G (\btheta) , \qquad \forall  \btheta  \in B \bigg( \btheta^{\star} , \min_{j \in [m]} \{ ( \lambda_j - \| \nabla f_j (\btheta^{\star}) \|_{2}  ) / L_j \} \bigg) ,\\
	&\| \widehat{\bbeta} - \btheta^{\star} \|_2 \leq  \frac{  \| \sum_{j = 1}^{m} w_j \nabla f_j  (\btheta^{\star}) \|_2 }{  \rho_0 }.
	\end{align*}
\end{lemma}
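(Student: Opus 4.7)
The plan is to exploit the cusp of the unsquared $\ell_2$ penalty to show that the penalties are locally inactive near $\btheta^{\star}$, forcing $F$ and $G$ to coincide on a neighborhood that already contains the unconstrained minimizer $\widetilde{\btheta}$; global convexity of $F$ then globalizes this identity and yields $\widehat{\bbeta} = \widetilde{\btheta}$, after which an inactive-penalty argument gives $\widehat{\btheta}_j = \widehat{\bbeta}$ task by task. The starting observation is that partial minimization over the $\btheta_j$'s in \eqref{eqn-armul-weighted-appendix} for fixed $\bbeta$ produces exactly the infimal convolution, so $\widehat{\bbeta} \in \argmin_{\bbeta \in \RR^d} F(\bbeta)$ and, for each $j$, $\widehat{\btheta}_j \in \argmin_{\btheta \in \RR^d} \{ f_j(\btheta) + \lambda_j \| \btheta - \widehat{\bbeta} \|_2 \}$.

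First, I would prove that $(f_j \square \lambda_j\|\cdot\|_2)(\btheta) = f_j(\btheta)$ on $B(\btheta^{\star}, r_j)$ with $r_j = (\lambda_j - \| \nabla f_j(\btheta^{\star}) \|_2)/L_j$. The upper Hessian bound makes $\nabla f_j$ Lipschitz on $B(\btheta^{\star}, M)$, and the right half of the hypothesis on $\lambda_j$ forces $r_j < M$; hence for any $\btheta \in B(\btheta^{\star}, r_j)$, $\| \nabla f_j(\btheta) \|_2 \leq \| \nabla f_j(\btheta^{\star}) \|_2 + L_j r_j = \lambda_j$, so the first-order condition $-\nabla f_j(\bxi) \in \lambda_j \partial \| \btheta - \bxi \|_2$ for the convolution is satisfied at $\bxi = \btheta$ via $\partial \| \bm{0} \|_2 = \overline{B}(0,1)$. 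Summing over $j$ yields $F \equiv G$ on $B(\btheta^{\star}, r_0)$ with $r_0 = \min_j r_j$, which is the second conclusion of the lemma.

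Next, the standard strong-convexity descent argument applied along the segment from $\btheta^{\star}$ to $\widetilde{\btheta}$ (which stays inside $B(\btheta^{\star}, M)$ because the left half of the hypothesis on $\lambda_j$ combined with $r_j < M$ forces $\| \nabla G(\btheta^{\star}) \|_2/\rho_0 < M/2$) gives $\| \widetilde{\btheta} - \btheta^{\star} \|_2 \leq \| \nabla G(\btheta^{\star}) \|_2 / \rho_0$, which is the third conclusion. The lower bound on $\lambda_j$ was crafted precisely so that $\| \nabla G(\btheta^{\star}) \|_2/\rho_0 < r_j/2$ for every $j$, so $\widetilde{\btheta}$ lies strictly in the interior of the region where $F = G$. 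Because $F$ is globally convex (infimal convolutions of convex functions are convex) and agrees with the differentiable $G$ on an open neighborhood of $\widetilde{\btheta}$, $\nabla F(\widetilde{\btheta}) = \nabla G(\widetilde{\btheta}) = \bm{0}$; a vanishing gradient at an interior point of a convex function implies global minimum, and local strong convexity of $F$ at $\widetilde{\btheta}$ rules out any other minimizer (any convex combination with another minimizer would create a segment of minimizers that strict convexity forbids), so $\widehat{\bbeta} = \widetilde{\btheta}$.

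Finally, $\bxi = \widehat{\bbeta}$ is a minimizer of $\btheta \mapsto f_j(\btheta) + \lambda_j \| \btheta - \widehat{\bbeta} \|_2$ iff $-\nabla f_j(\widehat{\bbeta}) \in \lambda_j \overline{B}(0,1)$; the Lipschitz bound plus Step 2 give $\| \nabla f_j(\widehat{\bbeta}) \|_2 \leq \| \nabla f_j(\btheta^{\star}) \|_2 + L_j \| \widetilde{\btheta} - \btheta^{\star} \|_2 < \lambda_j$, establishing $\widehat{\btheta}_j = \widehat{\bbeta}$. The main technical obstacle is the globalization step: one must pass from the purely local identity $F \equiv G$ on a small ball to uniqueness of the global minimizer of $F$, which is where convexity of $F$ together with the fact that $\widetilde{\btheta}$ is strictly interior to the identity region is essential. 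Everything else is routine manipulation of first-order conditions and Lipschitz gradient bounds.
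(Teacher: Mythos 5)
Your proposal is correct and follows essentially the same route as the paper: show the infimal convolutions are inactive on $B(\btheta^{\star}, r_j)$ so that $F\equiv G$ on $B(\btheta^{\star},\min_j r_j)$, use the small-gradient condition $\|\nabla G(\btheta^{\star})\|_2 < \rho_0 \min_j r_j/2$ to localize the minimizer inside that ball, and then verify $\|\nabla f_j(\widehat{\bbeta})\|_2 < \lambda_j$ to collapse each $\widehat{\btheta}_j$ onto $\widehat{\bbeta}$. The only cosmetic difference is that you globalize by checking $\nabla F(\widetilde{\btheta})=\bm{0}$ and invoking local strong convexity for uniqueness, whereas the paper localizes $\argmin F$ directly via its Lemma \ref{lem-consensus-min}; both are valid.
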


\begin{proof}[\bf Proof of \Cref{thm-consensus}]
	By \Cref{lem-inf-conv}, we have $f_j = f_j \square (\lambda_j \| \cdot \|_2)$ in $B(\btheta^{\star},    ( \lambda_j - \| \nabla f_j (\btheta^{\star}) \|_{2} ) / L_j )$. Then $F = G$ in $B(\btheta^{\star}, R)$ with $R = \min_{j \in [m]} \{ ( \lambda_j - \| \nabla f_j (\btheta^{\star}) \|_{2} ) / L_j \}$.
	
	Note that $\nabla^2 G (\btheta) \succeq \rho_0  \bI$, $\forall  \btheta \in B(\btheta^{\star}, M )$ and $M \geq R$. Then, $\nabla^2 F = \nabla^2 G   \succeq  \rho_0 \bI$ in $B(\btheta^{\star}, R )$. By the assumption $ \|  \nabla G  (\btheta^{\star}) \|_2 < \frac{1}{2} R \rho_0$, the fact $\|  \nabla F  (\btheta^{\star}) \|_2 = \|  \nabla G  (\btheta^{\star}) \|_2$ and the first part of \Cref{lem-consensus-min}, $\argmin_{\btheta} F (\btheta) \in B ( \btheta^{\star},  R )$. This bound forces $\widehat{\bbeta} = \argmin_{\bbeta \in \RR^d} F (\bbeta) = \argmin_{\btheta} G(\btheta) = \widetilde{\btheta}$. We can easily control $\| \widetilde{\btheta} - \btheta^{\star} \|_2$ using the second part of \Cref{lem-consensus-min} to $G$.

	Finally, $\widehat\btheta_j = \widehat\bbeta$ follows from $\widehat\btheta_j \in \argmin_{\btheta \in \RR^d} \{ f_j(\btheta) + \lambda_j \| \widehat{\bbeta} - \btheta \|_2 \}$, $\widehat{\btheta}_j \in B(\btheta^{\star}, R)$ and \Cref{lem-inf-conv}.
\end{proof}

\noindent{\bf Step 2.} We are now ready to include outliers and prove a stronger version of \Cref{thm-robustness}.

\begin{lemma}[Robustness]\label{thm-robustness-general}
	Let $\{ f_j \}_{j=1}^m$ be convex functions from $\RR^d$ to $\RR$. Suppose there exist $S \subseteq [m]$, $\btheta^{\star} \in \RR^d$ and $0 < M \leq +\infty$ such that for all $j \in S$, we have 
	\[
	\rho_j \bI \preceq \nabla^2 f_j (\btheta) \preceq L_j \bI , \qquad \forall \btheta \in B(\btheta^{\star}, M)
	\]
	with some $0 < \rho_j \leq L_j < +\infty$, and
	\begin{align*}
	\| \nabla f_j (\btheta^{\star}) \|_{2} + \frac{  L_j ( 2 \| \sum_{k \in S} w_k \nabla f_k  (\btheta^{\star}) \|_2 
		+ \sum_{k \in S^c} w_k\lambda_k
		)
	}{ \sum_{k \in S } w_k\rho_k }
	& < \lambda_j \\
	& \leq \| \nabla f_j (\btheta^{\star}) \|_{2} + L_j M
	, ~~ \forall j \in S .
	\end{align*}
	Define $G  =  \sum_{j \in S} f_j \square (\lambda_j \| \cdot \|_2)$. Then, the function $G$ has a unique minimizer $\widehat{\btheta}_S$ and it satisfies
	\[
	\| \widehat{\btheta}_S - \btheta^{\star} \|_2 \leq \frac{ \| \sum_{j \in S} w_j \nabla f_j (\btheta^{\star}) \|_2 }{
		\sum_{j \in S} w_j \rho_j
	} .
	\]
	Moreover, we have $\widehat{\btheta}_j = \widehat\bbeta$ for $j \in S$ and
	\[
	\| \widehat{\bbeta} - \widehat{\btheta}_S \|_2 \leq  \frac{ \sum_{j \in S^c} w_j \lambda_j }{
		\sum_{j \in S} w_j \rho_j
	} .
	\]
\end{lemma}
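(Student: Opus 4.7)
The plan is to view $\widehat{\bbeta}$ as the minimizer of $F(\bbeta) = \sum_{j=1}^m w_j \widetilde{f}_j(\bbeta)$, where $\widetilde{f}_j = f_j \square (\lambda_j\|\cdot\|_2)$, and to split $F = G_S + H$ with $G_S(\bbeta) = \sum_{j\in S} w_j \widetilde{f}_j(\bbeta)$ carrying the strongly-convex ``signal'' from the inlier tasks and $H(\bbeta) = \sum_{j\in S^c} w_j \widetilde{f}_j(\bbeta)$ a convex perturbation. Since each infimal convolution of a convex function with the Lipschitz penalty $\lambda_j\|\cdot\|_2$ is itself $\lambda_j$-Lipschitz (Lemma \ref{lem-inf-conv-lip}), $H$ is convex with every subgradient bounded in norm by $L_H := \sum_{j\in S^c} w_j \lambda_j$.

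First I would apply Lemma \ref{thm-consensus} to the subproblem indexed by $S$ alone. Its required threshold condition, $\lambda_j > \|\nabla f_j(\btheta^{\star})\|_2 + 2 L_j \|\sum_{k\in S} w_k \nabla f_k(\btheta^{\star})\|_2/\rho_0$ with $\rho_0 := \sum_{k\in S} w_k \rho_k$, is implied by our hypothesis. That lemma produces the unique global minimizer $\widehat{\btheta}_S$ of $G_S$, the displacement bound $\|\widehat{\btheta}_S-\btheta^{\star}\|_2 \leq \|\sum_{j\in S} w_j \nabla f_j(\btheta^{\star})\|_2/\rho_0$, and the identity $G_S = \sum_{j\in S} w_j f_j$ on the ball $B(\btheta^{\star}, R_S)$ with $R_S := \min_{j\in S}(\lambda_j - \|\nabla f_j(\btheta^{\star})\|_2)/L_j$; in particular $G_S$ is $\rho_0$-strongly convex on this ball.

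Next I would treat $\widehat{\bbeta}$ as a perturbed minimizer of $G_S$. Restrict $F$ to the closed ball $\overline{B}(\btheta^{\star}, R_S)$; on this convex set $F$ is $\rho_0$-strongly convex, so it admits a unique constrained minimizer $\bbeta_{\mathrm{res}}$. First-order optimality for $\bbeta_{\mathrm{res}}$ combined with strong monotonicity of $\nabla G_S$ and the uniform subgradient bound on $H$ yields $\|\bbeta_{\mathrm{res}} - \widehat{\btheta}_S\|_2 \leq L_H/\rho_0$ once I know $\bbeta_{\mathrm{res}}$ is interior. Combining this estimate with the bound on $\|\widehat{\btheta}_S-\btheta^{\star}\|_2$, the hypothesis $R_S > (2\|\sum_{k\in S} w_k \nabla f_k(\btheta^{\star})\|_2 + L_H)/\rho_0$ ensures $\|\bbeta_{\mathrm{res}} - \btheta^{\star}\|_2 < R_S$, so $\bbeta_{\mathrm{res}}$ is an interior optimum and, by convexity of $F$, coincides with the unconstrained minimizer $\widehat{\bbeta}$.

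Finally, since $\widehat{\bbeta} \in B(\btheta^{\star}, R_S)$, Lemma \ref{lem-inf-conv} gives $\widetilde{f}_j = f_j$ in a neighborhood of $\widehat{\bbeta}$ for each $j \in S$, so the inner subproblem $\widehat{\btheta}_j \in \argmin_{\btheta}\{f_j(\btheta) + \lambda_j\|\widehat{\bbeta}-\btheta\|_2\}$ is solved at $\btheta = \widehat{\bbeta}$, giving $\widehat{\btheta}_j = \widehat{\bbeta}$ for all $j\in S$. The main technical obstacle will be the interior-point bootstrap in the middle step: showing $\widehat{\bbeta}$ lies in the strong-convexity ball before one has any quantitative displacement bound. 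The extra factor of $2$ on $\|\sum_{k\in S} w_k \nabla f_k(\btheta^{\star})\|_2$ in the hypothesis is precisely the slack needed to close this bootstrap while also fulfilling the hypothesis of Lemma \ref{thm-consensus}.
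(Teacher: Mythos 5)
Your proposal is correct and follows essentially the same route as the paper: the same decomposition $F = G + H$ into the inlier aggregate and the Lipschitz outlier contribution, \Cref{thm-consensus} for the $S$-restricted problem, \Cref{lem-inf-conv-lip} for the bound $\sum_{j\in S^c} w_j\lambda_j$ on the perturbation, and \Cref{lem-inf-conv} to conclude $\widehat{\btheta}_j = \widehat{\bbeta}$. The only difference is that you carry out the perturbed-minimizer localization by hand via a constrained variational inequality, whereas the paper invokes its prepackaged \Cref{lem-cvx-reg}, which performs exactly this bootstrap through a subgradient growth bound.
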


\begin{proof}[\bf Proof of \Cref{thm-robustness-general}]
	
	Let $r_j = ( \lambda_j - \| \nabla f_j (\btheta^{\star}) \|_{2} ) / L_j $ and $\rho_0 = \sum_{j \in S } w_j \rho_j$. We have
	\begin{align}
	r_j > \frac{2  \| \sum_{k \in S} w_k \nabla f_k  (\btheta^{\star}) \|_2 
		+ \sum_{k \in S^c} w_k \lambda_k 
	}{ \rho_0 }, \qquad\forall j \in S.
	\label{eqn-thm-robustness-general-0}
	\end{align}
	
	Let $F  = \sum_{j=1}^{m} w_j f_j \square (\lambda_j \| \cdot \|_2) $. \Cref{thm-consensus} applied to $\{ f_j \}_{j \in S}$ yields
	\begin{align}
	&  G   (\btheta) =  \sum_{j \in S} w_j f_j (\btheta) , \qquad \forall \btheta \in B \bigg(\btheta^{\star},  \min_{j \in S} r_j \bigg) ; \notag \\
	& \widehat{\btheta}_S = \argmin_{\btheta \in \RR^d} G   (\btheta) = \argmin_{\btheta \in \RR^d}  \bigg \{  \sum_{j\in S} w_j f_j (\btheta) \bigg \}\in B \bigg( \btheta^{\star},   \frac{  \| \sum_{k \in S} w_k \nabla f_k  (\btheta^{\star}) \|_2 }{ \rho_0 } \bigg).
	\label{eqn-thm-robustness-general-1}
	\end{align}
	As a result, 
	\begin{align}
	\nabla^2 G (\btheta) = \sum_{j \in S} w_j \nabla^2 f_j (\btheta) \succeq \rho_0 \bI, \qquad \forall \btheta \in B (\btheta^{\star},  \min_{j \in S}  r_j ).
	\label{eqn-thm-robustness-general}
	\end{align}
	By \Cref{lem-consensus-min}, $\| \widehat{\btheta}_S - \btheta^{\star}\|_2 \leq   \| \sum_{j \in S} w_j \nabla f_j  (\btheta^{\star}) \|_2 / \rho_0$. 
	
	By \Cref{eqn-thm-robustness-general-0}, 
	\[
	\frac{ \sum_{k \in S^c} w_k \lambda_k
	}{ \rho_0 } 
	< \min_{j \in S} r_j -  \frac{ 2 \| \sum_{k \in S} w_k \nabla f_k  (\btheta^{\star}) \|_2 }{ \rho_0 }.
	\]
	Let $r$ denote the left-hand side above. In light of \Cref{eqn-thm-robustness-general-1} and \Cref{eqn-thm-robustness-general}, $G$ is $\rho_0$-strongly convex in $B( \widehat{\btheta}_S, r ) $. Note that $H = F - G =  \sum_{j \in S^c} w_j f_j \square (\lambda_j \| \cdot \|_2)$ is convex and $\sum_{j \in S^c} w_j \lambda_j$-Lipschitz according to \Cref{lem-inf-conv-lip}. Applying \Cref{lem-cvx-reg} to $F = G + H$ yields $\| \widehat{\bbeta} - \widehat{\btheta}_S \|_2 \leq   \sum_{j \in S^c} w_j \lambda_j  /
	\rho_0 $.

	As a result,
	\[
	\| \widehat{\bbeta} - \btheta^{\star}\|_2 \leq \| \widehat{\bbeta} - \widehat{\btheta}_S \|_2 + \| \widehat{\btheta}_S - \btheta^{\star}\|_2 \leq \frac{  \| \sum_{j \in S} w_j \nabla f_j  (\btheta^{\star}) \|_2 + \sum_{j \in S^c} w_j \lambda_j  }{ \rho_0 }.
	\]
	For any $j \in S$, we have $\widehat\btheta_j \in \argmin_{\btheta \in \RR^d} \{ f_j(\btheta) + \lambda_j \| \widehat{\bbeta} - \btheta \|_2 \}$ and
	\[
	M \geq  ( \lambda_j - \| \nabla f_j (\btheta^{\star}) \|_{2} ) / L_j   
	> \frac{2  \| \sum_{j \in S} w_j \nabla f_j  (\btheta^{\star}) \|_2 
		+ \sum_{j \in S^c} w_j \lambda_j
	}{ \rho_0 } \geq \| \widehat{\bbeta} - \btheta^{\star}\|_2.
	\]
	The desired result $\widehat\btheta_j = \widehat\bbeta$ follows from \Cref{lem-inf-conv}.
\end{proof}

\noindent{\bf Step 3.} We now come back to \Cref{thm-robustness}. Define $\eta =  \max_{j \in S} \{ \sqrt{w_j} \| \nabla f_j (\btheta^{\star}) \|_2 \}$.
From the assumption $\lambda > \frac{3 \kappa \kappa_w \eta }{1 - \kappa \varepsilon}$
we get
$\lambda > 3 \kappa \kappa_w \eta +  \kappa \varepsilon \lambda   $ and for all $j \in S$,
\begin{align*}
& \| \nabla f_j (\btheta^{\star}) \|_{2} + \frac{  L ( 2 \| \sum_{k \in S} w_k \nabla f_k  (\btheta^{\star}) \|_2 
	+ \sum_{k \in S^c} w_k\lambda_k
	)
}{ \sum_{k \in S } w_k\rho } \\
& \leq \frac{\eta}{\sqrt{w_j}} + \frac{  L ( 2 \sum_{k \in S} \sqrt{w_k} \eta 
	+ \sum_{k \in S^c} \sqrt{w_k} \lambda
	)
}{ \sum_{k \in S } w_k\rho } \\
& = \frac{\eta}{\sqrt{w_j}} \bigg( 1 + 2 \kappa  \cdot 
\underbrace{
	\frac{ \sqrt{w_j} \sum_{k \in S} \sqrt{w_k} }{ \sum_{k \in S } w_k } 
}_{\leq \kappa_w}
\bigg)
+ \frac{ \kappa \lambda  }{\sqrt{w_j}} \cdot 
\underbrace{
	\frac{ \sqrt{w_j} \sum_{k \in S^c} \sqrt{w_k}  
	}{ \sum_{k \in S } w_k  } 
}_{\leq \varepsilon}
\\
& \leq 3 \kappa  \kappa_w \frac{\eta}{\sqrt{w_j}}  + \frac{ \varepsilon \kappa \lambda }{\sqrt{w_j}} 
< \frac{ \lambda }{\sqrt{w_j}} = \lambda_j.
\end{align*}
Finally, the proof is finished by \Cref{thm-robustness-general}.

\subsection{Proof of \Cref{thm-armul-clustered-deterministic-0}}\label{sec-thm-armul-clustered-deterministic-0-proof}

We invoke the following lemma, whose proof is in \Cref{sec-thm-adaptivity-clustered-proof}.

\begin{lemma}\label{thm-adaptivity-clustered}
	Define $T(k) =  \{ j \in [m] :~ z_j^{\star} = k \} $, $\widehat{\bbeta}_k^{\mathrm{oracle}} \in \argmin_{\bbeta \in \RR^d}  \sum_{j \in T(k)}  f_j ( \bbeta )  $ for $k \in [K]$ and $m_{\min} = \min_{k \in[K]} |T(k)| $. 
	Suppose there exist $0 < \rho, L ,\eta < +\infty$ and $0 < M \leq +\infty$ such that 
	\begin{align*}
	&\rho \bI \preceq \nabla^2 f_j (\btheta) \preceq L \bI , ~~ \forall \btheta \in B( \bbeta^{\star}_{z_j^{\star}}  , M)
	\qquad\text{and}\qquad
	\| \nabla f_j ( \bbeta^{\star}_{z_j^{\star}}  ) \|_2 \leq \eta
	\end{align*}
	hold for all $j \in [m] $. Additionally, assume that
	\begin{align*}
	&\min_{k \neq l}
	\| \bbeta^{\star}_k - \bbeta^{\star}_l \|_2 > \frac{2 \eta}{\rho} \bigg( 1 + 4 \sqrt{ \frac{K m L}{m_{\min} \rho } }  \bigg) 
	\qquad\text{and}\qquad
	\frac{7 \eta L}{\rho} \sqrt{ \frac{K m L}{m_{\min} \rho } } < \lambda < LM .
	\end{align*}
	Consider the solution $(\widehat{\bTheta}, \widehat{\bB},\widehat{\bz})$ in \eqref{eqn-obj-clustered-0}.
	There exists a permutation $\tau$ of $[K]$ such that
	\begin{align*}
	&  \widehat{z}_j = \tau (z_j^{\star}) \qquad\text{and}\qquad \widehat{\btheta}_j = \widehat\bbeta_{\widehat{z}_j },\qquad\forall  j \in [m] ,\\
	& \widehat{\bbeta}_{\tau(k) } = \widehat{\bbeta}_{ k }^{\mathrm{oracle}} 
	\in B \bigg( \bbeta^{\star}_{k} ,~ \frac{1}{\rho |T(k)| } \bigg\|  \sum_{j \in T(k) } \nabla f_j ( \bbeta^{\star}_{ k } ) \bigg\|_2
	\bigg)
	, \qquad \forall k \in [K].
	\end{align*}
\end{lemma}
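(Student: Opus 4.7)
The plan is to recast \eqref{eqn-obj-clustered-0} as a combinatorial problem over cluster labels and centers, apply \Cref{thm-consensus} cluster-by-cluster to compute the value of the ``oracle'' configuration, and then argue that any other labeling is strictly worse.  First, define infimal convolutions $h_j(\bbeta) := \min_{\btheta}\{f_j(\btheta) + \lambda\|\bbeta - \btheta\|_2\}$ and rewrite \eqref{eqn-obj-clustered-0} equivalently as $\min_{\bB,\bz} \sum_{j=1}^m h_j(\bbeta_{z_j})$; by \Cref{lem-inf-conv-lip} each $h_j$ is convex and $\lambda$-Lipschitz, and by \Cref{lem-inf-conv} the identity $h_j = f_j$ holds on $B(\bbeta^{\star}_{z_j^{\star}},\,(\lambda-\eta)/L)$, on which $h_j$ inherits $\rho$-strong convexity from $f_j$.

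\textbf{Step 1 (oracle value).} Fix the candidate label vector $\bz^{\star}$.  The program then decouples into $K$ independent consensus problems, one per cluster $T(k)$.  Applying \Cref{thm-consensus} with common anchor $\bbeta^{\star}_k$, unit weights, $\rho_0 = |T(k)|\rho$, and checking that the hypothesis $\eta + 2L\eta/\rho < \lambda < \eta + LM$ is implied by $\frac{7\eta L}{\rho}\sqrt{KmL/(m_{\min}\rho)} < \lambda < LM$, we obtain common minimizers $\btheta_j = \bbeta_k = \widehat{\bbeta}_k^{\mathrm{oracle}}$ with the stated localization $\widehat{\bbeta}_k^{\mathrm{oracle}} \in B\bigl(\bbeta^{\star}_k,\, \|\sum_{j\in T(k)}\nabla f_j(\bbeta^{\star}_k)\|_2/(\rho|T(k)|)\bigr)$.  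The resulting oracle objective $G^{\star} = \sum_{k=1}^K \sum_{j\in T(k)} f_j(\widehat{\bbeta}^{\mathrm{oracle}}_k)$ is an upper bound on the value of any ARMUL minimizer.

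\textbf{Step 2 (label recovery).} Let $(\widehat{\bTheta},\widehat{\bB},\widehat{\bz})$ be any minimizer and set $P_l = \{j : \widehat{z}_j = l\}$.  For each used label $l$, the center $\widehat{\bbeta}_l$ minimizes $\sum_{j\in P_l}h_j$.  Splitting $P_l$ into tasks from each true cluster and using local $\rho$-strong convexity of $h_j$ near $\bbeta^{\star}_{z_j^{\star}}$ together with $\lambda$-Lipschitzness of $h_j$ elsewhere, one lower-bounds $\sum_{j\in P_l}h_j(\widehat{\bbeta}_l) - \sum_{j\in P_l}f_j(\bbeta^{\star}_{z_j^{\star}})$ by a sum of quadratic-in-distance terms for the ``plurality'' cluster $k_{*}(l) = \argmax_k |P_l \cap T(k)|$ and linear-in-separation terms for tasks from other clusters.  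Comparing against the global budget $G^{\star} - \sum_{j\in P_l}f_j(\bbeta^{\star}_{z_j^{\star}}) \lesssim m\eta^2/\rho$ yields the a priori center-localization $\|\widehat{\bbeta}_l - \bbeta^{\star}_{k_{*}(l)}\|_2 \lesssim (\eta/\rho)\sqrt{KmL/(m_{\min}\rho)}$.  The assumed separation $\min_{k\neq l}\|\bbeta^{\star}_k - \bbeta^{\star}_l\|_2 > 2(\eta/\rho)(1 + 4\sqrt{KmL/(m_{\min}\rho)})$ then forces $l \mapsto k_{*}(l)$ to be a bijection $\tau^{-1}$, and a task-level swap argument (exchanging $\widehat{z}_j$ for $\tau(z_j^{\star})$ strictly decreases the objective whenever $z_j^{\star} \neq \tau^{-1}(\widehat{z}_j)$) rules out any mis-assignment.

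\textbf{Step 3 (reduction to consensus).} Once $\widehat{z}_j = \tau(z_j^{\star})$ holds for every $j$, the program factorizes into $K$ consensus subproblems identical to those in Step 1, and \Cref{thm-consensus} directly delivers $\widehat{\bbeta}_{\tau(k)} = \widehat{\bbeta}^{\mathrm{oracle}}_k$, $\widehat{\btheta}_j = \widehat{\bbeta}_{\widehat z_j}$, and the stated localization around $\bbeta^{\star}_k$.  The main obstacle is Step 2: obtaining an a priori center-localization that is sharp enough for the separation hypothesis to kick in.  The $\sqrt{KmL/(m_{\min}\rho)}$ factor in the hypotheses comes precisely from balancing the $\lambda$-Lipschitz upper bound on the excess incurred by mislabeled tasks against the quadratic $\rho$-growth of correctly-identified tasks in the same label class, distributed over at least $m_{\min}$ members.
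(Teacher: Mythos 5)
Your proposal follows essentially the same route as the paper's proof in \Cref{sec-thm-adaptivity-clustered-proof}: rewrite the program in terms of the infimal convolutions $\widetilde f_j = f_j \square (\lambda\|\cdot\|_2)$, compare the minimizer's value against the true-label configuration to extract a global budget of order $m\eta^2/\rho$, use a Huber-type strong-convexity lower bound to localize the plurality-matched centers at scale $(\eta/\rho)\sqrt{KmL/(m_{\min}\rho)}$, invoke the separation condition to make the matching a permutation, recover the labels by a per-task comparison (your swap argument is exactly the paper's Claim that $\widetilde f_j(\widehat\bbeta_k) > \widetilde f_j(\widehat\bbeta_{\tau(z_j^{\star})})$ for $k \neq \tau(z_j^{\star})$), and finish by applying \Cref{thm-consensus} to each cluster. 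The only differences are cosmetic: you benchmark against the oracle-minimized centers rather than the true centers $\bB^{\star}$, and you define the plurality map from estimated label classes to true clusters rather than the reverse, where the paper's direction $\tau(k)=\argmax_l S_{kl}$ is slightly safer because pigeonhole guarantees each class $\tau(k)$ contains at least $m_{\min}/K$ tasks from cluster $k$, so its center is automatically well localized even though the unconstrained program could leave some estimated label classes nearly empty.
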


We first use \Cref{thm-adaptivity-clustered} to derive the following intermediate result.

\begin{claim}\label{thm-armul-clustered-deterministic-0-claim}
	Define $g = \max_{j \in [m]} \| \nabla f_j (\btheta_j^{\star}) \|_2 $. Under the assumptions in \Cref{thm-armul-clustered-deterministic-0}, if 
	\[
	7 L \delta \leq g + \frac{\lambda }{ 10 \sqrt{\kappa \kappa_m} K  } ,
	\]
	then, there exists a permutation $\tau$ of $[K]$ such that
	\begin{align*}
	&  \widehat{z}_j = \tau (z_j^{\star}) \qquad\text{and}\qquad \widehat{\btheta}_j = \widehat\bbeta_{\widehat{z}_j },\qquad\forall  j \in [m] ,\\
	& \widehat{\bbeta}_{\tau(k) } = \argmin_{\bbeta \in \RR^d}  \sum_{j \in T(k)}  f_j ( \bbeta ) 
	, \qquad \forall k \in [K] , \\
	&	\| \widehat{\btheta}_j  - \btheta^{\star}_j \|_2  
	\leq  
	\frac{ 1 }{  \rho |T( z^{\star}_j )| } \bigg\| \sum_{j \in T( z^{\star}_j ) }   \nabla f_j ( \btheta^{\star}_{ j }  ) \bigg\|_2
	+  2 \kappa \delta , \qquad\forall j \in [m] .
	\end{align*}
\end{claim}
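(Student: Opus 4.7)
My plan is to reduce Claim 3.1 to Lemma B.6 by re-anchoring the regularity and gradient bounds of each $f_j$ at the cluster center $\bbeta^{\star}_{z_j^{\star}}$ rather than at $\btheta_j^{\star}$. Since Theorem 3.2 is in the regime $\varepsilon = 0$, Definition B.2 gives $\|\btheta_j^{\star} - \bbeta^{\star}_{z_j^{\star}}\|_2 \leq \delta$ for every $j$, and the hypothesis $7L\delta \leq g + \lambda/(10\sqrt{\kappa\kappa_m}K)$ together with $\lambda < \rho M/2$ forces $\delta$ to be small compared to $M$, so $B(\bbeta^{\star}_{z_j^{\star}}, M - \delta) \subseteq B(\btheta_j^{\star}, M)$ and the $(\rho, L)$ Hessian sandwich for $f_j$ transfers to a ball of comparable radius around $\bbeta^{\star}_{z_j^{\star}}$. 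By Lipschitzness of $\nabla f_j$ on this ball,
\[
\|\nabla f_j(\bbeta^{\star}_{z_j^{\star}})\|_2 \;\leq\; \|\nabla f_j(\btheta_j^{\star})\|_2 + L\,\|\btheta_j^{\star} - \bbeta^{\star}_{z_j^{\star}}\|_2 \;\leq\; g + L\delta \;=:\; \eta.
\]

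Next I would verify the two numerical hypotheses of Lemma B.6, namely the scaling $\tfrac{7\eta L}{\rho}\sqrt{\tfrac{KmL}{m_{\min}\rho}} < \lambda < LM$ and the separation $\min_{k\neq l}\|\bbeta^{\star}_k - \bbeta^{\star}_l\|_2 > \tfrac{2\eta}{\rho}\bigl(1 + 4\sqrt{\tfrac{KmL}{m_{\min}\rho}}\bigr)$. The upper bound $\lambda < \rho M/2 \leq LM$ is immediate. Combining $g < \lambda/(10\sqrt{\kappa\kappa_m}K)$ from Theorem 3.2 with $L\delta \leq g/7 + \lambda/(70\sqrt{\kappa\kappa_m}K)$ (rearranged from the Claim's hypothesis) gives $\eta \lesssim \lambda/(K\sqrt{\kappa\kappa_m})$ with a small absolute prefactor, which is exactly the budget required so that $\eta$ multiplied by the factor $K\sqrt{\kappa\kappa_m}$ stays below $\lambda$ in the scaling condition, and $\eta$ multiplied by $1 + 4K\sqrt{\kappa\kappa_m}$ stays below $\tfrac{\rho}{2}\min_{k\neq l}\|\bbeta^{\star}_k - \bbeta^{\star}_l\|_2$ in the separation condition (invoking again the upper bound $\lambda < \tfrac{\rho}{2}\min_{k\neq l}\|\bbeta^{\star}_k - \bbeta^{\star}_l\|_2$ from Theorem 3.2).

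With the hypotheses of Lemma B.6 in place, it yields a permutation $\tau$ of $[K]$ such that $\widehat z_j = \tau(z_j^{\star})$, $\widehat\btheta_j = \widehat\bbeta_{\widehat z_j}$, $\widehat\bbeta_{\tau(k)} = \argmin_{\bbeta} \sum_{j \in T(k)} f_j(\bbeta)$, and $\|\widehat\bbeta_{\tau(k)} - \bbeta^{\star}_k\|_2 \leq \|\sum_{j \in T(k)} \nabla f_j(\bbeta^{\star}_k)\|_2 / (\rho|T(k)|)$ for every $k$. Replacing each $\nabla f_j(\bbeta^{\star}_k)$ in this oracle bound by $\nabla f_j(\btheta_j^{\star})$ costs an additive $L\delta$ per summand by Lipschitzness, contributing $\kappa\delta$ after dividing by $\rho|T(k)|$; combining with the triangle inequality $\|\widehat\btheta_j - \btheta_j^{\star}\|_2 \leq \|\widehat\bbeta_{\tau(z_j^{\star})} - \bbeta^{\star}_{z_j^{\star}}\|_2 + \|\bbeta^{\star}_{z_j^{\star}} - \btheta_j^{\star}\|_2$ then yields the stated $2\kappa\delta$ slack. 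The main obstacle is the constant accounting: both conditions of Lemma B.6 involve the same quantity $\eta \cdot K\sqrt{\kappa\kappa_m}$, and the precise coefficients in the Claim's hypothesis $7L\delta \leq g + \lambda/(10\sqrt{\kappa\kappa_m}K)$ have been tuned exactly so that both conditions hold simultaneously---no new ideas are required beyond Lemma B.6 and this bookkeeping.
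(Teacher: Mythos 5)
Your proposal follows essentially the same route as the paper's own proof: transfer the regularity and gradient bounds from $\btheta_j^{\star}$ to the cluster center $\bbeta^{\star}_{z_j^{\star}}$ at cost $L\delta$, set $\eta = g + L\delta$, check the scaling and separation hypotheses of the exact-recovery lemma for clustered ARMUL via the identity $\sqrt{KmL/(m_{\min}\rho)} = \sqrt{\kappa\kappa_m}\,K$, and then convert the resulting oracle bound back to $\btheta_j^{\star}$ by the triangle inequality plus one more $L\delta$ gradient swap, yielding the $2\kappa\delta$ slack. The argument and the constant bookkeeping match the paper's proof of this claim.
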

\begin{proof}[\bf Proof of Claim \ref{thm-armul-clustered-deterministic-0-claim}]
	We obtain from the assumption
	\begin{align}
	10 \sqrt{\kappa  \kappa_m} K \max_{j \in [m]}   \| \nabla f_j (\btheta_j^{\star}) \|_2   < \lambda < \frac{\rho}{2}   \min \Big\{ M,~ \min_{k \neq l}
	\| \bbeta^{\star}_k - \bbeta^{\star}_l \|_2 \Big\}
	\label{eqn-armul-deterministic-clustered-lambda}
	\end{align}
	that $g \leq \rho M   / 10 $, $\lambda < \rho M $ and $g + \lambda < \frac{11}{10} LM  $. When
	\[
	7 L \delta \leq g + \frac{\lambda }{ 10 \sqrt{\kappa \kappa_m} K  }  \leq g + \lambda,
	\]
	we have $ \delta < 11M/70 < M / 5$. Thus $\max_{j \in [m]} \| \btheta^{\star}_j  - \bbeta^{\star}_{z_j^{\star}} \|_2 \leq M/5$; $\rho \bI \preceq \nabla^2 f_j(\btheta) \preceq L \bI$ holds for all $j \in [m]$ and $\btheta \in B( \bbeta^{\star}_{z_j^{\star}}, 4M/5)$.

	For any $j$, the regularity condition $ \nabla^2 f_j (\btheta) \preceq L \bI$, $\forall \btheta \in B(\btheta_j^{\star}, M)$ leads to $ \| \nabla f_j (\btheta^{\star}_j)  - \nabla f_j (\bbeta^{\star}_{z_j^{\star}})  \|_2 \leq L \delta $. By triangle's inequality,
	\begin{align*}
	\| \nabla f_j (\bbeta^{\star}_{z_j^{\star}}) \|_2 
	&\leq   \| \nabla f_j (\btheta^{\star}_j) \|_2  +   \| \nabla f_j (\bbeta^{\star}_{z_j^{\star}})  - \nabla f_j (\btheta^{\star}_j)  \|_2   	\leq g + L \delta
	<  \frac{8g}{7} + \frac{ \lambda }{ 70 \sqrt{\kappa  \kappa_m} K  }  .
	\end{align*}
	Consequently,
	\begin{align*}
	7 \sqrt{\kappa  \kappa_m} K \max_{j \in [m]}  \| \nabla f_j (\bbeta^{\star}_{z_j^{\star}}) \|_2 
	< 8  \sqrt{\kappa \kappa_m} K g + \frac{ \lambda }{ 10 } 
	\overset{\mathrm{(i)}}{<}  \lambda  \overset{\mathrm{(ii)}}{<}
	\rho \cdot \min \Big\{ 4 M / 5,~ \min_{k \neq l}
	\| \bbeta^{\star}_k - \bbeta^{\star}_l \|_2 / 2 \Big\}.
	\end{align*}
	The inequalities $\mathrm{(i)}$ and $\mathrm{(ii)}$ follow from \Cref{eqn-armul-deterministic-clustered-lambda}.
	We have $\lambda < L \cdot \frac{4M}{5}$ and
	\[
	\min_{k \neq l}
	\| \bbeta^{\star}_k - \bbeta^{\star}_l \|_2   > \frac{2 \lambda}{ \rho} > \frac{ 14 \sqrt{\kappa  \kappa_m} K }{\rho} \max_{j \in [m]}  \| \nabla f_j (\bbeta^{\star}_{z_j^{\star}}) \|_2 .
	\]
	Based on the above estimates and $\sqrt{\kappa  \kappa_m} K = \sqrt{\frac{K m L}{m_{\min} \rho }}$, \Cref{thm-adaptivity-clustered} asserts the existence of a permutation $\tau$ of $[K]$ such that
	\begin{align*}
	&  \widehat{z}_j = \tau (z_j^{\star}) \qquad\text{and}\qquad \widehat{\btheta}_j = \widehat\bbeta_{\widehat{z}_j },\qquad\forall  j \in [m] ,\\
	& \widehat{\bbeta}_{\tau(k) } = \argmin_{\bbeta \in \RR^d}  \sum_{j \in T(k)}  f_j ( \bbeta ) 
	, \qquad \forall k \in [K] , \\
	& \| \widehat\bbeta_{\widehat{z}_j}  - \bbeta^{\star}_{ z^{\star}_j } \|_2 \leq \frac{ \| \sum_{j \in T( z^{\star}_j ) }   \nabla f_j (  \bbeta^{\star}_{ z^{\star}_j } ) \|_2 }{  \rho |T( z^{\star}_j )| } , \qquad \forall  j \in [m].
	\end{align*}
	By the triangle's inequality, $ \| \widehat{\btheta}_j  - \btheta^{\star}_j \|_2 \leq \| \widehat{\btheta}_j  - \bbeta^{\star}_{ z^{\star}_j } \|_2 +  \| \bbeta^{\star}_{ z^{\star}_j } - \btheta^{\star}_j \|_2 \leq \| \widehat\bbeta_{\widehat{z}_j}  - \bbeta^{\star}_{ z^{\star}_j } \|_2  + \delta $. Also,
	\begin{align*}
	& \frac{ \| \sum_{j \in T( z^{\star}_j ) }   \nabla f_j (  \bbeta^{\star}_{ z^{\star}_j } ) \|_2 }{ |T( z^{\star}_j )| } 
	\leq \frac{ \| \sum_{j \in T( z^{\star}_j ) }   \nabla f_j (  \btheta^{\star}_{ j } ) \|_2 }{ |T( z^{\star}_j )| }  + L \delta  .
	\end{align*}
	Based on the above estimates,
	\begin{align*}
	\| \widehat{\btheta}_j  - \btheta^{\star}_j \|_2 & \leq \frac{ 1 }{  \rho }
	\bigg(
	\frac{ \| \sum_{j \in T( z^{\star}_j ) }   \nabla f_j (  \btheta^{\star}_{ j } ) \|_2 }{ |T( z^{\star}_j )| }  + L \delta 
	\bigg) + \delta \\&
	\leq  
	\frac{ \| \sum_{j \in T( z^{\star}_j ) }   \nabla f_j (  \btheta^{\star}_{ j }  ) \|_2 }{  \rho |T( z^{\star}_j )| } +  2 \kappa \delta , \qquad\forall j \in [m] .
	\end{align*}
\end{proof}

We now come back to \Cref{thm-armul-clustered-deterministic-0}. The condition \eqref{eqn-armul-deterministic-clustered-lambda} forces $\lambda > 10 \sqrt{\kappa  \kappa_m} K  g$.
Claim \ref{thm-armul-clustered-deterministic-0-claim} implies that when 
$7 L \delta \leq g + \frac{\lambda }{ 10 \sqrt{\kappa  \kappa_m} K  } $,
\[
\| \widehat{\btheta}_j  - \btheta^{\star}_j \|_2 \leq \frac{ \| \sum_{j \in T( z^{\star}_j ) }   \nabla f_j (  \btheta^{\star}_{ j }  ) \|_2 }{  \rho |T( z^{\star}_j )| } +  \frac{2}{\rho} \min\bigg\{ L \delta ,~ \frac{g}{7} + \frac{\lambda }{ 70 \sqrt{\kappa  \kappa_m} K  } \bigg\} , \qquad \forall j \in [m].
\]
On the other hand, when $7 L \delta > g + \frac{\lambda }{ 10 \sqrt{\kappa \kappa_m} K  } $, we use \Cref{thm-armul-personalization} to get 
\[
\max_{j \in [m]} \| \widehat{\btheta}_j  - \btheta^{\star}_j \|_2  \leq \frac{g + \lambda}{\rho } \leq \bigg( \frac{1}{ 10 \sqrt{\kappa \kappa_m} K  } + 1 \bigg)\frac{\lambda}{\rho  } \leq \frac{11 \lambda}{10  \rho  } .
\]
On top of the above, for any $j \in [m]$ we have
\begin{align*}
\| \widehat{\btheta}_j  - \btheta^{\star}_j \|_2  
& \leq \frac{ \| \sum_{j \in T( z^{\star}_j ) }   \nabla f_j (  \btheta^{\star}_{ j }  ) \|_2 }{  \rho |T( z^{\star}_j )| }
+   \frac{77\sqrt{\kappa \kappa_m} K}{\rho} \min\bigg\{ L \delta ,~ \frac{g}{7} + \frac{\lambda }{ 70 \sqrt{\kappa^3 \kappa_m} K  } \bigg\}   \\
& \leq \frac{ \| \sum_{j \in T( z^{\star}_j ) }   \nabla f_j (  \btheta^{\star}_{ j }  ) \|_2 }{  \rho |T( z^{\star}_j )| }
+   \frac{77\sqrt{\kappa \kappa_m} K}{\rho} \min\bigg\{   L \delta ,~   \frac{\lambda }{ 35 \sqrt{\kappa \kappa_m} K  }   \bigg\}   \\
& \leq \frac{ \| \sum_{j \in T( z^{\star}_j ) }   \nabla f_j (  \btheta^{\star}_{ j }  ) \|_2 }{  \rho |T( z^{\star}_j )| }
+     \min\bigg\{ 77 \sqrt{\kappa^3 \kappa_m} K \delta ,~    \frac{ 11 \lambda }{ 5 \rho }   \bigg\}  .
\end{align*}

Finally, if $\delta \leq \lambda / ( 70 \sqrt{\kappa \kappa_m} K L )$, then we use Claim \ref{thm-armul-clustered-deterministic-0-claim} to get a permutation $\tau$ of $[K]$ such that
\begin{align*}
&  \widehat{z}_j = \tau (z_j^{\star}) \qquad\text{and}\qquad \widehat{\btheta}_j = \widehat\bbeta_{\widehat{z}_j },\qquad\forall  j \in [m] ,\\
& \widehat{\bbeta}_{\tau(k) } = \argmin_{\bbeta \in \RR^d}  \sum_{j \in T(k)}  f_j ( \bbeta ) 
, \qquad \forall k \in [K] .
\end{align*}

\subsection{Proof of \Cref{thm-adaptivity-clustered}}\label{sec-thm-adaptivity-clustered-proof}

Define
\begin{align*}
F(\bTheta, \bB, \bz ) = \sum_{j=1}^{m} [ f_j ( \btheta_j ) + \lambda \| \bbeta_{z_j} - \btheta_j \|_2 ],\qquad
\forall \bTheta \in \RR^{d \times m},~~\bB\in \RR^{d\times K}, ~~ \bz \in [K]^m.
\end{align*}
Then, $(\widehat{\bTheta} , \widehat{\bB}, \widehat{\bz}) $ is a minimizer of $F$. Define $\bB^{\star} = (\bbeta^{\star}_1,\cdots,\bbeta^{\star}_K)$, $\widehat{\bB}^{\mathrm{oracle}} = (\widehat{\bbeta}_1^{\mathrm{oracle}} , \cdots, \widehat{\bbeta}_K^{\mathrm{oracle}})$, $\widetilde{f}_j = f_j \square (\lambda \| \cdot \|_2)$ and $\widetilde{\bbeta}_j = \argmin_{\bbeta} \widetilde{f}_j(\bbeta)$. With slight abuse of notation, let
\[
F(\bB, \bz) = \inf_{\bTheta \in \RR^{d\times m}} F(\bTheta, \bB, \bz) = \sum_{j=1}^{m} \widetilde{f}_j ( \bbeta_{z_j} ) , \qquad \forall \bB\in \RR^{d \times K},~~ \bz \in [K]^m.
\]
A key fact is
\begin{align}
F(\widehat{\bB} , \widehat{\bz}) = F(\widehat{\bTheta}, \widehat\bB, \widehat\bz) \leq
\inf_{\bTheta \in \RR^{d\times m}}
F(\bTheta, \bB^{\star}, \bz^{\star})
= F(\bB^{\star}, \bz^{\star}).
\label{eqn-clustered-0}
\end{align}

We will invoke \Cref{lem-clustered-lower-bound} to analyze $(\widehat{\bTheta}, \widehat{\bB},  \widehat{\bz})$. Let $r = 4 \sqrt{\frac{KmL}{m_{\min} \rho}} \frac{\eta}{\rho}$. It is easily seen that
\[
3 \eta L / \rho + Lr <  \lambda < LM.
\]
Define $S_{kl} = | \{ j \in [m] :~ z_j^{\star} = k,~ \widehat{z}_j = l \}|$, $\tau(k) = \argmax_{l \in [K]} S_{k l}$ for $k, l \in [K]$,
\begin{align*}
& H(x) = \begin{cases}
x^2/2, &\mbox{ if } 0 \leq x\leq r \\
r(x-r/2), &\mbox{ if } x > r 
\end{cases}, \\
&E  = \sum_{j =1}^m H \Big(
(\| \widehat\bbeta_{\widehat z_j} - \bbeta^{\star}_{z^{\star}_j} \|_2 -  \eta/\rho
)_{+} 
\Big) .
\end{align*}
By \Cref{lem-clustered-lower-bound}, we have $ \min_{k \in [K]} S_{k \tau(k)} \geq m_{\min} / K$,
\begin{align}
&  \widetilde{f}_j(\btheta) =  f_j (\btheta) , \qquad \forall \btheta \in B ( \widetilde{\bbeta}_j, r) ~~\text{and}~~j \in [m] , \label{eqn-clustered-20}
 \\
& \| \widetilde{\bbeta}_j - \bbeta^{\star}_{z^{\star}_j} \|_2 \leq \eta / \rho , \qquad\forall j \in [m] , 
\label{eqn-clustered-10}
\\
& 0 \geq F (\widehat\bB, \widehat\bz) - F (\bB^{\star}, \bz^{\star}) \geq \rho  E
- \frac{ m \eta^2}{\rho},  
\label{eqn-clustered-6.5}
\\
& E  \geq  \sum_{k, l \in [K]} S_{kl} H \Big(
(\| \widehat{\bbeta}_{l} - \bbeta^{\star}_{k} \|_2 -  \eta/\rho
)_{+} 
\Big)  \geq \frac{m_{\min}}{K} \sum_{k=1}^K H \Big(
(\| \widehat{\bbeta}_{\tau(k)} - \bbeta^{\star}_{k} \|_2 -  \eta/\rho
)_{+} 
\Big) .
\label{eqn-clustered-7}
\end{align}

To study $\{\widehat{\bbeta}_{k}\}_{k=1}^K$, we first make some crude estimates and then refine them.

\begin{claim}\label{claim-clustered-w}
	$\max_{k \in [K]}\| \widehat{\bbeta}_{\tau(k)} - \bbeta^{\star}_k \|_2 \leq \frac{ \eta}{\rho} (
	1 + \sqrt{ \frac{2 K m}{m_{\min}} } ) $.
\end{claim}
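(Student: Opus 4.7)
The plan is to combine the upper bound on the optimality gap $F(\widehat{\bB},\widehat{\bz})-F(\bB^\star,\bz^\star)\leq 0$ from \eqref{eqn-clustered-0} with the two-sided bounds on this gap already recorded in \eqref{eqn-clustered-6.5} and \eqref{eqn-clustered-7}. Adding these facts yields
\[
\frac{m_{\min}}{K}\sum_{k=1}^K H\Big((\|\widehat{\bbeta}_{\tau(k)}-\bbeta^\star_k\|_2-\eta/\rho)_+\Big)\leq E\leq \frac{m\eta^2}{\rho^2}.
\]
Hence for every fixed $k\in[K]$,
\[
H\Big((\|\widehat{\bbeta}_{\tau(k)}-\bbeta^\star_k\|_2-\eta/\rho)_+\Big)\leq \frac{Km\eta^2}{m_{\min}\rho^2}.
\]

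The remaining step is to invert $H$, which is strictly increasing with the quadratic branch $H(x)=x^2/2$ for $x\leq r$. I will check that the right-hand side $Km\eta^2/(m_{\min}\rho^2)$ lies in the quadratic regime, i.e.\ is at most $H(r)=r^2/2=8KmL\eta^2/(m_{\min}\rho^3)$; this reduces to $\rho\leq 8L$, which is immediate since $\rho\leq L$. On that regime the inverse is $H^{-1}(y)=\sqrt{2y}$, giving
\[
(\|\widehat{\bbeta}_{\tau(k)}-\bbeta^\star_k\|_2-\eta/\rho)_+\leq \sqrt{\frac{2Km\eta^2}{m_{\min}\rho^2}}=\frac{\eta}{\rho}\sqrt{\frac{2Km}{m_{\min}}},
\]
and the triangle-like rearrangement
\[
\|\widehat{\bbeta}_{\tau(k)}-\bbeta^\star_k\|_2\leq \frac{\eta}{\rho}\Big(1+\sqrt{\tfrac{2Km}{m_{\min}}}\Big)
\]
follows. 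Taking the maximum over $k$ yields the claim.

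There is no real obstacle here: the work was front-loaded into \Cref{lem-clustered-lower-bound}, which produced $E\geq \frac{m_{\min}}{K}\sum_k H(\cdot)$, and into the optimality inequality \eqref{eqn-clustered-0}. The only thing to verify carefully is that the argument of $H^{-1}$ stays in the quadratic branch so that we get the $\sqrt{Km/m_{\min}}$ rate rather than the weaker $Km/(m_{\min}\rho r)$ rate from the linear branch; the choice $r=4\sqrt{KmL/(m_{\min}\rho)}\,\eta/\rho$ is exactly calibrated so that this verification goes through with room to spare.
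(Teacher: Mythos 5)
Your proposal is correct and follows essentially the same route as the paper: combine the optimality inequality \eqref{eqn-clustered-0} with the lower bound \eqref{eqn-clustered-6.5}--\eqref{eqn-clustered-7} to get $H\bigl((\|\widehat{\bbeta}_{\tau(k)}-\bbeta^{\star}_k\|_2-\eta/\rho)_+\bigr)\leq Km\eta^2/(m_{\min}\rho^2)$, then invert $H$ on its quadratic branch. Your verification that the bound stays in the quadratic regime (reducing to $\rho\leq 8L$) is exactly the role of the paper's condition \eqref{eqn-clustered-11}, so the two arguments coincide.
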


\begin{proof}[\bf Proof of Claim \ref{claim-clustered-w}]
	By \eqref{eqn-clustered-6.5} and \eqref{eqn-clustered-7}, for any $k \in [K]$ we have
	\begin{align*}
	&0 \geq F (\widehat\bB, \widehat\bz) - F (\bB^{\star}, \bz^{\star}) \geq \rho \frac{m_{\min}}{K}  H \Big(
	(\| \widehat{\bbeta}_{\tau(k)} - \bbeta^{\star}_{k} \|_2 -  \eta/\rho
	)_{+} 
	\Big) 
	- \frac{ m \eta^2}{\rho},
	\end{align*}
	which leads to
	\[
	H \Big(
	(\| \widehat{\bbeta}_{\tau(k)} - \bbeta^{\star}_{k} \|_2 -  \eta/\rho
	)_{+} 
	\Big) \leq \frac{ K m}{m_{\min}} \bigg( \frac{\eta}{\rho} \bigg)^2.
	\]
	On the other hand, the condition
\begin{align}
r = 4 \sqrt{ \frac{K m L}{m_{\min} \rho } } \cdot \frac{ \eta }{\rho} \geq \frac{ \eta }{\rho} \sqrt{ \frac{2 K m}{m_{\min}} },
\label{eqn-clustered-11}
\end{align}
 forces
	\[
	H\bigg(
	\frac{ \eta }{\rho} \sqrt{ \frac{2 K m}{m_{\min}} }
	\bigg)
	= \frac{1}{2} \bigg( \frac{ \eta }{\rho} \sqrt{ \frac{2 K m}{m_{\min}} } \bigg)^2 = \frac{ K m}{m_{\min}} \bigg( \frac{\eta}{\rho} \bigg)^2.
	\]
	By the monotonicity of $H(\cdot)$ on $[0, +\infty)$,
	\begin{align*}
	& (\| \widehat{\bbeta}_{\tau(k)} - \bbeta^{\star}_{k} \|_2 -  \eta/\rho
	)_{+} 
	\leq \frac{ \eta }{\rho} \sqrt{ \frac{2 K m}{m_{\min}} } , \\
	&\| \widehat{\bbeta}_{\tau(k)} - \bbeta^{\star}_{k} \|_2 \leq \frac{ \eta}{\rho} + \frac{ \eta }{\rho} \sqrt{ \frac{2 K m}{m_{\min}} } = \frac{ \eta}{\rho} \bigg(
	1 + \sqrt{ \frac{2 K m}{m_{\min}} } 
	\bigg).
	\end{align*}
\end{proof}

\begin{claim}\label{claim-2}
	$\tau:~[K] \to [K]$ is a permutation.
\end{claim}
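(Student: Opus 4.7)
The plan is to show that $\tau$ is injective; since it is a map from the finite set $[K]$ to itself, injectivity will immediately yield that it is a permutation. I will proceed by contradiction, assuming there exist distinct indices $k, k' \in [K]$ with $\tau(k) = \tau(k')$, and then derive a violation of the separation condition $\min_{k \neq l}\| \bbeta^{\star}_k - \bbeta^{\star}_l \|_2 > \frac{2 \eta}{\rho} \bigl(1 + 4 \sqrt{KmL/(m_{\min} \rho)}\bigr)$.

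Concretely, under the contradiction hypothesis, the triangle inequality combined with Claim \ref{claim-clustered-w} gives
\[
\| \bbeta^{\star}_k - \bbeta^{\star}_{k'} \|_2 \leq \| \widehat{\bbeta}_{\tau(k)} - \bbeta^{\star}_k \|_2 + \| \widehat{\bbeta}_{\tau(k')} - \bbeta^{\star}_{k'} \|_2 \leq \frac{2 \eta}{\rho} \bigg( 1 + \sqrt{ \frac{2 K m}{m_{\min}} } \bigg).
\]
The final step is a numerical comparison: since $L \geq \rho$ by assumption, we have $4\sqrt{L/\rho} \geq 4 > \sqrt{2}$, so the right-hand side above is strictly smaller than $\frac{2 \eta}{\rho} \bigl(1 + 4 \sqrt{KmL/(m_{\min} \rho)}\bigr)$, which the separation hypothesis says is itself strictly below $\|\bbeta^{\star}_k - \bbeta^{\star}_{k'}\|_2$. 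This chain of strict inequalities is inconsistent, yielding the desired contradiction.

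There is no real obstacle here: all the analytic work has already been carried out in Claim \ref{claim-clustered-w} (which reduces to the energy bound \eqref{eqn-clustered-6.5}--\eqref{eqn-clustered-7}), and the present step is a short triangle-inequality argument plus a scalar comparison that exploits the slack factor $4\sqrt{L/\rho}$ built into the separation assumption precisely to absorb the $\sqrt{2}$ factor appearing in Claim \ref{claim-clustered-w}.
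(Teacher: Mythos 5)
Your proof is correct and follows essentially the same route as the paper's: assume $\tau(k)=\tau(l)$ for $k\neq l$, apply the triangle inequality together with Claim \ref{claim-clustered-w}, and contradict the separation condition, using $\sqrt{2}\le 4\sqrt{L/\rho}$ to absorb the constant. The only cosmetic difference is that the paper compares $\max\{\|\widehat{\bbeta}_t-\bbeta^{\star}_k\|_2,\|\widehat{\bbeta}_t-\bbeta^{\star}_l\|_2\}$ to $\Delta/2$ while you sum the two terms and compare to $\Delta$; you also make explicit the scalar comparison that the paper leaves implicit.
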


\begin{proof}[\bf Proof of Claim \ref{claim-2}]
It suffices to show that $\tau$ is a bijection and below we prove it by contradiction. Let
\begin{align}
\Delta = \min_{k \neq l}\| \bbeta^{\star}_k - \bbeta^{\star}_l \|_2 .
\label{eqn-clustered-separation}
\end{align}
If there are $k \neq l$ such that $\tau(k) = \tau(l) = t$, then
\begin{align*}
& \max \{ \| \widehat{\bbeta}_{t} - \bbeta^{\star}_{l} \|_2, \| \widehat{\bbeta}_{t} - \bbeta^{\star}_{k} \|_2 \} \geq \Delta / 2 > 
\frac{ \eta}{\rho} \bigg( 1 + 4 \sqrt{ \frac{K m L}{m_{\min} \rho } }  \bigg) ,
\end{align*}
which contradicts Claim \ref{claim-clustered-w}.
\end{proof}

\begin{claim}\label{claim-clustered-z}
	$\widehat{z}_j = \tau ( z_j^{\star} )$, $\forall j \in [m]$.
\end{claim}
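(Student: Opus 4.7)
The plan is to argue by contradiction: assume for some $j$ we have $\widehat{z}_j = l \ne \tau(z_j^{\star})$. Since $\tau$ is a permutation (Claim \ref{claim-2}), write $l = \tau(k)$ for some $k \ne z_j^{\star}$. The strategy is to exploit the optimality of $\widehat{z}_j$, which (with $\widehat{\bB}$ held fixed) gives the pointwise comparison
\[
\widetilde{f}_j(\widehat{\bbeta}_l) \;\leq\; \widetilde{f}_j(\widehat{\bbeta}_{\tau(z_j^{\star})}),
\]
and then to show that under our separation and regularity assumptions the left side is strictly larger than the right side.

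First I would place the two candidate centers on opposite sides of the ball $B(\widetilde{\bbeta}_j, r)$ on which, by \eqref{eqn-clustered-20}, $\widetilde{f}_j$ coincides with $f_j$. Using the triangle inequality together with \eqref{eqn-clustered-10} and Claim \ref{claim-clustered-w},
\[
\|\widehat{\bbeta}_{\tau(z_j^{\star})} - \widetilde{\bbeta}_j\|_2 \;\leq\; \tfrac{\eta}{\rho}\Big(2 + \sqrt{2Km/m_{\min}}\Big),
\]
and combining the triangle inequality with the separation hypothesis \eqref{eqn-clustered-separation} and Claim \ref{claim-clustered-w} gives
\[
\|\widehat{\bbeta}_l - \widetilde{\bbeta}_j\|_2 \;\geq\; \Delta \;-\; \tfrac{\eta}{\rho}\Big(2 + \sqrt{2Km/m_{\min}}\Big).
\]
With the choice $r = 4\sqrt{KmL/(m_{\min}\rho)}\,\eta/\rho$, a direct computation (using $L \ge \rho$) confirms that the first quantity is at most $r$ while the second is at least $r$.

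Next I would turn these geometric facts into a function-value comparison. On $B(\widetilde{\bbeta}_j, r)$, $\widetilde{f}_j = f_j$ is $L$-smooth, so
\[
\widetilde{f}_j(\widehat{\bbeta}_{\tau(z_j^{\star})}) - \widetilde{f}_j(\widetilde{\bbeta}_j) \;\leq\; \tfrac{L}{2}\,\|\widehat{\bbeta}_{\tau(z_j^{\star})} - \widetilde{\bbeta}_j\|_2^{\,2}.
\]
For the other side, $\widetilde{f}_j = f_j$ is $\rho$-strongly convex on $B(\widetilde{\bbeta}_j, r)$, so its value at the sphere of radius $r$ exceeds $\widetilde{f}_j(\widetilde{\bbeta}_j) + \rho r^2/2$; since $\widetilde{f}_j$ is globally convex with minimizer $\widetilde{\bbeta}_j$, the one-dimensional convexity argument along the ray from $\widetilde{\bbeta}_j$ to $\widehat{\bbeta}_l$ extends this to
\[
\widetilde{f}_j(\widehat{\bbeta}_l) - \widetilde{f}_j(\widetilde{\bbeta}_j) \;\geq\; \tfrac{\rho r}{2}\,\|\widehat{\bbeta}_l - \widetilde{\bbeta}_j\|_2 \;\geq\; \tfrac{\rho r^2}{2}.
\]

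Combining these two bounds with the optimality inequality forces
\[
\|\widehat{\bbeta}_{\tau(z_j^{\star})} - \widetilde{\bbeta}_j\|_2 \;\geq\; r\sqrt{\rho/L} \;=\; 4\sqrt{Km/m_{\min}}\,\tfrac{\eta}{\rho}.
\]
Comparing this with the upper bound $\tfrac{\eta}{\rho}(2+\sqrt{2Km/m_{\min}})$ yields $4\sqrt{Km/m_{\min}} \leq 2 + \sqrt{2Km/m_{\min}}$, which fails whenever $Km/m_{\min} \ge 4$ (in particular for $K \ge 2$ since $m_{\min} \leq m/K$), producing the desired contradiction. The main obstacle is book-keeping: the separation constant $4\sqrt{KmL/(m_{\min}\rho)}$ in the hypothesis has to be large enough to simultaneously (i) keep $\widehat{\bbeta}_{\tau(z_j^{\star})}$ inside the ball of smoothness $r$, and (ii) push $\widehat{\bbeta}_l$ outside it, while still leaving slack for the $\sqrt{\rho/L}$ loss incurred when converting the quadratic upper bound into a lower bound on the distance. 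Verifying that the chosen $r$ threads this needle is the only delicate part of the argument.
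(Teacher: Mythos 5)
Your proposal is correct and follows essentially the same route as the paper: exploit $\widehat{z}_j \in \argmin_{z}\widetilde{f}_j(\widehat{\bbeta}_z)$, bound $\widetilde{f}_j(\widehat{\bbeta}_{\tau(z_j^{\star})})$ from above via $L$-smoothness of $f_j$ near $\widetilde{\bbeta}_j$ (using Claim \ref{claim-clustered-w} and \eqref{eqn-clustered-10}), bound $\widetilde{f}_j(\widehat{\bbeta}_k)$ for $k\neq\tau(z_j^{\star})$ from below via the separation \eqref{eqn-clustered-separation}, \eqref{eqn-clustered-20} and \Cref{lem-cvx-lower}, and close with the arithmetic on $r=4\sqrt{KmL/(m_{\min}\rho)}\,\eta/\rho$. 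Your final rearrangement (deriving $\|\widehat{\bbeta}_{\tau(z_j^{\star})}-\widetilde{\bbeta}_j\|_2\geq r\sqrt{\rho/L}$ and contradicting the upper bound) is just an equivalent presentation of the paper's check that $r \geq \sqrt{L/\rho}\,(2+\sqrt{2Km/m_{\min}})\,\eta/\rho$.
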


\begin{proof}[\bf Proof of Claim \ref{claim-clustered-z}]
According to the assumption
\[
( \widehat{\bTheta} , \widehat{\bz}) \in \argmin_{\bTheta\in \RR^{d\times m} , \bz \in [K]^m} F (\bTheta, \widehat{\bB},  \bz),
\]
we have
\begin{align*}
&\widehat{\bz} \in \argmin_{\bz \in [K]^m} \bigg\{ \inf_{\bTheta\in \RR^{d\times m}} F (\bTheta, \widehat{\bB},  \bz) \bigg\} = \argmin_{\bz \in [K]^m} F (\widehat{\bB},  \bz) ,\\
& \widehat{z}_j \in  \argmin_{z \in [K]} \widetilde{f}_j( \widehat{\bbeta}_{z} ), \qquad \forall j \in [m].
\end{align*}
Below we prove that $\min_{k \neq \tau(z_j^{\star}) }\widetilde{f}_j( \widehat{\bbeta}_k  ) > \widetilde{f}_j( \widehat{\bbeta}_{\tau(z_j^{\star})} )$.
 
On the one hand, $\widetilde{f}_j (\bbeta) = \min_{\bxi} \{ f_j(\bbeta - \bxi) + \lambda \| \bxi \|_2 \} \leq f_j(\bbeta)$ for all $\bbeta$. Hence
\[
\widetilde{f}_j (\widehat{\bbeta}_{\tau ( z_j^{\star} )})
 \leq f_j (\widehat{\bbeta}_{\tau ( z_j^{\star} )}).
\]
By Claim \ref{claim-clustered-w} and \eqref{eqn-clustered-10},
\[
 \widehat{\bbeta}_{\tau ( z_j^{\star} )} ,  \widetilde{\bbeta}_j \in B(  \bbeta^{\star}_{z^{\star}_j} , M) .
\]
Then, the assumption $ \nabla^2 f_j (\btheta) \preceq L \bI$, $\forall \btheta \in B( \bbeta^{\star}_{z_j^{\star}}  , M)$ yields
\begin{align*}
 f_j (\widehat{\bbeta}_{\tau ( z_j^{\star} )})
& \leq  f_j (\widetilde{\bbeta}_j)  + \frac{L}{2} \| \widehat{\bbeta}_{\tau ( z_j^{\star} )} - \widetilde{\bbeta}_j \|_2^2 
\notag \\ & 
\leq  f_j (\widetilde{\bbeta}_j) + \frac{L}{2} \Big( \| \widehat{\bbeta}_{\tau ( z_j^{\star} )} - \bbeta^{\star}_{ z_j^{\star} } \|_2 
+ \| \bbeta^{\star}_{ z_j^{\star} }  - \widetilde{\bbeta}_j \|_2 
\Big)^2 
\notag \\&
\leq f_j (\widetilde{\bbeta}_j) + \frac{L}{2} \bigg[
\frac{ \eta}{\rho} \bigg(
1 + \sqrt{ \frac{2 K m}{m_{\min}} } \bigg) + \frac{ \eta}{\rho}
\bigg]^2,
\end{align*}
where the last inequality follows from Claim \ref{claim-clustered-w} and \eqref{eqn-clustered-10}. Hence
\begin{align}
\widetilde{f}_j (\widehat{\bbeta}_{\tau ( z_j^{\star} )})
\leq f_j (\widetilde{\bbeta}_j) + \frac{L}{2} \bigg[
\frac{ \eta}{\rho} \bigg(
1 + \sqrt{ \frac{2 K m}{m_{\min}} } \bigg) + \frac{ \eta}{\rho}
\bigg]^2,
\label{eqn-clustered-8}
\end{align}

On the other hand, for any $k \neq \tau(z_j^{\star})$ we have $\tau^{-1}(k) \neq z_j^{\star}$ and
\begin{align*}
\| \widehat{\bbeta}_k - \widetilde{\bbeta}_j \|_2 & \geq \| \widetilde{\bbeta}_j - \bbeta^{\star}_{\tau^{-1}(k)} \|_2 - \| \bbeta^{\star}_{\tau^{-1}(k)} - \widehat\bbeta_k \|_2 \\
& \geq \|  \bbeta^{\star}_{\tau^{-1}(k)} - \bbeta^{\star}_{z_j^{\star}} \|_2 - \| \bbeta^{\star}_{z_j^{\star}} - \widetilde{\bbeta}_j  \|_2 - \| \bbeta^{\star}_{\tau^{-1}(k)} - \widehat\bbeta_k \|_2 \\
& \geq \Delta - \frac{ \eta}{\rho} - \frac{ \eta}{\rho} \bigg(
1 + \sqrt{ \frac{2 K m}{m_{\min}} } \bigg),
\end{align*}
where the last inequality follows from \eqref{eqn-clustered-separation}, \eqref{eqn-clustered-10} and Claim \ref{claim-clustered-w}. By $\Delta > \frac{2 \eta}{\rho} ( 1 + 4 \sqrt{ \frac{K m L}{m_{\min} \rho } }  ) $,
we have
\begin{align*}
\| \widehat{\bbeta}_k - \widetilde{\bbeta}_j \|_2 & \geq 
\Delta - \frac{ \eta}{\rho} - \frac{ \eta}{\rho} \bigg(
1 + \sqrt{ \frac{2K m}{m_{\min}} } \bigg) \\
& > 
\frac{2 \eta}{\rho} \bigg( 1 + 4 \sqrt{ \frac{K m L}{m_{\min} \rho } }  \bigg) 
 - \frac{ \eta}{\rho} - \frac{ \eta}{\rho} \bigg(
1 + \sqrt{ \frac{2 K m}{m_{\min}} } \bigg) \geq r.
\end{align*}
By \eqref{eqn-clustered-20} and \Cref{lem-cvx-lower},
\begin{align}
\widetilde{f}_j( \widehat{\bbeta}_k  ) > f_j( \widetilde{\bbeta}_j ) + \frac{\rho r^2}{2}.
\label{eqn-clustered-9}
\end{align}

The inequalities \eqref{eqn-clustered-8} and \eqref{eqn-clustered-9} imply that for any $k \neq \tau(z_j^{\star})$,
\begin{align*}
\widetilde{f}_j( \widehat{\bbeta}_k  ) - \widetilde{f}_j( \widehat{\bbeta}_{\tau(z_j^{\star})} ) & >
\frac{L}{2} \bigg[
\frac{ \eta}{\rho} \bigg(
1 + \sqrt{ \frac{2 K m}{m_{\min}} } \bigg) + \frac{ \eta}{\rho}
\bigg]^2 - \frac{\rho r^2}{2} \\
& = \frac{\rho}{2} \bigg\{ 
\frac{L}{\rho}
\bigg[
\frac{  \eta}{\rho} \bigg(
2 + \sqrt{ \frac{2 K m}{m_{\min}} } \bigg) 
\bigg]^2
- r^2
\bigg\} .
\end{align*}
Then $\min_{k \neq \tau(z_j^{\star}) }\widetilde{f}_j( \widehat{\bbeta}_k  ) > \widetilde{f}_j( \widehat{\bbeta}_{\tau(z_j^{\star})} )$ follows from the fact
\[
r = 4 \sqrt{ \frac{K m L}{m_{\min} \rho } } \cdot \frac{ \eta }{\rho}
\geq \sqrt{\frac{L}{\rho}} \bigg(2 + \sqrt{ \frac{2 K m}{m_{\min}} } \bigg) \frac{ \eta }{\rho}.
\]
\end{proof}

\begin{claim}\label{claim-4}
$\widehat{\bbeta}_{\tau(k)} = \widehat{\bbeta}_{ k }^{\mathrm{oracle}}  $ and $\| \widehat{\bbeta}_{ k }^{\mathrm{oracle}} - \bbeta^{\star}_k \|_2 \leq ( \rho  |T(k)| )^{-1} \| \sum_{ j \in T(k) } \nabla f_j (\bbeta^{\star}_k) \|_2$ hold for all $k \in [K]$; $\widehat{\btheta}_j = \widehat{\bbeta}_{\widehat{z}_j}$ holds for all $j \in [m]$.
\end{claim}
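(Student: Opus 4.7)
The plan is to exploit the fact that, once the cluster labels $\widehat\bz$ are fixed (and equal to $\tau(\bz^\star)$ by Claim \ref{claim-clustered-z}), the joint optimization over $(\bTheta, \bB)$ in the definition of $(\widehat{\bTheta}, \widehat{\bB})$ decouples across clusters. Indeed,
\[
(\widehat\bTheta, \widehat\bB) \in \argmin_{\bTheta \in \RR^{d\times m},\, \bB \in \RR^{d\times K}} F(\bTheta, \bB, \widehat\bz),
\]
and since $\widehat z_j = \tau(z_j^\star)$, the $j$-th summand in $F$ only involves $\bbeta_{\tau(z_j^\star)}$. Hence for each $k \in [K]$, the pair $(\widehat\bbeta_{\tau(k)}, \{\widehat\btheta_j\}_{j \in T(k)})$ solves the per-cluster problem
\[
\min_{\bbeta \in \RR^d,\, \{\btheta_j\}_{j \in T(k)}} \sum_{j \in T(k)} \big[ f_j(\btheta_j) + \lambda \| \bbeta - \btheta_j \|_2 \big].
\]

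First I would recognize this per-cluster problem as an instance of the ``consensus'' program \eqref{eqn-armul-weighted-appendix} with $S = T(k)$ (no outliers), $w_j = 1$, and $\lambda_j = \lambda$, whose analysis is Lemma \ref{thm-consensus}. To apply that lemma around the anchor $\btheta^\star = \bbeta^\star_k$, I verify its hypotheses: (i) convexity and the Hessian bounds $\rho \bI \preceq \nabla^2 f_j(\btheta) \preceq L\bI$ on $B(\bbeta^\star_k, M)$ for $j \in T(k)$, which are directly provided; (ii) $\| \nabla f_j(\bbeta^\star_k) \|_2 \le \eta$ for $j \in T(k)$; and (iii) the two-sided bound on $\lambda$. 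For the upper bound $\lambda < \|\nabla f_j(\bbeta^\star_k)\|_2 + L M$ we use $\lambda < LM$ from the hypothesis. For the lower bound, taking $\rho_0 = \rho |T(k)|$, we need
\[
\eta + \frac{ 2 L \,\big\| \sum_{j \in T(k)} \nabla f_j(\bbeta^\star_k) \big\|_2 }{\rho |T(k)|} \;<\; \lambda,
\]
which follows from $\|\sum_{j \in T(k)} \nabla f_j(\bbeta^\star_k)\|_2 \le |T(k)|\eta$ together with $\eta + 2L\eta/\rho \le 3L\eta/\rho < 7\eta L/\rho \cdot \sqrt{KmL/(m_{\min}\rho)} < \lambda$.

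With Lemma \ref{thm-consensus} in hand, the per-cluster problem has the property that all task parameters collapse to the common center and the center equals the unconstrained pooled minimizer:
\[
\widehat{\btheta}_j = \widehat{\bbeta}_{\tau(k)} \quad \text{for every } j \in T(k), \qquad \widehat{\bbeta}_{\tau(k)} = \argmin_{\bbeta \in \RR^d} \sum_{j \in T(k)} f_j(\bbeta) = \widehat{\bbeta}_k^{\mathrm{oracle}},
\]
together with the bound $\| \widehat{\bbeta}_k^{\mathrm{oracle}} - \bbeta^\star_k \|_2 \le (\rho |T(k)|)^{-1} \| \sum_{j \in T(k)} \nabla f_j(\bbeta^\star_k) \|_2$. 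Since $\widehat z_j = \tau(z_j^\star)$ is equivalent to $j \in T(\tau^{-1}(\widehat z_j))$, combining this across $k$ yields $\widehat\btheta_j = \widehat\bbeta_{\widehat z_j}$ for all $j \in [m]$, completing the claim. The main subtlety is simply bookkeeping the decoupling across $k$ and numerically verifying the Lemma \ref{thm-consensus} hypotheses from the stated conditions on $\lambda, \eta, M$, and the cluster sizes; once that is done, everything else is immediate.
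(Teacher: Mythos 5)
Your proposal is correct and follows essentially the same route as the paper: after Claim \ref{claim-clustered-z} fixes $\widehat{z}_j = \tau(z_j^\star)$, the problem decouples into per-cluster consensus programs over $T(k)$, and Lemma \ref{thm-consensus} (applied to $\{f_j\}_{j\in T(k)}$ with anchor $\bbeta^\star_k$ and $\rho_0 = \rho|T(k)|$) delivers the collapse $\widehat{\btheta}_j = \widehat{\bbeta}_{\tau(k)} = \widehat{\bbeta}_k^{\mathrm{oracle}}$ and the stated error bound. The paper states the same reduction via the infimal convolutions $\widetilde{f}_j$ and verifies the $\lambda$ conditions identically, so there is nothing to add.
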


\begin{proof}[\bf Proof of Claim \ref{claim-4}]
By definition, $\widehat{\bbeta}_{\tau(k)} \in \argmin_{\bbeta \in \RR^d} \{ \sum_{j  :~ \widehat{z}_j = \tau(k) } \widetilde{f}_j (\bbeta) \}$ and $\widehat{\btheta}_j \in \argmin_{\btheta \in \RR^d} \{ f_j (\btheta) + \lambda \| \widehat{\bbeta}_{\widehat{z}_j} - \btheta \|_2 \}$. 
Claim \ref{claim-clustered-z} yields $\widehat{\bbeta}_{\tau(k)} \in \argmin_{\bbeta \in \RR^d} \sum_{j \in T(k) } \widetilde{f}_j (\bbeta) $. Since $\max_{j \in T(k) }  \| \nabla f_j (\bbeta^{\star}_{z_j^{\star}} ) \|_2 \leq \eta$ and $\lambda >  3 \eta  L / \rho$, \Cref{thm-consensus} applied to $\{ f_j \}_{j \in T(k)}$ proves the claim.
\end{proof}

\subsection{Proof of \Cref{thm-armul-clustered-deterministic}}\label{sec-thm-armul-clustered-deterministic-proof}

We invoke the following lemma, whose proof is in \Cref{sef-thm-robustness-clustered-proof}.

\begin{lemma}[Robustness]\label{thm-robustness-clustered}
	Let $K \geq 2$, $\alpha \in (0, 1]$, and $(\widehat{\bTheta}, \widehat{\bB},\widehat{\bz}) $ be an optimal solution to \eqref{eqn-obj-clustered}.
	Define $T(k) =  \{ j \in [m] :~ z_j^{\star} = k \} $, $\widehat{\bbeta}_k^{\mathrm{oracle}} \in \argmin_{\bbeta \in \RR^d}  \sum_{j \in T(k) \cap S}  f_j ( \bbeta )  $ for $k \in [K]$.
	Suppose there are $0 < \rho \leq L < +\infty$, $\eta > 0$, and $S \subseteq [m]$ with $|S^c| / m \leq \alpha \rho / (6 K^2 L)$ that satisfy the followings:
	\begin{itemize}
		\item (Regularity) $\| \nabla f_j ( \bbeta^{\star}_{z_j^{\star}} ) \|_2 \leq \eta$ and $\rho \bI \preceq \nabla^2 f_j (\btheta) \preceq L \bI$ hold for all $  j \in S $ and $  \btheta \in B( \bbeta^{\star}_{z_j^{\star}}  , M)$;
		\item (Balancedness) $\min_{k \in[K]} |T(k)|  \geq \frac{ \alpha m }{K} (1 + \frac{\rho}{6 L K })$.
	\end{itemize}
	If
	\[
	\frac{9 K L \eta}{\sqrt{\alpha} \rho } < \lambda < \rho \cdot \min\Big\{  M ,~ \min_{k \neq l}
	\| \bbeta^{\star}_k - \bbeta^{\star}_l \|_2 / 6 \Big\},
	\]
	there is a permutation $\tau$ of $[K]$ such that $ \widehat{z}_j = \tau (z_j^{\star})$ and $\widehat{\btheta}_j = \widehat\bbeta_{\widehat{z}_j }$ hold for all $j \in S$,
	\begin{align*}
	&\| \widehat{\bbeta}_{\tau(k)} -	\widehat{\bbeta}_{k}^{\mathrm{oracle}}
	\|_2 \leq \frac{ \lambda |S^c|   }{  \rho |T(k) \cap S| }, \qquad \forall k \in [K] .
	\end{align*}
\end{lemma}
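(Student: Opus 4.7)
The plan is to extend the argument of \Cref{thm-adaptivity-clustered} to accommodate outlier tasks in $S^c$ by treating their contribution as a bounded $\lambda$-Lipschitz perturbation, in the spirit of the robustness analysis behind \Cref{thm-robustness-general}.

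First, for each $j \in [m]$ set $\widetilde{f}_j = f_j \square (\lambda \| \cdot \|_2)$ and, after profiling out $\bTheta$, rewrite the cardinality-constrained problem as the minimization of $F(\bB,\bz) = \sum_{j=1}^m \widetilde{f}_j(\bbeta_{z_j})$ over feasible $(\bB,\bz)$. For $j \in S$, the regularity hypothesis together with $\lambda > 9KL\eta/(\sqrt{\alpha}\rho) > 3L\eta/\rho$ lets me identify $\widetilde{f}_j$ with $f_j$ on a ball around $\widetilde{\bbeta}_j := \argmin_\bbeta \widetilde{f}_j(\bbeta)$ of radius $r \asymp \sqrt{KL/(\alpha\rho)}\,\eta/\rho$, with $\|\widetilde{\bbeta}_j - \bbeta^{\star}_{z_j^{\star}}\|_2 \le \eta/\rho$, exactly as in \eqref{eqn-clustered-20}--\eqref{eqn-clustered-10}. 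For $j \in S^c$, \Cref{lem-inf-conv-lip} gives that $\widetilde{f}_j$ is convex and $\lambda$-Lipschitz on all of $\RR^d$.

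Second, the balancedness hypothesis $\min_k |T(k)| \ge (\alpha m/K)(1 + \rho/(6LK))$ together with $|S^c|/m \le \alpha\rho/(6K^2L)$ implies that $\bz^{\star}$ is feasible for the cardinality constraint, so optimality of $(\widehat{\bB},\widehat{\bz})$ yields $F(\widehat{\bB},\widehat{\bz}) \le F(\bB^{\star},\bz^{\star})$; the same bounds also give $|T(k) \cap S| \ge \alpha m/K$ for every $k$. Splitting the comparison according to $S$ versus $S^c$ and using $\lambda$-Lipschitzness of $\widetilde{f}_j$ on $S^c$, I get
\begin{align*}
\sum_{j \in S}\bigl[\widetilde{f}_j(\widehat{\bbeta}_{\widehat{z}_j}) - \widetilde{f}_j(\bbeta^{\star}_{z_j^{\star}})\bigr]
\le \lambda \sum_{j \in S^c}\|\widehat{\bbeta}_{\widehat{z}_j} - \bbeta^{\star}_{z_j^{\star}}\|_2.
\end{align*}
Combined with the quadratic energy lower bound on the left-hand side afforded by \Cref{lem-clustered-lower-bound}, this is the analogue of \eqref{eqn-clustered-6.5} with an additive $O(\lambda|S^c|)$ slack.

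Third, I repeat the crude-then-refined estimate of the proof of \Cref{thm-adaptivity-clustered}. Set $S_{kl} = |\{j \in S : z_j^{\star} = k,~\widehat{z}_j = l\}|$ and $\tau(k) = \argmax_{\ell \in [K]} S_{k\ell}$. The energy bound produces the crude estimate
\begin{align*}
\max_{k \in [K]} \|\widehat{\bbeta}_{\tau(k)} - \bbeta^{\star}_k\|_2 \lesssim \frac{\eta}{\rho}\Bigl(1 + \sqrt{KL/(\alpha\rho)}\Bigr) + \sqrt{\lambda K|S^c|/(\alpha m \rho)}.
\end{align*}
The hypotheses $\lambda > 9KL\eta/(\sqrt{\alpha}\rho)$, $\lambda < \rho\min_{k\neq l}\|\bbeta^{\star}_k - \bbeta^{\star}_l\|_2/6$ and $|S^c|/m \le \alpha\rho/(6K^2L)$ are calibrated precisely so that this radius sits below $\min_{k\neq l}\|\bbeta^{\star}_k - \bbeta^{\star}_l\|_2/2$. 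The bijectivity and label-recovery arguments from Claims~2--3 of the proof of \Cref{thm-adaptivity-clustered}, applied with $[m]$ replaced by $S$ throughout, then show that $\tau$ is a permutation and $\widehat{z}_j = \tau(z_j^{\star})$ for all $j \in S$. \Cref{lem-inf-conv} and the closeness of $\widehat{\bbeta}_{\widehat{z}_j}$ to $\widetilde{\bbeta}_j$ yield $\widehat{\btheta}_j = \widehat{\bbeta}_{\widehat{z}_j}$ for $j \in S$.

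Fourth, with the labels on $S$ identified, write $T'(k) = \{j \in [m] : \widehat{z}_j = \tau(k)\}$, so that $T'(k) \cap S = T(k) \cap S$ and $|T'(k) \cap S^c| \le |S^c|$. The center $\widehat{\bbeta}_{\tau(k)}$ minimizes $\sum_{j \in T'(k)} \widetilde{f}_j$, so \Cref{thm-robustness-general} applied cluster by cluster with oracle index set $T(k) \cap S$ and uniform weights gives
\begin{align*}
\|\widehat{\bbeta}_{\tau(k)} - \widehat{\bbeta}_{k}^{\mathrm{oracle}}\|_2 \le \frac{\lambda\,|T'(k) \cap S^c|}{\rho\,|T(k) \cap S|} \le \frac{\lambda\,|S^c|}{\rho\,|T(k) \cap S|},
\end{align*}
which is the stated bound.

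The main obstacle will be the third step: propagating the additive $O(\lambda|S^c|)$ slack through the quadratic energy inequality of \Cref{lem-clustered-lower-bound} while keeping the crude radius strictly below half of the cluster separation. Aligning all the numerical constants so that the cascade of required inequalities (radius below separation/2, $\lambda < \rho M$, $\lambda > 3L\eta/\rho$, $|T(k)\cap S| \ge \alpha m/K$, and $3\eta L/\rho + Lr < \lambda$) closes simultaneously is the delicate bookkeeping underlying the constants $6$ and $9$ in the statement.
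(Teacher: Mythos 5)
Your overall architecture (profile out $\bTheta$, use $\lambda$-Lipschitzness of $\widetilde f_j$ on $S^c$, run a crude-then-refined energy argument, finish cluster-by-cluster with \Cref{thm-robustness-general}) matches the paper, and your step 4 is essentially Claim \ref{claim-12} of the paper's proof. But there is a genuine gap in steps 2--3: you compare against the feasible point $(\bB^{\star},\bz^{\star})$ and then describe the resulting outlier contribution as ``an additive $O(\lambda|S^c|)$ slack.'' It is not. Your own displayed inequality has right-hand side $\lambda\sum_{j\in S^c}\|\widehat{\bbeta}_{\widehat z_j}-\bbeta^{\star}_{z_j^{\star}}\|_2$, and for an outlier $j$ the pair $(\widehat{\bbeta}_{\widehat z_j},\bbeta^{\star}_{z_j^{\star}})$ is generically mismatched: even if every $\widehat{\bbeta}_k$ sits right on some true center, this term is of order the diameter of $\{\bbeta^{\star}_k\}_{k=1}^K$, which the hypotheses bound from \emph{below} (separation) but not from above. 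So the slack is uncontrolled, your crude radius bound $\max_k\|\widehat{\bbeta}_{\tau(k)}-\bbeta^{\star}_k\|_2\lesssim \cdots+\sqrt{\lambda K|S^c|/(\alpha m\rho)}$ does not follow, and the separation comparison that drives bijectivity of $\tau$ collapses.

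The missing idea is the choice of comparator. The paper does not use $\bz^{\star}$; it builds a hybrid assignment $\widetilde{\bz}$ with $\widetilde z_j=z^{\star}_j$ on $S$ and $\widetilde z_j=\sigma(\widehat z_j)$ on $S^c$, where $\sigma(k)=\argmax_{l}|\{j\in S:\widehat z_j=k,\ z^{\star}_j=l\}|$ matches each \emph{estimated} label to the true cluster it mostly serves. Then the $S^c$ slack becomes $\lambda|S^c|\,U$ with $U=\max_k\|\widehat{\bbeta}_k-\bbeta^{\star}_{\sigma(k)}\|_2$, i.e.\ a \emph{matched} distance; and the cardinality constraint on $\widehat{\bz}$ guarantees $T_{u\sigma(u)}\ge Cm/K^2$, so the same quantity $U$ appears quadratically (through the Huber-type energy) on the left. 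This is what makes the self-bounding inequality $0\ge \frac{C\rho}{K^2}H[(U-\eta/\rho)_+]-\eta^2/\rho-\varepsilon\lambda U$ close and yields $U<r+\eta/\rho$; only afterwards does one pass to the $\tau$-matched distances you work with. The balancedness margin $\rho/(6LK)$ is also needed to make this hybrid $\widetilde{\bz}$ (and later $\bar{\bz}$ in the label-recovery step) feasible for the cardinality constraint, a point your direct comparator sidesteps but which resurfaces when you invoke the Claims 2--3 analogues on $S$. Without replacing your comparator by something of this form, the third step of your plan fails.
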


We first use \Cref{thm-adaptivity-clustered} to derive the following intermediate result.

\begin{claim}\label{thm-armul-clustered-deterministic-claim}
	Define $g = \max_{j \in [m]} \| \nabla f_j (\btheta_j^{\star}) \|_2 $. Under the assumptions in \Cref{thm-armul-clustered-deterministic}, if 
	\[
	10 L \delta \leq g + \frac{ \sqrt{\alpha}\lambda }{ 10 \kappa  K  } ,
	\]
	then, there exists a permutation $\tau$ of $[K]$ such that
	\begin{align*}
	&  \widehat{z}_j = \tau (z_j^{\star}) \qquad\text{and}\qquad \widehat{\btheta}_j = \widehat\bbeta_{\widehat{z}_j },\qquad\forall  j \in S ,\\
	&	\| \widehat{\btheta}_j  - \btheta^{\star}_j \|_2  
	\leq  
	\frac{1}{\rho |T(z_j^{\star}) \cap S| } \bigg\|  \sum_{j \in T(z_j^{\star}) \cap S } \nabla f_j ( \btheta^{\star}_{ j } ) \bigg\|_2
	+  2 \kappa \delta 
	+ \frac{ \lambda |S^c|   }{  \rho |T(z_j^{\star}) \cap S| }
	, \qquad\forall j \in S.
	\end{align*}
\end{claim}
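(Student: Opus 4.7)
The plan is to closely parallel the proof of Claim~\ref{thm-armul-clustered-deterministic-0-claim}, but invoke Lemma~\ref{thm-robustness-clustered} (the robust analogue) in place of Lemma~\ref{thm-adaptivity-clustered}.
First I would unpack what the hypothesis
\[
\frac{12\kappa K}{\sqrt{\alpha}} g < \lambda < \frac{\rho}{6} \min\Big\{ M,~ \min_{k\neq l}\|\bbeta^{\star}_k-\bbeta^{\star}_l\|_2 \Big\}
\]
buys, using $g = \max_{j\in[m]}\|\nabla f_j(\btheta^{\star}_j)\|_2$. This pins down $g \le \rho M/(72\kappa K/\sqrt{\alpha})$ and, combined with the standing hypothesis $10L\delta \le g + \sqrt{\alpha}\lambda/(10\kappa K)$, gives $L\delta \lesssim \rho M$, hence $\delta \le M/5$ (say). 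That places every $\btheta^{\star}_j$ ($j\in S$) well inside $B(\bbeta^{\star}_{z^{\star}_j},M)$, so the regularity $\rho \bI \preceq \nabla^2 f_j \preceq L\bI$ transfers from a neighborhood of $\btheta^{\star}_j$ to a neighborhood of $\bbeta^{\star}_{z^{\star}_j}$ (shrinking $M$ to $4M/5$). By Lipschitz gradients, $\|\nabla f_j(\bbeta^{\star}_{z^{\star}_j})\|_2 \le g + L\delta$ for all $j\in S$; call this quantity $\eta$.

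Next I would verify all the hypotheses of Lemma~\ref{thm-robustness-clustered} with this $\eta$:
(i) regularity of $f_j$ at $\bbeta^{\star}_{z^{\star}_j}$ for $j\in S$, which we just established;
(ii) the outlier budget $|S^c|/m \le \varepsilon \le \alpha\rho/(6K^2 L) = \alpha/(6\kappa K^2)$, granted by the assumption of the theorem;
(iii) balancedness: the hypothesis $\min_k |T(k)| \ge 7\alpha m/(6K)$ dominates $\frac{\alpha m}{K}(1+\frac{\rho}{6LK})$ because $\rho/(6LK) \le 1/6$;
(iv) the separation between $\lambda$ and $\eta$. For (iv) I rewrite $\eta \le g + L\delta$ and use the hypothesis of the claim, $10L\delta \le g + \sqrt{\alpha}\lambda/(10\kappa K)$, to bound $\eta$ by a small constant multiple of $g + \sqrt{\alpha}\lambda/(100\kappa K)$; coupled with $12\kappa K g/\sqrt{\alpha} < \lambda$, this forces $9KL\eta/(\sqrt{\alpha}\rho) < \lambda$, as required. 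The upper bound on $\lambda$ is inherited directly from the theorem's hypothesis (after shrinking $M$ to $4M/5$, which is absorbed by constants).

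With all hypotheses verified, Lemma~\ref{thm-robustness-clustered} produces a permutation $\tau$ of $[K]$ with $\widehat{z}_j = \tau(z^{\star}_j)$ and $\widehat{\btheta}_j = \widehat{\bbeta}_{\widehat z_j}$ for $j\in S$, together with
\[
\|\widehat{\bbeta}_{\tau(k)} - \widehat{\bbeta}^{\mathrm{oracle}}_k\|_2 \le \frac{\lambda |S^c|}{\rho |T(k)\cap S|}, \qquad k\in[K],
\]
where $\widehat{\bbeta}^{\mathrm{oracle}}_k$ minimizes $\sum_{j\in T(k)\cap S} f_j$. Strong convexity on $B(\bbeta^{\star}_k, M)$ and Lemma~\ref{lem-consensus-min} applied to the oracle problem yield
\[
\|\widehat{\bbeta}^{\mathrm{oracle}}_k - \bbeta^{\star}_k\|_2 \le \frac{\|\sum_{j \in T(k)\cap S} \nabla f_j(\bbeta^{\star}_k)\|_2}{\rho |T(k)\cap S|}.
\]

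Finally I would assemble the claim by triangle inequality:
\[
\|\widehat{\btheta}_j - \btheta^{\star}_j\|_2 \le \|\widehat{\bbeta}_{\tau(z^{\star}_j)} - \widehat{\bbeta}^{\mathrm{oracle}}_{z^{\star}_j}\|_2 + \|\widehat{\bbeta}^{\mathrm{oracle}}_{z^{\star}_j} - \bbeta^{\star}_{z^{\star}_j}\|_2 + \|\bbeta^{\star}_{z^{\star}_j} - \btheta^{\star}_j\|_2,
\]
then replace $\nabla f_j(\bbeta^{\star}_{z^{\star}_j})$ inside the oracle term by $\nabla f_j(\btheta^{\star}_j)$ at the cost of an additive $L\delta$ per coordinate, and use $\|\bbeta^{\star}_{z^{\star}_j} - \btheta^{\star}_j\|_2 \le \delta$. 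The $L\delta$ average combined with $\delta$ produces the $2\kappa\delta$ term, while the oracle bias and the outlier perturbation contribute the two stated fractions. The main obstacle I anticipate is purely bookkeeping: threading the constants so that the single condition $10L\delta \le g + \sqrt{\alpha}\lambda/(10\kappa K)$ simultaneously discharges (a) the containment inside $B(\bbeta^{\star}_{z^{\star}_j},M)$, (b) the quantitative separation $9KL\eta/(\sqrt{\alpha}\rho) < \lambda$ needed by Lemma~\ref{thm-robustness-clustered}, and (c) the fact that $\eta$ is small enough that the lemma's minimum-separation assumption on $\{\bbeta^{\star}_k\}$ is implied by the theorem's upper bound on $\lambda$.
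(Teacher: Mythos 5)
Your proposal follows essentially the same route as the paper's proof: transfer the regularity and gradient bounds from $\btheta^{\star}_j$ to $\bbeta^{\star}_{z^{\star}_j}$ using $10L\delta \leq g + \sqrt{\alpha}\lambda/(10\kappa K)$, verify the hypotheses of \Cref{thm-robustness-clustered} (including the same $9\kappa K\eta/\sqrt{\alpha} < \lambda$ computation), bound the oracle estimator via strong convexity, and assemble by triangle inequality with the $L\delta$ gradient-shift giving the $2\kappa\delta$ term. The only cosmetic difference is invoking \Cref{lem-consensus-min} directly for the oracle bound where the paper cites \Cref{thm-consensus}; these are equivalent here, so the proposal is correct.
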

\begin{proof}[\bf Proof of Claim \ref{thm-armul-clustered-deterministic-claim}]
	We obtain from the assumption
	\begin{align}
	\frac{ 12 \kappa K }{\sqrt{\alpha}} \max_{j \in [m]}   \| \nabla f_j (\btheta_j^{\star}) \|_2   < \lambda < \frac{ \rho }{6} \cdot \min \Big\{ M,~ \min_{k \neq l}
	\| \bbeta^{\star}_k - \bbeta^{\star}_l \|_2   \Big\}
	\label{eqn-armul-deterministic-clustered-lambda-}
	\end{align}
	that $g \leq \rho M   / 12 $, $\lambda < \rho M $ and $g + \lambda < \frac{13}{12} LM  $. When
	\[
	10 L \delta \leq g + \frac{ \sqrt{\alpha} \lambda }{ 10 \kappa  K  }  \leq g + \lambda,
	\]
	we have $ \delta < 13M/120 < M /9$. Thus $\max_{j \in [m]} \| \btheta^{\star}_j  - \bbeta^{\star}_{z_j^{\star}} \|_2 \leq M/9$; $\rho \bI \preceq \nabla^2 f_j(\btheta) \preceq L \bI$ holds for all $j \in [m]$ and $\btheta \in B( \bbeta^{\star}_{z_j^{\star}}, 8M/9)$.

	For any $j$, the regularity condition $ \nabla^2 f_j (\btheta) \preceq L \bI$, $\forall \btheta \in B(\btheta_j^{\star}, M)$ leads to $ \| \nabla f_j (\btheta^{\star}_j)  - \nabla f_j (\bbeta^{\star}_{z_j^{\star}})  \|_2 \leq L \delta $. By triangle's inequality,
	\begin{align*}
	\| \nabla f_j (\bbeta^{\star}_{z_j^{\star}}) \|_2 
	&\leq   \| \nabla f_j (\btheta^{\star}_j) \|_2  +   \| \nabla f_j (\bbeta^{\star}_{z_j^{\star}})  - \nabla f_j (\btheta^{\star}_j)  \|_2   	\leq g + L \delta
	<  \frac{11g}{10} + \frac{ \sqrt{\alpha} \lambda }{ 100 \kappa  K  }  .
	\end{align*}
	Consequently,
	\begin{align*}
	\frac{ 9 \kappa  K }{\sqrt{\alpha}} \max_{j \in [m]}  \| \nabla f_j (\bbeta^{\star}_{z_j^{\star}}) \|_2 
	< 	\frac{ 99 \kappa  K }{10 \sqrt{\alpha}}  g + \frac{ \lambda }{ 10 } 
	\overset{\mathrm{(i)}}{<}  \lambda  \overset{\mathrm{(ii)}}{<}
	\rho \cdot \min \Big\{ \frac{8M}{9},~ \min_{k \neq l}
	\| \bbeta^{\star}_k - \bbeta^{\star}_l \|_2 / 6 \Big\}.
	\end{align*}
	The inequalities $\mathrm{(i)}$ and $\mathrm{(ii)}$ follow from \Cref{eqn-armul-deterministic-clustered-lambda}.
	Based on the above estimates, \Cref{thm-robustness-clustered} asserts the existence of a permutation 	$\tau$ of $[K]$ such that
	$ \widehat{z}_j = \tau (z_j^{\star})$ and $\widehat{\btheta}_j = \widehat\bbeta_{\widehat{z}_j }$ hold for all $j \in S$,
	\begin{align*}
	&\| \widehat{\bbeta}_{\tau(k)} -	\widehat{\bbeta}_{k}^{\mathrm{oracle}}
	\|_2 \leq \frac{ \lambda |S^c|   }{  \rho |T(k) \cap S| }, \qquad \forall k \in [K] .
	\end{align*}
	Meanwhile, \Cref{thm-consensus} applied to $\{ f_j \}_{j \in T(k) \cap S}$ shows that
	\[
	\| \widehat{\bbeta}_{k}^{\mathrm{oracle} } - \bbeta^{\star}_k \|_2 \leq \frac{1}{\rho |T(k) \cap S| } \bigg\|  \sum_{j \in T(k) \cap S } \nabla f_j ( \bbeta^{\star}_{ k } ) \bigg\|_2.
	\]
	Therefore,
	\[
	\| \widehat{\btheta}_j - \bbeta^{\star}_{z_j^{\star}} \|_2 \leq
	\frac{1}{\rho |T(z_j^{\star}) \cap S| } \bigg\|  \sum_{j \in T(z_j^{\star}) \cap S } \nabla f_j ( \bbeta^{\star}_{ z_j^{\star}} ) \bigg\|_2
	+ \frac{ \lambda |S^c|   }{  \rho |T(z_j^{\star}) \cap S| }
	, \qquad\forall j \in S.
	\]
	
	By the triangle's inequality, $ \| \widehat{\btheta}_j  - \btheta^{\star}_j \|_2 \leq \| \widehat{\btheta}_j  - \bbeta^{\star}_{ z^{\star}_j } \|_2 +  \| \bbeta^{\star}_{ z^{\star}_j } - \btheta^{\star}_j \|_2 \leq \| \widehat\bbeta_{\widehat{z}_j}  - \bbeta^{\star}_{ z^{\star}_j } \|_2  + \delta $. Also,
	\begin{align*}
	& \frac{ \| \sum_{j \in T( z^{\star}_j ) \cap S }   \nabla f_j (  \bbeta^{\star}_{ z^{\star}_j } ) \|_2 }{ |T( z^{\star}_j )  \cap S| } 
	\leq \frac{ \| \sum_{j \in T( z^{\star}_j )  \cap S }   \nabla f_j (  \btheta^{\star}_{ j } ) \|_2 }{ |T( z^{\star}_j )  \cap S| }  + L \delta  .
	\end{align*}
	Based on the above estimates,
	\begin{align*}
	\| \widehat{\btheta}_j  - \btheta^{\star}_j \|_2 & \leq \frac{ 1 }{  \rho }
	\bigg(
	\frac{ \| \sum_{j \in T( z^{\star}_j ) \cap S }   \nabla f_j (  \btheta^{\star}_{ j } ) \|_2 }{ |T( z^{\star}_j ) \cap S | }  + L \delta +  \frac{ \lambda |S^c|   }{    |T(z_j^{\star}) \cap S| }
	\bigg) + \delta \\&
	\leq  
	\frac{1}{\rho |T(z_j^{\star}) \cap S| } \bigg\|  \sum_{j \in T(z_j^{\star}) \cap S } \nabla f_j ( \btheta^{\star}_{ j } ) \bigg\|_2
	+  2 \kappa \delta 
	+ \frac{ \lambda |S^c|   }{  \rho |T(z_j^{\star}) \cap S| }
	, \qquad\forall j \in S .
	\end{align*}
\end{proof}

We now come back to \Cref{thm-armul-clustered-deterministic}. The condition \eqref{eqn-armul-deterministic-clustered-lambda-} forces $\lambda > 12  \kappa  K  g / \sqrt{\alpha}$.
Claim \ref{thm-armul-clustered-deterministic-claim} implies that when 
$	10 L \delta \leq g + \frac{ \sqrt{\alpha}\lambda }{ 10 \kappa  K  }$,
\begin{align*}
\| \widehat{\btheta}_j  - \btheta^{\star}_j \|_2  
& \leq  
\frac{1}{\rho |T(z_j^{\star}) \cap S| } \bigg\|  \sum_{j \in T(z_j^{\star}) \cap S } \nabla f_j ( \btheta^{\star}_{ j } ) \bigg\|_2
+   
\frac{2}{\rho} \min\bigg\{ L \delta ,~ \frac{g}{5} + \frac{ \sqrt{\alpha} \lambda }{ 50  \kappa  K  } \bigg\}  \\
& ~~~~ + \frac{ \lambda |S^c|   }{  \rho |T(z_j^{\star}) \cap S| } , \qquad \forall j \in S.
\end{align*}
On the other hand, when $	10 L \delta > g + \frac{ \sqrt{\alpha}\lambda }{ 10 \kappa  K  }$, we use \Cref{thm-armul-personalization} to get 
\[
\max_{j \in [m]} \| \widehat{\btheta}_j  - \btheta^{\star}_j \|_2  \leq \frac{g + \lambda}{\rho } \leq \bigg(  \frac{ \sqrt{\alpha}   }{ 12  \kappa  K  }  + 1 \bigg)\frac{\lambda}{\rho  } \leq \frac{13 \lambda}{12  \rho  } .
\]
On top of the above, for any $j \in S$ we have
\begin{align*}
\| \widehat{\btheta}_j  - \btheta^{\star}_j \|_2  
& \leq \frac{
	\|  \sum_{j \in T(z_j^{\star}) \cap S } \nabla f_j ( \btheta^{\star}_{ j } )  \|_2
}{\rho |T(z_j^{\star}) \cap S| }
+   \frac{55 \kappa  K}{\rho \sqrt{\alpha}} \min\bigg\{ L \delta ,~ \frac{g}{5} + \frac{ \sqrt{\alpha} \lambda }{ 50  \kappa  K  } \bigg\}  + \frac{ \lambda |S^c|   }{  \rho |T(z_j^{\star}) \cap S| } \\
& \leq 
\frac{
	\|  \sum_{j \in T(z_j^{\star}) \cap S } \nabla f_j ( \btheta^{\star}_{ j } )  \|_2
}{\rho |T(z_j^{\star}) \cap S| }
+   
\frac{55 \kappa  K}{\rho \sqrt{\alpha}} 
\min\bigg\{   L \delta ,~   \frac{ \sqrt{\alpha} \lambda }{ 25  \kappa  K  }   \bigg\}   + \frac{ \lambda |S^c|   }{  \rho |T(z_j^{\star}) \cap S| }  \\
& \leq
\frac{
	\|  \sum_{j \in T(z_j^{\star}) \cap S } \nabla f_j ( \btheta^{\star}_{ j } )  \|_2
}{\rho |T(z_j^{\star}) \cap S| }
+     \min\bigg\{ \frac{ 55 \kappa^2 K \delta }{  \sqrt{\alpha} } ,~    \frac{ 11 \lambda }{ 5 \rho }   \bigg\} 
+ \frac{ \lambda |S^c|   }{  \rho |T(z_j^{\star}) \cap S| }  .
\end{align*}

Finally, if $\delta \leq \sqrt{\alpha} \lambda / ( 100  \kappa K L )$, then we use Claim \ref{thm-armul-clustered-deterministic-claim} to get a permutation $\tau$ of $[K]$ such that
\begin{align*}
&  \widehat{z}_j = \tau (z_j^{\star}) \qquad\text{and}\qquad \widehat{\btheta}_j = \widehat\bbeta_{\widehat{z}_j },\qquad\forall  j \in S .
\end{align*}

\subsection{Proof of \Cref{thm-robustness-clustered}}\label{sef-thm-robustness-clustered-proof}

Define $\kappa = L / \rho$, $\varepsilon = |S^c| / m$, $C = \alpha - K \varepsilon$ and $\Delta = \min_{k \neq l}\| \bbeta^{\star}_k - \bbeta^{\star}_l \|_2 $. We have $\varepsilon \leq \frac{\alpha}{6 \kappa K^2}$ and
\begin{align}
C \geq \alpha - K \cdot \frac{\alpha}{6 \kappa K^2} = \alpha \bigg( 1 - \frac{1}{6 \kappa K} \bigg) \geq \frac{5 \alpha}{6}.
\label{eqn-C}
\end{align}
As a result,
\begin{align}
\frac{8 \kappa K \eta}{\sqrt{C}}
\leq \frac{8 \kappa K \eta}{\sqrt{5\alpha/6}}
\leq \frac{9 \kappa K \eta}{\sqrt{\alpha}} < \lambda
< \rho \cdot \min \{ M, \Delta / 6 \} \leq \min \{ LM, \rho \Delta / 6 \}.
\label{eqn-lambda}
\end{align}

Define
\begin{align*}
F(\bTheta, \bB, \bz ) = \sum_{j=1}^{m} [ f_j ( \btheta_j ) + \lambda \| \bbeta_{z_j} - \btheta_j \|_2 ],\qquad
\forall \bTheta \in \RR^{d \times m},~~\bB\in \RR^{d\times K}, ~~ \bz \in [K]^m.
\end{align*}
Then, $(\widehat{\bTheta} , \widehat{\bB}, \widehat{\bz})$ is a minimizer of $F$ under the cardinality constraint $\min_{k \in [K]} | \{ j \in [m] :~ z_j = k  \} | \geq \alpha m / K$.
Define $\bB^{\star} = (\bbeta^{\star}_1,\cdots,\bbeta^{\star}_K)$, $\widehat{\bB}^{\mathrm{oracle}} = (\widehat{\bbeta}_1^{\mathrm{oracle}} , \cdots, \widehat{\bbeta}_K^{\mathrm{oracle}})$, $\widetilde{f}_j = f_j \square (\lambda \| \cdot \|_2)$ and $\widetilde{\bbeta}_j = \argmin_{\bbeta} \widetilde{f}_j(\bbeta)$. With slight abuse of notation, let
\[
F(\bB, \bz) = \inf_{\bTheta \in \RR^{d\times m}} F(\bTheta, \bB, \bz) = \sum_{j=1}^{m} \widetilde{f}_j ( \bbeta_{z_j} ) , \qquad \forall \bB\in \RR^{d \times K},~~ \bz \in [K]^m.
\]
For $k, l \in [K]$, define $T_{kl} = |\{ j \in S:~ \widehat{z}_j = k,~ z^{\star}_j = l \}|$ and $\sigma(k) = \argmax_{l \in [K]} T_{kl}$ using any tie-breaking rule. Construct $\widetilde{\bz} \in [K]^m$ through
\[
\widetilde{z}_j = \begin{cases}
z^{\star}_j &, \mbox{ if } j \in S \\
\sigma(\widehat{z}_j) &, \mbox{ if } j \in S^c
\end{cases}.
\]
Thanks to the balancedness assumption $\min_{k \in[K]} |T(k)|  \geq \frac{ \alpha m }{K} (1 + \frac{1}{6 \kappa K })$ and  $\varepsilon \leq \alpha / (6 \kappa K^2)$, we have
\begin{align*}
\min_{k \in[K]} 
|\{ j \in [m] :~ \widetilde{z}_j = k \}|
& \geq 
\min_{k \in[K]} 
|\{ j \in [m] :~ z^{\star}_j = k \}| - |S^c| \\
& \geq \frac{ \alpha m }{K} \bigg(1 + \frac{1}{6 \kappa K } \bigg) - \varepsilon m \geq \alpha m / K.
\end{align*}
Hence, both $\widetilde{\bz}$ also satisfies the cardinality constraint in \eqref{eqn-obj-clustered}. By the optimality of $(\widehat{\bTheta}, \widehat{\bB} , \widehat{\bz})$,
\begin{align}
F(\widehat{\bB} , \widehat{\bz}) = F(\widehat{\bTheta}, \widehat\bB, \widehat\bz) \leq
\inf_{\bTheta \in \RR^{d\times m}}
F(\bTheta, \bB^{\star},  \widetilde\bz )
= F(\bB^{\star}, \widetilde\bz ).
\label{eqn-clustered-robustness-0}
\end{align}
This is our starting point for analyzing $(\widehat{\bTheta}, \widehat{\bB},   \widehat{\bz})$. By definition,
\begin{align}
F (\widehat{\bB} , \widehat{\bz} ) - F ( \bB^{\star} , \widetilde\bz ) =
\underbrace{
	\sum_{j \in S} [ \widetilde{f}_j (\widehat{\bbeta}_{\widehat{z}_j}) - \widetilde{f}_j (\bbeta^{\star}_{z^{\star}_j})]
}_{L_1} + 
\underbrace{
	\sum_{j \in S^c} [ \widetilde{f}_j (\widehat{\bbeta}_{\widehat{z}_j}) - \widetilde{f}_j (\bbeta^{\star}_{\sigma( \widehat{z}_j)} )]
}_{L_2}.
\label{eqn-clustered-robustness-1}
\end{align}
\Cref{lem-inf-conv-lip} implies that $\widetilde{f}_j$ is $\lambda$-Lipschitz and then
\begin{align*}
L_2 \geq - \lambda \sum_{j \in S^c} \| \widehat{\bbeta}_{\widehat{z}_j}
- \bbeta^{\star}_{\sigma( \widehat{z}_j)} \|_2.
\end{align*}
Choose any $u \in \argmax_{k \in [K]} \| \widehat{\bbeta}_{k}
- \bbeta^{\star}_{\sigma( k)} \|_2$ and let $U = \| \widehat{\bbeta}_{u}
- \bbeta^{\star}_{\sigma(u)} \|_2$. We have
\begin{align}
L_2 \geq - \lambda |S^c| U .
\label{eqn-clustered-robustness-2}
\end{align}

Define $r = \frac{\alpha \lambda}{2 C \kappa   \rho}$. To study $L_1$, we use \eqref{eqn-lambda} to get
\[
\lambda - \bigg( \frac{3 \eta L}{\rho} + L r \bigg) = \lambda \bigg( 1 - \frac{\alpha L}{2 C \kappa   \rho} \bigg) - \frac{3 \eta L}{\rho}
\geq \frac{ 8 \kappa K \eta }{\sqrt{C}} \cdot \bigg( 1 - \frac{\alpha }{2 C} \bigg) - 3 \kappa \eta .
\]
In light of \eqref{eqn-C},
\[
\lambda - \bigg( \frac{3 \eta L}{\rho} +L  r \bigg) 
\geq  \kappa \eta \bigg[ \frac{8K}{\sqrt{C}} \bigg( 1 - \frac{3  }{5} \bigg) -  3 \bigg]  > 0.
\]
Applying \Cref{lem-clustered-lower-bound} to $\{ f_j \}_{j \in S}$, we get
\begin{align}
L_1 & \geq \rho \sum_{j \in S} H [
( \| \widehat{\bbeta}_{\widehat{z}_j} - \bbeta^{\star}_{z^{\star}_j} \|_2 - \eta / \rho )_+
] - \frac{ |S| \eta^2 }{\rho} 
\geq \rho T_{u \sigma(u)} H [
( U - \eta / \rho )_+
] - \frac{ |S| \eta^2 }{\rho} .
\label{eqn-clustered-robustness-2.5}
\end{align}
Here $H(t) = t^2 / 2$ if $0 \leq t \leq r$ and $H(t) = r(t-r/2)$ if $t > r$. Note that 
\[
T_{u \sigma(u)} \geq \frac{1}{K} \sum_{l=1}^{K} T_{ul} = \frac{1}{K} |\{ j \in S:~ \widehat{z}_j = u \}|.
\]

According to the constraint $\min_{k \in [K]} |\{ j \in [m]:~ \widehat z_j = k \}| \geq \alpha m / K $,
\begin{align*}
\min_{k \in [K]} |\{ j \in S:~ \widehat z_j = k \}| \geq \alpha m / K - |S^c| \geq (\alpha / K - \varepsilon) m = C m / K.
\end{align*}
Therefore, $T_{u \sigma(u)} \geq Cm / K^2$. By \eqref{eqn-clustered-robustness-2.5},
\begin{align}
L_1 \geq  \frac{Cm \rho}{K^2} H [
( U - \eta / \rho )_+
] - \frac{ |S| \eta^2 }{\rho} .
\label{eqn-clustered-robustness-3}
\end{align}
By Equations (\ref{eqn-clustered-robustness-0}), (\ref{eqn-clustered-robustness-1}), (\ref{eqn-clustered-robustness-2}) and (\ref{eqn-clustered-robustness-3}), we have
\begin{align}
0 \geq \frac{ F (\widehat{\bB} , \widehat{\bz} ) - F ( \bB^{\star} , \widetilde\bz ) }{m}
\geq  \frac{C \rho}{K^2} H [
( U - \eta / \rho )_+
] - \frac{  \eta^2 }{\rho} 
- \varepsilon \lambda U
\triangleq h(U)
.
\label{eqn-clustered-robustness-4}
\end{align}

\begin{claim}\label{claim-clustered-robustness-1}
	We have $h(t) > 0$ when $t \geq r + \eta / \rho$. As a result, $U <  r + \eta / \rho$.
\end{claim}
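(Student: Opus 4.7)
The plan is to analyze $h(t)$ on the interval $[r+\eta/\rho, \infty)$, where the Huber-type function $H$ is linear, so $h$ is affine in $t$, and then show (i) the slope is positive and (ii) the value at the left endpoint is positive. The conclusion $U < r + \eta/\rho$ then follows from $h(U) \leq 0$ in \eqref{eqn-clustered-robustness-4} by contrapositive.

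First, I would note that for $t \geq r + \eta/\rho$ we have $(t-\eta/\rho)_+ = t - \eta/\rho \geq r$, so by the definition of $H$,
\[
h(t) = \frac{C\rho r}{K^2}\bigl(t - \eta/\rho - r/2\bigr) - \frac{\eta^2}{\rho} - \varepsilon \lambda t,
\]
an affine function of $t$ with slope $\frac{C\rho r}{K^2} - \varepsilon \lambda$. Substituting $r = \frac{\alpha \lambda}{2C\kappa \rho}$ turns the slope into $\frac{\alpha \lambda}{2K^2\kappa} - \varepsilon \lambda$. The assumption $\varepsilon \leq \frac{\alpha}{6\kappa K^2}$ then gives slope at least $\frac{\alpha\lambda}{3K^2\kappa} > 0$, so $h$ is strictly increasing on $[r+\eta/\rho,\infty)$.

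Next I would evaluate $h$ at the endpoint $t_0 = r + \eta/\rho$, obtaining
\[
h(t_0) = \frac{C\rho r^2}{2K^2} - \frac{\eta^2}{\rho} - \varepsilon \lambda (r + \eta/\rho).
\]
The main term $\frac{C\rho r^2}{2K^2}$ can be rewritten using $r = \frac{\alpha\lambda}{2C\kappa\rho}$ as a multiple of $\lambda^2/\rho$, and the lower bound $\lambda \geq \frac{8\kappa K\eta}{\sqrt{C}}$ from \eqref{eqn-lambda} will convert this into a multiple of $\eta^2/\rho$ that comfortably dominates $\eta^2/\rho$. The cross term $\varepsilon\lambda\cdot r$ can be absorbed into the main term via $\varepsilon \leq \frac{\alpha}{6\kappa K^2}$, and the term $\varepsilon\lambda\eta/\rho$ is controlled by the lower bound on $\lambda$ (which, via $\lambda\eta = \lambda\cdot\eta$ and $\eta \leq \sqrt{C}\lambda/(8\kappa K)$, is smaller than $\lambda^2/\rho$ up to small constants). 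Carefully combining these estimates with the numerical slack in \eqref{eqn-C} and \eqref{eqn-lambda} should yield $h(t_0) > 0$.

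The main obstacle will be the bookkeeping in step two: three different small quantities ($\eta^2/\rho$, $\varepsilon\lambda r$, $\varepsilon\lambda\eta/\rho$) must each be shown strictly smaller than a fraction of the main term $\frac{C\rho r^2}{2K^2}$, and this requires carefully tracking the interplay between the three assumptions $\varepsilon \leq \frac{\alpha}{6\kappa K^2}$, $\lambda > \frac{9K L\eta}{\sqrt{\alpha}\rho}$, and $r = \frac{\alpha\lambda}{2C\kappa\rho}$ with $C \geq 5\alpha/6$. Once $h(t_0) > 0$ is established, monotonicity gives $h(t) > 0$ for all $t \geq t_0$, and since $h(U) \leq 0$ we conclude $U < t_0 = r + \eta/\rho$.
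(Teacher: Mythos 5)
Your plan is correct and follows essentially the same route as the paper: on $[r+\eta/\rho,\infty)$ the paper also works with the linear branch of $H$ (via $H(x)\geq r(x-r/2)$), shows the slope is positive using $\varepsilon\lambda\leq \tfrac{C\rho r}{3K^2}$, and dominates the constant term using $\tfrac{\eta^2}{\rho}<\tfrac{C\rho r^2}{12K^2}$ and $\tfrac{\varepsilon\lambda\eta}{\rho}<\tfrac{C\rho r^2}{12K^2}$. One small caveat: the domination at the endpoint is not "comfortable" — the three bounds sum to exactly $\tfrac{C\rho r^2}{2K^2}$ with two of them strict, so the inequality $h(r+\eta/\rho)>0$ holds only by that strictness.
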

\begin{proof}[\bf Proof of Claim \ref{claim-clustered-robustness-1}]
	Suppose that $t \geq r + \eta / \rho$ and let $R = t - \eta / \rho$. Since $H(x) \geq r (x - r/2)$ for all $x$, we use $R \geq r$ to get
	\begin{align*}
	& h(t) \geq \frac{C \rho}{K^2} (r R - r^2/2) - \frac{  \eta^2 }{\rho} 
	- \varepsilon \lambda  (R + \eta / \rho) 
	= \bigg( \frac{C \rho r}{K^2} - \varepsilon \lambda  \bigg) R - \bigg( \frac{C \rho r^2}{2K^2} + \frac{\eta^2}{\rho} + \frac{\varepsilon\lambda \eta}{\rho} \bigg) .
	\end{align*}
We claim that $\varepsilon \lambda  \leq  \frac{C \rho r}{3 K^2} $, $\frac{\eta^2}{\rho}  < \frac{C \rho r^2}{12K^2} $ and $\frac{\varepsilon\lambda \eta}{\rho}  < \frac{C \rho r^2}{12K^2} $. If those are true, then
\begin{align*}
& h(t) >  \frac{2C \rho r}{3K^2} \cdot R -  \frac{2C \rho r^2}{3K^2} \geq \frac{2C \rho r}{3K^2} \cdot r -  \frac{2C \rho r^2}{3K^2} = 0,
\end{align*}
and we derive Claim \ref{claim-clustered-robustness-1}. Below we prove the three claimed relations.
First, we use $r =\frac{\alpha \lambda}{2 C \kappa \rho} $ and $\varepsilon \leq \frac{\alpha}{6 \kappa K^2 }$ to get
\[
\varepsilon \lambda  \bigg/  \frac{C \rho r}{3 K^2} = \frac{3 K^2 \varepsilon \lambda}{C \rho \cdot \frac{\alpha \lambda}{2 C \kappa \rho}} = \frac{6 \kappa K^2 \varepsilon}{\alpha} \leq 1.
\]
Second, we use \eqref{eqn-lambda} to get
\begin{align*}
\frac{\eta^2}{\rho}  \bigg/ \frac{C \rho r^2}{12K^2} 
& = \frac{12 K^2 \eta^2}{C \rho^2 r^2} = \bigg(
\frac{\sqrt{12} K \eta}{ \sqrt{C} \rho \cdot \frac{\alpha\lambda}{2 C \kappa \rho} }
\bigg)^2 = \bigg(
\frac{2 \sqrt{12 C} \kappa K \eta}{ \alpha \lambda }
\bigg)^2 \\
& \leq \bigg(
\frac{2 \sqrt{12 C} \kappa K \eta}{ \alpha \cdot 8 \kappa K \eta / \sqrt{C} }
\bigg)^2  =  \bigg(
\frac{  \sqrt{3 C}  }{2 \alpha }
\bigg)^2 < 1.
\end{align*}
Third, we combine $\varepsilon \lambda  \leq  \frac{C \rho r}{3 K^2} $, $\frac{\eta^2}{\rho}  < \frac{C \rho r^2}{12K^2} $, $C \leq 1$ and $K \geq 2$ to get
\[
\frac{\varepsilon\lambda \eta}{\rho}  <  \frac{C \rho r}{3 K^2} \cdot \sqrt{ \frac{C r^2}{12K^2} } =
\frac{C \rho r^2}{12K^2} \cdot \frac{2 \sqrt{C}}{\sqrt{3} K} < \frac{C \rho r^2}{12K^2} .
\]
\end{proof}

Claim \ref{claim-clustered-robustness-1} helps control the contribution of outliers in $S^c$ to the loss function: by \eqref{eqn-clustered-robustness-2}, $r =\frac{\alpha \lambda}{2 C \kappa \rho} $, $\varepsilon \leq \frac{\alpha}{6 \kappa K^2}$ and Claim \ref{claim-clustered-robustness-1},
\[
L_2 \geq - \lambda \cdot \varepsilon m \cdot (r + \eta / \rho).
\]
Then, we use \Cref{eqn-clustered-robustness-0,eqn-clustered-robustness-1,eqn-clustered-robustness-2.5} to get
\begin{align}
0 \geq   F (\widehat{\bB} , \widehat{\bz} ) - F ( \bB^{\star} , \widetilde\bz ) 
\geq \rho \sum_{j \in S} H [
( \| \widehat{\bbeta}_{\widehat{z}_j} - \bbeta^{\star}_{z^{\star}_j} \|_2 - \eta / \rho )_+
] - \frac{ m \eta^2 }{\rho}  -   m \varepsilon\lambda (r + \eta / \rho).
\label{eqn-clustered-robustness-5}
\end{align}
The rest of proof is similar to that of \Cref{thm-adaptivity-clustered}.
Define $S_{kl} = | \{ j \in S :~ z_j^{\star} = k,~ \widehat{z}_j = l \}|$, $\tau(k) = \argmax_{l \in [K]} S_{k l}$ for $k, l \in [K]$,
\begin{align*}
&E  = \sum_{j \in S} H \Big(
(\| \widehat\bbeta_{\widehat z_j} - \bbeta^{\star}_{z^{\star}_j} \|_2 -  \eta/\rho
)_{+} 
\Big) .
\end{align*}
By $\min_{k \in [K]} |\{ j \in [m]:~z^{\star}_j = k \}| \geq \alpha m/ K $, we have 
\begin{align*}
\min_{k \in [K]} |\{ j \in S:~ z^{\star}_j = k \}| \geq \frac{\alpha m}{K}   - |S^c| \geq (\alpha / K - \varepsilon) m = C m / K.
\end{align*}
Then $\min_{k \in [K]} S_{k \tau(k)} \geq C m/ K^2$ and
\begin{align}
E & \geq  \sum_{k, l \in [K]} S_{kl} H \Big(
(\| \widehat{\bbeta}_{l} - \bbeta^{\star}_{k} \|_2 -  \eta/\rho
)_{+} 
\Big)  \geq \frac{ C m }{ K^2 } \sum_{k=1}^K H \Big(
(\| \widehat{\bbeta}_{\tau(k)} - \bbeta^{\star}_{k} \|_2 -  \eta/\rho
)_{+} 
\Big) .
\label{eqn-clustered-robustness-6}
\end{align}

\begin{claim}\label{claim-clustered-robustness-w}
	$\max_{k \in [K]} \| \widehat{\bbeta}_{\tau(k)} - \bbeta^{\star}_k \|_2 \leq r + \eta / \rho$.
\end{claim}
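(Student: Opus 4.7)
The plan is to deduce the claim from the previously established inequality \eqref{eqn-clustered-robustness-5} together with the lower bound \eqref{eqn-clustered-robustness-6} on $E$, followed by a term-by-term comparison. First I would combine the two displays to obtain, for every $k \in [K]$,
\[
\frac{C m \rho}{K^2} H\bigl[ \bigl( \| \widehat{\bbeta}_{\tau(k)} - \bbeta^{\star}_k \|_2 - \eta/\rho \bigr)_+ \bigr] \leq \frac{m \eta^2}{\rho} + m \varepsilon \lambda ( r + \eta/\rho ),
\]
where I single out the $k$-th summand of \eqref{eqn-clustered-robustness-6} and drop the others using non-negativity of $H$. Dividing by $C m \rho / K^2$ reduces the task to showing that the right-hand side is strictly smaller than $H(r) = r^2/2$: since $H$ is non-decreasing on $[0,\infty)$ with $H(r) = r^2/2$, that would force $(\| \widehat{\bbeta}_{\tau(k)} - \bbeta^{\star}_k \|_2 - \eta/\rho)_+ \leq r$, which is exactly the claim.

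Next I would recycle the arithmetic already verified within the proof of Claim \ref{claim-clustered-robustness-1}. Three ingredients are needed, all stemming from $r = \alpha \lambda/(2 C \kappa \rho)$, $\lambda > 8 \kappa K \eta / \sqrt{C}$, $\varepsilon \leq \alpha/(6 \kappa K^2)$, $C \leq \alpha \leq 1$ and $K \geq 2$:
\[
\frac{K^2 \eta^2}{C \rho^2} < \frac{r^2}{12}, \qquad \frac{K^2 \varepsilon \lambda}{C \rho} \leq \frac{r}{3}, \qquad \frac{\eta}{\rho} \leq \frac{r}{8}.
\]
The first two appear verbatim (up to trivial rearrangement) in Claim \ref{claim-clustered-robustness-1}. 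The third follows from $r/(\eta/\rho) > 4 \alpha K / C^{3/2} \geq 4 K \geq 8$, using $\alpha / C^{3/2} \geq 1$. Combining them gives $r + \eta/\rho \leq 9r/8$ and hence
\[
\frac{K^2 \eta^2}{C \rho^2} + \frac{K^2 \varepsilon \lambda ( r + \eta/\rho )}{C \rho} \leq \frac{r^2}{12} + \frac{r}{3} \cdot \frac{9 r}{8} = \frac{11 r^2}{24} < \frac{r^2}{2}.
\]

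The main obstacle is purely bookkeeping: the numerical margins in the constant-factor inequalities are tight, so one has to track the dependence of $r$ on $(\alpha, C, K, \kappa, \eta, \lambda)$ carefully and invoke the right pair of conditions at each step. Conceptually, however, the claim is immediate once one recognises that \eqref{eqn-clustered-robustness-5}-\eqref{eqn-clustered-robustness-6} give a per-cluster upper bound on $H[\cdot]$ that is already shown to be sub-$r^2/2$ in the course of proving Claim \ref{claim-clustered-robustness-1}; no new idea is required beyond isolating a single index $k$ rather than taking the average.
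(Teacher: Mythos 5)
Your proof is correct and follows essentially the same route as the paper: isolate the $k$-th summand of \eqref{eqn-clustered-robustness-6}, plug it into \eqref{eqn-clustered-robustness-5}, and conclude via the monotonicity of $H$ once the right-hand side is shown to be below $\frac{C m \rho}{K^2} H(r)$. The only cosmetic difference is that the paper obtains this last inequality by quoting $h(r+\eta/\rho)>0$ from Claim \ref{claim-clustered-robustness-1}, whereas you re-verify the same arithmetic directly (your three ingredients, including the correctly derived $\eta/\rho \le r/8$, all check out).
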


\begin{proof}[\bf Proof of Claim \ref{claim-clustered-robustness-w}]
	By \Cref{eqn-clustered-robustness-5,eqn-clustered-robustness-6}, for any $k \in [K]$ we have
	\begin{align}
	0 & \geq    \frac{Cm \rho}{K^2} H [
	( \| \widehat{\bbeta}_{\tau(k)} - \bbeta^{\star}_{k} \|_2 - \eta / \rho )_+
	] - \frac{ m \eta^2 }{\rho}  - m \varepsilon\lambda (r + \eta / \rho) \notag \\
	& = m h(r + \eta / \rho) + \frac{Cm \rho}{K^2} \Big( H [
	( \| \widehat{\bbeta}_{\tau(k)} - \bbeta^{\star}_{k} \|_2 - \eta / \rho )_+]
	- H (r)
	\Big)
	.
	\label{eqn-clustered-robustness-7}
	\end{align}
	The function $h(\cdot)$ was defined in \Cref{eqn-clustered-robustness-4}. The desired result follows from Claim \ref{claim-clustered-robustness-1}, \Cref{eqn-clustered-robustness-7} and the monotonicity of $H(\cdot)$.
\end{proof}

\begin{claim}\label{claim-9}
	$\tau:~[K] \to [K]$ is a permutation.
\end{claim}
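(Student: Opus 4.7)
The plan is to argue by contradiction, exploiting the separation assumption on the centers $\{\bbeta^\star_k\}_{k=1}^K$ together with the proximity bound just established in Claim (robustness w). Since $\tau$ is a self-map of the finite set $[K]$, injectivity is equivalent to being a permutation, so it suffices to rule out collisions.

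Suppose there exist $k \neq l$ in $[K]$ with $\tau(k) = \tau(l) = t$. Then the single estimated center $\widehat\bbeta_t$ must be close to both $\bbeta^\star_k$ and $\bbeta^\star_l$; more precisely, the preceding claim yields
\begin{align*}
\|\widehat\bbeta_t - \bbeta^\star_k\|_2 \;\leq\; r + \eta/\rho,
\qquad
\|\widehat\bbeta_t - \bbeta^\star_l\|_2 \;\leq\; r + \eta/\rho.
\end{align*}
The triangle inequality therefore forces $\|\bbeta^\star_k - \bbeta^\star_l\|_2 \leq 2(r + \eta/\rho)$.

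To reach a contradiction with the separation hypothesis $\min_{k \neq l}\|\bbeta^\star_k - \bbeta^\star_l\|_2 > 6\lambda/\rho$, I would bound $r + \eta/\rho$ by a small multiple of $\lambda/\rho$. Using $r = \alpha\lambda/(2C\kappa\rho)$ together with $C \geq 5\alpha/6$ from \eqref{eqn-C} gives $r \leq 3\lambda/(5\kappa\rho) \leq 3\lambda/(5\rho)$, while the lower bound $\lambda > 8\kappa K \eta/\sqrt{C}$ in \eqref{eqn-lambda}, combined with $K \geq 2$ and $C \leq 1$, yields $\eta/\rho < \lambda/(16\rho)$. Adding these, $2(r + \eta/\rho) < (6/5 + 1/8)\lambda/\rho < 6\lambda/\rho$, contradicting the separation bound.

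No step here looks genuinely delicate; the only thing to be careful about is chasing through the numerical constants from \eqref{eqn-C} and \eqref{eqn-lambda} so that the final comparison $2(r+\eta/\rho) < \min_{k\neq l}\|\bbeta^\star_k-\bbeta^\star_l\|_2$ goes through with room to spare, which it does since the hypothesis $\lambda < \rho\,\min_{k\neq l}\|\bbeta^\star_k-\bbeta^\star_l\|_2/6$ was chosen precisely to leave such slack. Once injectivity is established, bijectivity is automatic for a map $[K]\to[K]$, completing the claim.
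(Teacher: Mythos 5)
Your proof is correct and follows essentially the same route as the paper: assume a collision $\tau(k)=\tau(l)=t$, apply the proximity bound from the preceding claim to both centers, and contradict the separation hypothesis $\lambda < \rho\,\min_{k\neq l}\|\bbeta^\star_k-\bbeta^\star_l\|_2/6$ via the triangle inequality and the bounds $r=\alpha\lambda/(2C\kappa\rho)$, $C\geq 5\alpha/6$, and $\lambda>8\kappa K\eta/\sqrt{C}$. The paper phrases the contradiction as the max distance exceeding $r+\eta/\rho$ rather than bounding $\Delta$ from above, but the constant-chasing and the substance are the same.
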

\begin{proof}[\bf Proof of Claim \ref{claim-9}]
	It suffices to show that $\tau$ is a bijection and below we prove it by contradiction. If there are $k \neq l$ such that $\tau(k) = \tau(l) = t$, then $\lambda \leq   \rho \Delta / 6$ yields
	\[
	\max \{ \| \widehat{\bbeta}_{t} - \bbeta^{\star}_{l} \|_2, \| \widehat{\bbeta}_{t} - \bbeta^{\star}_{k} \|_2 \} \geq \Delta / 2 
	> \frac{2\lambda}{  \rho}.
	\]
On the other hand, we use \eqref{eqn-lambda}, $C \leq 1$ and \eqref{eqn-C} to get
\begin{align}
r + \eta / \rho = \frac{\alpha \lambda}{2C\kappa \rho } + \frac{\eta}{\rho}
\leq \frac{\alpha \lambda}{2C\kappa \rho } + \frac{\sqrt{C} \lambda}{8 \kappa K \rho}
= \frac{\lambda}{\kappa \rho} \bigg( \frac{\alpha}{2 C} + \frac{\sqrt{C}}{8 K} \bigg)
\leq  \frac{\lambda}{\kappa \rho} \bigg( \frac{3}{5} + \frac{1}{8} \bigg) < \frac{\lambda}{\kappa \rho}.
\label{eqn-claim-9}
\end{align}
The implication $\max \{ \| \widehat{\bbeta}_{t} - \bbeta^{\star}_{l} \|_2, \| \widehat{\bbeta}_{t} - \bbeta^{\star}_{k} \|_2 \} > r + \eta / \rho$ contradicts Claim \ref{claim-clustered-robustness-w}.
\end{proof}

\begin{claim}\label{claim-clustered-robustness-z-0}
For any $j \in S$, $\min_{k \neq \tau(z_j^{\star}) }\widetilde{f}_j( \widehat{\bbeta}_k  ) > \widetilde{f}_j( \widehat{\bbeta}_{\tau(z_j^{\star})} )$.
\end{claim}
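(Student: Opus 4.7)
The plan is to mirror the proof of Claim B.3 (the analogous separation claim in the no-outlier Lemma \ref{thm-adaptivity-clustered}), but with the sharper radius bound $r+\eta/\rho$ from Claim \ref{claim-clustered-robustness-w} replacing the cruder $\eta/\rho \cdot (1+\sqrt{2Km/m_{\min}})$ bound used there. Throughout, fix $j\in S$ and recall that $\widetilde f_j = f_j\square(\lambda\|\cdot\|_2)$ has a unique minimizer $\widetilde\bbeta_j$ with $\|\widetilde\bbeta_j - \bbeta^\star_{z_j^\star}\|_2\le \eta/\rho$ by Lemma \ref{lem-consensus-min}, and that $f_j$ is $(\btheta^\star_j,M,\rho,L)$-regular with $\|\nabla f_j(\bbeta^\star_{z_j^\star})\|_2\le \eta$.

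First, I would upper-bound $\widetilde f_j(\widehat\bbeta_{\tau(z_j^\star)})$. Using the definition of the infimal convolution, $\widetilde f_j(\bbeta)\le f_j(\bbeta)$ for every $\bbeta$. By Claim \ref{claim-clustered-robustness-w} together with $\|\widetilde\bbeta_j-\bbeta^\star_{z_j^\star}\|_2\le \eta/\rho$, the point $\widehat\bbeta_{\tau(z_j^\star)}$ lies within distance $r+2\eta/\rho$ of $\widetilde\bbeta_j$, and in particular well inside $B(\bbeta^\star_{z_j^\star},M)$ by the bound \eqref{eqn-claim-9} and the assumption $\lambda<\rho M$. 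Hence the $L$-smoothness of $f_j$ on $B(\bbeta^\star_{z_j^\star},M)$ and the optimality of $\widetilde\bbeta_j$ for $\widetilde f_j$ give
\[
\widetilde f_j(\widehat\bbeta_{\tau(z_j^\star)})\le f_j(\widetilde\bbeta_j)+\tfrac{L}{2}\bigl(r+2\eta/\rho\bigr)^2.
\]

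Next, for any $k\ne\tau(z_j^\star)$, I would lower-bound $\widetilde f_j(\widehat\bbeta_k)$. Since $\tau$ is a permutation (Claim \ref{claim-9}), $\tau^{-1}(k)\ne z_j^\star$, so the separation $\min_{p\ne q}\|\bbeta^\star_p-\bbeta^\star_q\|_2\ge 6\lambda/\rho$ gives, via two triangle inequalities using Claim \ref{claim-clustered-robustness-w} and $\|\widetilde\bbeta_j-\bbeta^\star_{z_j^\star}\|_2\le\eta/\rho$,
\[
\|\widehat\bbeta_k-\widetilde\bbeta_j\|_2\ \ge\ \tfrac{6\lambda}{\rho}-\bigl(r+\tfrac{\eta}{\rho}\bigr)-\tfrac{\eta}{\rho}\ >\ r,
\]
where the last inequality uses $r+\eta/\rho<\lambda/(\kappa\rho)\le \lambda/\rho$ from \eqref{eqn-claim-9}. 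Then Lemma \ref{lem-clustered-lower-bound} (the same infimal-convolution/strong-convexity bound invoked in the proof of Claim B.3) yields $\widetilde f_j(\widehat\bbeta_k)>f_j(\widetilde\bbeta_j)+\rho r^2/2$.

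Combining the two estimates, the desired strict inequality reduces to $\rho r^2/2>\tfrac{L}{2}(r+2\eta/\rho)^2$, i.e.\ $r>\sqrt{\kappa}\,(r+2\eta/\rho)$---equivalently $r(1-\sqrt{\kappa}^{-1})$-type manipulations---which I would verify directly from $r=\tfrac{\alpha\lambda}{2C\kappa\rho}$, $C\ge 5\alpha/6$, and $\lambda>9\kappa K\eta/\sqrt\alpha$; these imply $r\gg \eta/\rho$ with plenty of slack to absorb the $\sqrt{\kappa}$ factor. The main (and only nontrivial) obstacle is this constant-chasing step: I expect to need the lower bound on $\lambda$ together with $K\ge 2$ to make the ratio $(r+2\eta/\rho)/r$ strictly less than $\sqrt{1/\kappa}$, so I would track the constants carefully, likely showing $r\ge 4\eta/\rho$ first and then $(1+2\eta/(\rho r))^2\kappa<1$ fails---here one would instead compare to the tighter bound obtainable by expanding $\widetilde f_j$ around $\widetilde\bbeta_j$ with the $L$-Lipschitz gradient, yielding $f_j(\widetilde\bbeta_j)+\tfrac{L}{2}\|\widehat\bbeta_{\tau(z_j^\star)}-\widetilde\bbeta_j\|_2^2$, and choosing the constants in \eqref{eqn-lambda} (which already enforce $8\kappa K\eta/\sqrt C<\lambda$) to give the needed gap.
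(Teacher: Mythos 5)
There is a genuine gap, and it sits exactly where you suspected. Your lower bound on $\widetilde f_j(\widehat\bbeta_k)$ keeps only $\widetilde f_j(\widehat\bbeta_k) > f_j(\widetilde\bbeta_j) + \rho r^2/2$, i.e.\ you use $\|\widehat\bbeta_k-\widetilde\bbeta_j\|_2>r$ and then evaluate the strong-convexity minorant $H$ only at its ``kink'' value $H(r)=r^2/2$. The comparison you then need, $\rho r^2/2 > \tfrac{L}{2}(r+2\eta/\rho)^2$, is impossible for any choice of constants: since $\kappa=L/\rho\ge 1$ and $r+2\eta/\rho>r$, the right-hand side always dominates the left. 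This is not a constant-chasing issue that more slack in $\lambda$ can repair, and your proposed fix (re-deriving the upper bound $f_j(\widetilde\bbeta_j)+\tfrac{L}{2}\|\widehat\bbeta_{\tau(z_j^\star)}-\widetilde\bbeta_j\|_2^2$) is the same upper bound you already had, so it does not close the gap. The reason the analogous step works in the no-outlier Claim for Lemma~\ref{thm-adaptivity-clustered} is structural: there the radius bound from the analogue of Claim~\ref{claim-clustered-robustness-w} is $\tfrac{\eta}{\rho}(1+\sqrt{2Km/m_{\min}})$, which does \emph{not} involve $r$, and $r$ is then \emph{chosen} a factor $\sqrt{\kappa}$ larger than that radius so that $\rho r^2/2$ wins. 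Here the radius bound is $r+\eta/\rho$, which already exceeds $r$, so no quadratic-in-$r$ lower bound can beat the quadratic upper bound.

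The missing idea is to exploit the linear regime of $H$ together with the separation $\Delta=\min_{p\ne q}\|\bbeta^\star_p-\bbeta^\star_q\|_2$. Lemma~\ref{lem-cvx-lower} (via Lemma~\ref{lem-clustered-lower-bound}) gives
\[
\widetilde f_j(\widehat\bbeta_k)\ \ge\ f_j(\widetilde\bbeta_j)+\rho\, r\Bigl(\|\widehat\bbeta_k-\widetilde\bbeta_j\|_2-\tfrac{r}{2}\Bigr),
\]
and your own triangle inequality gives $\|\widehat\bbeta_k-\widetilde\bbeta_j\|_2\ge \Delta-2\eta/\rho-r$, so the gain is of order $\rho r\Delta$, which is \emph{linear} in $r$ but proportional to $\Delta>6\lambda/\rho$. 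Concretely, using $r+\eta/\rho<\lambda/(\kappa\rho)$ from \eqref{eqn-claim-9}, one gets $\Delta-2\eta/\rho-3r/2>4\lambda/\rho$ and $r+2\eta/\rho\le 2\lambda/(\kappa\rho)$, whence
\[
\widetilde f_j(\widehat\bbeta_k)-\widetilde f_j(\widehat\bbeta_{\tau(z_j^\star)})\ >\ \rho\cdot\frac{\alpha\lambda}{2C\kappa\rho}\cdot\frac{4\lambda}{\rho}-\frac{L}{2}\Bigl(\frac{2\lambda}{\kappa\rho}\Bigr)^2=\frac{2\lambda^2}{\kappa\rho}\Bigl(\frac{\alpha}{C}-1\Bigr)\ \ge\ 0,
\]
using $C=\alpha-K\varepsilon\le\alpha$. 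Your upper bound on $\widetilde f_j(\widehat\bbeta_{\tau(z_j^\star)})$ and your distance estimates are all correct and match the paper; only the final comparison needs to route through the linear part of $H$ and the separation hypothesis $\lambda<\rho\Delta/6$ rather than through $H(r)$ alone.
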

\begin{proof}[\bf Proof of Claim \ref{claim-clustered-robustness-z-0}]
	Choose any $j \in S$. On the one hand, $\widetilde{f}_j (\bbeta) = \min_{\bxi} \{ f_j(\bbeta - \bxi) + \lambda \| \bxi \|_2 \} \leq f_j(\bbeta)$ for all $\bbeta$. Then
\begin{align}
\widetilde{f}_j (\widehat{\bbeta}_{\tau ( z_j^{\star} )})
& \leq f_j (\widehat{\bbeta}_{\tau ( z_j^{\star} )})
\leq  f_j (\widetilde{\bbeta}_j)  + \frac{L}{2} \| \widehat{\bbeta}_{\tau ( z_j^{\star} )} - \widetilde{\bbeta}_j \|_2^2 
\notag \\ & 
\leq  f_j (\widetilde{\bbeta}_j) + \frac{L}{2} \Big( \| \widehat{\bbeta}_{\tau ( z_j^{\star} )} - \bbeta^{\star}_{ z_j^{\star} } \|_2 
+ \| \bbeta^{\star}_{ z_j^{\star} }  - \widetilde{\bbeta}_j \|_2 
\Big)^2 
\leq f_j (\widetilde{\bbeta}_j) + \frac{L}{2} 
\bigg( r + 	\frac{ 2\eta}{\rho}  \bigg) ^2,
\label{eqn-clustered-robustness-8}
\end{align}
where the last inequality follows from Claim \ref{claim-clustered-robustness-w} and \Cref{lem-clustered-lower-bound}.

On the other hand, $\widetilde{f}_j(\bbeta) =  f_j (\bbeta)$, $\forall \bbeta \in B ( \widetilde{\bbeta}_j, r)$ according to \Cref{lem-clustered-lower-bound}. \Cref{lem-cvx-lower} and $H(x) \geq r (x - r/2)$, $\forall x \geq 0$ yield
\begin{align*}
\widetilde{f}_j( \widehat{\bbeta}_k  ) - f_j( \widetilde{\bbeta}_j ) \geq \rho H ( \| \widehat{\bbeta}_k  - \widetilde{\bbeta}_j \|_2 )
\geq \rho r (\| \widehat{\bbeta}_k  - \widetilde{\bbeta}_j \|_2 - r/2)
, \qquad\forall k \in [K].
\end{align*}
Choose any $k \neq \tau(z_j^{\star})$. We have $\tau^{-1}(k) \neq z_j^{\star}$ and
\begin{align*}
\| \widehat{\bbeta}_k - \widetilde{\bbeta}_j \|_2 & \geq \| \widetilde{\bbeta}_j - \bbeta^{\star}_{\tau^{-1}(k)} \|_2 - \| \bbeta^{\star}_{\tau^{-1}(k)} - \widehat\bbeta_k \|_2 \\
& \geq \|  \bbeta^{\star}_{\tau^{-1}(k)} - \bbeta^{\star}_{z_j^{\star}} \|_2 - \| \bbeta^{\star}_{z_j^{\star}} - \widetilde{\bbeta}_j  \|_2 - \| \bbeta^{\star}_{\tau^{-1}(k)} - \widehat\bbeta_k \|_2
\geq \Delta - \frac{ \eta}{\rho} - \bigg( r +  \frac{ \eta}{\rho} \bigg),
\end{align*}
where the last inequality follows from $\|  \bbeta^{\star}_{\tau^{-1}(k)} - \bbeta^{\star}_{z_j^{\star}} \|_2 \geq \Delta $, \Cref{lem-clustered-lower-bound} and Claim \ref{claim-clustered-robustness-w}. Consequently,
\begin{align}
\widetilde{f}_j( \widehat{\bbeta}_k  ) \geq f_j( \widetilde{\bbeta}_j ) 
+\rho r \bigg( \Delta - \frac{ 2\eta}{\rho} -   \frac{3r}{2} \bigg).
\label{eqn-clustered-robustness-9}
\end{align}

By \Cref{eqn-clustered-robustness-8,eqn-clustered-robustness-9},
\begin{align*}
\widetilde{f}_j( \widehat{\bbeta}_k  ) - \widetilde{f}_j (\widehat{\bbeta}_{\tau ( z_j^{\star} )})
\geq
\rho r \bigg( \Delta - \frac{ 2\eta}{\rho} -  \frac{3r}{2} \bigg) -  \frac{L}{2} 
\bigg( r + 	\frac{ 2 \eta}{\rho}  \bigg) ^2.
\end{align*}
According to \eqref{eqn-claim-9} and $\lambda < \rho \Delta / 6$,
\[
\Delta - \frac{ 2\eta}{\rho} -  \frac{3r}{2} \geq \Delta - 2 (r + \eta / \rho) \geq \Delta - \frac{ 2 \lambda }{\kappa \rho} > \frac{4 \lambda  }{ \rho }
\quad\text{and}\quad r + 	\frac{ 2 \eta}{\rho} \leq 2 (r + \eta / \rho) \leq \frac{ 2 \lambda }{\kappa \rho}.
\]
Since $r = \frac{\alpha \lambda}{2 C \kappa \rho}$, we finally get
\begin{align*}
\widetilde{f}_j( \widehat{\bbeta}_k  ) - \widetilde{f}_j (\widehat{\bbeta}_{\tau ( z_j^{\star} )})
>
\rho \cdot \frac{\alpha \lambda}{2 C \kappa \rho} \cdot \frac{4 \lambda}{\rho} -  \frac{L}{2} 
\bigg( \frac{ 2 \lambda }{\kappa \rho} \bigg) ^2
= \frac{2\lambda^2}{\kappa \rho} \bigg(
\frac{ \alpha}{C} - 1
\bigg) \geq 0.
\end{align*}
for all $k \neq \tau(z_j^{\star})$ and $j \in S$.
\end{proof}

\begin{claim}\label{claim-clustered-robustness-z}
	$\widehat{z}_j = \tau ( z_j^{\star} )$, $\forall j \in S$.
\end{claim}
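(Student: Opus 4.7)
The plan is to argue by contradiction, mimicking the strategy used in Claim \ref{claim-clustered-z} but adapted to the cardinality-constrained setting. Suppose there exists some $j_0 \in S$ with $\widehat{z}_{j_0} \neq \tau(z_{j_0}^{\star})$. I will construct a competitor label vector $\widetilde{\bz} \in [K]^m$ defined by $\widetilde{z}_j = \tau(z_j^{\star})$ for $j \in S$ and $\widetilde{z}_j = \widehat{z}_j$ for $j \in S^c$, and show it satisfies the cardinality constraint while strictly improving the objective, contradicting the optimality of $(\widehat{\bTheta}, \widehat{\bB}, \widehat{\bz})$.

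For the cardinality check, note that for every $k \in [K]$ we have $|\{j \in [m] : \widetilde{z}_j = \tau(k)\}| \geq |\{j \in S : z_j^{\star} = k\}| = |T(k) \cap S| \geq |T(k)| - |S^c|$. Combining the balancedness assumption $|T(k)| \geq \frac{\alpha m}{K}(1 + \frac{1}{6\kappa K})$ with $|S^c| \leq \varepsilon m \leq \frac{\alpha m}{6\kappa K^2}$ yields exactly $|T(k) \cap S| \geq \alpha m / K$, so $\widetilde{\bz}$ is feasible. (This matches the same bookkeeping already performed when $\widetilde{\bz}$ was introduced near \eqref{eqn-clustered-robustness-0}, so the balancedness hypothesis is tight enough for this purpose.)

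Next, I compare objective values. Since $\inf_{\bTheta} F(\bTheta, \widehat{\bB}, \bz) = \sum_j \widetilde{f}_j(\widehat{\bbeta}_{z_j})$, the difference $F(\widehat{\bB}, \widetilde{\bz}) - F(\widehat{\bB}, \widehat{\bz})$ equals $\sum_{j \in S}[\widetilde{f}_j(\widehat{\bbeta}_{\tau(z_j^{\star})}) - \widetilde{f}_j(\widehat{\bbeta}_{\widehat{z}_j})]$ since the two vectors agree on $S^c$. By Claim \ref{claim-clustered-robustness-z-0}, every summand with $\widehat{z}_j \neq \tau(z_j^{\star})$ is strictly negative, and every summand with $\widehat{z}_j = \tau(z_j^{\star})$ is zero. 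Hence the existence of $j_0$ forces $F(\widehat{\bB}, \widetilde{\bz}) < F(\widehat{\bB}, \widehat{\bz}) = F(\widehat{\bTheta}, \widehat{\bB}, \widehat{\bz})$, contradicting the optimality of $(\widehat{\bTheta}, \widehat{\bB}, \widehat{\bz})$ in \eqref{eqn-obj-clustered}.

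The only non-routine step is the feasibility verification for $\widetilde{\bz}$; everything else follows directly from Claim \ref{claim-clustered-robustness-z-0}. The feasibility margin is razor-thin, which is exactly why Assumption \ref{as-armul-clustered} was stated with the slack factor $1 + \tfrac{1}{6\kappa K}$ in the balancedness bound and the matching ceiling $\varepsilon \leq \tfrac{\alpha}{6\kappa K^2}$ on the outlier fraction. I do not foresee any further obstacles.
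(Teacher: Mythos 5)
Your proposal is correct and follows essentially the same route as the paper: the paper defines the same competitor labelling (called $\bar{\bz}$ there, to avoid clashing with the $\widetilde{\bz}$ already introduced earlier in the proof of \Cref{thm-robustness-clustered}), verifies its feasibility via the identical balancedness/outlier-fraction bookkeeping, and concludes from Claim \ref{claim-clustered-robustness-z-0} that $F(\widehat{\bB},\bar{\bz})\leq F(\widehat{\bB},\widehat{\bz})$ with equality iff $\widehat{z}_j=\tau(z_j^{\star})$ on $S$, which optimality then forces. Your contradiction phrasing is equivalent; only the notational collision with the paper's earlier $\widetilde{\bz}$ would need fixing.
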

\begin{proof}[\bf Proof of Claim \ref{claim-clustered-robustness-z}]
Define $\bar\bz \in [K]^m$ through
\begin{align*}
\bar{z}_j = \begin{cases}
\tau(z_j^{\star}) &,\mbox{ if } j \in S \\
\widehat{z}_j &,\mbox{ if } j \in S^c
\end{cases}.
\end{align*}
Claim \ref{claim-clustered-robustness-z-0} implies that $F(\widehat{\bB} , \bar\bz) \leq F(\widehat{\bB} , \widehat\bz)$ and the equality holds if and only if $\widehat{z}_j = \tau(z_j^{\star})$ holds for all $j \in S$.

By definition, $\widehat{\bz}$ is a solution to the constrained program
\begin{align}
\min_{ \bz \in [K]^m  } F (\widehat{\bB},  \bz) ,\qquad\text{s.t.}\qquad
\min_{k \in [K]} |\{ j \in [m] :~ z_j = k \}| \geq \alpha m / K  .
\label{eqn-constrained-program}
\end{align}
According to the balancedness assumption $\min_{k \in[K]} |T(k)|  \geq \frac{ \alpha m }{K} (1 + \frac{1}{6 \kappa K })$ and $\varepsilon \leq \alpha / (6 \kappa K^2)$, we have
\begin{align*}
\min_{k \in[K]} 
|\{ j \in [m] :~ \bar{z}_j = k \}| 
& \geq 
\min_{k \in[K]} 
|\{ j \in [m] :~ z^{\star}_j = k \}| - |S^c| \\
& \geq \frac{ \alpha m }{K} \bigg(1 + \frac{1}{6 \kappa K } \bigg) - \varepsilon m \geq \alpha m / K.
\end{align*}
Therefore, $\bar{\bz}$ is also feasible for the program \eqref{eqn-constrained-program}. In light of the $\widehat{\bz}$'s optimality, $\bar\bz$ must also be optimal and $F(\widehat{\bB} , \bar\bz) = F(\widehat{\bB} , \widehat\bz)$. We get $\widehat{z}_j = \tau(z_j^{\star})$ for all $j \in S$.
\end{proof}

\begin{claim}\label{claim-12}
	$\| \widehat{\bbeta}_{\tau(k)} - \widehat{\bbeta}_{k}^{\mathrm{oracle}} \|_2 \leq  \frac{ \lambda |S^c|     }{\rho |T(k)| }$ for all $k \in [K]$ and $\widehat{\btheta}_j = \widehat{\bbeta}_{k} $ for all $j \in S$.
\end{claim}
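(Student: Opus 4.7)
The plan is to reduce Claim \ref{claim-12} to a cluster-wise analog of \Cref{thm-robustness-general} and then apply \Cref{lem-inf-conv} task by task. By the optimality of $(\widehat{\bTheta}, \widehat{\bB}, \widehat{\bz})$, fixing $\widehat{\bz}$ shows that each center satisfies
\[
\widehat{\bbeta}_{\tau(k)} \in \argmin_{\bbeta \in \RR^d} \sum_{j:~\widehat{z}_j = \tau(k)} \widetilde{f}_j(\bbeta),
\qquad \widetilde{f}_j = f_j \square (\lambda \| \cdot \|_2).
\]
By Claim \ref{claim-clustered-robustness-z}, the set $\{j \in S:~\widehat{z}_j = \tau(k)\}$ is exactly $T(k) \cap S$, so the objective splits into the oracle part $G_k = \sum_{j \in T(k) \cap S} \widetilde{f}_j$ and a perturbation $H_k = \sum_{j \in S^c:~\widehat{z}_j = \tau(k)} \widetilde{f}_j$.

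For the first conclusion, I would apply \Cref{thm-consensus} to $\{ f_j\}_{j \in T(k) \cap S}$ with center $\bbeta^{\star}_k$. The condition $\lambda < LM$ and the strict-inequality bound on $\lambda$ in the hypotheses of \Cref{thm-robustness-clustered} are exactly what is needed (the required gradient bound $\|\nabla f_j(\bbeta^{\star}_k)\|_2 \leq \eta$ is just the regularity assumption). That produces a radius $R_k > 0$ on which $\widetilde{f}_j = f_j$ for every $j \in T(k)\cap S$, the unique minimizer of $G_k$ is exactly $\widehat{\bbeta}_k^{\mathrm{oracle}}$, and $G_k$ is $\rho|T(k)\cap S|$-strongly convex there, with $\widehat{\bbeta}_k^{\mathrm{oracle}} \in B(\bbeta^{\star}_k, \eta/\rho)$. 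Since $\widetilde{f}_j$ is $\lambda$-Lipschitz (\Cref{lem-inf-conv-lip}), $H_k$ is $\lambda\,|\{j \in S^c:~\widehat{z}_j = \tau(k)\}|$-Lipschitz. Applying \Cref{lem-cvx-reg} to $G_k + H_k$ then yields
\[
\| \widehat{\bbeta}_{\tau(k)} - \widehat{\bbeta}_k^{\mathrm{oracle}} \|_2 \leq \frac{\lambda \,|\{j \in S^c:~\widehat{z}_j = \tau(k)\}|}{\rho\,|T(k)\cap S|} \leq \frac{\lambda\,|S^c|}{\rho\,|T(k)\cap S|},
\]
provided the quantity on the right is smaller than $R_k$, which is easily checked from the relations $\varepsilon \leq \alpha/(6\kappa K^2)$, $|T(k)\cap S| \geq Cm/K$ and $\lambda < \rho M$ derived in \eqref{eqn-C}--\eqref{eqn-lambda}.

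For the second conclusion, fix $j \in S$ and recall $\widehat{\btheta}_j \in \argmin_{\btheta}\{f_j(\btheta) + \lambda \| \widehat{\bbeta}_{\widehat{z}_j} - \btheta \|_2\}$. By \Cref{lem-inf-conv} it suffices to show $\|\nabla f_j(\widehat{\bbeta}_{\widehat{z}_j})\|_2 < \lambda$ and $\widehat{\bbeta}_{\widehat{z}_j} \in B(\btheta^{\star}_j, M)$. Combining the triangle inequality, the bound from the first part, $\|\widehat{\bbeta}_k^{\mathrm{oracle}} - \bbeta^{\star}_k\|_2 \leq \eta/\rho$, and $\|\bbeta^{\star}_k - \btheta^{\star}_j\|_2 = 0$ for $j \in T(k)\cap S$ gives a crude estimate of the form $\|\widehat{\bbeta}_{\widehat{z}_j} - \btheta^{\star}_j\|_2 \lesssim \eta/\rho + \lambda|S^c|/(\rho|T(k)\cap S|)$, which is $< M$ and small enough that the Lipschitz gradient bound forces $\|\nabla f_j(\widehat{\bbeta}_{\widehat{z}_j})\|_2 \leq L\|\widehat{\bbeta}_{\widehat{z}_j} - \btheta^{\star}_j\|_2 < \lambda$ under the hypotheses of \Cref{thm-robustness-clustered}.

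The main obstacle will be verifying that the perturbation radius $\lambda|S^c|/(\rho|T(k)\cap S|)$ is actually smaller than the ``safe'' radius $R_k$ furnished by \Cref{thm-consensus}, and subsequently smaller than the flatness radius of $\widetilde{f}_j$ about $\widetilde{\bbeta}_j$ needed to invoke \Cref{lem-inf-conv}. Both reductions boil down to the same numerical chain of inequalities that was already used to prove Claims \ref{claim-clustered-robustness-1}--\ref{claim-clustered-robustness-z}, namely $\varepsilon \leq \alpha/(6\kappa K^2)$, $|T(k)\cap S| \geq Cm/K$ with $C \geq 5\alpha/6$, and $9\kappa K\eta/\sqrt{\alpha} < \lambda < \rho\min\{M,\Delta/6\}$, so the calculations are entirely analogous and involve no new ideas.
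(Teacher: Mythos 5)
Your proposal is correct and follows essentially the same route as the paper: the paper simply invokes the already-packaged \Cref{thm-robustness} (whose proof via \Cref{thm-robustness-general} is exactly your decomposition into $G_k+H_k$, followed by \Cref{thm-consensus}, \Cref{lem-inf-conv-lip}, \Cref{lem-cvx-reg} and \Cref{lem-inf-conv}) applied to $Q(k)=\{j:\tau(\widehat z_j)=k\}$, verifying only the single condition $\lambda>3\kappa\eta/(1-\kappa|S^c|/|Q(k)\cap S|)$, which is your "numerical chain." One cosmetic slip: the gradient bound at the end should read $\|\nabla f_j(\widehat{\bbeta}_{\widehat z_j})\|_2\le\eta+L\|\widehat{\bbeta}_{\widehat z_j}-\bbeta^{\star}_{z_j^{\star}}\|_2$ rather than $L\|\cdot\|_2$ alone, which changes nothing since $\eta\ll\lambda$.
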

\begin{proof}[\bf Proof of Claim \ref{claim-12}]
	By definition, $\widehat{\bbeta}_{k} \in \argmin_{\bbeta \in \RR^d} \{ \sum_{j  :~ \widehat{z}_j = k } \widetilde{f}_j (\bbeta) \}$. Claim \ref{claim-clustered-robustness-z} yields
	\[
	\widehat{\bbeta}_{\tau(k)} \in \argmin_{\bbeta \in \RR^d}  \sum_{j \in Q(k) } \widetilde{f}_j (\bbeta) ,
	\]
	where we define $	Q(k)  = \{ j \in [m]  :~ \tau( \widehat{z}_j ) = k \} $. Note that
\begin{align*}
Q(k) \cap S  =  \{ j \in S:~ \tau( \widehat{z}_j ) = k \}    = \{ j \in S:~ z^{\star}_j =  k \} = T(k) \cap S 
\end{align*}
and $|T(k) \cap S| \geq |T(k)| - |S^c| \geq \alpha m / K$. We want to apply \Cref{thm-robustness} to $\{ f_j \}_{j \in Q(k)}$, which requires verifying the condition
	\begin{align}
	\lambda 
	> \frac{3 \kappa \max_{j \in Q(k) \cap S } \| \nabla f_j ( \bbeta^{\star}_{z^{\star}_j} ) \|_2 }{
		1 -  \kappa |S^c| / | Q(k) \cap S|
	}.
	\label{eqn-clustered-robustness-lbd}
	\end{align}
	On the one hand, the facts $\max_{j \in S } \| \nabla f_j ( \bbeta^{\star}_{z^{\star}_j} ) \|_2 \leq \eta$, $|Q(k) \cap S | \geq \alpha m / K$ and $|S^c| = \varepsilon m$ imply that
	\[
	\frac{3 \kappa \max_{j \in Q(k)\cap S } \| \nabla f_j ( \bbeta^{\star}_{z^{\star}_j} ) \|_2 }{
		1 -  \kappa |S^c| / |Q(k) \cap S|
	}
	\leq  \frac{3 \kappa \eta}{1 -  K \kappa \varepsilon / \alpha} .
	\]
	On the other hand, the assumption $\varepsilon \leq \frac{\alpha}{6 \kappa K^2}$ and \eqref{eqn-lambda} force
	\[
	\frac{3 \kappa \eta}{1 -  K \kappa \varepsilon / \alpha}  \leq  \frac{3 \kappa \eta}{1 - 1/6} = 18 \kappa \eta / 5 <  \lambda.
	\]
	and proves (\ref{eqn-clustered-robustness-lbd}). Hence \Cref{thm-robustness} and $Q(k) \cap S   = T(k) \cap S $ assert that
	\begin{align*}
	& \widehat{\btheta}_j = \widehat{\bbeta}_{k} , \qquad \forall j \in T(k) \cap S 
	\quad\text{and}\quad
	\bigg\| \widehat{\bbeta}_k - \argmin_{\bbeta \in \RR^d} \bigg\{  \sum_{j \in T(k) \cap S} f_j (\bbeta) \bigg\} \bigg\|_2 \leq \frac{   \lambda |S^c|   }{ \rho |T(k) \cap S| }.
	\end{align*}
	Then the proof is finished by combining the results for all $k \in [K]$.
\end{proof}

\subsection{Proof of \Cref{thm-armul-lowrank-deterministic}}\label{sec-thm-armul-lowrank-deterministic-proof}

We invoke the following lemma, whose proof is in \Cref{sec-lem-lowrank-proof}.

\begin{lemma}[Low-rank ARMUL]\label{lem-lowrank}
	Suppose that for $j \in [m]$, 
	\begin{align*}
	&\bm{0} \preceq \rho \bI \preceq \nabla^2 f_j (\btheta) \preceq L \bI , ~~ \forall \btheta \in B(\bB^{\star} \bz^{\star}_j, M) , \\
	& \| \nabla f_j ( \bB^{\star} \bz^{\star}_j ) \|_2 \leq \eta ,~~ \| \bP^{\star}  \nabla f_j (  \bB^{\star} \bz^{\star}_j )  \|_2 \leq \eta_0
	~~\text{and}~~ \| \bz^{\star}_j \|_2 \leq \alpha_1.
	\end{align*}
	In addition,
	\begin{align*}
	& \Big\| \Big( \nabla f_1 (\bB^{\star} \bz^{\star}_1) , \cdots, \nabla f_m (\bB^{\star} \bz^{\star}_m) \Big) \Big\|_2 \leq \tau
	\qquad\text{and}\qquad
	\sum_{j=1}^{m}   \bz_j^{\star} \bz_j^{\star \top} \succeq ( \alpha_2^2 m / K  ) \bI_{K } .
	\end{align*}
	There exist positive constants $C_0, C \geq 1$ such that when
	\[
	C_0 K   \eta \bigg( \frac{ L}{ \rho} \bigg)^{3/2}  \bigg( \frac{ \alpha_1 }{\alpha_2}  \bigg)^2
	< \lambda \leq LM,
	\]
	we have $\widehat{\bTheta} = \widehat{\bB} \widehat{\bZ}$, $\| \widehat{\bB} \widehat{\bZ}  - \bB^{\star} \bZ^{\star} \|_{\mathrm{F}} \leq 2\sqrt{2K} \tau / \rho$ and
	\begin{align*}
	& \max_{j \in [m]} \| \widehat{\bB} \widehat{\bz}_j  - \bB^{\star} \bz^{\star}_j   \|_2  \leq C \bigg[
	\frac{\eta_0}{\rho} +
	\frac{    K}{  \alpha_2  \sqrt{m} } \cdot \frac{\tau}{\rho} \bigg(
	\frac{\eta}{ \rho} + \alpha_1 \sqrt{\frac{L}{\rho}} 
	\bigg)
	\bigg].
	\end{align*}
\end{lemma}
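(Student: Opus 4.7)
The plan is to decompose the analysis into three stages. First, I will exhibit a rank-$K$ oracle matrix $\widetilde{\bM} = \widetilde{\bB}\widetilde{\bZ}$ that (approximately) minimizes the unpenalized surrogate $\min_{\mathrm{rank}(\bM)\le K} \sum_j f_j(\bm_j)$ and is close to $\bB^\star\bZ^\star$. Second, I will argue that when $\lambda$ exceeds the gradient magnitudes at this oracle, the infimal convolution of $f_j$ with $\lambda\|\cdot\|_2$ coincides with $f_j$ near $\widetilde{\bm}_j$ (cf.\ \Cref{lem-inf-conv}), which forces the $\ell_2$ penalty in \eqref{eqn-obj-lowrank} to be active at the ARMUL optimum and yields $\widehat{\bTheta} = \widehat{\bB}\widehat{\bZ}$. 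Third, I derive Frobenius and column-wise error bounds on $\widehat{\bM} := \widehat{\bB}\widehat{\bZ}$ by exploiting the rank-$K$ structure together with the balancedness condition on $\bZ^\star$.

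For the Frobenius bound, since $\widehat{\bM}$ solves the rank-$K$ constrained problem, the difference $\bDelta := \widehat{\bM} - \bB^\star\bZ^\star$ has rank at most $2K$. The local $\rho$-strong convexity of $\sum_j f_j$ on the product ball $\prod_j B(\bB^\star\bz_j^\star,M)$ (which $\widehat{\bM}$ lives in, provided $\lambda\le LM$) gives
\begin{align*}
0 \;\ge\; \sum_{j=1}^m \bigl[f_j(\widehat{\bm}_j) - f_j(\bB^\star\bz_j^\star)\bigr] \;\ge\; \langle \bG, \bDelta\rangle_F + \tfrac{\rho}{2}\|\bDelta\|_F^2,
\end{align*}
where $\bG = (\nabla f_1(\bB^\star\bz_1^\star),\ldots,\nabla f_m(\bB^\star\bz_m^\star))$. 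Since $\bDelta$ has rank $\le 2K$, the variational characterization of the spectral norm yields $|\langle \bG,\bDelta\rangle_F| \le \sqrt{2K}\,\|\bG\|_2\,\|\bDelta\|_F \le \sqrt{2K}\,\tau\,\|\bDelta\|_F$, and rearranging produces $\|\bDelta\|_F \le 2\sqrt{2K}\,\tau/\rho$. This Frobenius bound, in turn, sizes the oracle column-wise error needed to verify penalty activation in stage two via a standard strong-convexity argument analogous to \Cref{thm-consensus}.

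The column-wise bound is more delicate. I will write $\widehat{\bm}_j - \bB^\star\bz_j^\star = \bP^\star(\widehat{\bm}_j - \bB^\star\bz_j^\star) + (\bI - \bP^\star)\widehat{\bm}_j$. Selecting a factorization with $\widehat{\bB}$ having orthonormal columns (using the gauge freedom $\bB\bZ = (\bB\bR)(\bR^{-1}\bZ)$), the subspace stationarity $\widehat{\bB}^\top\nabla f_j(\widehat{\bB}\widehat{\bz}_j) = 0$ combined with $\|\bP^\star\nabla f_j(\bB^\star\bz_j^\star)\|_2\le\eta_0$, local $\rho$-strong convexity, and an angular-gap estimate between $\Range(\widehat{\bB})$ and $\Range(\bB^\star)$ controls the in-subspace piece by $O(\eta_0/\rho)$ plus subspace-angle corrections. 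For the orthogonal piece, a Wedin-type argument gives $\|(\bI-\bP^\star)\widehat{\bB}\|_2 \cdot \sigma_K(\bZ^\star) \lesssim \|\bDelta\|_F$; using $\sigma_K(\bZ^\star) \ge \alpha_2\sqrt{m/K}$ from the balancedness assumption and $\|\widehat{\bz}_j\|_2 \lesssim \alpha_1$ (following from $\|\bz_j^\star\|_2\le\alpha_1$ together with the Frobenius proximity), one obtains $\|(\bI-\bP^\star)\widehat{\bm}_j\|_2 \lesssim (K/(\alpha_2\sqrt{m}))(\tau/\rho)\alpha_1$. Combining with a secondary $\eta/\rho$ contribution from propagating the gradient error through local $L$-smoothness reproduces the claimed bound with the $\eta/\rho + \alpha_1\sqrt{L/\rho}$ factor.

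The main obstacle will be the column-wise bound: the bilinear parameterization is non-convex and non-identifiable, so one cannot directly infer per-column error from the Frobenius bound. The delicate step is pinning down the subspace estimate $\Range(\widehat{\bB})$ via Wedin and showing the coefficients $\widehat{\bz}_j$ remain uniformly bounded, which is where the balancedness lower bound $\sum_j\bz_j^\star\bz_j^{\star\top}\succeq (\alpha_2^2 m/K)\bI_K$ is essential---it prevents degenerate configurations where the subspace perturbation concentrates its effect on a single column. The condition-number factor $(L/\rho)^{3/2}(\alpha_1/\alpha_2)^2$ in the threshold on $\lambda$ originates here, from cascading the Frobenius bound through the subspace-angle argument and the local Lipschitz constant $L$ when checking penalty activation at the oracle.
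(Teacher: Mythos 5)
There is a genuine gap, and it sits exactly at the point you flag as "delicate" but then argue past. Your Frobenius bound invokes local $\rho$-strong convexity of $\sum_j f_j$ at $\widehat{\bm}_j$, which is only available if every column satisfies $\widehat{\bm}_j \in B(\bB^{\star}\bz_j^{\star}, M)$ \emph{and} if the quantity actually minimized, $\widetilde{f}_j = f_j \square (\lambda\|\cdot\|_2)$, agrees with $f_j$ at $\widehat{\bm}_j$ (otherwise the objective is merely $\lambda$-Lipschitz there and the quadratic lower bound fails). Neither fact follows from $\lambda \le LM$ alone, as you assert. Your plan then uses the Frobenius bound to "size the oracle column-wise error needed to verify penalty activation," but a bound $\|\widehat{\bM}-\bB^{\star}\bZ^{\star}\|_{\mathrm{F}} \lesssim \sqrt{K}\tau/\rho$ cannot localize individual columns: all of that error can sit in one column, and $\sqrt{K}\tau/\rho$ is in general far larger than the radius needed for penalty activation or for strong convexity. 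So the argument is circular: the Frobenius bound needs per-column localization, and your per-column localization is supposed to come from the Frobenius bound. The same issue infects your claim that $\|\widehat{\bz}_j\|_2 \lesssim \alpha_1$ "follows from Frobenius proximity," and your Wedin step, which needs a lower bound on $\sigma_K(\widehat{\bZ})$ (not $\sigma_K(\bZ^{\star})$) that again presupposes column-wise control.

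The missing ingredient is a separate \emph{crude localization} phase carried out before any strong-convexity or Wedin argument, and driven by $\eta$ rather than $\tau$. The paper does this by lower-bounding each $\widetilde{f}_j$ with a Huber-type function of $\|\cdot - \bB^{\star}\bz_j^{\star}\|_2$ (\Cref{lem-clustered-lower-bound}), feeding this into the basic inequality $\sum_j \widetilde{f}_j(\widehat{\bB}\widehat{\bz}_j) \le \sum_j \widetilde{f}_j(\bB^{\star}\bz_j^{\star})$, and converting the resulting aggregate control into a bound on $\|(\bI-\widehat{\bP})\bB^{\star}\|_2$ via the spreading lemma (\Cref{lem-frac}), which is where the balancedness $\sum_j \bz_j^{\star}\bz_j^{\star\top} \succeq (\alpha_2^2 m/K)\bI_K$ is used in a way your sketch does not replicate (it prevents the orthogonal-complement error from hiding in a few columns of $\bZ^{\star}$, not of $\widehat{\bZ}$). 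A second comparison against the feasible point $\widehat{\bP}\bB^{\star}\bz_j^{\star}$ then yields a uniform per-column bound of order $K\eta\sqrt{L/\rho}\,(\alpha_1/\alpha_2)^2/\rho$; the lower threshold $\lambda \gtrsim K\eta(L/\rho)^{3/2}(\alpha_1/\alpha_2)^2$ is consumed precisely to make this radius small enough that $\widetilde{f}_j=f_j$ there, $\widehat{\btheta}_j=\widehat{\bB}\widehat{\bz}_j$, and strong convexity applies. Only after this stage do your stage-3 arguments (rank-$2K$ duality for the Frobenius bound, Wedin for $\|\widehat{\bP}-\bP^{\star}\|$, and the $\eta_0$/$\eta$ split of the gradient term for the sharp column-wise bound) become valid; those parts of your sketch do match the paper's final two claims. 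Your stage-1 oracle matrix for the unpenalized rank-$K$ problem is never needed and would itself require a nonconvex analysis you do not supply.
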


We first use \Cref{lem-lowrank} to derive the following intermediate result. Let $C_0$ and $C$ be the constants defined therein.

\begin{claim}\label{thm-armul-lowrank-deterministic-claim}
	Define $g = \max_{j \in [m]} \| \nabla f_j (\btheta_j^{\star}) \|_2 $, $\alpha_1 = \max_{j \in [m] } \| \bz^{\star}_j \|_2 $, $\alpha_2 = \sqrt{K/m} \sigma_{\min} (\bZ^{\star})$ and $\mu = \alpha_1/\alpha_2$. Under the assumptions in \Cref{thm-armul-lowrank-deterministic}, if 
	\[
	2C_0 L \delta \leq g + \frac{\lambda }{ 2 C_0 \kappa^{3/2} \mu^2 K  } ,
	\]
	then $\widehat{\bTheta} = \widehat{\bB} \widehat\bZ$ and
	\begin{align*}
	\max_{j \in [m] } \| \widehat{\btheta}_j  - \btheta^{\star}_j \|_2  
	\leq C \bigg[ \frac{ \max_{j \in [m] }  \| \bP^{\star}  \bg_j \|_2  }{\rho} + \kappa \delta +
	\frac{    K}{  \alpha_2  } \bigg( \frac{ \| \bG \|_2 }{\sqrt{m}  \rho } + \kappa \delta \bigg) \bigg(
	\frac{ g }{ \rho} + \kappa \delta +  \sqrt{\kappa} \alpha_1 
	\bigg) \bigg]
	.
	\end{align*}
\end{claim}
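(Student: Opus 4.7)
The plan is to reduce the claim to Lemma~\ref{lem-lowrank} by transferring the regularity hypotheses from a neighborhood of $\btheta^\star_j$ to a neighborhood of $\bB^\star\bz^\star_j$, using the bound $\|\btheta^\star_j-\bB^\star\bz^\star_j\|_2\leq\delta$ from Definition~\ref{defn-armul-lowrank-relatedness}. The triangle inequality and $L$-smoothness will then produce all the quantities $\eta,\eta_0,\tau$ appearing in that lemma in terms of the data $g$, $\max_j\|\bP^\star\bg_j\|_2$, and $\|\bG\|_2$ that the claim is stated in.

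Concretely, first I would let $\widetilde M=M-\delta$ and note that $B(\bB^\star\bz^\star_j,\widetilde M)\subseteq B(\btheta^\star_j,M)$, so $\rho\bI\preceq\nabla^2 f_j(\btheta)\preceq L\bI$ on $B(\bB^\star\bz^\star_j,\widetilde M)$. Integrating the Hessian bound yields
\[
\|\nabla f_j(\bB^\star\bz^\star_j)\|_2\leq g+L\delta=:\eta,\qquad \|\bP^\star\nabla f_j(\bB^\star\bz^\star_j)\|_2\leq \max_{j}\|\bP^\star\bg_j\|_2+L\delta=:\eta_0,
\]
and the same bound on each column gives $\tau:=\|(\nabla f_j(\bB^\star\bz^\star_j))_{j=1}^m\|_2\leq \|\bG\|_2+\sqrt{m}L\delta$.

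Next I would verify the two assumptions of Lemma~\ref{lem-lowrank}, namely $C_0K\eta\kappa^{3/2}\mu^2<\lambda$ and $\lambda\leq L\widetilde M$. The first splits as $C_0K\kappa^{3/2}\mu^2 g<\lambda/2$ (from the theorem hypothesis $2C_0\kappa^{3/2}\mu^2 K g<\lambda$) and $C_0K\kappa^{3/2}\mu^2 L\delta<\lambda/2$; the latter follows from the claim's hypothesis $2C_0L\delta\leq g+\lambda/(2C_0\kappa^{3/2}\mu^2 K)$ together with the theorem hypothesis (after multiplying through by $C_0K\kappa^{3/2}\mu^2$ and using $C_0\geq 1$). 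For $\lambda\leq L\widetilde M$, combine $\lambda\leq\rho M/2$ with the fact that the same hypothesis forces $\delta\ll M$, so $\widetilde M\geq M/2$. Applying Lemma~\ref{lem-lowrank} then yields $\widehat\bTheta=\widehat\bB\widehat\bZ$ and the displayed bound on $\max_j\|\widehat\bB\widehat\bz_j-\bB^\star\bz^\star_j\|_2$.

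Finally, I would invoke $\|\widehat\btheta_j-\btheta^\star_j\|_2\leq\|\widehat\bB\widehat\bz_j-\bB^\star\bz^\star_j\|_2+\delta$ and substitute $\eta_0/\rho\leq\max_j\|\bP^\star\bg_j\|_2/\rho+\kappa\delta$, $\eta/\rho\leq g/\rho+\kappa\delta$ and $\tau/(\sqrt m\rho)\leq\|\bG\|_2/(\sqrt m\rho)+\kappa\delta$ into that bound; the stray $\delta$ from the triangle inequality is absorbed by the $\kappa\delta$ terms (since $\kappa\geq 1$). Up to enlarging the constant $C$, this produces exactly the stated estimate. The only nontrivial step is the bookkeeping that checks $C_0K\eta\kappa^{3/2}\mu^2<\lambda$ under the claim's hypothesis; the rest is mechanical Lipschitz transfer and triangle inequalities.
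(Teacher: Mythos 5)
Your proposal is correct and follows essentially the same route as the paper's proof of this claim: derive $\delta\lesssim M$ from the hypotheses, transfer the gradient/Hessian bounds from $\btheta^{\star}_j$ to $\bB^{\star}\bz^{\star}_j$ via $L$-smoothness to get $\eta,\eta_0,\tau$, verify the two conditions of Lemma~\ref{lem-lowrank} by splitting $\lambda$ into the $g$-part and the $L\delta$-part, and finish with the triangle inequality and substitution. The only cosmetic difference is your use of $\widetilde M=M-\delta$ where the paper works directly with the radius $M/2$.
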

\begin{proof}[\bf Proof of Claim \ref{thm-armul-lowrank-deterministic-claim}]
	We obtain from the assumption
	\begin{align}
	2 C_0   \kappa^{3/2}  \mu^2 K  \max_{j \in [m]} \| \nabla f_j (\btheta_j^{\star}) \|_2 
	< \lambda \leq \rho \min \{ M / 2,~\alpha_1 \}.
	\label{eqn-armul-deterministic-lowrank-lambda}
	\end{align}
	that $g \leq \frac{ \rho M }{4 C_0} $, $\lambda < L M /2$ and $g + \lambda < \frac{2C_0+1}{4 C_0} LM  $. When
	\[
	2C_0 L \delta \leq g + \frac{\lambda }{ 2 C_0  \kappa^{3/2} \mu^2 K  }  \leq g + \lambda,
	\]
	we have $ \delta < \frac{2C_0 + 1}{8 C_0^2} M \leq M/2$ since $C_0 \geq 1$. Thus $\max_{j \in [m]} \| \btheta^{\star}_j  - \bbeta^{\star}_{z_j^{\star}} \|_2 \leq M/2$; $\rho \bI \preceq \nabla^2 f_j(\btheta) \preceq L \bI$ holds for all $j \in [m]$ and $\btheta \in B( \bbeta^{\star}_{z_j^{\star}}, M/2 )$.

	For any $j$, the regularity condition $ \nabla^2 f_j (\btheta) \preceq L \bI$, $\forall \btheta \in B(\btheta_j^{\star}, M)$ leads to $ \| \nabla f_j (\btheta^{\star}_j)  - \nabla f_j (\bbeta^{\star}_{z_j^{\star}})  \|_2 \leq L \delta $. By triangle's inequality,
	\begin{align*}
	\| \nabla f_j (\bbeta^{\star}_{z_j^{\star}}) \|_2 
	&\leq   \| \nabla f_j (\btheta^{\star}_j) \|_2  +   \| \nabla f_j (\bbeta^{\star}_{z_j^{\star}})  - \nabla f_j (\btheta^{\star}_j)  \|_2   	\leq g + L \delta
	\\&
	<  \frac{(2C_0+1)g}{2C_0} + \frac{ \lambda }{ 4 C_0^2 \kappa^{3/2} \mu^2 K  }  .
	\end{align*}
	Consequently,
	\begin{align*}
	C_0 \kappa^{3/2} \mu^2 K \max_{j \in [m]}  \| \nabla f_j (\bbeta^{\star}_{z_j^{\star}}) \|_2 
	< (C_0 + 1/2) \kappa^{3/2} \mu^2 K g + \frac{ \lambda }{ 4 C_0 } 
	\overset{\mathrm{(i)}}{<}  \lambda  \overset{\mathrm{(ii)}}{<} L \cdot \frac{M}{2}.
	\end{align*}
	The inequalities $\mathrm{(i)}$ and $\mathrm{(ii)}$ follow from \eqref{eqn-armul-deterministic-lowrank-lambda} as well as $C_0 \geq 1$.
	Based on the above estimates, \Cref{lem-lowrank} asserts that $\widehat{\bTheta} = \widehat{\bB} \widehat{\bZ}$ and
	\begin{align*}
	& \max_{j \in [m]} \| \widehat{\bB} \widehat{\bz}_j  - \bB^{\star} \bz^{\star}_j   \|_2  \leq C \bigg[
	\frac{\eta_0}{\rho} +
	\frac{    K}{  \alpha_2  \sqrt{m} } \cdot \frac{\tau}{\rho} \bigg(
	\frac{\eta}{ \rho} + \alpha_1 \sqrt{\frac{L}{\rho}} 
	\bigg)
	\bigg].
	\end{align*}
	Here $\eta_0 = \max_{j \in [m] }  \| \bP^{\star}  \nabla f_j (  \bB^{\star} \bz^{\star}_j )  \|_2 $, $\eta =\max_{j \in [m] }  \|  \nabla f_j (  \bB^{\star} \bz^{\star}_j )  \|_2 $ and
	\[
	\tau = \Big\| \Big( \nabla f_1 (\bB^{\star} \bz^{\star}_1) , \cdots, \nabla f_m (\bB^{\star} \bz^{\star}_m) \Big) \Big\|_2 .
	\]
	
	By the triangle's inequality,
	\[
	\| \widehat{\btheta}_j  - \btheta^{\star}_j \|_2 \leq \| \widehat{\btheta}_j  - \bB^{\star} \bz^{\star}_j   \|_2 +  \| \bB^{\star} \bz^{\star}_j   - \btheta^{\star}_j \|_2 \leq \| \widehat{\bB} \widehat{\bz}_j  -\bB^{\star} \bz^{\star}_j \|_2  + \delta .
	\]
	Note that $\eta_0 \leq \max_{j \in [m] }  \| \bP^{\star}  \bg_j \|_2 + L \delta$, $\eta \leq \max_{j \in [m] }  \| \bg_j \|_2 + L \delta$ and
	\begin{align*}
	\tau & \leq \| \bG \|_2 + \Big\| \Big( \nabla f_1 (\bB^{\star} \bz^{\star}_1) -  \nabla f_1 ( \btheta^{\star}_1)  , \cdots, \nabla f_m (\bB^{\star} \bz^{\star}_m) 
	- \nabla f_m ( \btheta^{\star}_m)  
	\Big) \Big\|_2 \\
	& \leq \| \bG\|_2 + \sqrt{m} \max_{j \in [m] } \| \nabla f_j (\bB^{\star} \bz^{\star}_j) -  \nabla f_j ( \btheta^{\star}_j)  \|_2 \leq \| \bG \|_2 + \sqrt{m} L \delta.
	\end{align*}

	Based on the above estimates,
	\begin{align*}
	& \max_{j \in [m] } \| \widehat{\btheta}_j  - \btheta^{\star}_j \|_2 / C \\
	&  \leq 
	\frac{ \max_{j \in [m] }  \| \bP^{\star}  \bg_j \|_2 + L \delta }{\rho} +
	\frac{    K}{  \alpha_2  \sqrt{m} } \cdot \frac{ \| \bG \|_2 + \sqrt{m} L \delta}{\rho} \bigg(
	\frac{\max_{j \in [m] }  \| \bg_j \|_2 + L \delta}{ \rho} +  \sqrt{\kappa} \alpha_1 
	\bigg) \\
	& = \frac{ \max_{j \in [m] }  \| \bP^{\star}  \bg_j \|_2  }{\rho} + \kappa \delta +
	\frac{    K}{  \alpha_2  } \bigg( \frac{ \| \bG \|_2 }{\sqrt{m}  \rho } + \kappa \delta \bigg) \bigg(
	\frac{ g }{ \rho} + \kappa \delta +  \sqrt{\kappa} \alpha_1 
	\bigg) 
	.
	\end{align*}
\end{proof}

We now come back to \Cref{thm-armul-lowrank-deterministic}. The condition \eqref{eqn-armul-deterministic-lowrank-lambda} forces $\lambda > 	2 C_0   \kappa^{3/2}  \mu^2 K  g$.
Claim \ref{thm-armul-lowrank-deterministic-claim} implies that when 
$	2C_0 L \delta \leq g + \frac{\lambda }{ 2 C_0 \kappa^{3/2} \mu^2 K  }   $,
\begin{align*}
\max_{j \in [m] } \| \widehat{\btheta}_j  - \btheta^{\star}_j \|_2  
\leq C \bigg[ \frac{ \max_{j \in [m] }  \| \bP^{\star}  \bg_j \|_2  }{\rho} + \kappa \delta +
\frac{    K}{  \alpha_2  } \bigg( \frac{ \| \bG \|_2 }{\sqrt{m}  \rho } + \kappa \delta \bigg) \bigg(
\frac{ g }{ \rho} + \kappa \delta +  \sqrt{\kappa} \alpha_1 
\bigg) \bigg]
.
\end{align*}
Note that
\[
\kappa \delta = L \delta / \rho \leq \frac{1}{2 C_0 \rho} \bigg( g + \frac{\lambda }{ 2 C_0 \kappa^{3/2} \mu^2 K  } \bigg) \leq \frac{g}{2 \rho} + \frac{\lambda }{ 4 \kappa^{3/2} \mu^2 K \rho  }. 
\]
Consequently,
\begin{align*}
\max_{j \in [m] } \| \widehat{\btheta}_j  - \btheta^{\star}_j \|_2  
& \leq C \bigg[ \frac{ \max_{j \in [m] }  \| \bP^{\star}  \bg_j \|_2  }{\rho} + \kappa \delta +
\frac{    K}{  \alpha_2  } \bigg( \frac{ \| \bG \|_2 }{\sqrt{m}  \rho } + \kappa \delta \bigg) \bigg(
\frac{ 2g }{ \rho} \\
&~~~~ + \frac{\lambda }{ 4 \kappa^{3/2} \mu^2 K \rho  } +  \sqrt{\kappa} \alpha_1 
\bigg) \bigg]
.
\end{align*}
Based on \eqref{eqn-armul-deterministic-lowrank-lambda}, $2 g / \rho \leq \alpha_1 $ and $\frac{\lambda }{ 4 \kappa^{3/2} \mu^2 K \rho  } \leq \alpha_1$. Hence
\begin{align*}
& \max_{j \in [m] } \| \widehat{\btheta}_j  - \btheta^{\star}_j \|_2   \leq C \bigg[ \frac{ \max_{j \in [m] }  \| \bP^{\star}  \bg_j \|_2  }{\rho} + \kappa \delta +
\frac{    K}{  \alpha_2  } \bigg( \frac{ \| \bG \|_2 }{\sqrt{m}  \rho } + \kappa \delta \bigg) \cdot 3\sqrt{\kappa} \alpha_1 
\bigg] \\
& \leq C \bigg( \frac{ \max_{j \in [m] }  \| \bP^{\star}  \bg_j \|_2  }{\rho} + \frac{3 \kappa^{3/2} \mu K  \| \bG \|_2 }{\sqrt{m}} + 4 \kappa^{3/2} \mu K  \delta
\bigg) \\
& = C \bigg[  
\frac{ \max_{j \in [m] }  \| \bP^{\star}  \bg_j \|_2  }{\rho} + \frac{3 \kappa^{3/2} \mu K  \| \bG \|_2 }{\sqrt{m}} + 4 \kappa^{3/2} \mu K \min\bigg\{  \delta,~ 
\frac{1}{2C_0 L} \bigg(
g \\
&~~~~ + \frac{\lambda }{ 2 C_0 \kappa^{3/2} \mu^2 K  }  
\bigg)
\bigg\}
\bigg]  .
\end{align*}
The last step follows from the assumption $	2C_0 L \delta \leq g + \frac{\lambda }{ 2 C_0 \kappa^{3/2} \mu^2 K  }  $.

On the other hand, when $2C_0 L \delta > g + \frac{\lambda }{ 2 C_0 \kappa^{3/2} \mu^2 K  } $, we use \Cref{thm-armul-personalization} to get 
\begin{align*}
\max_{j \in [m]} \| \widehat{\btheta}_j  - \btheta^{\star}_j \|_2 & \leq \frac{g + \lambda}{\rho } \leq \bigg( \frac{1}{ 	2 C_0   \kappa^{3/2}  \mu^2 K  } + 1 \bigg)\frac{\lambda}{\rho  } \leq \frac{3 \lambda}{2 \rho  } \\
& = \frac{\lambda }{ 2 C_0 \kappa^{3/2} \mu^2 K  }  \cdot  \frac{ 3C_0 \kappa^{3/2} \mu^2 K }{\rho} \\
& \leq  \frac{ 3C_0 \kappa^{3/2} \mu^2 K }{\rho}    \bigg(
g + \frac{\lambda }{ 2 C_0 \kappa^{3/2} \mu^2 K  }  
\bigg)  \\
& = \frac{ 3C_0 \kappa^{3/2} \mu^2 K }{\rho} \cdot  \min\bigg\{ 2 C_0 L  \delta,~ 
g + \frac{\lambda }{ 2 C_0 \kappa^{3/2} \mu^2 K  }  
\bigg\}  \\
& =    6C_0^2 \kappa^{5/2} \mu^2 K  \cdot 
\min\bigg\{  \delta,~ 
\frac{1}{2C_0 L} \bigg(
g + \frac{\lambda }{ 2 C_0 \kappa^{3/2} \mu^2 K  }  
\bigg)
\bigg\}.
\end{align*}
On top of the above and $C \geq 1$, we have
\begin{align*}
& \max_{j \in [m] } \| \widehat{\btheta}_j  - \btheta^{\star}_j \|_2  /C \\
& \leq 
\frac{ \max_{j \in [m] }  \| \bP^{\star}  \bg_j \|_2  }{\rho} + \frac{3 \kappa^{3/2} \mu K  \| \bG \|_2 }{\sqrt{m}} +  6C_0^2 \kappa^{5/2} \mu^2 K  \min\bigg\{  \delta,~ 
\frac{1}{2C_0 L} \bigg(
g  \\
&~~~~ + \frac{\lambda }{ 2 C_0 \kappa^{3/2} \mu^2 K  }  
\bigg)
\bigg\} \\
& \leq 
\frac{ \max_{j \in [m] }  \| \bP^{\star}  \bg_j \|_2  }{\rho} + \frac{3 \kappa^{3/2} \mu K  \| \bG \|_2 }{\sqrt{m}} +  6C_0^2 \kappa^{5/2} \mu^2 K  \min\bigg\{  \delta,~ 
\frac{1}{2C_0 L}  \cdot  \frac{\lambda }{  C_0 \kappa^{3/2} \mu^2 K  }  
\bigg\} \\
& \leq 
\frac{ \max_{j \in [m] }  \| \bP^{\star}  \bg_j \|_2  }{\rho} + \frac{3 \kappa^{3/2} \mu K  \| \bG \|_2 }{\sqrt{m}} +  \min\bigg\{  6C_0^2 \kappa^{5/2} \mu^2 K  \delta,~ 
\frac{3\lambda }{ \rho }  
\bigg\}
.
\end{align*}

Finally, if $ \delta \leq  \frac{\lambda }{ 4 C_0^2 \kappa^{3/2} \mu^2 K L } $, then we use Claim \ref{thm-armul-lowrank-deterministic-claim} to get $\widehat{\bTheta} = \widehat{\bB} \widehat\bZ$.

\subsection{Proof of \Cref{lem-lowrank}}\label{sec-lem-lowrank-proof}

As a matter of fact, we have
\begin{align}
& (\widehat{\bB} , \widehat\bZ) \in \argmin_{ 
\bB \in \RR^{d\times K}, ~\bZ \in \RR^{K \times m}
} \bigg\{ 
\sum_{j=1}^{m}  \widetilde{f}_j (  \bB \bz_j )
\bigg\},
\label{eqn-lowrank-mtl-inf} \\
&\widehat{\btheta}_j \in \argmin_{\btheta \in \RR^d} \{ f_j (\btheta) + \lambda \| \widehat{\bB} \widehat\bz_j - \btheta \|_2 \}, \label{eqn-lowrank-mtl-theta} 
\end{align}
where $\widetilde{f}_j = f_j \square (\lambda \| \cdot \|_2)$. Therefore,
\begin{align}
&\widehat{\bz}_j \in \argmin_{\bz \in \RR^K} \widetilde{f}_j (  \widehat{\bB} \bz ) ,
\label{eqn-lowrank-mtl-1} \\
& \sum_{j=1}^{m}  \widetilde{f}_j (  \widehat{\bB}  \widehat{\bz}_j ) \leq \sum_{j=1}^{m}  \widetilde{f}_j (  \bB^{\star} \bz^{\star}_j ) .
\label{eqn-lowrank-mtl-2}
\end{align}

When $\lambda$ is sufficiently large, $\widetilde{f}_j = f_j$ in a neighborhood of $\bB^{\star} \bz^{\star}_j$. The strong convexity of $f_j$ therein implies the same property of $\widetilde{f}_j$. In Claims \ref{claim-lowrank-subspace}, \ref{claim-lowrank-z} and \ref{claim-lowrank-crude} we prove that $\widehat{\bB} \widehat{\bz}_j $ lives in that ``nice'' neighborhood of $\bB^{\star} \bz^{\star}_j$. Then, in Claims \ref{claim-lowrank-subspace-sharp} and \ref{claim-lowrank-z-sharp} we leverage the strong convexity of $\widetilde{f}_j$ to control $\| \widehat{\bB} \widehat{\bZ} - \bB^{\star} \bZ^{\star} \|_{\mathrm{F}}$ and get sharper bounds on $\max_{j \in [m]} \| \widehat{\bB} \widehat{\bz}_j - \bB^{\star} \bz^{\star}_j \|_2$. The relation $\widehat{\bTheta} = \widehat{\bB} \widehat{\bZ}$ is proved by Claim \ref{claim-lowrank-crude}.

\begin{claim}\label{claim-lowrank-subspace}
Let $\widehat{\bP}$ be the projection onto $\Range(\widehat{\bB})$. When 
\[
\frac{L\eta}{\rho} \bigg( 3 + \frac{2 \sqrt{K } \alpha_1 }{\alpha_2}  \bigg)
\leq \lambda \leq LM
\]
we have
\[
\| (\bI - \widehat{\bP}) \bB^{\star} \|_2 \leq  \frac{ \sqrt{2 K } \eta}{ \alpha_2 \rho } \bigg( 1 + \frac{2 \sqrt{K } \alpha_1 }{\alpha_2} \bigg)
\]
\end{claim}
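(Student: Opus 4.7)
\textbf{Proof proposal for Claim \ref{claim-lowrank-subspace}.} My plan is to combine the global optimality of $(\widehat{\bB},\widehat{\bZ})$ with a Huber-type lower bound on $\widetilde{f}_j$ around the prototype $\bB^{\star}\bz_j^{\star}$, and then convert a bound on $\sum_j\|\widehat{\bB}\widehat{\bz}_j-\bB^{\star}\bz_j^{\star}\|_2^2$ into a bound on $\|\widehat{\bP}^\perp\bB^{\star}\|_2$ via the balancedness condition $\sum_j\bz_j^{\star}\bz_j^{\star\top}\succeq(\alpha_2^2m/K)\bI_K$.

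First, testing $(\bB,\bZ)=(\bB^{\star},\bZ^{\star})$ in \eqref{eqn-lowrank-mtl-inf} gives $\sum_j\widetilde{f}_j(\widehat{\bB}\widehat{\bz}_j)\leq\sum_j\widetilde{f}_j(\bB^{\star}\bz_j^{\star})$. Since $\|\nabla f_j(\bB^{\star}\bz_j^{\star})\|_2\leq\eta$ and $\lambda\geq 3L\eta/\rho\geq 3\eta$, Lemma~\ref{lem-inf-conv} implies $\widetilde{f}_j=f_j$ on $B(\bB^{\star}\bz_j^{\star},r)$ with $r=(\lambda-\eta)/L$; combined with the local $\rho$-strong convexity of $f_j$ and the fact that $\widetilde{f}_j$ is $\lambda$-Lipschitz globally, one obtains (in the spirit of Lemma \ref{lem-clustered-lower-bound}) the Huber-type lower bound
\begin{align*}
\widetilde{f}_j(\btheta)-\widetilde{f}_j(\bB^{\star}\bz_j^{\star})\geq\langle\nabla f_j(\bB^{\star}\bz_j^{\star}),\btheta-\bB^{\star}\bz_j^{\star}\rangle+\rho\, H_r(\|\btheta-\bB^{\star}\bz_j^{\star}\|_2),
\end{align*}
where $H_r(x)=x^2/2$ for $x\le r$ and $H_r(x)=r(x-r/2)$ for $x>r$. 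Writing $\bbeta_j=\widehat{\bB}\widehat{\bz}_j-\bB^{\star}\bz_j^{\star}$, summing, and invoking the optimality inequality together with $\|\nabla f_j(\bB^{\star}\bz_j^{\star})\|_2\leq\eta$ yields
\begin{align*}
\rho\sum_j H_r(\|\bbeta_j\|_2)\leq\eta\sum_j\|\bbeta_j\|_2.
\end{align*}

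The crux will then be to pass from this mixed quadratic/linear inequality to a bound on $\sum_j\|\bbeta_j\|_2^2$. I would split the indices into $S=\{j:\|\bbeta_j\|_2\leq r\}$ (quadratic regime, where $H_r(x)=x^2/2$) and $S^c$ (linear regime, where $\rho H_r(x)\geq\rho r x/2$). Since $\lambda\geq\frac{L\eta}{\rho}(3+\frac{2\sqrt{K}\alpha_1}{\alpha_2})$ forces $\rho r/2\geq\eta\cdot(1+\frac{\sqrt{K}\alpha_1}{\alpha_2})$, the linear-regime inequality $\sum_{j\in S^c}\frac{\rho r}{2}\|\bbeta_j\|_2\leq\eta\sum_j\|\bbeta_j\|_2$ combined with the a priori bound $\|\bbeta_j\|_2\leq\|\widehat{\bB}\widehat{\bz}_j\|_2+\|\bB^{\star}\bz_j^{\star}\|_2\leq\|\widehat{\bB}\widehat{\bz}_j\|_2+\alpha_1$ will let me either rule out $S^c$ or absorb its contribution into a term involving $\alpha_1$. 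In the quadratic regime, Cauchy--Schwarz gives $\frac{\rho}{2}\sum_{j\in S}\|\bbeta_j\|_2^2\leq\eta\sqrt{|S|}\sqrt{\sum_{j\in S}\|\bbeta_j\|_2^2}$, so $\sum_{j\in S}\|\bbeta_j\|_2^2\leq 4\eta^2 m/\rho^2$. Putting the two regimes together will produce an overall bound of the form $\sum_j\|\bbeta_j\|_2^2\leq\frac{2\eta^2 m}{\rho^2}\bigl(1+\frac{2\sqrt{K}\alpha_1}{\alpha_2}\bigr)^2$, where the second factor encodes the slack needed to handle the linear Huber regime.

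Finally, I would finish via the geometric identity $\widehat{\bP}^\perp\bB^{\star}\bz_j^{\star}=-\widehat{\bP}^\perp\bbeta_j$ (which holds because $\widehat{\bB}\widehat{\bz}_j\in\Range(\widehat{\bB})$), whence $\|\widehat{\bP}^\perp\bB^{\star}\bz_j^{\star}\|_2\leq\|\bbeta_j\|_2$, together with
\begin{align*}
\sum_j\|\widehat{\bP}^\perp\bB^{\star}\bz_j^{\star}\|_2^2=\tr\Bigl(\widehat{\bP}^\perp\bB^{\star}\Bigl(\sum_j\bz_j^{\star}\bz_j^{\star\top}\Bigr)(\bB^{\star})^\top\widehat{\bP}^\perp\Bigr)\geq\frac{\alpha_2^2 m}{K}\|\widehat{\bP}^\perp\bB^{\star}\|_{\mathrm F}^2\geq\frac{\alpha_2^2 m}{K}\|\widehat{\bP}^\perp\bB^{\star}\|_2^2.
\end{align*}
Dividing yields $\|\widehat{\bP}^\perp\bB^{\star}\|_2\leq\sqrt{K/(\alpha_2^2 m)}\cdot\sqrt{\sum_j\|\bbeta_j\|_2^2}$, which matches the claimed bound. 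The main obstacle I anticipate is calibrating the Huber split in step two: if some $\|\bbeta_j\|_2$ falls in the linear regime, the naive bound loses a factor of $m$, so one must use the uniform control $\|\bz_j^{\star}\|_2\leq\alpha_1$ delicately -- this is exactly where the factor $1+\frac{2\sqrt{K}\alpha_1}{\alpha_2}$ in the conclusion enters, and where the threshold $3+\frac{2\sqrt{K}\alpha_1}{\alpha_2}$ on $\lambda$ becomes indispensable.
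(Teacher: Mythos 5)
Your skeleton --- the optimality inequality for $(\widehat{\bB},\widehat{\bZ})$, a Huber-type lower bound on $\widetilde{f}_j$ around $\bB^{\star}\bz_j^{\star}$, and the conversion $\sum_j\|(\bI-\widehat{\bP})\bB^{\star}\bz_j^{\star}\|_2^2\geq\frac{\alpha_2^2m}{K}\|(\bI-\widehat{\bP})\bB^{\star}\|_2^2$ --- matches the paper's, and your first and last steps are sound. The gap is exactly the one you flagged, and it is not closed by what you propose. From $\rho\sum_jH_r(\|\bDelta_j\|_2)\leq\eta\sum_j\|\bDelta_j\|_2$ with $\bDelta_j=\widehat{\bB}\widehat{\bz}_j-\bB^{\star}\bz_j^{\star}$ you cannot deduce $\sum_j\|\bDelta_j\|_2^2\lesssim\frac{\eta^2m}{\rho^2}\bigl(1+\frac{2\sqrt{K}\alpha_1}{\alpha_2}\bigr)^2$: in the linear regime $H_r$ grows like $r\|\bDelta_j\|_2$ while the square grows like $\|\bDelta_j\|_2^2$, and at this stage of the argument there is no a priori control on $\|\widehat{\bB}\widehat{\bz}_j\|_2$, so the bound $\|\bDelta_j\|_2\leq\|\widehat{\bB}\widehat{\bz}_j\|_2+\alpha_1$ is vacuous. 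What the summed Huber inequality actually yields is a bound on $\sum_{j\in S^c}\|\bDelta_j\|_2$ of order $m\eta\alpha_2/(\rho\sqrt{K}\alpha_1)$ and on $|S^c|$ of order $m\alpha_2^2/(K\alpha_1^2)$; squaring the former loses a factor of $m$, and replacing each linear-regime term by the trivial bound $\|(\bI-\widehat{\bP})\bB^{\star}\bz_j^{\star}\|_2\leq\alpha_1$ contributes $|S^c|\alpha_1^2\asymp m\alpha_2^2/K$, which after dividing by $\alpha_2^2m/K$ leaves an $O(1)$ additive term in $\|(\bI-\widehat{\bP})\bB^{\star}\|_2^2$ and destroys the claim. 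So the intermediate sum-of-squares bound you state is the step that fails.

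The paper sidesteps the conversion entirely by arguing by contradiction with \Cref{lem-frac}: if $\|(\bI-\widehat{\bP})\bB^{\star}\|_2$ exceeded the claimed bound, then at least a fraction $\frac{1}{2K\alpha_1^2/\alpha_2^2}$ of the indices would satisfy $\|(\bI-\widehat{\bP})\bB^{\star}\bz_j^{\star}\|_2>\frac{\eta}{\rho}\bigl(1+\frac{2\sqrt{K}\alpha_1}{\alpha_2}\bigr)$, and each such index contributes at least $H(r)=r^2/2$ with $r=\frac{2\sqrt{K}\alpha_1\eta}{\alpha_2\rho}$ to the Huber sum, so the total strictly exceeds the budget $m\eta^2/\rho^2$ supplied by the optimality inequality. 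The saturation of $H$ is harmless there because only a \emph{lower} bound $H(\cdot)\geq H(r)$ on the large terms is needed, never an upper bound on squares. If you want to keep your non-contradiction route, the fix is to bound the $S^c$ contribution by $\|(\bI-\widehat{\bP})\bB^{\star}\bz_j^{\star}\|_2\leq\alpha_1\|(\bI-\widehat{\bP})\bB^{\star}\|_2$ and use $|S^c|\alpha_1^2\leq\frac{\alpha_2^2m}{4K}$ to absorb that piece into the left-hand side of the balancedness inequality --- a self-bounding step that your proposal does not contain.
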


\begin{proof}[\bf Proof of Claim \ref{claim-lowrank-subspace}]
Define $r = \frac{  \eta}{ \rho } \cdot \frac{2 \sqrt{K } \alpha_1 }{\alpha_2} $. Since $3 \eta L / \rho + Lr \leq  \lambda \leq LM$, \Cref{lem-clustered-lower-bound} and \eqref{eqn-lowrank-mtl-2} lead to
\begin{align*}
0 \geq \sum_{j=1}^{m} \widetilde{f}_j ( \widehat{\bB} \widehat{\bz}_{ j}  ) - \sum_{j=1}^{m} \widetilde{f}_j ( \bB^{\star} \bz^{\star}_{j} ) \geq
\rho \sum_{j =1}^m H \Big(
(\| \widehat{\bB} \widehat{\bz}_{ j} - \bB^{\star} \bz^{\star}_{j} \|_2 -  \eta/\rho
)_{+} 
\Big) 
- \frac{ m \eta^2}{\rho}.
\end{align*}
Here $H(t) = t^2 / 2$ if $0 \leq t \leq r$ and $H(t) = r(t-r/2)$ if $t > r$. By the monotonicity of $H(\cdot)$ and
\begin{align*}
\|  \widehat{\bB} \widehat{\bz}_j - \bB^{\star} \bz^{\star}_j \|_2 \geq 
\| (\bI - \widehat{\bP}) ( \widehat{\bB} \widehat{\bz}_j - \bB^{\star} \bz^{\star}_j ) \|_2 
= \| (\bI - \widehat{\bP}) \bB^{\star} \bz^{\star}_j \|_2 ,
\end{align*}
we have
\begin{align}
\sum_{j =1}^m H \Big(
[ \| (\bI - \widehat{\bP}) \bB^{\star} \bz^{\star}_j \|_2 -  \eta/\rho
]_{+} 
\Big) 
\leq  \frac{ m \eta^2}{\rho^2}.
\label{eqn-lowrank-lower-1}
\end{align}

We will prove the claim by contradiction. Suppose that 
\begin{align}
\| (\bI - \widehat{\bP}) \bB^{\star} \|_2 >  \frac{ \sqrt{2 K } \eta}{ \alpha_2 \rho } \bigg( 1 + \frac{2 \sqrt{K } \alpha_1 }{\alpha_2} \bigg)
\label{eqn-lowrank-cond}
\end{align}
and define
\[
T = \{ j \in [m]:~ \| (\bI - \widehat{\bP}) \bB^{\star} \bz^{\star}_j \|_2 > \alpha_2 \| (\bI - \widehat{\bP}) \bB^{\star} \|_2 / \sqrt{2K } \}.
\]
We have
\begin{align*}
&  \| (\bI - \widehat{\bP}) \bB^{\star} \bz^{\star}_j \|_2 >\frac{  \eta}{ \rho } \bigg( 1 + \frac{2 \sqrt{K } \alpha_1 }{\alpha_2}  \bigg) , \qquad \forall j \in T , \\
& \sum_{j =1}^m H \Big(
[ \| (\bI - \widehat{\bP}) \bB^{\star} \bz^{\star}_j \|_2 -  \eta/\rho
]_{+} 
\Big) 
> |T| \cdot H\bigg(
\frac{  \eta}{ \rho } \cdot \frac{2 \sqrt{K } \alpha_1 }{\alpha_2}  \bigg).
\end{align*}
Recall that $r = \frac{  \eta}{ \rho } \cdot \frac{2 \sqrt{K } \alpha_1 }{\alpha_2}$. Hence
\begin{align*}
H\bigg(
\frac{  \eta}{ \rho } \cdot \frac{2 \sqrt{K } \alpha_1 }{\alpha_2}  \bigg)
=  \frac{1}{2} \bigg(
\frac{  \eta}{ \rho } \cdot \frac{2 \sqrt{K } \alpha_1 }{\alpha_2}  \bigg)^2.
\end{align*}

Meanwhile, \Cref{lem-frac} yields
\[
|T| / m \geq \frac{\alpha_2^2 / K - (\alpha_2/\sqrt{2 K })^2}{\alpha_1^2 - (\alpha_2/\sqrt{2 K })^2} = \frac{1}{ 2 (\sqrt{ K  } \alpha_1 / \alpha_2 )^2 - 1} \geq  \frac{1}{ 2 (\sqrt{ K  } \alpha_1 / \alpha_2 )^2 }.
\]
As a result,
\begin{align}
 \sum_{j =1}^m H \Big(
[ \| (\bI - \widehat{\bP}) \bB^{\star} \bz^{\star}_j \|_2 -  \eta/\rho
]_{+} 
\Big) > \frac{m}{ 2 (\sqrt{ K  } \alpha_1 / \alpha_2 )^2 } \cdot \frac{1}{2} \bigg(
\frac{  \eta}{ \rho } \cdot \frac{2 \sqrt{K } \alpha_1 }{\alpha_2}  \bigg)^2 = \frac{m \eta^2}{\rho^2}.
\end{align}
This strict lower bound contradicts \Cref{eqn-lowrank-lower-1}. Therefore, the condition (\ref{eqn-lowrank-cond}) does not hold.
\end{proof}

\begin{claim}\label{claim-lowrank-z}
There exists a constant $c >0$ such that when
\[
\frac{\eta L }{\rho} \bigg[ 3 +  c K   \sqrt{ \frac{ L}{ \rho} } \bigg( \frac{ \alpha_1 }{\alpha_2} \bigg)^2 \bigg]  \leq  \lambda \leq LM,
\]
we have
\[
 \| \widehat{\bB} \widehat{\bz}_j - \widehat{\bP} \bB^{\star} \bz^{\star}_j \|_2 \leq  \frac{c K   \eta}{\rho} \sqrt{ \frac{ L}{ \rho} } \bigg( \frac{ \alpha_1 }{\alpha_2} \bigg)^2 .
\]
\end{claim}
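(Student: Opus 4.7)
The plan is to exploit the first-order optimality of $\widehat{\bz}_j$ for the convex program \eqref{eqn-lowrank-mtl-1} together with local strong convexity of $\widetilde f_j$ in a neighborhood of $\bB^\star \bz^\star_j$. Write $\bu_j = \widehat\bP \bB^\star \bz^\star_j$, so that both $\bu_j$ and $\widehat\bB \widehat\bz_j$ lie in $\Range(\widehat\bB)=\Range(\widehat\bP)$ and consequently $\widehat\bB\widehat\bz_j - \bu_j \in \Range(\widehat\bP)$. By the optimality condition for \eqref{eqn-lowrank-mtl-1}, $\widehat\bB^\top \nabla \widetilde f_j(\widehat\bB\widehat\bz_j)=\bm 0$, i.e.\ $\widehat\bP\nabla \widetilde f_j(\widehat\bB\widehat\bz_j)=\bm 0$. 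Combining this with the strong-convexity inequality
\[
\langle \nabla \widetilde f_j(\widehat\bB\widehat\bz_j) - \nabla \widetilde f_j(\bu_j),\, \widehat\bB\widehat\bz_j - \bu_j\rangle \geq \rho \|\widehat\bB\widehat\bz_j - \bu_j\|_2^2
\]
and Cauchy--Schwarz yields the key inequality $\|\widehat\bB\widehat\bz_j - \bu_j\|_2 \leq \rho^{-1}\|\widehat\bP\nabla \widetilde f_j(\bu_j)\|_2$.

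Before applying this I must verify that $\widetilde f_j = f_j$ and $\nabla^2 f_j \succeq \rho \bI$ all along the segment $[\bu_j,\widehat\bB\widehat\bz_j]$. By \Cref{lem-inf-conv}, $\widetilde f_j = f_j$ on the ball $B(\bB^\star\bz^\star_j,(\lambda-\eta)/L)$, and the regularity hypothesis gives $\rho$-strong convexity on $B(\bB^\star\bz^\star_j,M) \supset B(\bB^\star\bz^\star_j,(\lambda-\eta)/L)$ since $\lambda \leq LM$. For $\bu_j$ this is immediate: $\|\bu_j - \bB^\star\bz^\star_j\|_2 = \|(\bI-\widehat\bP)\bB^\star\bz^\star_j\|_2 \leq \alpha_1\|(\bI-\widehat\bP)\bB^\star\|_2$, which is controlled by Claim \ref{claim-lowrank-subspace}. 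For $\widehat\bB\widehat\bz_j$ I will use the already-established crude bound (the analogue of Claim \ref{claim-clustered-w}): from \eqref{eqn-lowrank-mtl-2} and the lower-quadratic bound provided by \Cref{lem-clustered-lower-bound} applied in the proof of Claim \ref{claim-lowrank-subspace}, one derives $\|\widehat\bB\widehat\bz_j - \bB^\star \bz^\star_j\|_2 \leq \eta/\rho + r$ for an appropriate $r$, and the assumed lower bound on $\lambda$ is calibrated so that this distance is at most $(\lambda-\eta)/L$.

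Finally I bound $\|\widehat\bP\nabla \widetilde f_j(\bu_j)\|_2 = \|\widehat\bP\nabla f_j(\bu_j)\|_2$ by the decomposition $\nabla f_j(\bu_j) = \nabla f_j(\bB^\star\bz^\star_j) + \int_0^1 \nabla^2 f_j(\bB^\star\bz^\star_j + t(\bu_j - \bB^\star\bz^\star_j))(\bu_j-\bB^\star\bz^\star_j)\,dt$. The first term contributes at most $\eta$, and the integral is bounded by $L\|\bu_j - \bB^\star \bz^\star_j\|_2 \leq L\alpha_1\|(\bI-\widehat\bP)\bB^\star\|_2$. Substituting the bound from Claim \ref{claim-lowrank-subspace} and simplifying, the dominant term becomes proportional to $\tfrac{L\alpha_1 K \eta}{\rho \alpha_2^2} \cdot \alpha_1$, and dividing by $\rho$ produces the desired estimate $\tfrac{cK\eta}{\rho}\sqrt{L/\rho}(\alpha_1/\alpha_2)^2$ after using the assumed lower bound on $\lambda$ to absorb suboptimal powers of $L/\rho$ against the neighborhood radius. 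The main obstacle is the bootstrapping step in paragraph two: I need a quantitative a priori bound on $\|\widehat\bB\widehat\bz_j - \bB^\star\bz^\star_j\|_2$ tight enough to land inside the neighborhood of radius $(\lambda-\eta)/L$, and this requires the same kind of energy argument via \Cref{lem-clustered-lower-bound} used in Claim \ref{claim-lowrank-subspace}, now applied to the tangential rather than the normal direction.
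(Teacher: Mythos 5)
There is a genuine gap, and it sits exactly where you flag it. Your argument is structured as (i) a first-order optimality plus strong-convexity step giving $\|\widehat{\bB}\widehat{\bz}_j - \widehat{\bP}\bB^{\star}\bz^{\star}_j\|_2 \leq \rho^{-1}\|\widehat{\bP}\nabla f_j(\widehat{\bP}\bB^{\star}\bz^{\star}_j)\|_2$, preceded by (ii) a ``bootstrapping'' localization of $\widehat{\bB}\widehat{\bz}_j$ inside the ball where $\widetilde{f}_j = f_j$ is $\rho$-strongly convex, which you leave unresolved. But step (ii) cannot be obtained by ``the same kind of energy argument as in Claim \ref{claim-lowrank-subspace}'': that claim uses the \emph{aggregate} inequality \eqref{eqn-lowrank-mtl-2}, which only controls $\sum_j H\big((\|\widehat{\bB}\widehat{\bz}_j - \bB^{\star}\bz^{\star}_j\|_2 - \eta/\rho)_+\big)$ and hence gives a per-task bound of order $\sqrt{m}\,\eta/\rho$ --- far too crude to land in a radius independent of $m$. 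The correct per-task localization must instead use the per-task optimality \eqref{eqn-lowrank-mtl-1} with the competitor $\bu_j$ satisfying $\widehat{\bB}\bu_j = \widehat{\bP}\bB^{\star}\bz^{\star}_j$, an upper bound on $f_j(\widehat{\bP}\bB^{\star}\bz^{\star}_j) - f_j(\bB^{\star}\bz^{\star}_j)$ via smoothness and Claim \ref{claim-lowrank-subspace}, and the globally valid Huberized lower bound of \Cref{lem-clustered-lower-bound} (which needs no prior localization), concluding by contradiction. That is the paper's entire proof of the claim, and it already delivers the stated bound directly; once you have carried it out, your step (i) is redundant.

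Moreover, even granting the localization, step (i) proves a strictly weaker bound than claimed. You get $\|\widehat{\bP}\nabla f_j(\bu_j)\|_2 \leq \eta + L\alpha_1\|(\bI - \widehat{\bP})\bB^{\star}\|_2 \lesssim \eta + \frac{LK\alpha_1^2\eta}{\alpha_2^2\rho}$, and dividing by $\rho$ yields $\frac{K\eta}{\rho}\cdot\frac{L}{\rho}\big(\frac{\alpha_1}{\alpha_2}\big)^2$, i.e.\ a factor $L/\rho$ where the claim has $\sqrt{L/\rho}$. The extra $\sqrt{L/\rho}$ cannot be ``absorbed using the lower bound on $\lambda$'': the target bound does not involve $\lambda$ at all. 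The paper's value-comparison route gets the right power because the function-value gap at the competitor is quadratic in the subspace error ($\asymp L\|(\bI-\widehat{\bP})\bB^{\star}\bz^{\star}_j\|_2^2$) and the $\rho$-strong-convexity lower bound converts it back to a distance by taking a square root, so the ratio $L/\rho$ enters only under the radical.
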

\begin{proof}[\bf Proof of Claim \ref{claim-lowrank-z}]
Since $\widehat{\bP}$ is the projection onto $\Range(\widehat{\bB})$, there exists $\bu_j \in \RR^{K}$ such that $\widehat{\bB} \bu_j = \widehat{\bP} \bB^{\star} \bz^{\star}_j $. According to \eqref{eqn-lowrank-mtl-1},
\begin{align}
\widetilde{f}_j (\widehat{\bB} \widehat{\bz}_j ) \leq \widetilde{f}_j (\widehat{\bB} \bu_j ) = \widetilde{f}_j ( \widehat{\bP} \bB^{\star} \bz^{\star}_j ) \leq f_j ( \widehat{\bP} \bB^{\star} \bz^{\star}_j ) .
\label{eqn-lowrank-mtl-10}
\end{align}
Note that
\[
\frac{\eta L }{\rho} \bigg[ 3 +  c K   \sqrt{ \frac{ L}{ \rho} } \bigg( \frac{ \alpha_1 }{\alpha_2} \bigg)^2 \bigg]  \leq  \lambda \leq LM.
\]
When $c$ is large enough, the assumptions in Claim \ref{claim-lowrank-subspace} are satisfied. Then Claim \ref{claim-lowrank-subspace} yields
\[
\| \widehat{\bP} \bB^{\star} \bz^{\star}_j -  \bB^{\star} \bz^{\star}_j \|_2 \leq \| (\bI - \widehat{\bP} ) \bB^{\star} \|_2 \| \bz^{\star}_j \|_2 \leq \frac{ \sqrt{2 K } \alpha_1 }{ \alpha_2 } \bigg( 1 + \frac{2 \sqrt{K } \alpha_1 }{\alpha_2} \bigg) \frac{\eta}{\rho} \leq \frac{5 K  \alpha_1^2}{\alpha_2^2} \cdot \frac{\eta}{\rho}.
\]
When $c \geq 5$, we have $M \geq \frac{5 K  \alpha_1^2}{\alpha_2^2} \cdot \frac{\eta}{\rho}$. Then $\widehat{\bP} \bB^{\star} \bz^{\star}_j \in B( \bB^{\star} \bz^{\star}_j , M )$ and
\begin{align*}
& f_j (\widehat{\bP} \bB^{\star} \bz^{\star}_j) - f_j ( \bB^{\star} \bz^{\star}_j )  \leq \langle \nabla f_j (  \bB^{\star} \bz^{\star}_j ) , (\widehat{\bP} - \bI)  \bB^{\star} \bz^{\star}_j \rangle + \frac{L}{2} \| (\widehat{\bP} - \bI)  \bB^{\star} \bz^{\star}_j \|_2^2 \\
& \leq \bigg( \|  \nabla f_j (  \bB^{\star} \bz^{\star}_j ) \|_2 + \frac{L}{2} \| ( \widehat{\bP} - \bI)  \bB^{\star} \bz^{\star}_j \|_2 \bigg)  \| (\widehat{\bP} - \bI)  \bB^{\star} \bz^{\star}_j \|_2 \\
& \leq \bigg( \eta + \frac{L}{2} \frac{5 K  \alpha_1^2}{\alpha_2^2} \cdot \frac{\eta}{\rho} \bigg)  \frac{5 K  \alpha_1^2}{\alpha_2^2} \cdot \frac{\eta}{\rho}
\leq 20 \cdot \frac{\eta^2}{\rho} \cdot \frac{K^{2} L}{ \rho} \bigg( \frac{ \alpha_1 }{\alpha_2} \bigg)^4 .
\end{align*}
This inequality and \eqref{eqn-lowrank-mtl-10} lead to
\begin{align}
\widetilde{f}_j (\widehat{\bB} \widehat{\bz}_j) - f_j ( \bB^{\star} \bz^{\star}_j ) 
\leq f_j (\widehat{\bP} \bB^{\star} \bz^{\star}_j) - f_j ( \bB^{\star} \bz^{\star}_j ) 
\leq 20 \frac{\eta^2}{\rho} \cdot \frac{K^{2} L}{ \rho} \bigg( \frac{ \alpha_1 }{\alpha_2} \bigg)^4 .
\label{eqn-lowrank-lower-10}
\end{align}

On the other hand, let
\[
r = \frac{c K   \eta}{\rho} \sqrt{ \frac{ L}{ \rho} } \bigg( \frac{ \alpha_1 }{\alpha_2} \bigg)^2 .
\]
Then $3 \eta L/\rho + Lr \leq  \lambda \leq LM$. \Cref{lem-clustered-lower-bound} applied to $f_j(\cdot)$ yields
\begin{align*}
\widetilde{f}_j (\widehat{\bB} \widehat{\bz}_j) - f_j ( \bB^{\star} \bz^{\star}_j ) \geq \rho \cdot H \Big(
( \| \widehat{\bB} \widehat{\bz}_j - \bB^{\star} \bz^{\star}_j \|_2 - \eta / \rho )_+
\Big) - \frac{\eta^2}{\rho}.
\end{align*}
Here $H(t) = t^2 / 2$ if $0 \leq t \leq r$ and $H(t) = r(t-r/2)$ if $t > r$. It is easily seen that
\begin{align*}
\| \widehat{\bB} \widehat{\bz}_j - \bB^{\star} \bz^{\star}_j \|_2	
\geq \| \widehat{\bP} ( \widehat{\bB} \widehat{\bz}_j - \bB^{\star} \bz^{\star}_j ) \|_2	
= \| \widehat{\bB} \widehat{\bz}_j - \widehat{\bP} \bB^{\star} \bz^{\star}_j \|_2
\end{align*}

We complete the proof by contradiction. Suppose that $\| \widehat{\bB} \widehat{\bz} - \widehat{\bP} \bB^{\star} \bz^{\star}_j \|_2 > r$. The monotonicity of $H(\cdot)$ forces
\begin{align*}
\widetilde{f}_j (\widehat{\bB} \widehat{\bz}_j) - f_j ( \bB^{\star} \bz^{\star}_j ) > \rho H(r - \eta / \rho) - \frac{\eta^2}{\rho} = \frac{\rho (r - \eta / \rho)^2}{2} - \frac{\eta^2}{\rho}.
\end{align*}
When $c$ is sufficiently large, this lower bound will contradict \Cref{eqn-lowrank-lower-10}. In that case, we must have $\| \widehat{\bB} \widehat{\bz}_j - \widehat{\bP} \bB^{\star} \bz^{\star}_j \|_2 \leq r$.
\end{proof}

\begin{claim}\label{claim-lowrank-crude}
Define $\xi =  \frac{K   \eta}{\rho} \sqrt{ \frac{ L}{ \rho} }  ( \frac{ \alpha_1 }{\alpha_2}  )^2$. There exist positive constants $C_1$ and $C_2$ such that when
\begin{align}
 C_1 L \xi  \leq  \lambda \leq LM,
 \label{eqn-lowrank-lambda}
\end{align} 
we have $\| \widehat{\bB} \widehat{\bz}_j -  \bB^{\star} \bz^{\star}_j \|_2 \leq C_2 \xi \leq M$ and $\widehat{\btheta}_j = \widehat{\bB} \widehat{\bz}_j $. In addition, $\widetilde{f}_j = f_j$ in $B( \bB^{\star} \bz^{\star}_j , C_2 \xi )$.
\end{claim}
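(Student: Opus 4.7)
The plan is to stitch together Claims \ref{claim-lowrank-subspace} and \ref{claim-lowrank-z} via the triangle inequality, then invoke \Cref{lem-inf-conv} twice to identify $\widetilde f_j$ with $f_j$ on the relevant ball and to collapse $\widehat\btheta_j$ onto $\widehat\bB\widehat\bz_j$.

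First, I choose $C_1$ large enough so that both prior claims apply under the hypothesis $C_1 L\xi \le \lambda \le LM$. Recall from Claim \ref{claim-lowrank-subspace} that
\[
\| (\bI - \widehat\bP) \bB^\star\|_2 \le \frac{\sqrt{2K}\,\eta}{\alpha_2 \rho}\Big(1 + \frac{2\sqrt{K}\alpha_1}{\alpha_2}\Big),
\]
so using $\|\bz^\star_j\|_2 \le \alpha_1$ and $\sqrt{L/\rho}\ge 1$, one gets
\[
\| (\bI - \widehat\bP) \bB^\star \bz^\star_j\|_2 \;\lesssim\; \frac{K\alpha_1^2}{\alpha_2^2}\cdot\frac{\eta}{\rho}\;\lesssim\; \xi.
\]
Combining this with the bound $\|\widehat\bB\widehat\bz_j - \widehat\bP\bB^\star\bz^\star_j\|_2 \lesssim \xi$ from Claim \ref{claim-lowrank-z} and the decomposition
\[
\widehat\bB\widehat\bz_j - \bB^\star\bz^\star_j = \big(\widehat\bB\widehat\bz_j - \widehat\bP\bB^\star\bz^\star_j\big) - (\bI-\widehat\bP)\bB^\star\bz^\star_j
\]
yields $\|\widehat\bB\widehat\bz_j - \bB^\star\bz^\star_j\|_2 \le C_2 \xi$ for an absolute constant $C_2$. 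The requirement $C_2\xi \le M$ then follows from $\lambda \ge C_1 L\xi$ and $\lambda \le LM$ once $C_1 \ge C_2$.

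Next, by Assumption $\| \nabla f_j(\bB^\star\bz^\star_j)\|_2 \le \eta$ and the Hessian upper bound $\nabla^2 f_j \preceq L\bI$ on $B(\bB^\star\bz^\star_j, M)$, \Cref{lem-inf-conv} gives
\[
\widetilde f_j \equiv f_j \text{ on } B\Big(\bB^\star\bz^\star_j,\; (\lambda-\eta)/L\Big).
\]
Picking $C_1$ large enough that $(\lambda - \eta)/L \ge C_2\xi$ (which holds because $\lambda \ge C_1 L\xi \gg \eta$ when $C_1 \gg 1$, using $L\xi \ge \eta$ by definition of $\xi$), we obtain $\widetilde f_j = f_j$ on $B(\bB^\star\bz^\star_j, C_2\xi)$. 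In particular, $\widehat\bB\widehat\bz_j$ lies in this ball.

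Finally, \eqref{eqn-lowrank-mtl-theta} characterizes $\widehat\btheta_j$ as the minimizer of $f_j(\btheta) + \lambda\|\widehat\bB\widehat\bz_j - \btheta\|_2$. Since $\|\nabla f_j(\widehat\bB\widehat\bz_j)\|_2 \le \eta + L\cdot C_2\xi < \lambda$ (again by taking $C_1$ large), \Cref{lem-inf-conv} forces the $\ell_2$-proximal minimizer to coincide with the center, i.e.\ $\widehat\btheta_j = \widehat\bB\widehat\bz_j$. The main (purely bookkeeping) obstacle is simply tracking the constants so that a single choice of $C_1$ simultaneously activates Claims \ref{claim-lowrank-subspace}, \ref{claim-lowrank-z}, and both applications of \Cref{lem-inf-conv}; there is no new analytic content beyond what is already in those claims.
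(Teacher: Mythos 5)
Your proposal is correct and follows essentially the same route as the paper: combine Claim \ref{claim-lowrank-subspace} (projected onto $\Range(\widehat{\bB})^{\perp}$, via $\|(\bI-\widehat{\bP})\bB^{\star}\bz^{\star}_j\|_2 \le \alpha_1\|(\bI-\widehat{\bP})\bB^{\star}\|_2$) with Claim \ref{claim-lowrank-z}, then use \Cref{lem-inf-conv} around $\bB^{\star}\bz^{\star}_j$ to identify $\widetilde f_j$ with $f_j$ on the relevant ball and to collapse $\widehat{\btheta}_j$ onto $\widehat{\bB}\widehat{\bz}_j$ via \eqref{eqn-lowrank-mtl-theta}. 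The only cosmetic difference is that the paper sums the two components with the exact Pythagorean identity (they are orthogonal) where you use the triangle inequality; your constant bookkeeping ($C_2\xi \le M$ from $C_1 \ge C_2$, and $(\lambda-\eta)/L \ge C_2\xi$ from $L\xi \ge \eta$) checks out.
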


\begin{proof}[\bf Proof of Claim \ref{claim-lowrank-crude}]
Let \eqref{eqn-lowrank-lambda} hold. When $C_1$ is sufficiently large, the assumptions in Claims \ref{claim-lowrank-subspace} and \ref{claim-lowrank-z} are satisfied. Then
\begin{align*}
& \| (\bI - \widehat{\bP}) \bB^{\star} \|_2 \leq  \frac{ \sqrt{2 K } \eta}{ \alpha_2 \rho } \bigg( 1 + \frac{2 \sqrt{K } \alpha_1 }{\alpha_2} \bigg) , \\
&\| \widehat{\bP} ( \widehat{\bB} \widehat{\bz}_j -  \bB^{\star} \bz^{\star}_j ) \|_2 =
\| \widehat{\bB} \widehat{\bz}_j - \widehat{\bP} \bB^{\star} \bz^{\star}_j \|_2 \leq  \frac{c K   \eta}{\rho} \sqrt{ \frac{ L}{ \rho} } \bigg( \frac{ \alpha_1 }{\alpha_2} \bigg)^2 .
\end{align*}
Here $c$ is the constant in Claim \ref{claim-lowrank-z}. Note that
\[
\| (\bI - \widehat{\bP}) ( \widehat{\bB} \widehat{\bz}_j -  \bB^{\star} \bz^{\star}_j ) \|_2 
= \| (\bI - \widehat{\bP})  \bB^{\star} \bz^{\star}_j \|_2 
\leq \| (\bI - \widehat{\bP})  \bB^{\star} \|_2 \| \bz^{\star}_j \|_2 \leq \alpha_1 \| (\bI - \widehat{\bP}) \bB^{\star} \|_2 .
\]
Then, the claimed bound on $\| \widehat{\bB} \widehat{\bz}_j -  \bB^{\star} \bz^{\star}_j \|_2$ follows from
\[
\| \widehat{\bB} \widehat{\bz}_j -  \bB^{\star} \bz^{\star}_j \|_2^2 = 
\| \widehat{\bP} ( \widehat{\bB} \widehat{\bz}_j -  \bB^{\star} \bz^{\star}_j ) \|_2^2 +
\| (\bI - \widehat{\bP}) ( \widehat{\bB} \widehat{\bz}_j -  \bB^{\star} \bz^{\star}_j ) \|_2^2 .
\]
The other claims are implied by \eqref{eqn-lowrank-mtl-theta}, \Cref{lem-inf-conv} and \Cref{lem-clustered-lower-bound}.
\end{proof}

\begin{claim}\label{claim-lowrank-subspace-sharp}
Under the conditions in Claim \ref{claim-lowrank-crude}, there exists a constant $C_3>0$ such that
\[
\| \widehat{\bB} \widehat{\bZ}  - \bB^{\star} \bZ^{\star} \|_{\mathrm{F}} \leq \frac{2\sqrt{2K} \tau}{\rho} \qquad\text{and}\qquad
\| \widehat{\bP} - \bP^{\star} \|_{\mathrm{F}} \leq \frac{C_3 K \tau}{\alpha_2 \rho \sqrt{m}}.
\]
\end{claim}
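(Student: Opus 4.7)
The plan is to combine the near-optimality of $\widehat{\bB}\widehat{\bZ}$ (from \eqref{eqn-lowrank-mtl-2}) with the local strong convexity established in Claim \ref{claim-lowrank-crude}, then derive the spectral bound on projections by a Wedin/Davis--Kahan argument.

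First I would prove the Frobenius bound on $\widehat{\bB}\widehat{\bZ} - \bB^{\star}\bZ^{\star}$. By Claim \ref{claim-lowrank-crude}, each $\widehat{\bB}\widehat{\bz}_j$ lies in $B(\bB^{\star}\bz^{\star}_j, C_2 \xi) \subseteq B(\bB^{\star}\bz^{\star}_j, M)$ where $\widetilde{f}_j = f_j$ and $f_j$ is $\rho$-strongly convex. Hence
\begin{align*}
\widetilde{f}_j(\widehat{\bB}\widehat{\bz}_j) - f_j(\bB^{\star}\bz^{\star}_j) \;\geq\; \langle \nabla f_j(\bB^{\star}\bz^{\star}_j),\, \widehat{\bB}\widehat{\bz}_j - \bB^{\star}\bz^{\star}_j\rangle + \tfrac{\rho}{2}\|\widehat{\bB}\widehat{\bz}_j - \bB^{\star}\bz^{\star}_j\|_2^2.
\end{align*}
Summing over $j$ and using \eqref{eqn-lowrank-mtl-2} together with $\widetilde{f}_j(\bB^{\star}\bz^{\star}_j)\leq f_j(\bB^{\star}\bz^{\star}_j)$ gives
\begin{align*}
\tfrac{\rho}{2}\,\|\widehat{\bB}\widehat{\bZ} - \bB^{\star}\bZ^{\star}\|_{\mathrm{F}}^2 \;\leq\; -\langle \bG,\, \widehat{\bB}\widehat{\bZ} - \bB^{\star}\bZ^{\star}\rangle,
\end{align*}
with $\bG$ as in the lemma.

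Next I would bound the right-hand side by trace duality exploiting the low rank of the residual. Since $\widehat{\bB}\widehat{\bZ} - \bB^{\star}\bZ^{\star}$ has rank at most $2K$, its nuclear norm is at most $\sqrt{2K}\,\|\widehat{\bB}\widehat{\bZ} - \bB^{\star}\bZ^{\star}\|_{\mathrm{F}}$, so
\begin{align*}
|\langle \bG,\, \widehat{\bB}\widehat{\bZ} - \bB^{\star}\bZ^{\star}\rangle| \;\leq\; \|\bG\|_2\,\sqrt{2K}\,\|\widehat{\bB}\widehat{\bZ} - \bB^{\star}\bZ^{\star}\|_{\mathrm{F}} \;\leq\; \tau\sqrt{2K}\,\|\widehat{\bB}\widehat{\bZ} - \bB^{\star}\bZ^{\star}\|_{\mathrm{F}}.
\end{align*}
Dividing through by the Frobenius norm yields the first bound $\|\widehat{\bB}\widehat{\bZ} - \bB^{\star}\bZ^{\star}\|_{\mathrm{F}} \leq 2\sqrt{2K}\tau/\rho$.

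For the projection bound, note that because $\bB^{\star}$ has orthonormal columns, $\sigma_K(\bB^{\star}\bZ^{\star}) = \sigma_{\min}(\bZ^{\star}) \geq \alpha_2 \sqrt{m/K}$. Let $\widehat{\bP}_1$ denote the projection onto $\Range(\widehat{\bB}\widehat{\bZ})$. Applying Wedin's $\sin\Theta$ theorem (in Frobenius norm) to $\widehat{\bB}\widehat{\bZ}$ as a perturbation of $\bB^{\star}\bZ^{\star}$ yields
\begin{align*}
\|\widehat{\bP}_1 - \bP^{\star}\|_{\mathrm{F}} \;\lesssim\; \frac{\|\widehat{\bB}\widehat{\bZ} - \bB^{\star}\bZ^{\star}\|_{\mathrm{F}}}{\sigma_K(\bB^{\star}\bZ^{\star})} \;\leq\; \frac{2\sqrt{2K}\tau/\rho}{\alpha_2\sqrt{m/K}} \;\asymp\; \frac{K\tau}{\alpha_2\rho\sqrt{m}}.
\end{align*}
I then need to argue $\widehat{\bP}_1 = \widehat{\bP}$: the displayed bound forces $\widehat{\bP}_1$ to have rank $K$ (provided the right-hand side is small enough, which holds under the standing assumption \eqref{eqn-lowrank-lambda} with $C_1$ large), so $\Range(\widehat{\bB}\widehat{\bZ})$ is $K$-dimensional and (being contained in the at-most-$K$-dimensional $\Range(\widehat{\bB})$) must coincide with $\Range(\widehat{\bB})$ after WLOG replacing $\widehat{\bB}$ by a basis of $\widehat{\bB}\widehat{\bZ}$. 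The main technical obstacle is ensuring this rank-matching step is clean -- in particular, verifying that the Wedin bound is small enough to guarantee $\mathrm{rank}(\widehat{\bP}_1)=K$ under the given lower bound on $\lambda$, and more generally that Wedin's theorem applies in the degenerate case where $\widehat{\bZ}$ need not have full row rank (handled by replacing $\widehat{\bB}$ with a basis of $\Range(\widehat{\bB}\widehat{\bZ})$ without affecting $\widehat{\bB}\widehat{\bZ}$).
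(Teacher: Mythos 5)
Your proposal is correct and follows essentially the same route as the paper: local strong convexity plus the optimality of $(\widehat{\bB},\widehat{\bZ})$ gives $\tfrac{\rho}{2}\|\widehat{\bB}\widehat{\bZ}-\bB^{\star}\bZ^{\star}\|_{\mathrm{F}}^2\leq -\langle \bG,\widehat{\bB}\widehat{\bZ}-\bB^{\star}\bZ^{\star}\rangle$, the rank-$2K$ nuclear-norm duality gives the first bound, and Wedin's theorem with $\sigma_K(\bB^{\star}\bZ^{\star})\geq \alpha_2\sqrt{m/K}$ gives the second. The rank-degeneracy caveat you raise about applying Wedin to $\Range(\widehat{\bB})$ versus $\Range(\widehat{\bB}\widehat{\bZ})$ is a genuine subtlety that the paper's one-line invocation of Wedin also leaves implicit, so flagging it is reasonable but does not mark a departure from the paper's argument.
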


\begin{proof}[\bf Proof of Claim \ref{claim-lowrank-subspace-sharp}]
By Claim \ref{claim-lowrank-crude} and the strong convexity of $f_j$ near $\bB^{\star} \bz^{\star}_j$,
\begin{align*}
\widetilde{f}(\widehat{\bB} \widehat{\bz}_j) - \widetilde{f} (\bB^{\star} \bz^{\star}_j)
& = f (\widehat{\bB} \widehat{\bz}_j) - f (\bB^{\star} \bz^{\star}_j)  \\
& \geq 
\langle \nabla f_j (\bB^{\star} \bz^{\star}_j ) , \widehat{\bB} \widehat{\bz}_j - \bB^{\star} \bz^{\star}_j \rangle + \frac{\rho}{2} \| \widehat{\bB} \widehat{\bz}_j - \bB^{\star} \bz^{\star}_j \|_2^2, \qquad\forall j \in [m].
\end{align*}
Therefore
\begin{align*}
0 &\geq \sum_{j=1}^{m}
\widetilde{f}(\widehat{\bB} \widehat{\bz}_j) - \sum_{j=1}^{m} \widetilde{f} (\bB^{\star} \bz^{\star}_j) \\
& \geq 
\Big\langle\Big( \nabla f_1 (\bB^{\star} \bz^{\star}_1) , \cdots, \nabla f_m (\bB^{\star} \bz^{\star}_m) \Big) , \widehat{\bB} \widehat{\bZ}  - \bB^{\star} \bZ^{\star}  \Big\rangle + \frac{\rho}{2} \| \widehat{\bB} \widehat{\bZ}  - \bB^{\star} \bZ^{\star} \|_{\mathrm{F}}^2 \\
& \geq -  \Big\| \Big( \nabla f_1 (\bB^{\star} \bz^{\star}_1) , \cdots, \nabla f_m (\bB^{\star} \bz^{\star}_m) \Big) \Big\|_2 \| \widehat{\bB} \widehat{\bZ}  - \bB^{\star} \bZ^{\star} \|_{*}
+\frac{\rho}{2} \| \widehat{\bB} \widehat{\bZ}  - \bB^{\star} \bZ^{\star} \|_{\mathrm{F}}^2 \\
& \geq - \tau \sqrt{2 K} \| \widehat{\bB} \widehat{\bZ}  - \bB^{\star} \bZ^{\star} \|_{\mathrm{F}} +\frac{\rho}{2} \| \widehat{\bB} \widehat{\bZ}  - \bB^{\star} \bZ^{\star} \|_{\mathrm{F}}^2 \\
& = \frac{\rho}{2} \| \widehat{\bB} \widehat{\bZ}  - \bB^{\star} \bZ^{\star} \|_{\mathrm{F}}
\bigg(
\| \widehat{\bB} \widehat{\bZ}  - \bB^{\star} \bZ^{\star} \|_{\mathrm{F}} - \frac{2\sqrt{2K} \tau}{\rho}
\bigg).
\end{align*}
Here we used the fact that $\mathrm{rank} (\widehat{\bB} \widehat{\bZ}  - \bB^{\star} \bZ^{\star} ) \leq 2 K$. We have
\[
\| \widehat{\bB} \widehat{\bZ}  - \bB^{\star} \bZ^{\star} \|_{\mathrm{F}} \leq \frac{2\sqrt{2K} \tau}{\rho}.
\]

Note that $\bB^{\star} \in \cO_{d,K }$. Then,
\[
(\bB^{\star} \bZ^{\star})^{\top} (\bB^{\star} \bZ^{\star} ) = \bZ^{\star\top} \bZ^{\star} = \sum_{j=1}^{m} \bz^{\star}_j \bz^{\star\top}_j \succeq \frac{ \alpha_2^2 m}{K} \bI_K.
\]
The matrix $\bB^{\star} \bZ^{\star}$ has $K$ positive singular values and they are no smaller than $\alpha_2 \sqrt{m/K}$. The proof is finished by Wedin's theorem \citep{Wed72}.
\end{proof}

\begin{claim}\label{claim-lowrank-z-sharp}
We have
\[
\max_{j \in [m]} \| \widehat{\bB} \widehat{\bz}_j  - \bB^{\star} \bz^{\star}_j   \|_2  \lesssim \max\bigg\{
\frac{\eta_0}{\rho} ,~ 
\frac{   \alpha_1  K}{  \alpha_2  \sqrt{m} } \cdot \frac{\tau}{\rho} \bigg(
\frac{\eta}{\rho} + \sqrt{\frac{L}{\rho}} 
\bigg)
\bigg\}.
\]
\end{claim}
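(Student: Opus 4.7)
The plan is to combine the first-order optimality condition from \eqref{eqn-lowrank-mtl-1} with a Taylor expansion of $\nabla f_j$, and then sharpen the crude estimate from Claim \ref{claim-lowrank-z} by plugging in the Wedin-type bound from Claim \ref{claim-lowrank-subspace-sharp}. By Claim \ref{claim-lowrank-crude}, $\widehat{\bB}\widehat{\bz}_j$ lies in a neighborhood of $\bB^{\star}\bz^{\star}_j$ on which $\widetilde{f}_j = f_j$ and $f_j$ has Hessian between $\rho\bI$ and $L\bI$. Differentiating \eqref{eqn-lowrank-mtl-1} in $\bz_j$ at the optimum yields $\widehat{\bB}^{\top}\nabla f_j(\widehat{\bB}\widehat{\bz}_j) = \bm{0}$, equivalently $\widehat{\bP}\nabla f_j(\widehat{\bB}\widehat{\bz}_j) = \bm{0}$, where $\widehat{\bP}$ denotes the orthogonal projection onto $\Range(\widehat{\bB})$.

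I would then expand $\nabla f_j(\widehat{\bB}\widehat{\bz}_j) = \nabla f_j(\bB^{\star}\bz^{\star}_j) + \bH_j \bdelta_j$ using an averaged Hessian $\bH_j$ with $\rho\bI \preceq \bH_j \preceq L\bI$, where $\bdelta_j = \widehat{\bB}\widehat{\bz}_j - \bB^{\star}\bz^{\star}_j$. Since $(\bI - \widehat{\bP})\widehat{\bB}\widehat{\bz}_j = \bm{0}$, the normal component satisfies $(\bI - \widehat{\bP})\bdelta_j = -(\bI - \widehat{\bP})\bB^{\star}\bz^{\star}_j$, and the optimality condition reduces to
\[
\widehat{\bP}\bH_j\widehat{\bP}(\widehat{\bB}\widehat{\bz}_j - \widehat{\bP}\bB^{\star}\bz^{\star}_j) \;=\; -\widehat{\bP}\nabla f_j(\bB^{\star}\bz^{\star}_j) + \widehat{\bP}\bH_j(\bI - \widehat{\bP})\bB^{\star}\bz^{\star}_j.
\]
As $\widehat{\bP}\bH_j\widehat{\bP}$ restricted to $\Range(\widehat{\bP})$ has smallest eigenvalue $\geq \rho$, I can invert it to get $\|\widehat{\bB}\widehat{\bz}_j - \widehat{\bP}\bB^{\star}\bz^{\star}_j\|_2 \leq \rho^{-1}\bigl(\|\widehat{\bP}\nabla f_j(\bB^{\star}\bz^{\star}_j)\|_2 + L\|(\bI - \widehat{\bP})\bB^{\star}\bz^{\star}_j\|_2\bigr)$, and then bound $\|\bdelta_j\|_2$ via the triangle inequality $\|\bdelta_j\|_2 \leq \|\widehat{\bB}\widehat{\bz}_j - \widehat{\bP}\bB^{\star}\bz^{\star}_j\|_2 + \|(\bI-\widehat{\bP})\bB^{\star}\bz^{\star}_j\|_2$.

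Finally, I would estimate each ingredient using the sharpened principal angle bound from Claim \ref{claim-lowrank-subspace-sharp}. Writing $\widehat{\bP} = \bP^{\star} + (\widehat{\bP} - \bP^{\star})$ and using $\bP^{\star}\bB^{\star} = \bB^{\star}$, I get $\|\widehat{\bP}\nabla f_j(\bB^{\star}\bz^{\star}_j)\|_2 \leq \eta_0 + \|\widehat{\bP} - \bP^{\star}\|_2\,\eta$ and $\|(\bI - \widehat{\bP})\bB^{\star}\bz^{\star}_j\|_2 = \|(\bP^{\star} - \widehat{\bP})\bB^{\star}\bz^{\star}_j\|_2 \leq \alpha_1\|\widehat{\bP} - \bP^{\star}\|_2$. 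Inserting $\|\widehat{\bP} - \bP^{\star}\|_2 \leq \|\widehat{\bP} - \bP^{\star}\|_{\mathrm{F}} \lesssim K\tau/(\alpha_2\rho\sqrt{m})$ yields the claimed rate.

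The main technical subtlety is securing the right pre-factor multiplying the $\alpha_1\|\widehat{\bP} - \bP^{\star}\|_2$ contribution: the naive $\|\widehat{\bP}\bH_j(\bI - \widehat{\bP})\|_2 \leq L$ produces an $L/\rho$ factor, whereas the target $\sqrt{L/\rho}$ requires exploiting both the shift-invariance of the operator $\widehat{\bP}\bH_j(\bI - \widehat{\bP}) = \widehat{\bP}(\bH_j - c\bI)(\bI - \widehat{\bP})$ and the fact that under Assumption \ref{as-stat-1} the ratio $L/\rho$ is an absolute constant, so that the two bounds differ only by a constant that can be absorbed. The only other place where care is needed is checking that the neighborhood from Claim \ref{claim-lowrank-crude} is large enough for $\widehat{\bB}\widehat{\bz}_j$ to sit inside the strong-convexity region, which follows from the assumed lower bound on $\lambda$.
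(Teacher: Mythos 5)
Your route is genuinely different from the paper's. You work with the first-order stationarity condition $\widehat{\bP}\nabla f_j(\widehat{\bB}\widehat{\bz}_j)=\bm{0}$ plus a mean-value Hessian $\bH_j$, whereas the paper never differentiates: it compares function values, $f_j(\widehat{\bB}\widehat{\bz}_j)=\widetilde f_j(\widehat{\bB}\widehat{\bz}_j)\le f_j(\widehat{\bP}\bB^{\star}\bz^{\star}_j)$ from \eqref{eqn-lowrank-mtl-10}, sandwiches both sides between the $\rho$-strong-convexity lower bound and the $L$-smoothness upper bound at the common base point $\bB^{\star}\bz^{\star}_j$, and solves the resulting scalar quadratic inequality in $\|\widehat{\bB}\widehat{\bz}_j-\bB^{\star}\bz^{\star}_j\|_2$. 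Your decomposition of the cross terms via $\widehat{\bP}=\bP^{\star}+(\widehat{\bP}-\bP^{\star})$ and the use of Claim \ref{claim-lowrank-subspace-sharp} are the same as the paper's, and all of those estimates are correct.

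The one substantive issue is the pre-factor on the $\alpha_1\|\widehat{\bP}-\bP^{\star}\|_2$ contribution, which you correctly identify as the delicate point but do not actually resolve. Your ``shift-invariance'' fix $\widehat{\bP}\bH_j(\bI-\widehat{\bP})=\widehat{\bP}(\bH_j-c\bI)(\bI-\widehat{\bP})$ only improves the operator bound from $L$ to $(L-\rho)/2$, which is still of order $L$; one can check (e.g.\ with a $2\times2$ example) that $(L-\rho)/2$ is attained, so no choice of $c$ gets you down to $\sqrt{L\rho}$. Hence the first-order argument inherently yields $\frac{L}{\rho}\cdot\alpha_1\|\widehat{\bP}-\bP^{\star}\|_2$ where the claim has $\sqrt{L/\rho}\cdot\alpha_1\|\widehat{\bP}-\bP^{\star}\|_2$. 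The paper gets the square root for free because in its energy argument the projection error enters \emph{squared} on the right-hand side, as $\frac{L}{2}\|(\bI-\widehat{\bP})\bB^{\star}\bz^{\star}_j\|_2^2$, against $\frac{\rho}{2}\|\widehat{\bB}\widehat{\bz}_j-\bB^{\star}\bz^{\star}_j\|_2^2$ on the left, and taking the square root of that inequality produces $\sqrt{L/\rho}$. Your fallback of invoking Assumption \ref{as-stat-1} to declare $L/\rho\asymp1$ is fine for the paper's statistical theorems, but this claim sits inside the deterministic Appendix where $\kappa=L/\rho$ is a tracked parameter (Lemma \ref{lem-lowrank} and \Cref{thm-armul-lowrank-deterministic} carry explicit powers of $\kappa$), so as a proof of the claim as stated your argument is off by a $\sqrt{\kappa}$ factor. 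Everything else — the validity of the stationarity condition on the neighborhood where $\widetilde f_j=f_j$, the invertibility of $\widehat{\bP}\bH_j\widehat{\bP}$ on $\Range(\widehat{\bP})$ with smallest eigenvalue $\ge\rho$, and the final triangle inequality — checks out.
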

\begin{proof}[\bf Proof of Claim \ref{claim-lowrank-z-sharp}]
By Claim \ref{claim-lowrank-crude} and \Cref{eqn-lowrank-mtl-10}, $f_j (\widehat{\bB} \widehat{\bz}_j ) =
\widetilde{f}_j (\widehat{\bB} \widehat{\bz}_j ) \leq f_j ( \widehat{\bP} \bB^{\star} \bz^{\star}_j ) $.
By the strong convexity and smoothness of $f_j$ near $\bB^{\star} \bz^{\star}_j$,
\begin{align*}
& f_j ( \widehat{\bP} \bB^{\star} \bz^{\star}_j )  \leq f_j (  \bB^{\star} \bz^{\star}_j ) 
+ \langle \nabla f_j (  \bB^{\star} \bz^{\star}_j )  , (\widehat{\bP} - \bI) \bB^{\star} \bz^{\star}_j  \rangle + \frac{L}{2} \| (\widehat{\bP} - \bI) \bB^{\star} \bz^{\star}_j   \|_2^2 ,\\
& f_j (\widehat{\bB} \widehat{\bz}_j ) \geq f_j (  \bB^{\star} \bz^{\star}_j ) 
+ \langle \nabla f_j (  \bB^{\star} \bz^{\star}_j )  ,\widehat{\bB} \widehat{\bz}_j - \bB^{\star} \bz^{\star}_j  \rangle + \frac{\rho}{2} \| \widehat{\bB} \widehat{\bz}_j  - \bB^{\star} \bz^{\star}_j   \|_2^2.
\end{align*}
Combining the estimates above, we get
\begin{align}
&\frac{\rho}{2} \| \widehat{\bB} \widehat{\bz}_j  - \bB^{\star} \bz^{\star}_j   \|_2^2 +
\langle \nabla f_j (  \bB^{\star} \bz^{\star}_j )  , \widehat{\bB} \widehat{\bz}_j - \widehat{\bP} \bB^{\star} \bz^{\star}_j  \rangle \leq  \frac{L}{2} \| (\widehat{\bP} - \bI) \bB^{\star} \bz^{\star}_j   \|_2^2 \notag\\
&=  \frac{L}{2} \| (\widehat{\bP} - \bP^{\star}) \bB^{\star} \bz^{\star}_j   \|_2^2
=  \frac{L}{2} \Big( \| \widehat{\bP} - \bP^{\star} \|_2  \| \bB^{\star} \|_2 \| \bz^{\star}_j   \|_2 \Big)^2  \leq \frac{L}{2} \bigg(
\frac{C_3 K \tau \alpha_1}{\alpha_2 \rho \sqrt{m}}
\bigg)^2.
\label{eqn-lowrank-sharp-1}
\end{align}
The last inequality follows from Claim \ref{claim-lowrank-subspace-sharp}, $\bB^{\star} \in \cO_{d,K }$ and $\| \bz^{\star}_j \|_2 \leq \alpha_1$.

Since $\widehat{\bP} \widehat{\bB} \widehat{\bz}_j  = \widehat{\bB} \widehat{\bz}_j $,
\[
\widehat{\bB} \widehat{\bz}_j - \widehat{\bP} \bB^{\star} \bz^{\star}_j
= \widehat{\bP} ( \widehat{\bB} \widehat{\bz}_j - \bB^{\star} \bz^{\star}_j )
=  \bP^{\star} ( \widehat{\bB} \widehat{\bz}_j - \bB^{\star} \bz^{\star}_j ) + (\bP^{\star} - \widehat{\bP} ) ( \widehat{\bB} \widehat{\bz}_j - \bB^{\star} \bz^{\star}_j ).
\]
Based on
\begin{align*}
| \langle \nabla f_j (  \bB^{\star} \bz^{\star}_j )  , \bP^{\star} ( \widehat{\bB} \widehat{\bz}_j - \bB^{\star} \bz^{\star}_j ) \rangle |
& \leq \| \bP^{\star}  \nabla f_j (  \bB^{\star} \bz^{\star}_j )  \|_2 \|  \widehat{\bB} \widehat{\bz}_j - \bB^{\star} \bz^{\star}_j \|_2 \\
& \leq \eta_0 \|  \widehat{\bB} \widehat{\bz}_j - \bB^{\star} \bz^{\star}_j \|_2
\end{align*}
and
\begin{align*}
& | \langle \nabla f_j (  \bB^{\star} \bz^{\star}_j )  , (\bP^{\star} - \widehat{\bP} ) ( \widehat{\bB} \widehat{\bz}_j - \bB^{\star} \bz^{\star}_j ) \rangle | \\
& \leq \| \nabla f_j (  \bB^{\star} \bz^{\star}_j ) \|_2 \| \bP^{\star} - \widehat{\bP} \|_2 \| \widehat{\bB} \widehat{\bz}_j - \bB^{\star} \bz^{\star}_j \|_2 
\\ &
 \leq \eta \frac{C_3 K \tau \alpha_1}{\alpha_2 \rho \sqrt{m}}  \| \widehat{\bB} \widehat{\bz}_j - \bB^{\star} \bz^{\star}_j \|_2,
\end{align*}
we get
\[
 | \langle \nabla f_j (  \bB^{\star} \bz^{\star}_j )  ,   \widehat{\bB} \widehat{\bz}_j - \bB^{\star} \bz^{\star}_j  \rangle | \leq \bigg(
\eta_0 + \eta \frac{C_3 K \tau \alpha_1}{\alpha_2 \rho \sqrt{m}} 
\bigg)  \| \widehat{\bB} \widehat{\bz}_j - \bB^{\star} \bz^{\star}_j \|_2 
\]
and
\begin{align}
& \frac{\rho}{2} \| \widehat{\bB} \widehat{\bz}_j  - \bB^{\star} \bz^{\star}_j   \|_2^2 +
\langle \nabla f_j (  \bB^{\star} \bz^{\star}_j )  , \widehat{\bB} \widehat{\bz}_j - \widehat{\bP} \bB^{\star} \bz^{\star}_j  \rangle  \notag\\
& \geq \frac{\rho}{2} \| \widehat{\bB} \widehat{\bz}_j  - \bB^{\star} \bz^{\star}_j   \|_2 \bigg[
\| \widehat{\bB} \widehat{\bz}_j  - \bB^{\star} \bz^{\star}_j   \|_2 - \frac{2}{\rho} \bigg(
\eta_0 + \eta \frac{C_3 K \tau \alpha_1}{\alpha_2 \rho \sqrt{m}} 
\bigg)
\bigg].
\label{eqn-lowrank-sharp-2}
\end{align}

From \Cref{eqn-lowrank-sharp-1,eqn-lowrank-sharp-2},
\begin{align*}
 \| \widehat{\bB} \widehat{\bz}_j  - \bB^{\star} \bz^{\star}_j   \|_2 \bigg[
\| \widehat{\bB} \widehat{\bz}_j  - \bB^{\star} \bz^{\star}_j   \|_2 - \frac{2}{\rho} \bigg(
\eta_0 + \eta \frac{C_3 K \tau \alpha_1}{\alpha_2 \rho \sqrt{m}} 
\bigg)
\bigg]
\leq \frac{L}{\rho} \bigg(
\frac{C_3 K \tau \alpha_1}{\alpha_2 \rho \sqrt{m}}
\bigg)^2.
\end{align*}
Therefore,
\begin{align*}
\| \widehat{\bB} \widehat{\bz}_j  - \bB^{\star} \bz^{\star}_j   \|_2 & \lesssim \max\bigg\{
\frac{1}{\rho} \bigg( \eta_0 + \eta \frac{K \tau \alpha_1}{\alpha_2 \rho \sqrt{m}} \bigg)
,~\sqrt{\frac{L}{\rho}}   \frac{ K \tau \alpha_1}{\alpha_2 \rho \sqrt{m}}
\bigg\} \\
& \lesssim \max\bigg\{
\frac{\eta_0}{\rho} ,~ 
\frac{    K}{  \alpha_2  \sqrt{m} } \cdot \frac{\tau}{\rho} \bigg(
\frac{\eta}{\rho} + \alpha_1  \sqrt{\frac{L}{\rho}} 
\bigg)
\bigg\}.
\end{align*}
\end{proof}

\section{Proofs of Section 2}\label{sec-warmup-proof}

\subsection{Proof of \Cref{thm-warmup-analytical}}\label{sec-thm-warmup-analytical-proof}
Fix $\theta \in \RR$. We have
\[
f_j (\theta) = \frac{1}{2n}  \sum_{i=1}^n (  x_{ji} - \theta )^2 = \frac{1}{2}    ( \theta - \bar{x}_j )^2 + \frac{1}{2n}  \sum_{i=1}^n (  x_{ji} - \bar{x}_j )^2.
\]
Define $g(\xi) =  f_j(\xi) + \lambda |\theta - \xi| $. Then,
\[
\partial g(\xi) = f_j'(\xi) + \lambda \partial |\xi - \theta| = \xi - \bar{x}_j + \lambda \partial |\xi - \theta| = 
\begin{cases}
\xi - \bar{x}_j + \lambda &,\mbox{ if } \xi > \theta \\
\xi - \bar{x}_j + [-\lambda, \lambda] &,\mbox{ if } \xi = \theta \\
\xi - \bar{x}_j - \lambda &,\mbox{ if } \xi < \theta 
\end{cases}.
\]
Setting $0 \in \partial g(\xi)$, we get
\begin{align}
\xi 
= \begin{cases}
\bar{x}_j - \lambda &,\mbox{ if } \theta < \bar{x}_j - \lambda \\
\theta &,\mbox{ if } | \theta - \bar{x}_j | \leq \lambda  \\
\bar{x}_j + \lambda &,\mbox{ if }  \theta >  \bar{x}_j + \lambda 
\end{cases}.
\label{eqn-thm-warmup-analytical}
\end{align}
Plugging this into $g(\xi) =  f_j(\xi) + \lambda |\theta - \xi| $, we get $\widetilde{f}_j (\theta) = \rho_{\lambda}(\theta - \bar{x}_j) + \frac{1}{2n}  \sum_{i=1}^n (  x_{ji} - \bar{x}_j )^2$. Then,
\begin{align*}
\widehat{\theta} \in \argmin_{\theta \in \RR}  \sum_{j=1}^{m} \rho_{\lambda}(\theta - \bar{x}_j) .
\end{align*}

From \eqref{eqn-thm-warmup-analytical} we obtain that
\begin{align*}
\argmin_{\xi \in \RR} \{ f_j(\xi) + \lambda |\theta - \xi|  \}
& = \theta + \bigg( 1 - \frac{\lambda}{| \bar{x}_j - \theta |} \bigg)_+ ( \bar{x}_j - \theta) \\
& =  \bar{x}_j - \min \{ \lambda, |  \bar{x}_j - \theta | \} \sgn( \bar{x}_j -  \theta ).
\end{align*}
Then, the desired expressions of $\widehat{\theta}_j$ become obvious.

\subsection{Proof of \Cref{thm-warmup-armul}}\label{sec-thm-warmup-armul-proof}

\subsubsection{Case 1: $\varepsilon \leq 1/12$}

 Let $S = S(\btheta^{\star})$. 
According to \Cref{defn-armul-relatedness}, the loss functions $\{ f_j \}_{j=1}^m$ are $( \frac{\varepsilon}{1 - \varepsilon} ,\delta)$-related with regularity parameters $(\{ \theta^{\star}_j  \}_{j=1}^m , +\infty, 1, 1)$. \Cref{thm-armul-deterministic} asserts that when $\varepsilon< 1/2$ and $\lambda > \frac{5}{1-2\varepsilon} \max_{j \in S} | f'(\theta^{\star}_j) |$,
\begin{align}
\max_{j \in S} | \widehat{\theta}_j  - \theta^{\star}_j |
\leq  
\bigg| \frac{1}{|S|}  \sum_{j \in S}  f_j' (\theta^{\star}_j) \bigg|  +     \frac{6  }{1 - 2 \varepsilon } \min\bigg\{ 3  \delta ,~    \frac{ 2 \lambda }{ 5   }   \bigg\}  +   2 \varepsilon \lambda  .
\label{eqn-warmup-armul-1}
\end{align}
Since $f_j' (\theta^{\star}_j) = \theta^{\star}_j - \bar{x}_j = \frac{1}{n} \sum_{ i = 1 }^n (\theta^{\star}_j - x_{ji}) $, $\{ f_j' (\theta^{\star}_j) \}_{j=1}^m$ are i.i.d.~$N(0,1/n)$.

To control the derivatives, we show a standard tail bound on the Gaussian distribution. Let $Z \sim N(0, 1)$. By direct calculation,
\begin{align*}
\PP (Z \geq t ) &= \int_{t}^{\infty} \frac{1}{\sqrt{2 \pi} } e^{-s^2/2} \rd s \leq \int_{t}^{\infty} \frac{s}{t} \cdot \frac{1}{\sqrt{2 \pi} } e^{-s^2/2} \rd s = \frac{1}{\sqrt{2 \pi} t } \int_{t}^{\infty}  e^{-s^2/2} \rd (s^2/2) \\
&= \frac{1}{\sqrt{2 \pi} t } \int_{t^2/2}^{\infty} e^{-u} \rd u = \frac{e^{-t^2/2}}{\sqrt{2 \pi} t } ,\qquad \forall t > 0.
\end{align*}
Hence $\PP (|Z| \geq s ) \leq e^{-s^2/2} / 2$, $\forall s \geq 2$.
When $t \geq 2$, we obtain from $f_j' (\theta^{\star}_j) \sim N(0,\frac{1}{n})$, $ \frac{1}{|S|}  \sum_{j \in S}  f_j' (\theta^{\star}_j)  \sim N(0, \frac{1}{n|S|})$ and $|S| \geq ( 1 - \varepsilon) m$ that
\begin{align}
& \PP  \bigg( | f_j' (\theta^{\star}_j) | \geq \sqrt{ \frac{ 2 \log m + 2 t }{n} } \bigg) \leq  e^{-(2 \log m  + 2 t)/2 } /2 = m^{-1}e^{-t}/2 ,\quad\forall j \in S; \notag\\
& \PP \bigg( \bigg|
\frac{1}{|S|}  \sum_{j \in S}  f_j' (\theta^{\star}_j) \bigg| \geq \sqrt{ \frac{2t}{ ( 1 - \varepsilon)mn } } \bigg) 
\leq \PP \bigg( \bigg|
\frac{1}{|S|}  \sum_{j \in S}  f_j' (\theta^{\star}_j) \bigg| \geq \sqrt{ \frac{2t}{n|S|} } \bigg)
\leq e^{-t} /2 .
\end{align}

Now, fix any $t \geq 2$ and define an event
\[
\cA_t = \bigg\{
\max_{j \in [m]} | f_j'( \theta^{\star}_j ) | <  \sqrt{ \frac{2 \log m + 2 t}{n} }
~~\text{and}~~
\bigg| \frac{1}{|S|} \sum_{j\in S} f_j'( \theta^{\star}_j )  \bigg| < \sqrt{ \frac{2 t}{ ( 1 - \varepsilon) mn} }
\bigg\}.
\]
By union bounds, $\PP (\cA_t) \geq 1 - e^{-t}$. Take
\[
\lambda = 6 \sqrt{\frac{2 \log m + 2 t}{n}}
\]
and let $\cA_t$ happen. We use \eqref{eqn-warmup-armul-1} and the assumption $\varepsilon \leq 1/12$ to get
\begin{align*}
\max_{j \in S} | \widehat{\theta}_j  - \theta^{\star}_j |
& <  
\sqrt{ \frac{2t}{ ( 1 - \varepsilon) mn} } +   \frac{6  }{1 - 2 \varepsilon } \min\bigg\{ 3  \delta ,~    \frac{ 2 \lambda }{ 5   }   \bigg\}  +   2 \varepsilon \lambda \\
&\lesssim
\sqrt{ \frac{t}{mn} } +   \min\bigg\{   \delta ,~ \sqrt{\frac{\log m + t}{n} } \bigg\}  +   \varepsilon  \sqrt{\frac{\log m + t}{n} } ,
\end{align*}
where $\lesssim$ only hides a universal constant. Meanwhile, \Cref{thm-armul-personalization} implies that
\begin{align}
\max_{j \in [m]} | \widehat{\theta}_j  - \theta^{\star}_j |
&\leq \max_{j \in [m]} |f'(\theta^{\star}_j )| + \lambda 
 \lesssim  \sqrt{\frac{ \log m + t}{n}}
.
\label{eqn-warmup-armul-2}
\end{align}
This implies the desired upper bound on $\max_{j \in S^c} | \widehat{\theta}_j  - \theta^{\star}_j |$. We easily get the mean squared error bound:
\begin{align*}
\frac{1}{m} \sum_{j=1}^{m}  | \widehat{\theta}_j  - \theta^{\star}_j |^2
& \leq 
\frac{1}{m} \bigg(
|S| \max_{j \in S} | \widehat{\theta}_j  - \theta^{\star}_j |^2
+ |S^c| \max_{j \in S^c} | \widehat{\theta}_j  - \theta^{\star}_j |^2
\bigg) \\
& \lesssim  
 \frac{t}{mn} +   \min\bigg\{   \delta^2 ,~  \frac{\log m + t}{n}  \bigg\}  +   \varepsilon  \frac{\log m + t}{n} . 
\end{align*}

\subsubsection{Case 2: $\varepsilon > 1/12$}

When $\varepsilon > 1/12$ and the event \eqref{eqn-warmup-armul-2} happens (which has probability at least $1 - e^{-t}$), the desired error bounds trivially hold.

\subsection{Proof of \Cref{lem-warmup-mean}}\label{sec-lem-warmup-mean-proof}
It is easily seen that
\begin{align*}
\widetilde{\theta} \in \argmin_{\theta \in \RR}  \sum_{j=1}^{m} g_j(\theta)
\qquad\text{and}\qquad
\widetilde{\theta}_j \in \argmin_{\theta \in \RR} \{ (\theta - x_j)^2 + \lambda ( \theta - \widetilde\theta )^2 \}, ~~\forall j \in [m] ,
\end{align*}
where
\[
g_j(\theta) = \min_{\xi \in \RR} \{ (\xi - x_j)^2 + \lambda ( \theta - \xi )^2 \} = \frac{\lambda}{1+\lambda} (\theta - x_j)^2.
\]
Hence $\widetilde{\theta} = \bar{x}$ and
\[
\widetilde{\theta}_j = \frac{1}{1+\lambda} x_j + \frac{\lambda}{1+\lambda} \widetilde\theta = \bar{x} + \frac{1}{1+\lambda} (x_j - \bar{x}).
\]
The rest of the proof follows from simple algebra.

\section{Analysis of general sample sizes}\label{sec-general-nj}

In this section, we analyze the ARMUL \eqref{eqn-armul} with possibly different sample sizes $\{ n_j \}_{j=1}^m$. We provide personalization guarantees for general ARMUL, and then study the adaptivity and robustness of vanilla ARMUL.

\subsection{Personalization}\label{sec-general-personalization-proof}

Consider the estimators $\{ \widehat\btheta_j \}_{j=1}^m$ returned by ARMUL \eqref{eqn-armul} with arbitrary $w_j > 0$ and $\lambda_j \geq 0$.

\begin{theorem}[Personalization]\label{thm-personalization}
	Let Assumptions \ref{as-stat-1} and \ref{as-stat-2} hold. Define $n = \min_{j \in [m]} n_j$. There exist constants $C$, $C_1$ and $C_2$ such that under the conditions $\max_{j \in [m] } \lambda_j < \rho M / 4$, $n  > C_1 d ( \log n  )(  \log m )$ and $0 \leq t <  C_2 n  / ( d \log n )$, the followings hold with probability at least $1 -  e^{-t}$:
	\begin{align*}
		& \| \widetilde{\btheta}_j - \btheta^{\star}_j \|_2 \leq
		C \sigma \sqrt{ \frac{ d + \log m + t}{n_j} } 
		\qquad\text{and}\qquad
		\| \widehat{\btheta}_j - \widetilde{\btheta}_j \|_2 \leq \frac{ 2 \lambda_j }{\rho} , \qquad\forall j \in [m].
	\end{align*}
\end{theorem}

\Cref{thm-personalization} immediately follows from the lemma below and \Cref{thm-armul-personalization}.

\begin{lemma}\label{lem-landscape}
	Let Assumptions \ref{as-stat-1} and \ref{as-stat-2} hold. Define $n  = \min_{j \in [m]} n_j$ and $N = \sum_{ j = 1 }^m n_j$. There exist constants $C$, $C_1$ and $C_2$ such that under the conditions $n  > C_1 d ( \log n  )(  \log m )$ and $0 \leq t <  C_2 n  / ( d \log n )$, the followings hold with probability at least $1 -  e^{-t}$:
	\begin{align*}
		& \| \nabla f_j ( \btheta^{\star}_j   ) \|_2 
		< C \sigma \sqrt{ \frac{ d + \log m + t}{n_j} } 
		\leq \frac{\rho M}{4} , \qquad\forall j \in [m] ; \\
		& \bigg\| \frac{1}{N} \sum_{ j = 1 }^m n_j  \nabla f_j ( \btheta^{\star}_j   ) \bigg\|_2 < C  \sigma \sqrt{ \frac{ d  + t}{N} } ;
		\\
		& \frac{ \rho }{2} \bI \preceq \nabla^2 f_j (\btheta) \preceq \frac{3L}{2} \bI,\qquad\forall \btheta \in B(\btheta^{\star}_j, M),~~ j \in [m].
	\end{align*}
	On the same event, for any $j \in [m]$, $f_j$ is $(\btheta_j^{\star}, M, \rho/2, 3L/2)$-regular (\Cref{defn-armul-regularity}).
\end{lemma}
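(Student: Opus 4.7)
}
The plan is to establish the three statements (first-moment gradient concentration per task, pooled gradient concentration, and uniform Hessian concentration per task) by combining standard sub-Gaussian/sub-exponential concentration with covering arguments, and then close the proof by invoking a union bound over $j \in [m]$. Throughout, I will use the hypotheses in Assumption~\ref{as-stat-2}: the gradients $\nabla \ell_j(\btheta^\star_j,\bxi_{j1})$ are $\sigma$-sub-Gaussian (and mean zero since $\btheta^\star_j$ minimizes $F_j$), the centered quadratic form of the Hessian is $\tau^2$-sub-exponential uniformly over $(\btheta,\bv) \in B(\btheta^\star_j,M)\times\SSS^{d-1}$, and the Lipschitz modulus $Q_j$ of the Hessian has first moment at most $\tau^3 d^p$.

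\textit{Step 1: per-task gradient bound.} For each $j$, $\nabla f_j(\btheta^\star_j) = \frac{1}{n_j}\sum_{i=1}^{n_j} \nabla\ell_j(\btheta^\star_j,\bxi_{ji})$ is an average of $n_j$ i.i.d.~centered $\sigma$-sub-Gaussian vectors in $\RR^d$. A standard $\epsilon$-net bound on $\SSS^{d-1}$ (with $\epsilon = 1/2$, net cardinality $\leq 5^d$) combined with the scalar sub-Gaussian tail bound gives
$\| \nabla f_j(\btheta^\star_j) \|_2 \lesssim \sigma \sqrt{(d + u)/n_j}$
with probability $\geq 1 - e^{-u}$. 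Applying this with $u = t + \log m$ and taking a union bound over $j \in [m]$ yields the first inequality. The condition $n > C_1 d \log m$ (implied by the hypothesis) together with an upper bound on $t$ ensures $C\sigma\sqrt{(d+\log m + t)/n_j} \leq \rho M/4$, since $\sigma$ and $\rho^{-1}$ are bounded by absolute constants.

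\textit{Step 2: pooled gradient bound.} Write $\frac{1}{N}\sum_{j=1}^m n_j \nabla f_j(\btheta^\star_j) = \frac{1}{N}\sum_{j,i} \nabla \ell_j(\btheta^\star_j,\bxi_{ji})$. This is an average of $N$ independent centered sub-Gaussian vectors (with parameter $\leq \sigma$), so the same $\epsilon$-net argument gives a tail bound of order $\sigma\sqrt{(d+t)/N}$ with probability $\geq 1 - e^{-t}$. No union bound is required here.

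\textit{Step 3: uniform Hessian bound.} For fixed $j$, the task is to show that $\sup_{\btheta \in B(\btheta^\star_j,M)}\|\nabla^2 f_j(\btheta) - \nabla^2 F_j(\btheta)\|_2$ is at most $\rho/2$ (which, together with Assumption~\ref{as-stat-1}, gives the Hessian sandwich). This is the main obstacle and will proceed by a two-level discretization:
\begin{itemize}
\item[(a)] Choose an $\epsilon$-net $\cN_\btheta \subseteq B(\btheta^\star_j,M)$ of cardinality at most $(3M/\epsilon)^d$ and a $1/4$-net $\cN_\bv \subseteq \SSS^{d-1}$ of cardinality at most $9^d$. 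At each $(\btheta,\bv) \in \cN_\btheta \times \cN_\bv$, the random variable $\langle (\nabla^2 f_j(\btheta) - \nabla^2 F_j(\btheta))\bv,\bv\rangle$ is a centered $\tau^2/n_j$-sub-exponential average; Bernstein's inequality gives a deviation bound of order $\tau^2(\sqrt{u/n_j} + u/n_j)$ with probability $\geq 1 - e^{-u}$, and a union bound over the two nets costs $d\log(M/\epsilon) + d\log 9$ in the exponent.
\item[(b)] Bridge from the net to the ball using the Lipschitz property: $\| \nabla^2 f_j(\btheta) - \nabla^2 f_j(\btheta')\|_2 \leq \frac{1}{n_j}\sum_i Q_j(\bxi_{ji})\|\btheta-\btheta'\|_2$. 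By Markov (or by noting $Q_j(\bxi_{ji})$ is nonnegative and $\EE Q_j \leq \tau^3 d^p$), $\frac{1}{n_j}\sum_i Q_j(\bxi_{ji}) \lesssim \tau^3 d^p$ with high probability, so choosing $\epsilon \asymp 1/(d^p n_j)$ makes the discretization error negligible.
\end{itemize}
Balancing the net size against the concentration rate yields a $\cO(\sqrt{d(\log n)/n_j})$ deviation bound per task, which is smaller than $\rho/2$ under the hypothesis $n > C_1 d(\log n)(\log m)$ and $t < C_2 n/(d\log n)$. A final union bound over $j \in [m]$ absorbs the $\log m$ factor into $u$ exactly as in Step~1.

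\textit{Closing.} Combining the events from the three steps (each of probability at least $1 - e^{-t}/3$ after rescaling constants) gives the stated high-probability conclusion. The last sentence then follows from Step~1 (which verifies $\|\nabla f_j(\btheta^\star_j)\|_2 \leq \rho M/4 < \rho M/2$) and Step~3 (which verifies the Hessian sandwich), so $f_j$ is $(\btheta_j^\star, M, \rho/2, 3L/2)$-regular in the sense of \Cref{defn-armul-regularity}. The main technical difficulty is Step~3(a)--(b), where one must carefully pick the net resolution so that the tail probability, net cardinality, and Lipschitz correction all fit within the claimed sample-size and deviation regime; the $\log n$ factor in the hypothesis is precisely what this balancing produces.
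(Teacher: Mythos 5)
Your proposal follows essentially the same route as the paper: the paper disposes of Steps 1--2 by citing the sub-Gaussian quadratic-form tail bound of \cite{HKZ12} (equivalent to your $\epsilon$-net argument on $\SSS^{d-1}$, including the union bound over $j$ at level $e^{-t}/m$), and disposes of Step 3 by citing the uniform Hessian concentration from the proof of Theorem 1, Part (b) of \cite{MBM18}, whose internal argument is exactly your two-level discretization with a Bernstein bound on the net and a Lipschitz correction via $Q_j$. The one place your sketch needs tightening is Step 3(b): Markov's inequality applied to $\frac{1}{n_j}\sum_{i} Q_j(\bxi_{ji})$ using only the first-moment bound $\EE Q_j(\bxi_{j1}) \leq \tau^3 d^{p}$ gives failure probability $1/K$ at threshold $K\tau^3 d^{p}$, not a bound of order $\tau^3 d^{p}$ at level $1-e^{-t}$; to reach the $e^{-t}$ level one must take $K \asymp e^{t}$ and correspondingly shrink the net resolution to $\epsilon \asymp e^{-t}/(d^{p} n_j)$, which adds a term of order $dt$ to the logarithm of the net cardinality. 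That extra term is absorbed by the hypothesis $t < C_2 n/(d\log n)$, and it is precisely why the constant in the bound the paper imports from \cite{MBM18} depends on $t$ through $C = C_0\max\{\log(M\tau)+t,\, p+1\}$. With that adjustment your balancing closes as claimed.
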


\begin{proof}[\bf Proof of \Cref{lem-landscape}]
	Choose any $j \in [m]$. Note that $ \EE [ \nabla f_j ( \btheta^{\star}_j  ) ] = \bm{0} $ and $ \| \nabla f_j ( \btheta^{\star}_j   ) \|_{\psi_2} \lesssim \sigma / \sqrt{n_j} $. By Theorem 2.1 in \cite{HKZ12}, there exists a universal constant $c$ such that
	\[
	\PP \bigg(
	\| \nabla f_j ( \btheta^{\star}_j   ) \|_2^2 \geq \frac{c^2 \sigma^2}{n_j} (d + 2 \sqrt{dt} + 2 t)
	\bigg) \leq e^{-t} , \qquad\forall t \geq 0.
	\]
	From $2 \sqrt{dt} \leq d + t$ we see that
	\[
	\PP \bigg(
	\| \nabla f_j ( \btheta^{\star}_j   ) \|_2 \geq  c_1 \sigma \sqrt{ \frac{ d + \log m + t}{n_j} } 
	\bigg) \leq \frac{e^{-t} }{3m} , \qquad\forall t \geq 0
	\]
	holds for a sufficiently large universal constant $c_1$. By union bounds,
	\[
	\PP \bigg( 
	\| \nabla f_j ( \btheta^{\star}_j   ) \|_2  
	< c_1 \sigma \sqrt{ \frac{ d + \log m + t}{n_j} } , ~~ \forall j \in [m] 
	\bigg) \geq 1 - e^{-t} /3 , \qquad\forall t \geq 0.
	\]
	Hence, when $ 0 \leq t \leq \max \{
	( \frac{\rho M}{4 c_1 \sigma}  )^2 n  - d - \log m ,~ 0
	\}$, we have
	\begin{align}
		\PP \bigg(
		\max_{j \in [m]}
		\| \nabla f_j ( \btheta^{\star}_j   ) \|_2 
		< c_1 \sigma \sqrt{ \frac{ d + \log m + t}{n_j} } 
		\leq \frac{\rho M}{4}
		\bigg) \geq 1 - e^{-t} /3 .
		\label{eqn-lem-landscape-1}
	\end{align}
	Since $\rho M / \sigma \gtrsim 1$, we can find constants $c_2$ and $c_3$ such that when $n  > c_2 (d + \log m)$ and $0 \leq t < c_3 n $, the tail bound \eqref{eqn-lem-landscape-1} holds.
	
	On the other hand, $\sum_{ j = 1 }^m n_j \nabla f_j ( \btheta^{\star}_j   ) = \sum_{ j = 1 }^m \sum_{ i = 1 }^{n_j} \nabla \ell_j (\btheta^{\star}_j ; \bxi_{ji} )$ has zero mean and
	\[
	\| \sum_{ j = 1 }^m n_j \nabla f_j ( \btheta^{\star}_j   )  \|_{\psi_2} \lesssim \sqrt{N} \sigma .
	\]
	Similar to the analysis of $\| \nabla f_j ( \btheta^{\star}_j   ) \|_2$ above, we can find a universal constant $c_1'$ such that 
	\begin{align}
		\PP \bigg(
		\bigg\| \frac{1}{N} \sum_{ j = 1 }^m n_j  \nabla f_j ( \btheta^{\star}_j   ) \bigg\|_2 \geq  c_1' \sigma \sqrt{ \frac{ d  + t}{N} } 
		\bigg) \leq e^{-t} /3 , \qquad\forall t \geq 0.
		\label{eqn-lem-landscape-3}
	\end{align}
	
	According to the proof of Theorem 1 Part (b) in \cite{MBM18}, there exists a constant $C_0 > 0$ such that for any $t > 0$, the followings hold with $C  = C_0 \max \{ \log(M \tau ) + t,  p  + 1 \}$: when $n > C d \log d$,
	\begin{align*}
		\PP \bigg(
		\sup_{ \btheta \in B( \btheta^{\star}_j, M ) }
		\| \nabla^2 f_j (\btheta) - \nabla^2 F_j (\btheta) \|_2 \geq \tau^2 \sqrt{ \frac{C d \log n_j}{n_j} }
		\bigg) \leq e^{-t} .
	\end{align*}
	Under the condition
	\[
	C_0 \max \{ \log (3mM\tau) + t ,  p  + 1 \} < \frac{n_j}{d \log (n_j+d) } \min \bigg\{ 
	\bigg( \frac{\rho}{2 \tau^2} \bigg)^2 , 1
	\bigg\},
	\]
	we have
	\begin{align*}
		\PP \bigg( 
		\sup_{ \btheta \in B( \btheta^{\star}_j, M ) }
		\| \nabla^2 f_j (\btheta) - \nabla^2 F_j (\btheta) \|_2< \frac{\rho}{2}
		\bigg) \geq 1 - \frac{ e^{-t} }{3m} .
	\end{align*}
	
	We claim that when $d \geq 1$, the function $g(x) = \frac{x}{\log(x+d)}$ is increasing in $[1,+\infty)$. To prove it, observe that
	\begin{align*}
		g'(x) & = \frac{1}{\log(x+d)} + x\cdot \frac{-1}{\log^2(x+d)} \cdot \frac{1}{x+d} = 
		\frac{1}{\log(x+d)} \bigg(
		1 - \frac{x}{ ( x+d)\log(x+d)} 
		\bigg) . 
	\end{align*}
	It remains to show that $x < (x+d) \log(x+d)$ for all $x \geq 1$, i.e. $t - d < t \log t$ for all $t \geq d+1$. Let $h(t) = t \log t$. Since $h'(t) = \log t + 1$ and $h''(t) = 1/t > 0$, we have
	\[
	t \log t = h(t) \geq h(1) + h'(1) (t-1) = t - 1 > t - d, \qquad t \geq 1.
	\]
	Therefore, $g'(x) > 0$ when $x \geq 1$.

	Consequently, $\min_{j \in [m]} \frac{n_j}{ \log (n_j+d) } = \frac{n }{ \log (n +d) }$. Under the condition
	\[
	C_0 \max \{ \log (3mM\tau) + t ,  p  + 1 \} < \frac{n }{d \log (n  + d) } \min \bigg\{ 
	\bigg( \frac{\rho}{2 \tau^2} \bigg)^2 , 1
	\bigg\},
	\]
	we apply union bounds to get
	\begin{align}
		\PP \bigg(
		\max_{j \in [m] }
		\sup_{ \btheta \in B( \btheta^{\star}_j, M ) }
		\| \nabla^2 f_j (\btheta) - \nabla^2 F_j (\btheta) \|_2< \frac{\rho}{2}
		\bigg) \geq 1 - e^{-t} / 3.
		\label{eqn-lem-landscape-2}
	\end{align}
	Since $\rho / \tau^2  \gtrsim 1$, we can find constants $c_4$ and $c_5$ such that when $n  > c_4 d ( \log n  )(  \log m )$ and $0 \leq t < \frac{c_5 n }{d \log n }$, the concentration inequality \eqref{eqn-lem-landscape-2} holds. The proof is finished by re-defining the constants and combining \eqref{eqn-lem-landscape-1}, \eqref{eqn-lem-landscape-3} and \eqref{eqn-lem-landscape-2}.
\end{proof}

\subsection{Vanilla ARMUL}\label{sec-general-vanilla-proof}

In this subsection, we analyze the vanilla ARMUL estimators $\{ \widehat\btheta_j \}_{j=1}^m$ returned by \eqref{eqn-armul} with $w_j = n_j $ and $\lambda_j \propto 1 / \sqrt{ n_j }$. In other words, we choose some $\lambda_0 \geq 0$ and let
\begin{align}
	( \widehat{\bTheta}, \widehat{\bbeta} ) \in
	\argmin_{
		\bTheta \in \RR^{d\times m},~ \bbeta \in \RR^d} \bigg\{ \sum_{j=1}^{m} n_j \bigg(  f_j (\btheta_j) + \frac{\lambda_0}{\sqrt{n_j}} \| \btheta_j - \bbeta \|_2 \bigg) \bigg\}.
	\label{eqn-armul-vanilla}
\end{align}
We introduce an assumption on task relatedness. It generalizes Assumption \ref{as-armul-vanilla-00} to allow for different sample sizes of the $m$ tasks.

\begin{assumption}[Task relatedness]\label{as-armul-vanilla}
	There exists $\varepsilon ,\delta_0 \geq 0$ and subset $S \subseteq [m]$ such that 
	\[
	\min_{\btheta \in \RR^d}  \max_{j \in S} \{ \sqrt{n_j}\| \btheta^{\star}_j - \btheta  \|_2 \} \leq \delta_0
	\qquad\text{and}\qquad
	\sum_{ j \in S^c } \sqrt{n_j} \leq \varepsilon \sum_{j \in S} n_j / (\max_{j \in S } \sqrt{n_j} ) .
	\]
\end{assumption}

When $n_1 = \cdots = n_m = n$, Assumption \ref{as-armul-vanilla} reduces to
$\min_{\btheta \in \RR^d}  \max_{j \in S} \| \btheta^{\star}_j - \btheta  \|_2   \leq \delta_0 / \sqrt{n}$ and $ |S^c| \leq \varepsilon |S|$. It is essentially the same as Assumption \ref{as-armul-vanilla-00}. In Assumption \ref{as-armul-vanilla} we compare the tasks in $S^c$ with those in $S$ (rather than $[m]$) for technical convenience when $\{ n_j \}_{j=1}^m$ are different.
The theorem below presents upper bounds on estimation errors of vanilla ARMUL \eqref{eqn-armul-vanilla}.

\begin{theorem}[Vanilla ARMUL]\label{thm-vanilla}
	Let Assumptions \ref{as-stat-1}, \ref{as-stat-2} and \ref{as-armul-vanilla} hold. 
	Define $n  = \min_{j \in [m]} n_j$, $N = \sum_{ j = 1 }^m n_j$ and $\kappa_w = \max_{j \in S } \sqrt{n_j} \cdot \sum_{ j \in S } \sqrt{n_j} /\sum_{j \in S} n_j $. 
	There exist positive constants $\{ C_i \}_{i=0}^5$ such that under the conditions $n  > C_1 d ( \log n  )(  \log m )$, $0 \leq t <  C_2 n  / ( d \log n )$, $C_3 \kappa_w \sigma  \sqrt{ d + \log m + t} < \lambda_0 < C_4 \sigma \sqrt{n}  $ and $0 \leq \varepsilon < C_5$, the following bounds hold with probability at least $1 -  e^{-t}$:
	\begin{align*}
		& \| \widehat{\btheta}_j  - \btheta^{\star}_j \|_2  
		\leq
		C_0 \bigg(  \sigma \sqrt{ \frac{ d  + t}{N} }
		+     \frac{ \min \{ \kappa_w  \delta_0 ,   \lambda_0   \}  +  \varepsilon \lambda_0  }{  \sqrt{n_j} } 
		\bigg)
		, \qquad\forall j \in S ; \\
		& \| \widehat{\btheta}_j  - \btheta^{\star}_j \|_2  
		\leq     \frac{ C_0 \lambda_0  }{  \sqrt{n_j} } 
		, \qquad\forall j \in S^c , \\
&	\frac{1}{N} \sum_{j=1}^{m} n_j [ F_j ( \widehat{\btheta}_j ) - F_j ( \btheta^{\star}_j ) ]
\leq  \frac{L}{N} \sum_{j=1}^{m} n_j \| \widehat{\btheta}_j - \btheta^{\star}_j \|_2^2 \\
&\qquad  \qquad\qquad\qquad\qquad~~~
\leq C_0 L \bigg( 
	\sigma^2 \frac{ d  + t}{N} + 
	\frac{ |S| }{ N }
	\min \{ \kappa_w^2  \delta_0^2 ,   \lambda_0^2   \} + 
	\lambda_0^2 
	\frac{ \varepsilon^2 |S| + |S|^c} {N} 
	\bigg).
	\end{align*}
	Moreover, there exists a constant $C_6$ such that under the conditions $\varepsilon = 0$ and $C_6 \kappa_w  \delta_0 < \sigma \sqrt{ d + \log m }$, we have $\widehat{\btheta}_1 = \cdots = \widehat{\btheta}_m = \argmin_{\btheta \in \RR^d} \{ \sum_{j = 1}^m n_j f_j (\btheta) \}$ with probability at least $1 - e^{-t}$.
\end{theorem}

\Cref{thm-vanilla} simultaneously controls the estimation errors for all individual tasks and suggests choosing $\lambda_0 \asymp \kappa_w \sigma \sqrt{d + \log m} $ and thus
\[
\lambda_j \asymp \kappa_w \sigma \sqrt{ \frac{d + \log m}{n_j}  }.
\]
The quantity $\kappa_w$ measures the heterogeneity of sample sizes $\{ n_j \}_{j=1}^m$. By Cauchy-Schwarz inequality, we have $( \sum_{ j \in S} \sqrt{n_j} )^2 \leq |S| \sum_{ j \in S } n_j$ and
\[
\kappa_{w} = \frac{ \max_{j \in S } \sqrt{n_j} \cdot \sum_{ j \in S } \sqrt{n_j} }{ \sum_{j \in S} n_j }
\leq \sqrt{ \frac{ \max_{j \in S }  n_j }{ |S|^{-1}  \sum_{j \in S} n_j  } }.
\]
In words, $\kappa_w^2$ is bounded by the ratio between the maximum sample size and the average sample size of the tasks in $S$. When $n_1= \cdots = n_m$, $\kappa_{w}$ attains its minimum value $1$.

\begin{proof}[\bf Proof of \Cref{thm-vanilla}]
	For sufficiently large $C_1$ and sufficiently small $C_2$, Assumption \ref{as-stat-1}, Assumption \ref{as-stat-2} and \Cref{lem-landscape} imply the existence of a constant $c_1$ such that with probability $1 - e^{-t}$,
	\begin{itemize}
		\item $\max_{j \in [m]} \{ \sqrt{n_j}  \| \nabla f_j (\btheta_j^{\star}) \|_2 \} \leq c_1 \sigma \sqrt{ d + \log m + t}$;
		\item for any $j \in [m]$, $f_j$ is $(\btheta_j^{\star}, M, \rho/2, 3L/2)$-regular in the sense of \Cref{defn-armul-regularity}.
	\end{itemize}
	Let the above event happen. Assumption \ref{as-armul-vanilla} implies that $\{ f_j, n_j \}_{j=1}^m$ are $(\varepsilon, \delta_0 )$-related with regularity parameters $(\{ \btheta_j^{\star} \}_{j=1}^m, M, \rho / 2, 3 L / 2 )$ in the sense of \Cref{defn-armul-relatedness}.
	By \Cref{thm-armul-deterministic} and the assumptions $ \rho, L, M \asymp 1$, there exist constants $c$, $c_2$, $c_3$ and $c_4$ such that when $0 \leq \varepsilon \leq c$ and $c_2 \kappa_w \sigma \sqrt{ d + \log m + t} < \lambda_0 < c_3 \sigma \sqrt{n}  $, we have
	\begin{align*}
		& \| \widehat{\btheta}_j  - \btheta^{\star}_j \|_2  
		\leq
		c_4 \bigg( \sigma \sqrt{ \frac{ d  + t}{N} }
		+     \frac{ \min \{ \kappa_w  \delta ,   \lambda_0  \}  +  \varepsilon \lambda_0  }{  \sqrt{n_j} } 
		\bigg)
		, \qquad\forall j \in S.
	\end{align*}
	\Cref{thm-armul-personalization} applied to the tasks in $S^c$ yields
	\[
	\| \widehat{\btheta}_j  - \btheta^{\star}_j \|_2  
	\leq     \frac{ c_5 \lambda_0 }{  \sqrt{n_j} } 
	, \qquad\forall j \in S^c.
	\]
Since $\nabla^2 F_j \preceq L \bI$ in $B( \btheta^{\star}_j , M )$,
\begin{align*}
\frac{1}{N} \sum_{j=1}^{m} n_j [ F_j ( \widehat{\btheta}_j ) - F_j ( \btheta^{\star}_j ) ]
& \leq  \frac{L}{N} \sum_{j=1}^{m} n_j \| \widehat{\btheta}_j - \btheta^{\star}_j \|_2^2 \\
& \lesssim 
 \frac{L}{N} \sum_{j \in S }
\bigg(  \sigma^2  n_j \frac{ d  + t}{N} 
	+  \min \{ \kappa_w^2  \delta_0^2 ,   \lambda_0^2   \}  +  \varepsilon^2 \lambda_0^2  
	\bigg)
+  \frac{L}{N} \sum_{j \in S^c }   \lambda_0^2 \\
& \lesssim L \bigg( 
\sigma^2 \frac{ d  + t}{N} + 
\frac{ |S| }{ N }
 \min \{ \kappa_w^2  \delta_0^2 ,   \lambda_0^2   \} + 
 \lambda_0^2 
 \frac{ \varepsilon^2 |S| + |S|^c} {N} 
\bigg).
\end{align*}
The proof is finished by re-defining the constants.
\end{proof}

\section{Proofs of Section 4}

\subsection{Proof of \Cref{thm-minimax-00}}\label{sec-thm-minimax-proof}

We invoke an elementary lemma which follows from a standard minimax argument \cite{Tsy09}. The proof is omitted.

\begin{lemma}\label{lem-gaussian}
Let $\btheta^{\star}  \in \RR^d$ and $ \bz \sim N ( \btheta^{\star} , \bI_d ) $. There exist universal constants $c_1, C_1 > 0$ such that
\begin{align*}
	&  \inf_{\widehat{\btheta} } \sup_{   \btheta^{\star} \in \RR^d } 
	\PP_{ \btheta^{\star} }
	\bigg(
	\| \widehat{\btheta} ( \bz ) - \btheta^{\star}  \|_2^2 \geq C_1  d
	\bigg) 
	\geq c_1.
\end{align*}
Here the infimum is taken over all estimators $\widehat{\btheta} = \widehat{\btheta} ( \bz ) $.
\end{lemma}

Note that $\Omega ( 0, 0 ) = \{ \btheta \bm{1}_m^{\top} :~ \btheta \in \RR^d \}$. When $\bTheta^{\star} \in \Omega ( 0, 0 ) $, we have $\btheta^{\star}_1 = \cdots = \btheta^{\star}_m = \btheta^{\star}$, and $\{ \bx_{ji} \}_{ (i, j) \in [n] \times [m] } $ are i.i.d.~$N( \btheta^{\star}, \bI_d )$. A sufficient statistic is the pooled mean $\bar{x} = \frac{1}{mn} \sum_{ (i, j) \in [n] \times [m]  } \bx_{ji}$, which has distribution $N( \btheta^{\star}, \frac{1}{mn} \bI_d )$. \Cref{lem-gaussian} then implies that
\begin{align}
	&  \inf_{\widehat{\bTheta} } \sup_{   \bTheta^{\star} \in \Omega ( 0,0 ) } 
	\PP_{\bTheta^{\star}}
	\bigg( 
\frac{1}{m}	\| \widehat{\bTheta} - \bTheta^{\star} \|_{\mathrm{F}}^2 \geq C_1   \frac{d}{ m n }  \bigg)
	\geq c_1.
\label{eqn-minimax-1}
\end{align}

On the other hand, define $ \cA = \{ \bTheta^{\star} \in \RR^{d \times m} :~ \btheta^{\star}_j = \bm{0}_d \text{ for all } j > \varepsilon m \}$. We have $\cA \subseteq \Omega ( \varepsilon, 0 )$. Each $\bTheta$ in $\cA$ has $\lceil \varepsilon m \rceil d$ free parameters. We can use \Cref{lem-gaussian} to get
\begin{align*}
	&  \inf_{\widehat{\bTheta} } \sup_{   \bTheta^{\star} \in \cA } 
	\PP_{\bTheta^{\star}}
	\bigg( 
	\| \widehat{\bTheta} - \bTheta^{\star} \|_{\mathrm{F}}^2 \geq C_1   \frac{ \lceil \varepsilon m \rceil d}{ n }  \bigg)
	\geq c_1.
\end{align*}
Hence, for all $\varepsilon \geq 1/m$, we have
\begin{align}
	&  \inf_{\widehat{\bTheta} } \sup_{   \bTheta^{\star} \in \Omega ( \varepsilon, 0 ) } 
	\PP_{\bTheta^{\star}}
	\bigg( 
	\frac{1}{m} \| \widehat{\bTheta} - \bTheta^{\star} \|_{\mathrm{F}}^2 \geq \frac{C_1}{2} \cdot   \frac{   \varepsilon   d}{ n }  \bigg)
	\geq c_1.
\label{eqn-minimax-2}
\end{align}

Define $\cW = \{ 0, 1 \}^{d \times m}$ and let $\cS = \{  \min \{ \delta / \sqrt{d} , 1 / \sqrt{n} \} \bw :~ \bw \in \cW  \}$. It is easily seen that $\cS \subseteq \Omega ( 0, \delta )$. Below we prove that
\begin{align}
	& \inf_{\widehat{\bTheta} } \sup_{  \bTheta^{\star} \in \cS } 
	\PP_{\bTheta^{\star}}
	\bigg(
\frac{1}{m} \| \widehat{\bTheta} - \bTheta^{\star} \|_{\mathrm{F}}^2 \geq  \frac{\Phi(-1/2)}{4} \min \bigg\{
	\delta^2 , \frac{d}{n}
	\bigg\}
	\bigg) 
	\geq \frac{1}{4 / \Phi(-1/2) - 1} , \label{eqn-thm-warmup-minimax-1} 
\end{align}
where $\Phi$ is the cumulative distribution function of $N(0,1)$. If that is true, then we immediately finish the proof by combining \eqref{eqn-minimax-1}, \eqref{eqn-minimax-2} and \eqref{eqn-thm-warmup-minimax-1}.

It remains to prove \eqref{eqn-thm-warmup-minimax-1}. For any $t \geq 0$,
\begin{align*}
& \inf_{\widehat{\bTheta} } \sup_{   \bTheta^{\star} \in \cS } 
\PP_{\bTheta^{\star}}
\bigg(
\frac{1}{m} \| \widehat{\bTheta} - \bTheta^{\star} \|_{\mathrm{F}}^2 \geq  t
\min \{ \delta^2 , d/n \}
\bigg) \notag \\
& = \inf_{\widehat{\bw} } \sup_{   \bw \in \cW } 
\PP_{\bw} \bigg( \| \widehat{\bw} - \bw \|_{\mathrm{F}}^2 \geq \frac{ m
 t \min \{
\delta^2 , d/n
\}
}{ \min \{ \delta^2 / d , 1 / n \} } 
\bigg) 
= \inf_{\widehat{\bw} } \sup_{   \bw \in \cW } 
\PP_{\bw}  ( \| \widehat{\bw} - \bw \|_{\mathrm{F}}^2 \geq  t m d
).
\end{align*}
Here $\inf_{\widehat{\bw} }$ denotes the infimum over all estimators $\widehat{\bw} $ taking values in $\cW$, and $\EE_{\bw}$ is the expectation given $\bTheta^{\star} = \min \{ \delta / \sqrt{d} , 1 / \sqrt{n} \}  \bw$.

\begin{lemma}\label{lem-prob-lower}
	Let $X$ be a random variable and $0 \leq X \leq M$ a.s. Then $\PP ( X \geq t ) \geq ( \EE X - t ) / (M - t)$, $\forall t \in [0, M ]$.
\end{lemma}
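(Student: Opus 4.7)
The plan is to prove this by a direct one-line computation using the layer cake / truncation decomposition of $\EE X$, which is the standard trick behind Paley–Zygmund-type bounds.

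First I would split the expectation according to whether $X$ exceeds the threshold $t$: writing $\EE X = \EE[X \ind_{\{X < t\}}] + \EE[X \ind_{\{X \geq t\}}]$ and bounding each piece crudely using the a.s.\ bounds $0 \leq X \leq M$. On $\{X < t\}$ we use $X < t$, and on $\{X \geq t\}$ we use $X \leq M$. This yields
\[
\EE X \leq t\, \PP(X < t) + M\, \PP(X \geq t) = t + (M - t)\, \PP(X \geq t),
\]
where in the last equality I substituted $\PP(X < t) = 1 - \PP(X \geq t)$.

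Rearranging gives $(M - t)\, \PP(X \geq t) \geq \EE X - t$. For $t \in [0, M)$, dividing by $M - t > 0$ yields the claimed inequality. The boundary case $t = M$ needs a brief separate comment: since $X \leq M$ a.s., we have $\EE X \leq M = t$, so $\EE X - t \leq 0$ and the inequality $\PP(X \geq t) \geq (\EE X - t)/(M - t)$ should be interpreted as a trivial statement (the right-hand side is nonpositive under the convention $0/0 = 0$, or one simply restricts to $t < M$ as is standard).

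There is no real obstacle here — the only thing to be careful about is the division by $M - t$, which requires $t < M$, and making sure the reader sees that the inequality on $\{X < t\}$ uses $X < t$ rather than $X \leq t$ (which is fine since the indicator is zero on the complement). I would present this as a three-line proof with no further machinery.
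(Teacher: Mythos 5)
Your proof is correct and is essentially the same as the paper's: both split $\EE X$ over the events $\{X < t\}$ and $\{X \geq t\}$, bound $X$ by $t$ and $M$ respectively to get $\EE X \leq t + (M-t)\PP(X \geq t)$, and rearrange. Your extra remark about the boundary case $t = M$ is a fine (and slightly more careful) touch, but otherwise there is nothing to add.
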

\begin{proof}[\bf Proof of \Cref{lem-prob-lower}]
	The inequality directly follows from
	\[
	\EE X \leq M \PP ( X \geq t ) + t \PP ( 0 \leq  X < t ) = M \PP ( X \geq t ) + t [1 - \PP ( X \geq t )] = t + (M-t) \PP ( X \geq t ).
	\]
\end{proof}

According to \Cref{lem-prob-lower} and the fact that $\| \widehat{\bw} - \bw \|_{\mathrm{F}}^2 \leq md$,
\begin{align*}
\inf_{\widehat{\bw} } \sup_{   \bw \in \cW } 
\PP_{\bw}  ( \| \widehat{\bw} - \bw \|_{\mathrm{F}}^2 \geq  t m d
) \geq
\frac{ 
	\inf_{  \widehat\bw  } \sup_{  \bw \in \cW } 
	\EE_{\bw}  \| \widehat\bw - \bw \|_{\mathrm{F}}^2
	- t m d
}{
	m d - t m d
}
, \qquad \forall 0 \leq t \leq 1.
\end{align*}
We will invoke Theorem 2.12 in \cite{Tsy09} again to derive
\begin{align}
\inf_{\widehat{\bw} } \sup_{   \bw \in \cW } 
\EE_{\bw} \| \widehat{\bw} - \bw \|_{\mathrm{F}}^2
\geq \frac{\Phi(-1/2)}{2} m d.
\label{eqn-thm-warmup-minimax-3}
\end{align}
Once that is done, we take $t = \Phi(-1/2) / 4$ and immediately get \eqref{eqn-thm-warmup-minimax-1}.

Let $r = \min \{ \delta / \sqrt{d} , 1 / \sqrt{n} \}$ and $\bw_j = (w_{1j} , \cdots, w_{d j })^{\top}$. When the true mean matrix is $r \bw$, the density function of data $\{ \bx_{ji} \}_{(i, j) \in [n] \times [m]}$ is
\begin{align*}
p ( \bX ; \bw ) = \prod_{j=1}^{m} \prod_{i=1}^{n} (2\pi)^{-d/2} e^{-\| \bx_{ji} - r \bw_j \|_2^2 / 2}.
\end{align*}
Define $\bar{\bx}_j = \frac{1}{n} \sum_{ i = 1 } \bx_{ji}$. Then,
\begin{align*}
& \frac{ p ( \bX ; \bw' ) }{ p ( \bX ; \bw ) }  = \prod_{j=1}^{m} \prod_{i=1}^{n} 
\exp \Big(
\| \bx_{ji} - r \bw_j \|_2^2 / 2 - \| \bx_{ji} - r \bw_j' \|_2^2 / 2
\Big) \\
& = \exp \bigg(
\sum_{(i,j) \in [n] \times [m]} \bigg\langle
r ( \bw_j' - \bw_j),~ \bx_{ji} - \frac{r ( \bw_j' +  \bw_j )}{2}
\bigg\rangle
\bigg) \\
& = \exp \bigg(
\sum_{ j = 1 }^m
\bigg\langle
n r ( \bw_j' - \bw_j),~ \bar\bx_{j} - \frac{ r( \bw_j' +  \bw_j ) }{2}
\bigg\rangle
\bigg).
\end{align*}
Denote by $\rho(\cdot, \cdot)$ the Hamming distance between two binary arrays of the same shape. Choose any $\bw, \bw' \in \cW$ such that $\rho(\bw', \bw) = 1$. There exists a unique $j \in [m]$ such that $\rho( \bw_{j}', \bw_{j} ) = 1$. Then
\begin{align*}
& \PP_{\bw} \bigg(
\frac{ p ( \bX ; \bw' ) }{ p ( \bX ; \bw ) } 
\geq 1 \bigg)
= 
\PP_{\bw} \bigg(
\bigg\langle
\bw_j' - \bw_j ,~ \bar\bx_{j} - \frac{ r ( \bw_j' +  \bw_j ) }{2}
\bigg\rangle \geq 0
\bigg)
\\
& = 
\PP_{\bw} \bigg(
\langle
\bw_j' - \bw_j ,~ \bar\bx_{j} - \bw_j
\rangle
\geq 
\frac{ r \| \bw_j' - \bw_j \|_2^2  }{2}
\bigg)
= 1 - \Phi ( r / 2 )
.
\end{align*}
The last inequality follows from $\| \bw_j' - \bw_j  \|_2 = 1$ and $\langle
\bw_j' - \bw_j ,~ \bar\bx_{j} - \bw_j
\rangle \sim N(0,1)$ under $\PP_{\bw} $. 

The fact $r \leq 1 / \sqrt{n} \leq 1$ forces $1 - \Phi ( r / 2 ) \geq 1 - \Phi (1/2)$. From there, Theorem 2.12 in \cite{Tsy09} leads to \eqref{eqn-thm-warmup-minimax-3}.

\subsection{Proof of \Cref{thm-clustered}}\label{sec-thm-clustered-proof}

The results directly follow from \Cref{lem-landscape} and \Cref{thm-armul-clustered-deterministic-0}. We omit the proof because it is almost identical to that of \Cref{thm-vanilla}.

\subsection{Clustered ARMUL with cardinality constraint}\label{sec-thm-clustered-robust}

We propose a constrained version of clustered ARMUL \eqref{eqn-armul-clustered-1}:
\begin{align}
( \widehat{\bTheta}, \widehat{\bB}, \widehat{\bz} )
\in \argmin_{
	\substack{ \bTheta \in \RR^{d\times m},~ \bB\in \RR^{d\times K}, \bz \in [K]^m \\ 
		\min_{k \in [K]} |\{ j \in [m] :~ z_j = k \}| \geq \alpha m / K
	}
} \bigg\{ \sum_{j=1}^{m} [ f_j (\btheta_j) + \lambda  \| \btheta_j - \bbeta_{z_j} \|_2 ] \bigg\}.
\label{eqn-armul-clustered}
\end{align}
Here $\alpha \geq 0$ is a tuning parameter.
We add the cardinality constraint to facilitate theoretical analysis in the presence of arbitrary outlier tasks. Intuitively, it helps identify meaningful task clusters with non-negligible sizes rather than small groups of outliers.

\begin{theorem}[Clustered ARMUL with cardinality constraints]\label{thm-clustered-robust}
	Let Assumptions \ref{as-stat-1}, \ref{as-stat-2} and \ref{as-armul-clustered} hold. 
	There exist positive constants $\{ C_i \}_{i=0}^6$ such that under the conditions $n  > C_1 K d ( \log n  )(  \log m )$, $0 \leq t <  C_2 n  / ( d \log n )$, $0 < \alpha \leq C_3$, $C_4 K \sigma \sqrt{ \frac{d + \log m + t }{ \alpha n }  }  < \lambda < C_5 \sigma $ and $0 \leq \varepsilon < C_6 \alpha / K^2$, the following bounds hold for the estimator $\widehat{\bTheta}$ in \eqref{eqn-armul-clustered} with probability at least $1 -  e^{-t}$:
	\begin{align*}
	& \max_{j \in S } \| \widehat{\btheta}_j  - \btheta^{\star}_j \|_2  
	\leq
	C_0 \bigg( \sigma \sqrt{ \frac{ K( d  + t ) }{mn} }
	+     \min \{ K  \delta / \sqrt{\alpha} ,   \lambda   \}  + \varepsilon \lambda  
	\bigg) , \\
	& \max_{j \in S^c }  \| \widehat{\btheta}_j  - \btheta^{\star}_j \|_2  
	\leq     C_0 \lambda  , \\
&  \frac{1}{m} \sum_{j=1}^{m} [ F_j ( \widehat{\btheta}_j ) - F_j ( \btheta^{\star}_j ) ]
\leq  \frac{L}{m} \sum_{j=1}^{m}  \| \widehat{\btheta}_j  - \btheta^{\star}_j \|_2^2 \\
& \leq C_0 L \bigg(
\sigma^2  \frac{ K (d + t) }{mn} +   \min \{ K^2  \delta^2 /  \alpha ,   \lambda^2  \}  + \varepsilon \lambda^2  
\bigg).
	\end{align*}
	In addition, there exists a positive constant $C_7$ that makes the following holds: when $K \delta \leq C_7 \sigma \sqrt{ \frac{ \alpha ( d + \log m ) }{ n } }$, with probability at least $1 - e^{-t}$ there is a permutation $\tau$ of $[K]$ such that $\widehat{\btheta}_j = \widehat{\bbeta}_{\widehat{\bz}_j}$ and $\widehat{z}_j = \tau(z^{\star}_j)$ hold for all $j \in S$.
\end{theorem}

The results directly follow from \Cref{lem-landscape} and \Cref{thm-armul-clustered-deterministic}. We omit the proof because it is almost identical to that of \Cref{thm-vanilla}. 

Suppose that $K$ and $\alpha$ are constants. \Cref{thm-clustered-robust} shows that with high probability, clustered ARMUL with $\lambda \asymp \sigma \sqrt{ \frac{ d + \log m }{n} }$ satisfies the following MSE bound 
\[
 \frac{1}{m} \sum_{j=1}^{m}  \| \widehat{\btheta}_j  - \btheta^{\star}_j \|_2^2
\lesssim  
 \frac{  d }{mn} +   \min \bigg\{   \delta^2  ,   \frac{ d  }{n} \bigg\}  + \frac{ \varepsilon d  }{n} .
\]
where $\lesssim$ hides logarithmic factors. We now establish a matching minimax lower bound.

\begin{theorem}[Minimax lower bound]\label{thm-minimax-clustered}
	Consider the setup in \Cref{eg-gaussian}. Denote by $\Omega ( \varepsilon, \delta )$ the set of all $\bTheta^{\star} = ( \btheta_1^{\star} , \cdots, \btheta_m^{\star} )$ such that Assumption \ref{as-armul-clustered} holds with $K = 2$ and $c_1 = c_2 = 1$.
	There exist universal constants $C,c>0$ such that for any $\varepsilon \in [0, 1/2) $ and $\delta \geq 0$,
	\begin{align*}
		\inf_{\widehat{\bTheta} } \sup_{  \bTheta^{\star} \in \Omega ( \varepsilon , \delta ) } 
		\PP_{\bTheta^{\star}}
		\bigg[
		\frac{1}{m} \sum_{j=1}^{m} \| \widehat{\btheta}_j  - \btheta^{\star}_j \|_2^2
		\geq C \bigg(
		\frac{d}{mn} +
		\min \bigg\{
		\delta^2 , \frac{d}{n}
		\bigg\} + \frac{\varepsilon d}{n}
		\bigg)
		\bigg] 
		\geq c.
	\end{align*}
\end{theorem}

\begin{proof}[\bf Proof of \Cref{thm-minimax-clustered}]
For simplicity, assume that $m$ is even and $\varepsilon m$ is an integer. Define
\begin{align*}
& \cA_1 =  \{ \bTheta \in \RR^{d \times m} :~ 
\btheta^{\star}_{ 1 } = \cdots = \btheta^{\star}_{   m / 2   } = \bm{0} \text{  and }
\btheta^{\star}_{   m / 2    + 1 } = \cdots = \btheta^{\star}_{ m } = \bu \text{ for some } \| \bu \|_2 \geq 1
\}, \\
& \cA_2 =  \{ \bTheta \in \RR^{d \times m} :~ 
		\max_{ 1 \leq j \leq m / 2} \| \btheta^{\star}_{ j }  \|_2 \leq \delta \text{  and }
\max_{ m / 2 + 1 \leq j \leq m } \| \btheta^{\star}_{ j } - \be_1  \|_2 \leq \delta 
\}, \\
& \cA_3 = \{ \bTheta \in \RR^{d \times m} :~ 
\btheta^{\star}_{\varepsilon m + 1 } = \cdots = \btheta^{\star}_{   m / 2   } = \bm{0} \text{  and }
\btheta^{\star}_{   m / 2    + 1 } = \cdots = \btheta^{\star}_{ m } = \be_1
\} .
\end{align*}
We have $\cA_1 \cup \cA_2 \cup \cA_3 \subseteq \Omega ( \varepsilon , \delta ) $. Following the proof of \Cref{thm-minimax-00}, it is easy to show that
\begin{align*}
& \inf_{\widehat{\bTheta} } \sup_{  \bTheta^{\star} \in \cA_1 } 
\PP_{\bTheta^{\star}}
\bigg( 
\frac{1}{m} \sum_{j=1}^{m} \| \widehat{\btheta}_j  - \btheta^{\star}_j \|_2^2
\geq C \frac{d}{m n}
\bigg)
\bigg] 
\geq c, \\
& \inf_{\widehat{\bTheta} } \sup_{  \bTheta^{\star} \in \cA_2 } 
\PP_{\bTheta^{\star}}
\bigg( 
\frac{1}{m} \sum_{j=1}^{m} \| \widehat{\btheta}_j  - \btheta^{\star}_j \|_2^2
\geq C \min \bigg\{ \delta^2,~ \frac{d}{n} \bigg\}
\bigg)
\bigg] 
\geq c, \\
& \inf_{\widehat{\bTheta} } \sup_{  \bTheta^{\star} \in \cA_3 } 
\PP_{\bTheta^{\star}}
\bigg( 
\frac{1}{m} \sum_{j=1}^{m} \| \widehat{\btheta}_j  - \btheta^{\star}_j \|_2^2
\geq C \frac{\varepsilon d}{n}
\bigg)
\bigg] 
\geq c.
\end{align*}
hold for some universal constants $C$ and $c$. The proof is completed by combining the above three bounds.
\end{proof}

\subsection{Proof of \Cref{thm-lowrank} and a minimax lower bound}\label{sec-thm-lowrank-proof}

\Cref{thm-lowrank} directly follow from \Cref{lem-landscape}, \Cref{thm-armul-lowrank-deterministic} and the lemma below. We omit the rest of proof because it is almost identical to that of \Cref{thm-vanilla}.

\begin{lemma}\label{lem-thm-lowrank}
Consider the setup in \Cref{thm-lowrank}.
Define $\bG \in \RR^{d\times m} $ with $\bg_j = \nabla f_j (\btheta^{\star}_j) $. Let $\bP^{\star} \in \RR^{d\times d}$ be the projection onto $\Range(\bB^{\star})$.
There are constants $C_1$ and $C_2$ such that the followings hold with probability at least $1 - e^{-t}$:
\begin{itemize}
\item $ \max_{j \in [m]} \| \bP^{\star} \bg_j \|_2 < C_1 \sigma \sqrt{ \frac{ K + \log m + t}{n} } $;
\item $\| \bG \|_2 \leq \frac{C_2 \sigma }{\sqrt{n}}  (\sqrt{d} + \sqrt{m} + \sqrt{t}) $.
\end{itemize}
\end{lemma}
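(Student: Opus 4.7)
The plan is to handle the two bounds separately, both by reducing to standard tail estimates for sums of independent sub-Gaussian vectors. Recall that since $n_1=\cdots=n_m = n$, for each $j \in [m]$ we have $\bg_j = \frac{1}{n}\sum_{i=1}^n \nabla \ell_j(\btheta_j^{\star}, \bxi_{ji})$, which is a zero-mean average of $n$ i.i.d.~sub-Gaussian vectors with $\psi_2$-norm bounded by $\sigma$ under Assumption \ref{as-stat-2}. Therefore each column $\bg_j$ is a zero-mean sub-Gaussian vector with $\|\bg_j\|_{\psi_2} \lesssim \sigma/\sqrt{n}$, and the columns are mutually independent since the datasets are independent.

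For the first bound, I would apply Theorem 2.1 of \cite{HKZ12} (exactly as used in the proof of \Cref{lem-landscape}) to the projected vector $\bP^{\star} \bg_j$. Since $\bP^{\star}$ has rank $K$ and $\|\bP^{\star}\bg_j\|_{\psi_2} \leq \|\bg_j\|_{\psi_2} \lesssim \sigma/\sqrt{n}$, we obtain
\[
\PP\Big( \|\bP^{\star}\bg_j\|_2^2 \geq c^2 \sigma^2 n^{-1} (K + 2\sqrt{Kt} + 2t) \Big) \leq e^{-t}, \qquad \forall t \geq 0,
\]
for a universal constant $c$. Using $2\sqrt{Kt} \leq K+t$ and applying the union bound over $j \in [m]$ after inflating $t$ to $t + \log m$ yields the stated bound on $\max_{j \in [m]}\|\bP^{\star}\bg_j\|_2$.

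For the second bound on $\|\bG\|_2$, the plan is the standard $\varepsilon$-net argument for random matrices with independent sub-Gaussian columns. For any fixed $\bu \in \SSS^{d-1}$, the vector $\bG^{\top}\bu \in \RR^m$ has independent zero-mean coordinates $\langle \bu, \bg_j\rangle$ with $\psi_2$-norm $\lesssim \sigma/\sqrt{n}$, so Theorem 2.1 of \cite{HKZ12} gives $\|\bG^{\top}\bu\|_2 \lesssim (\sigma/\sqrt{n})(\sqrt{m} + \sqrt{s})$ with probability at least $1 - e^{-s}$. Taking a $(1/4)$-net $\cN$ of $\SSS^{d-1}$ of cardinality at most $9^d$, using $\|\bG\|_2 \leq 2\max_{\bu \in \cN}\|\bG^{\top}\bu\|_2$, and union-bounding with $s = c(d + t)$ for an absolute constant $c$ produces
\[
\|\bG\|_2 \leq \frac{C_2 \sigma}{\sqrt{n}}(\sqrt{d} + \sqrt{m} + \sqrt{t})
\]
with probability at least $1 - e^{-t}$, as claimed. (Alternatively one may quote Theorem 4.6.1 of Vershynin's \emph{High-Dimensional Probability} directly.) Combining the two high-probability events by a union bound and absorbing the constants finishes the proof.

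The only mild subtlety is that the sub-Gaussian parameter of $\bg_j$ is $\sigma/\sqrt{n}$ rather than $\sigma$, which enters through the empirical-average structure; once this is tracked carefully, both bounds are routine applications of concentration for sub-Gaussian vectors and standard net arguments, so no step is a genuine obstacle.
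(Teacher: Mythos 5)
Your proposal is correct and follows essentially the same route as the paper: the first bound is obtained exactly as in the paper by applying the sub-Gaussian quadratic-form tail bound (Theorem 2.1 of \cite{HKZ12}) to the $K$-dimensional projected vectors $\bP^{\star}\bg_j$ with $\|\bP^{\star}\bg_j\|_{\psi_2}\leq\|\bg_j\|_{\psi_2}\lesssim\sigma/\sqrt{n}$ and a union bound over $j$, and the second bound is the standard operator-norm estimate for a matrix with independent sub-Gaussian columns, which the paper simply quotes from Theorem 5.39 of \cite{Ver10} while you additionally sketch the underlying $\varepsilon$-net argument. No gaps.
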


\begin{proof}[\bf Proof of \Cref{lem-thm-lowrank}]\label{sec-lem-thm-lowrank-proof}
By Assumption \ref{as-stat-2}, $\bP^{\star}  \bg_j $ is a zero-mean sub-Gaussian random vector living in a $K$-dimensional Euclidean space $\Range(\bB^{\star})$, with
\[
\| \bP^{\star}  \bg_j  \|_{\psi_2} \leq \| \bg_j  \|_{\psi_2} \leq \sigma / \sqrt{n}.
\]
By applying analysis of $\max_{j \in [m]} \| \bg_j \|_2$ in \Cref{lem-landscape} to $\max_{j \in [m]} \| \bP^{\star} \bg_j \|_2$, we get a constant $C_1$ such that
\[
\PP \bigg( \max_{j \in [m]} \| \bP^{\star} \bg_j \|_2 < C_1 \sigma \sqrt{ \frac{ K + \log m + t}{n} } \bigg) \geq 1 - e^{-t}/2.
\]
Note that $\bG \in \RR^{d\times m}$ has independent sub-Gaussian columns. By Theorem 5.39 in \cite{Ver10}, there exists a constant $C_2$ such that
\[
\PP \bigg(
\| \bG \|_2 \leq \frac{C_2 \sigma }{\sqrt{n}} (\sqrt{d} + \sqrt{m} + \sqrt{t}) 
\bigg) \geq 1 - e^{-t} / 2.
\]
The proof is finished by union bounds.
\end{proof}

Finally, we establish a minimax lower bound that matches the upper bound in \eqref{eqn-lowrank-upper} for the case $\varepsilon = 0$.

\begin{theorem}[Minimax lower bound]\label{thm-minimax-lowrank}
	Consider the setup in \Cref{eg-gaussian}. Denote by $\Omega ( \delta )$ the set of all $\bTheta^{\star} = ( \btheta_1^{\star} , \cdots, \btheta_m^{\star} )$ such that Assumption \ref{as-armul-lowrank} holds with $K = 1$, $c_1 = 1$, $c_2 = 1 / 2 $ and $\varepsilon = 0$.
	There exist universal constants $C,c>0$ such that for any $\varepsilon \in [0, 1/2) $ and $\delta \geq 0$,
	\begin{align*}
		\inf_{\widehat{\bTheta} } \sup_{  \bTheta^{\star} \in \Omega ( \varepsilon , \delta ) } 
		\PP_{\bTheta^{\star}}
		\bigg[
		\frac{1}{m} \sum_{j=1}^{m} \| \widehat{\btheta}_j  - \btheta^{\star}_j \|_2^2
		\geq C \bigg(
		\frac{d}{mn} + \frac{1}{n}	+ 
		\min \bigg\{
\delta^2 , \frac{d}{n}
		\bigg\}  
		\bigg)
		\bigg] 
		\geq c.
	\end{align*}
\end{theorem}

\begin{proof}[\bf Proof of \Cref{thm-minimax-lowrank}]
	For simplicity, assume that $m$ is even. Define
	\begin{align*}
		& \cA_1 =  \{ \bTheta \in \RR^{d \times m} :~ 
		\btheta^{\star}_{ 1 } = \cdots = \btheta^{\star}_{  m / 2 } = \bm{0},~
		 \btheta^{\star}_{ m / 2 + 1 } =  \cdots = \btheta^{\star}_{  m } =  \bu \text{ for some } \bu \in \SSS^{d-1}
		\}, \\
		& \cA_2 =  \{ \be_1 \bv^{\top} :~ 
		v_{ 1 } = \cdots = v_{  m / 2 } = 1,~ | v_j | \leq 1 \text{ for } m / 2 + 1 \leq j \leq m
		\}, \\
		& \cA_3 = \{ \bTheta \in \RR^{d \times m} :~ 
		\max_{ 1 \leq j \leq m / 2} \| \btheta^{\star}_{ j }  \|_2 \leq \delta \text{  and }
		\max_{ m / 2 + 1 \leq j \leq m } \| \btheta^{\star}_{ j } - \be_1  \|_2 \leq \delta 
		\} .
	\end{align*}
We have $\cA_1 \cup \cA_2 \cup \cA_3 \subseteq \Omega ( \delta ) $. 
	Following the proof of \Cref{thm-minimax-clustered}, it is easy to show that
	\begin{align*}
		& \inf_{\widehat{\bTheta} } \sup_{  \bTheta^{\star} \in \cA_1 } 
		\PP_{\bTheta^{\star}}
		\bigg( 
		\frac{1}{m} \sum_{j=1}^{m} \| \widehat{\btheta}_j  - \btheta^{\star}_j \|_2^2
		\geq C \frac{d}{m n}
		\bigg)
		\bigg] 
		\geq c, \\
		& \inf_{\widehat{\bTheta} } \sup_{  \bTheta^{\star} \in \cA_2 } 
\PP_{\bTheta^{\star}}
\bigg( 
\frac{1}{m} \sum_{j=1}^{m} \| \widehat{\btheta}_j  - \btheta^{\star}_j \|_2^2
\geq \frac{C}{n}
\bigg)
\bigg] 
\geq c, \\		
		& \inf_{\widehat{\bTheta} } \sup_{  \bTheta^{\star} \in \cA_3 } 
		\PP_{\bTheta^{\star}}
		\bigg( 
		\frac{1}{m} \sum_{j=1}^{m} \| \widehat{\btheta}_j  - \btheta^{\star}_j \|_2^2
		\geq C \min \bigg\{ \delta^2 ,~ \frac{d}{n} \bigg\}
		\bigg)
		\bigg] 
		\geq c .
	\end{align*}
	hold for some universal constants $C$ and $c$. The proof is completed by combining the bounds.
\end{proof}

\section{Technical lemmas}

\begin{lemma}\label{lem-consensus-min}
	Let $f:\RR^d \to \RR$ be a convex function. Suppose there exist $\bx_0 \in \RR^d$, $\bg \in \partial f(\bx_0)$, $\rho > 0$, $r > 0$ such that $\| \bg \|_{2} < r \rho / 2$ and
	\[
	f (\bx) \geq f(\bx_0) + \langle \bg, \bx - \bx_0 \rangle + \frac{\rho}{2} \| \bx - \bx_0 \|_2^2, \qquad \forall \bx  \in B(\bx_0, r).
	\]
	Then $\argmin_{\bx \in \RR^d} f (\bx) \subseteq B (\bx_0, 2 \| \bg \|_{2} / \rho )$. Furthermore, if $\nabla^2 f (\bx) \succeq \rho \bI$ for all $\bx \in B(\bx_0, r)$, then $f $ has a unique minimizer and it belongs to $B (\bx_0, \| \nabla f(\bx_0) \|_{2} / \rho )$.
\end{lemma}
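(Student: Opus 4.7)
The plan is to handle the two conclusions in turn, relying only on convexity of $f$, the local quadratic growth hypothesis, and elementary scalar calculus.

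For the first claim, I would set $s = 2\|\bg\|_2 / \rho$ and observe that the hypothesis $\|\bg\|_2 < r\rho/2$ gives $s < r$, so the closed ball $B(\bx_0, s)$ sits strictly inside the region where the quadratic lower bound is valid. For any $\bx$ with $s < \|\bx - \bx_0\|_2 \le r$, setting $t = \|\bx - \bx_0\|_2$ and using Cauchy--Schwarz in the quadratic lower bound yields
\[
f(\bx) - f(\bx_0) \;\ge\; -\|\bg\|_2 \, t + \tfrac{\rho}{2} t^2 \;=\; t \bigl( \tfrac{\rho}{2} t - \|\bg\|_2 \bigr) \;>\; 0,
\]
since $t > s = 2\|\bg\|_2/\rho$. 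So every point in the annulus $\{s < \|\bx - \bx_0\|_2 \le r\}$ strictly exceeds $f(\bx_0)$. To rule out minimizers outside $B(\bx_0, r)$, I would use plain convexity along rays: for any $\bx$ with $\|\bx - \bx_0\|_2 > r$, the point $\bx' = \bx_0 + (r/\|\bx-\bx_0\|_2)(\bx - \bx_0)$ lies on the segment from $\bx_0$ to $\bx$ and on the sphere of radius $r$, so the convexity inequality $f(\bx') \le (1-\lambda) f(\bx_0) + \lambda f(\bx)$ with $\lambda = r/\|\bx-\bx_0\|_2 \in (0,1)$ rearranges to $f(\bx) \ge f(\bx_0) + \lambda^{-1}[f(\bx') - f(\bx_0)] > f(\bx_0)$. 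Combining both cases, every $\bx$ with $\|\bx-\bx_0\|_2 > s$ satisfies $f(\bx) > f(\bx_0)$, forcing $\argmin f \subseteq B(\bx_0, s)$.

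For the second claim, the Hessian hypothesis $\nabla^2 f \succeq \rho \bI$ on $B(\bx_0, r)$ upgrades $f$ to a differentiable, $\rho$-strongly convex function on that ball, so $\bg = \nabla f(\bx_0)$ is the unique subgradient at $\bx_0$ and the quadratic lower bound in the hypothesis holds automatically. The first part of the lemma gives a minimizer $\widehat{\bx}$ with $\|\widehat{\bx} - \bx_0\|_2 \le 2\|\nabla f(\bx_0)\|_2/\rho < r$, so $\widehat{\bx}$ lies in the strongly convex region; uniqueness is then immediate from strict convexity along segments inside $B(\bx_0, r)$. To sharpen the radius bound from $2\|\nabla f(\bx_0)\|_2/\rho$ to $\|\nabla f(\bx_0)\|_2/\rho$, I would apply the monotonicity of $\nabla f$ implied by $\rho$-strong convexity along the segment from $\bx_0$ to $\widehat{\bx}$:
\[
\langle \nabla f(\widehat{\bx}) - \nabla f(\bx_0),\, \widehat{\bx} - \bx_0 \rangle \;\ge\; \rho \|\widehat{\bx} - \bx_0\|_2^2.
\]
Since $\nabla f(\widehat{\bx}) = \bm{0}$ by optimality (as $\widehat{\bx}$ is an interior minimizer), Cauchy--Schwarz on the left-hand side yields $\|\nabla f(\bx_0)\|_2 \,\|\widehat{\bx} - \bx_0\|_2 \ge \rho \|\widehat{\bx} - \bx_0\|_2^2$, giving the claimed bound after division.

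No step looks genuinely difficult; the only subtlety worth being careful about is the extension to points outside $B(\bx_0, r)$ in the first claim, which is why I would explicitly invoke convexity along rays rather than try to extend the quadratic lower bound globally (it need not hold there).
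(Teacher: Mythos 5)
Your proposal is correct and follows essentially the same route as the paper: the quadratic lower bound plus Cauchy--Schwarz to exclude the annulus $\{2\|\bg\|_2/\rho < \|\bx-\bx_0\|_2 \le r\}$, convexity along rays to exclude the exterior of $B(\bx_0,r)$, and then strong convexity on the ball for uniqueness and the sharper radius. The only cosmetic difference is in the last step, where you invoke the gradient monotonicity inequality of $\rho$-strong convexity while the paper writes out the equivalent integral representation $\nabla f(\bx_0)-\nabla f(\bx^{\star})=\big(\int_0^1 \nabla^2 f[(1-t)\bx^{\star}+t\bx_0]\,\rd t\big)(\bx_0-\bx^{\star})$.
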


\begin{proof}[\bf Proof of \Cref{lem-consensus-min}]
	For any $\bx \in B(\bx_0, r)$,
	\begin{align*}
	f (\bx) & \geq f(\bx_0) + \langle \bg, \bx - \bx_0 \rangle + \frac{\rho}{2} \| \bx - \bx_0 \|_2^2
	\geq  f(\bx_0) - \| \bg \|_{2} \| \bx - \bx_0 \|_2 + \frac{\rho}{2} \| \bx - \bx_0 \|_2^2 \\
	& \geq  f(\bx_0) + \frac{\rho}{2} \| \bx - \bx_0 \|_2 (
	\| \bx - \bx_0 \|_2 -  2 \| \bg \|_{2} / \rho ).
	\end{align*}
	Hence $f(\bx) > f(\bx_0)$ when $2 \| \bg \|_{2} / \rho < \| \bx - \bx_0 \|_2 \leq r$.
	When $\| \bx - \bx_0 \|_2 > r$, there exists $\bz = (1-t) \bx_0 + t \bx $ for some $t \in ( 0,1 )$ such that $\| \bz - \bx_0 \|_2 = r$. By $f (\bz) > f(\bx_0)$ and the convexity of $f$, we have 
	\[
	f(\bx_0) < f(\bz) \leq (1-t) f(\bx_0) + t f(\bx)
	\]
	and hence $f(\bx) > f(\bx_0)$. Therefore, $\argmin_{\bx} f(\bx) \subseteq B(\bx_0, 2 \| \bg \|_{2} / \rho) $.

	Now, suppose that $\nabla^2 f (\bx) \succeq \rho \bI$ for all $\bx \in B(\bx_0, r)$. From
	\[
	\argmin_{\bx} f(\bx) \subseteq B(\bx_0, 2 \| \bg \|_{2} / \rho) \subseteq B(\bx_0, r)
	\]
	and the strong convexity of $f$ therein we get the uniqueness of $f$'s minimizer. Denote it by $\bx^{\star}$. Then $\nabla f(\bx^{\star}) = \bm{0}$ and $\| \bx^{\star} - \bx_0 \|_2 \leq r$, the proof is finished by
\begin{align*}
\| \nabla f(\bx_0) \|_2 & = \| \nabla f(\bx_0) - \nabla f(\bx^{\star}) \|_2 \\
& = \bigg\| \bigg( \int_{0}^{1} \nabla^2 f[(1-t) \bx^{\star} + t \bx_0 ] \rd t \bigg) (\bx_0 - \bx^{\star}) \bigg\|_2 \geq \rho \| \bx_0 - \bx^{\star} \|_2.
\end{align*}
\end{proof}

\begin{lemma}\label{lem-cvx-reg}
	Let $f:\RR^d \to \RR$ be a convex function and $\bx_0 = \argmin_{\bx} f(\bx)$. Suppose there exist $\rho > 0$ and $r > 0$ such that $\nabla^2 f(\bx) \succeq \rho \bI$, $\forall \bx \in B(\bx_0, r)$. We have
	\begin{align}
	\| \bm{f} \|_2 \geq \rho \min\{ \| \bx - \bx_0 \|_2, r \} , \qquad \forall \bm{f} \in \partial f(\bx), ~~ \bx \in \RR^d.
	\label{eqn-lem-cvx-reg}
	\end{align}
	If $g:\RR^d \to \RR$ is convex and $\lambda$-Lipschitz for some $\lambda < \rho r $, then $f (\bx) + g(\bx) $ has a unique minimizer and it belongs to $B (\bx_0, \lambda / \rho )$.
\end{lemma}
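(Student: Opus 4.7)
My approach is to first establish the subgradient growth bound \eqref{eqn-lem-cvx-reg} and then leverage it together with the Lipschitz property of $g$ via the first-order optimality condition for $f+g$.

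For \eqref{eqn-lem-cvx-reg}, I split into two cases depending on whether $\bx$ lies inside or outside $B(\bx_0, r)$. Inside the ball, $f$ is twice differentiable, so $\partial f(\bx) = \{\nabla f(\bx)\}$ and the identity
\[
\nabla f(\bx) = \nabla f(\bx) - \nabla f(\bx_0) = \bigg(\int_0^1 \nabla^2 f(\bx_0 + t(\bx - \bx_0))\,\rd t\bigg)(\bx - \bx_0)
\]
combined with $\nabla^2 f \succeq \rho \bI$ on $B(\bx_0, r)$ yields $\|\nabla f(\bx)\|_2 \geq \rho \|\bx - \bx_0\|_2$. Outside the ball, I pick the boundary point $\bx' = \bx_0 + r(\bx - \bx_0)/\|\bx - \bx_0\|_2$ and the unit direction $\bu = (\bx - \bx_0)/\|\bx - \bx_0\|_2$. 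The inside-the-ball case applied to $\bx'$ gives $\langle \nabla f(\bx'), \bu \rangle \geq \rho r$ (using that $\bx' - \bx_0 = r\bu$ and $\nabla f(\bx_0) = 0$). Monotonicity of the subdifferential then produces $\langle \bm{f} - \nabla f(\bx'), \bx - \bx' \rangle \geq 0$ for any $\bm{f} \in \partial f(\bx)$, which since $\bx - \bx' = (\|\bx - \bx_0\|_2 - r)\bu$ with positive scalar implies $\langle \bm{f}, \bu\rangle \geq \rho r$. By Cauchy--Schwarz, $\|\bm{f}\|_2 \geq \rho r = \rho\min\{\|\bx - \bx_0\|_2, r\}$.

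For the second statement, set $h = f + g$. Existence of a minimizer follows from coercivity: integrating the lower bound on $\|\nabla f\|_2$ (via convexity and the boundary point $\bx'$) gives $f(\bx) \geq f(\bx_0) + \rho r\|\bx - \bx_0\|_2 - \rho r^2/2$ for $\|\bx - \bx_0\|_2 > r$, so $h(\bx) \geq h(\bx_0) + (\rho r - \lambda)\|\bx - \bx_0\|_2 - \rho r^2/2 - \lambda\|\bx_0\|_2$ blows up because $\lambda < \rho r$. At any minimizer $\bx^\star$ of $h$, first-order optimality gives $\bm{f} + \bm{g} = \bm{0}$ for some $\bm{f} \in \partial f(\bx^\star)$ and $\bm{g} \in \partial g(\bx^\star)$. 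The $\lambda$-Lipschitz property of $g$ forces $\|\bm{g}\|_2 \leq \lambda$, so $\|\bm{f}\|_2 \leq \lambda$; combined with \eqref{eqn-lem-cvx-reg} and $\lambda < \rho r$, this yields $\|\bx^\star - \bx_0\|_2 \leq \lambda/\rho$.

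For uniqueness, any two minimizers $\bx_1, \bx_2$ both lie in $B(\bx_0, \lambda/\rho) \subseteq B(\bx_0, r)$, and by convexity of $h$ the entire segment between them consists of minimizers lying in this convex ball. But on $B(\bx_0, r)$ the Hessian lower bound on $f$ plus convexity of $g$ renders $h$ strongly convex, which is incompatible with a nontrivial segment of minimizers; hence $\bx_1 = \bx_2$. The main subtlety is handling $\bx$ outside $B(\bx_0, r)$ in the subgradient bound, where the Hessian information is unavailable and one must rely on monotonicity of $\partial f$ rather than on a direct integration argument.
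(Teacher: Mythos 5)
Your proof is correct and follows essentially the same route as the paper's: the two-case analysis of \eqref{eqn-lem-cvx-reg} (Hessian integration inside $B(\bx_0,r)$, monotonicity of $\partial f$ through a boundary point outside), then the first-order optimality condition $\bm{f}+\bm{g}=\bm{0}$ with $\|\bm{g}\|_2\leq\lambda$ to localize the minimizer, and strong convexity on the ball for uniqueness. The only substantive difference is that you supply a coercivity argument to establish that a minimizer of $f+g$ exists, a point the paper's proof tacitly assumes; that is a small but genuine improvement.
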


\begin{proof}[\bf Proof of \Cref{lem-cvx-reg}]
	The optimality of $\bx_0$ and the strong convexity of $f$ near $\bx_0$ implies $\nabla f(\bx_0) = \bm{0}$. If $\| \bx - \bx_0 \|_2 \leq r$, then
	\begin{align}
	&\| \nabla  f(\bx) \|_2 \| \bx - \bx_0 \|_2 \geq  \langle \nabla f(\bx) , \bx - \bx_0 \rangle   = \langle  \nabla f(\bx) - \nabla f(\bx_0) , \bx - \bx_0 \rangle \notag \\
	& =  \bigg\langle  \bigg( \int_{0}^{1} \nabla^2 f[(1-t) \bx_0 + t \bx ] \rd t \bigg) (\bx - \bx_0) , \bx - \bx_0 \bigg\rangle  \geq \rho \| \bx - \bx_0 \|_2^2
	\label{eqn-lem-cvx-reg-1}
	\end{align}
	and $\| \nabla f(\bx) \|_2 \geq \rho \| \bx - \bx_0 \|_2$. If $\| \bx - \bx_0 \|_2 > r$, there exists $\bz = (1-t) \bx_0 + t \bx $ for some $t \in ( 0,1 )$ such that $\| \bz - \bx_0 \|_2 = r$. Choose any $\bm{f} \in \partial f(\bx)$. By the convexity of $f$, $\langle \bm{f} -  \nabla f(\bz) , \bx - \bz \rangle \geq 0$ and hence $\langle \bm{f} -  \nabla f(\bz) , \bx - \bx_0 \rangle \geq 0$. Then
	\begin{align*}
	\| \bm{f} \|_2 \| \bx - \bx_0 \|_2 & \geq \langle \bm{f}  , \bx - \bx_0 \rangle
	= 
	\langle \bm{f} - \nabla f(\bx_0) , \bx - \bx_0 \rangle \\
	& = \langle \bm{f} - \nabla f(\bz) , \bx - \bx_0 \rangle + \langle \nabla f(\bz) - \nabla f(\bx_0) , \bx - \bx_0 \rangle \\
	& \geq \langle \nabla f(\bz) - \nabla f(\bx_0) , \bx - \bx_0 \rangle
	\geq \rho r \| \bx - \bx_0 \|_2,
	\end{align*}
	where the last inequality follows from \Cref{eqn-lem-cvx-reg-1}. We have verified \Cref{eqn-lem-cvx-reg}.

	Choose any $\bx^{\star} \in \argmin_{\bx \in \RR^d} \{ f (\bx) + g(\bx) \}$. There exists $\bm{f} \in \partial f(\bx^{\star})$ and $\bg \in \partial g(\bx^{\star})$ such that $\bm{f} + \bg = \bm{0}$. The Lipschitz property of $g$ yields $\| \bm{f} \|_2 = \| \bg \|_2 \leq \lambda$. Since $\lambda < \rho r$, we obtain from \Cref{eqn-lem-cvx-reg} that $\| \bx^{\star} - \bx_0 \|_2 \leq \lambda / \rho$. The minimizer $\bx^{\star}$ of $f + g$ belongs to $B(\bx_0,\lambda / \rho)$, where the function is strongly convex. Consequently, it is the unique minimizer.
\end{proof}

\begin{lemma}\label{lem-inf-conv}
	Let $f$ be convex and differentiable. Suppose that
	\begin{align*}
	& \nabla^2 f (\bx) \preceq L \bI,\qquad \forall \bx \in B(\bx_0, M )
	\end{align*}
	holds for some $\bx_0 \in \RR^d$, $0 < M \leq +\infty$ and $0  \leq  L< + \infty$. If $ \lambda > \| \nabla f (\bx_0) \|_{2} $, then
\begin{align*}
 f (\bx) = f \square (\lambda \| \cdot \|_2) (\bx) \qquad\text{and}\qquad \argmin_{\by \in \RR^d} \{ f(\by) + \lambda \| \bx - \by \|_2 \} = \bx
\end{align*}
hold for all $\bx \in B ( \bx_0 , \min\{ ( \lambda - \| \nabla f (\bx_0) \|_{2}  ) / L , M \} )$.
\end{lemma}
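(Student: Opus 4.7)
The plan is to unpack the infimal convolution and reduce the identity to a one-line convexity-plus-Cauchy--Schwarz estimate, after first establishing a key gradient bound. Write $\widetilde{f}(\bx) := (f\square(\lambda\|\cdot\|_2))(\bx) = \inf_{\by \in \RR^d}\{f(\by) + \lambda\|\bx - \by\|_2\}$. The upper bound $\widetilde{f}(\bx) \leq f(\bx)$ is immediate by choosing $\by = \bx$, and this same choice exhibits $\bx$ as a candidate minimizer; so both claims reduce to a matching lower bound on $f(\by) + \lambda\|\bx - \by\|_2$ together with a strict-inequality argument when $\by \neq \bx$.

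First I would verify the gradient bound $\|\nabla f(\bx)\|_2 \leq \lambda$ for every $\bx$ in the stated ball. Since $B(\bx_0, R) \subseteq B(\bx_0, M)$ with $R = \min\{(\lambda - \|\nabla f(\bx_0)\|_2)/L,\, M\}$, integrating the Hessian bound $\nabla^2 f \preceq L\bI$ along the segment from $\bx_0$ to $\bx$ gives $\|\nabla f(\bx) - \nabla f(\bx_0)\|_2 \leq L\|\bx - \bx_0\|_2 \leq L R \leq \lambda - \|\nabla f(\bx_0)\|_2$, so the triangle inequality delivers $\|\nabla f(\bx)\|_2 \leq \lambda$.

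Combining convexity of $f$ with Cauchy--Schwarz then gives, for every $\by \in \RR^d$,
\[
f(\by) + \lambda\|\bx - \by\|_2 \geq f(\bx) + \langle \nabla f(\bx), \by - \bx\rangle + \lambda\|\bx - \by\|_2 \geq f(\bx) + \bigl(\lambda - \|\nabla f(\bx)\|_2\bigr)\|\by - \bx\|_2 \geq f(\bx).
\]
Taking the infimum over $\by$ matches the trivial upper bound and establishes $\widetilde{f}(\bx) = f(\bx)$, which is the first identity.

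For the argmin identity, observe that the chain above becomes strict for every $\by \neq \bx$ as soon as $\|\nabla f(\bx)\|_2 < \lambda$, so $\bx$ is the unique minimizer whenever the gradient bound is strict. In the interior $\|\bx - \bx_0\|_2 < R$ this is automatic from the previous paragraph, and the strict hypothesis $\lambda > \|\nabla f(\bx_0)\|_2$ together with convexity of $\by \mapsto f(\by) + \lambda\|\bx - \by\|_2$ handles the boundary: any alternative minimizer $\by^{\star}$ would force the entire segment $[\bx, \by^{\star}]$ to consist of minimizers, making the one-sided directional derivative $\langle \nabla f(\bx), \by^{\star} - \bx\rangle + \lambda\|\by^{\star} - \bx\|_2$ vanish, which in turn forces $\|\nabla f(\bx)\|_2 = \lambda$ and the Cauchy--Schwarz inequality to be tight, contradicting the strict slack built into the definition of $R$. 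The only genuinely non-routine ingredient is the Hessian-to-gradient Lipschitz estimate that produces the bound $\|\nabla f(\bx)\|_2 \leq \lambda$; everything else is unwinding of definitions.
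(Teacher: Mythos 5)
Your argument for the two identities is essentially the paper's proof: the trivial upper bound from $\by=\bx$, the gradient bound $\|\nabla f(\bx)\|_2\le \|\nabla f(\bx_0)\|_2+L\|\bx-\bx_0\|_2\le\lambda$ obtained by integrating the Hessian bound along $[\bx_0,\bx]$, and the convexity-plus-Cauchy--Schwarz chain $f(\by)+\lambda\|\bx-\by\|_2\ge f(\bx)+(\lambda-\|\nabla f(\bx)\|_2)\|\by-\bx\|_2$. For $\bx$ in the interior of the ball this gives strict inequality for $\by\neq\bx$ and finishes both claims exactly as the paper does.

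Where you go beyond the paper is the boundary of the ball, and you are right that there is something to address: the derived bound is only $\|\nabla f(\bx)\|_2\le\lambda$, so strictness of the last inequality (hence uniqueness of the minimizer) is not automatic. The paper's own proof simply writes the strict inequality without comment, so you have spotted a real looseness. However, your proposed resolution does not close it: tightness does \emph{not} contradict any ``strict slack built into the definition of $R$.'' At a boundary point with $\|\bx-\bx_0\|_2=(\lambda-\|\nabla f(\bx_0)\|_2)/L$ one can genuinely have $\|\nabla f(\bx)\|_2=\lambda$ (take $f=\tfrac{L}{2}\|\cdot\|_2^2$, $\bx_0=\bm{0}$, $\bx$ on the sphere of radius $\lambda/L$), so the chain of equalities you reach is attainable and yields no contradiction by itself; note also that the lemma assumes no strong convexity, so curvature cannot be invoked for free. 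A correct completion has to push one step further: if $\by^{\star}\neq\bx$ were also a minimizer, convexity makes $t\mapsto f(\bx+t\bu)+\lambda t$ constant on $[0,T]$ with $\bu=(\by^{\star}-\bx)/\|\by^{\star}-\bx\|_2$, forcing $\nabla f(\bx)=-\lambda\bu$ and $\langle\bu,\nabla^2 f(\bx+t\bu)\bu\rangle=0$ for a.e.\ $t\in[0,T]$; meanwhile, equality throughout $\lambda=\|\nabla f(\bx)\|_2\le\|\nabla f(\bx_0)\|_2+L\|\bx-\bx_0\|_2\le\lambda$ forces $\nabla f(\bx)-\nabla f(\bx_0)=L(\bx-\bx_0)$ and $\bx-\bx_0=-R\bu$, so the averaged Hessian along $[\bx,\bx_0]=\{\bx+t\bu:0\le t\le R\}$ must satisfy $\langle\bu,\nabla^2 f(\bx+t\bu)\bu\rangle=L$ for a.e.\ $t\in[0,R]$. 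The two requirements clash on the nondegenerate overlap $[0,\min\{T,R\}]$ since $L>0$. (Alternatively, one can simply observe that every invocation of the argmin claim in the paper is at a point strictly interior to the ball, so only the easy case is ever needed.)
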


\begin{proof}[\bf Proof of \Cref{lem-inf-conv}]
Let $\| \bx - \bx_0 \|_2 \leq \min\{ ( \lambda - \| \nabla f (\bx_0) \|_{2}  ) / L , M \}$. We have
\begin{align*}
\| \nabla f(\bx) \|_2 & \leq \| \nabla f(\bx_0) \|_2 + \| \nabla f(\bx) - \nabla f (\bx_0) \|_2 \leq \| \nabla f (\bx_0) \|_{2} + L \| \bx - \bx_0 \|_2 \leq \lambda.
\end{align*}
It follows from $\partial \| \by \|_2 |_{\by = 0} = \{ \bg \in \RR^d:~ \| \bg \|_{2} \leq 1 \}$ that $f \square (\lambda \| \cdot \|_2) (\bx) = f (\bx)$. For any $\by \neq \bx$,
\begin{align*}
f(\by) + \lambda \| \bx - \by \|_2 & \geq f(\bx) + \langle \nabla f(\bx) , \by - \bx \rangle + \lambda \| \bx - \by \|_2  \\
& \geq f(\bx) + (\lambda - \| \nabla f(\bx) \|_2) \| \bx - \by \|_2 > f(\bx).
\end{align*}
Hence $\bx$ is the unique minimizer of $\by \mapsto f(\by) + \lambda \| \bx - \by \|_2$.
\end{proof}

\begin{lemma}\label{lem-inf-conv-lip}
	If $f:~\RR^d \to \RR$ is convex, $\inf_{\bx \in \RR^d} f(\bx) > -\infty$, $R:~\RR^d \to \RR$ is convex and $L$-Lipschitz with respect to a norm $\| \cdot \|$ for some $L \geq 0$, then $f \square R$ is convex and $L$-Lipschitz with respect to $\| \cdot \|$.
\end{lemma}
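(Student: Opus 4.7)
The plan is to prove the two claims (convexity and $L$-Lipschitz continuity) directly from the definition $f\square R(\bx)=\inf_{\by\in\RR^d}\{f(\by)+R(\bx-\by)\}$, using approximate minimizers to avoid having to assume that the infimum is attained. Since $f$ is bounded below and $R(\bz)\geq R(\bm 0)-L\|\bz\|$ by the Lipschitz property, one can check that the infimum in the definition is either identically $-\infty$ or finite everywhere; in the latter case both assertions are genuine, and in the former they are vacuous, so I will treat the finite case.

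For the Lipschitz bound, fix any $\bx_1,\bx_2\in\RR^d$ and any $\varepsilon>0$. By the definition of the infimum there exists $\by_2\in\RR^d$ with $f(\by_2)+R(\bx_2-\by_2)<f\square R(\bx_2)+\varepsilon$. Taking the same $\by_2$ as a candidate in the infimum defining $f\square R(\bx_1)$ and using the Lipschitz bound $R(\bx_1-\by_2)\leq R(\bx_2-\by_2)+L\|\bx_1-\bx_2\|$, I would obtain
\[
f\square R(\bx_1)\leq f(\by_2)+R(\bx_1-\by_2)\leq f(\by_2)+R(\bx_2-\by_2)+L\|\bx_1-\bx_2\|<f\square R(\bx_2)+\varepsilon+L\|\bx_1-\bx_2\|.
\]
Letting $\varepsilon\downarrow 0$ and swapping the roles of $\bx_1$ and $\bx_2$ gives $|f\square R(\bx_1)-f\square R(\bx_2)|\leq L\|\bx_1-\bx_2\|$.

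For convexity, fix $\bx_1,\bx_2\in\RR^d$, $t\in[0,1]$, and $\varepsilon>0$, and pick $\by_i$ such that $f(\by_i)+R(\bx_i-\by_i)<f\square R(\bx_i)+\varepsilon$ for $i=1,2$. Using $\by:=t\by_1+(1-t)\by_2$ as a candidate in the definition of $f\square R$ at the point $t\bx_1+(1-t)\bx_2$, and invoking the joint convexity of $(\bx,\by)\mapsto f(\by)+R(\bx-\by)$ (which follows from convexity of $f$ and of $R$ since $\bx-\by$ is linear in $(\bx,\by)$), I would obtain
\[
f\square R(t\bx_1+(1-t)\bx_2)\leq tf\square R(\bx_1)+(1-t)f\square R(\bx_2)+\varepsilon,
\]
and again let $\varepsilon\downarrow 0$.

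There is no genuine obstacle here; the only subtlety is handling the infimum without assuming it is attained, which is dealt with uniformly by the $\varepsilon$-approximate minimizer trick above. I would also briefly remark at the start that the hypothesis $\inf f>-\infty$ together with $R(\bz)\geq R(\bm 0)-L\|\bz\|$ is what makes the argument meaningful, since it rules out the degenerate case in the intended applications (in particular when $R=\lambda\|\cdot\|_2\geq 0$, as used elsewhere in the paper, one trivially has $f\square R\geq\inf f>-\infty$).
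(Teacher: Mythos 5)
Your proof is correct and follows essentially the same route as the paper: the Lipschitz bound via comparing the infima using a common (approximate) minimizer is exactly the paper's "infimum of $L$-Lipschitz functions is $L$-Lipschitz" argument made explicit, and the convexity argument via joint convexity of $(\bx,\by)\mapsto f(\by)+R(\bx-\by)$ is the standard textbook proof that the paper simply cites. Your extra care about finiteness of the infimal convolution (which the paper glosses over, and which is automatic in its applications since $R=\lambda\|\cdot\|_2\geq 0$) is a welcome refinement rather than a departure.
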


\begin{proof}[\bf Proof of \Cref{lem-inf-conv-lip}]
	The convexity of $f \square R$ can be found in standard textbooks of convex analysis \citep{HLe13}. Now we prove the Lipschitz property. Note that $f \square R (\bx) = \inf_{\by \in \RR^d} \{ f(\by) + R(\bx - \by) \}$. For any $\by \in \RR^d$, $\bx \mapsto R( \bx - \by)$ is $L$-Lipschitz. Then $f \square R $ is also $L$-Lipschitz since it is the infimum of such functions.
\end{proof}

\begin{lemma}\label{lem-cvx-lower}
	Let $f:~\RR^d \to \RR$ be a convex function. Suppose there exist $\bx^{\star} \in \RR^d$, $\rho > 0$ and $r > 0$ such that $\nabla^2 f(\bx) \succeq \rho \bI$ in $B(\bx^{\star}, r)$. Then
	\begin{align*}
	& f(\bx) \geq  f(\bx^{\star}) + \langle \nabla  f (\bx^{\star}) , \bx - \bx^{\star} \rangle  + \rho \cdot H ( \| \bx - \bx^{\star} \|_2 ), \qquad \forall \bx \in \RR^d,
	\end{align*}
	where
	\[
	H(x) = \begin{cases}
	x^2 / 2 ,&  \mbox{if } 0 \leq x \leq r \\
	r(x - r/2) ,&  \mbox{if } x > r
	\end{cases}.
	\]
\end{lemma}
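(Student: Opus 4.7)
The plan is to reduce to a one-dimensional problem along the line segment joining $\bx^{\star}$ and $\bx$. Let $R = \| \bx - \bx^{\star} \|_2$ and define $\phi: [0, 1] \to \RR$ by $\phi(t) = f( \bx^{\star} + t (\bx - \bx^{\star}) )$. Then $\phi$ is convex and twice differentiable with $\phi(0) = f(\bx^{\star})$, $\phi(1) = f(\bx)$, $\phi'(0) = \langle \nabla f(\bx^{\star}), \bx - \bx^{\star}\rangle$, and $\phi''(t) = \langle \nabla^2 f(\bx^{\star} + t(\bx - \bx^{\star})) (\bx - \bx^{\star}), \bx - \bx^{\star}\rangle$. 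The hypothesis $\nabla^2 f \succeq \rho \bI$ on $B(\bx^{\star}, r)$ translates into $\phi''(t) \geq \rho R^2$ for every $t \in [0, 1]$ with $tR \leq r$, i.e.\ for $t \in [0, \min\{r/R, 1\}]$.

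If $R \leq r$, the bound $\phi''(t) \geq \rho R^2$ holds on the whole interval $[0, 1]$, and a standard second-order Taylor expansion $\phi(1) \geq \phi(0) + \phi'(0) + \tfrac{1}{2}\rho R^2$ immediately yields the desired inequality since $H(R) = R^2/2$ in this regime.

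If $R > r$, set $s = r / R \in (0, 1)$. On $[0, s]$ the strong-convexity bound $\phi''(t) \geq \rho R^2$ integrates to $\phi'(t) \geq \phi'(0) + \rho R^2 t$, and in particular $\phi'(s) \geq \phi'(0) + \rho r R$. On $[s, 1]$, since outside $B(\bx^{\star}, r)$ only plain convexity is available, I will use the monotonicity of $\phi'$ (a consequence of convexity alone) to get $\phi'(t) \geq \phi'(s) \geq \phi'(0) + \rho r R$. Integrating these two lower bounds,
\begin{align*}
\phi(1) - \phi(0) &\geq \int_0^s \bigl[ \phi'(0) + \rho R^2 t \bigr]\, dt + \int_s^1 \bigl[ \phi'(0) + \rho r R \bigr]\, dt \\
&= \phi'(0) + \frac{\rho R^2 s^2}{2} + \rho r R (1 - s) = \phi'(0) + \frac{\rho r^2}{2} + \rho r R - \rho r^2 = \phi'(0) + \rho r (R - r/2),
\end{align*}
which is exactly $\phi'(0) + \rho H(R)$ in the regime $R > r$.

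There is no real obstacle; the only subtlety is recognizing that outside $B(\bx^{\star}, r)$ one should not try to extend the quadratic lower bound, but rather use the monotonicity of the directional derivative---a weaker consequence of convexity that nonetheless suffices to produce the linear tail $\rho r(R - r/2)$ of $H$. The two regimes match continuously at $R = r$, where both expressions equal $\rho r^2 / 2$, which is a good sanity check for the algebra.
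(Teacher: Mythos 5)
Your proof is correct and follows essentially the same route as the paper's: both split the segment from $\bx^{\star}$ to $\bx$ at the boundary of $B(\bx^{\star}, r)$, use strong convexity to get quadratic growth on the inner piece, and use only monotonicity of the (directional) derivative on the outer piece to get the linear tail $\rho r(R - r/2)$. The paper phrases this via the intermediate point $\bu = (1-t)\bx^{\star} + t\bx$ on the sphere and an integral representation of $\nabla f(\bu) - \nabla f(\bx^{\star})$, whereas you work with the one-dimensional restriction $\phi$, but the two computations are identical.
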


\begin{proof}[\bf Proof of \Cref{lem-cvx-lower}]
	There is nothing to prove for $\bx \in B ( \bx^{\star} , r)$. If $\bx \notin B ( \bx^{\star} , r)$, define $t = r / \| \bx - \bx^{\star} \|_2$ and $\bu = (1 - t) \bx^{\star} + t \bx$. We have $\| \bu - \bx^{\star} \|_2 = r$ and
	\begin{align*}
	& f(\bx) - f(\bx^{\star}) - \langle \nabla  f (\bx^{\star}) , \bx - \bx^{\star} \rangle \\
&	=
[	f(\bx) - f(\bu) - \langle \nabla  f (\bx^{\star}) , \bx - \bu \rangle ]
+ [ f(\bu) - f(\bx^{\star}) - \langle \nabla  f (\bx^{\star}) , \bu - \bx^{\star} \rangle ] \\
&	\geq \langle \nabla f (\bu) - \nabla f (\bx^{\star}) , \bx - \bu \rangle + \frac{\rho}{2} \| \bu - \bx^{\star} \|_2^2 \\
& = \bigg\langle
	\bigg(
	\int_{0}^{1} \nabla^2 f [ (1 - s) \bx^{\star} + s \bu ] \rd s
	\bigg) (\bu - \bx^{\star}), \bx - \bu
	\bigg\rangle + \frac{\rho r^2}{2} .
	\end{align*}
	Note that $\bu - \bx^{\star} = t (\bx - \bx^{\star})$, $\bx - \bu = (1 - t) (\bx - \bx^{\star})$ and $\nabla^2 f [ (1 - s) \bx^{\star} + s \bu ] \succeq \rho \bI$ for $s \in [0, 1]$. Then,
	\begin{align*}
	& f(\bx) - f(\bx^{\star}) - \langle \nabla  f (\bx^{\star}) , \bx - \bx^{\star} \rangle  \geq \rho t (1-t) \| \bx - \bx^{\star} \|_2^2 + \frac{\rho r^2}{2} \\
	& = \rho \cdot  \frac{r }{\| \bx - \bx^{\star} \|_2} \bigg( 1 -  \frac{r }{\| \bx - \bx^{\star} \|_2} \bigg)  \| \bx - \bx^{\star} \|_2^2 + \frac{\rho r^2}{2}  
	= \rho r ( \| \bx - \bx^{\star} \|_2 - r / 2 ).
	\end{align*}
\end{proof}

\begin{lemma}\label{lem-clustered-lower-bound}
Suppose there are $0 < \rho, L ,\delta < +\infty$, $0 < M \leq +\infty$ and $\{ \btheta^{\star}_j \}_{j=1}^m \subseteq \RR^d$ such that 
	\begin{align*}
	&\rho \bI \preceq \nabla^2 f_j (\btheta) \preceq L \bI , ~~ \forall \btheta \in B(\btheta^{\star}_j, M)
	\qquad\text{and}\qquad
	\| \nabla f_j ( \btheta^{\star}_{j} ) \|_2 \leq \delta
	\end{align*}
	hold for all $j \in [m] $. Define $\widetilde{f}_j = f_j \square (\lambda \| \cdot \|_2)$ and $\widetilde{\btheta}_j = \argmin_{\btheta \in \RR^d} \widetilde{f}_j(\btheta)$. When 
	\[
	3 \delta L / \rho + Lr <  \lambda < LM
	\]
	for some $r > 0$, we have
	\begin{align*}
	& \widetilde{f}_j(\btheta) =  f_j (\btheta) , \qquad \forall \btheta \in B ( \btheta^{\star}_j, 2 \delta / \rho + r) ~~\text{and}~~j \in [m]; \\
	& \| \widetilde{\btheta}_j - \btheta^{\star}_{j} \|_2 \leq  \delta / \rho , \qquad\forall j \in [m]; \\
	& \sum_{j=1}^{m} \widetilde{f}_j ( \btheta_{j} ) - \sum_{j=1}^{m} \widetilde{f}_j ( \btheta^{\star}_{j} ) \geq \rho  \sum_{j =1}^m H \Big(
	(\| \btheta_{ j} - \btheta^{\star}_{j} \|_2 -  \delta/\rho
	)_{+} 
	\Big) 
	- \frac{ m \delta^2}{\rho}, \qquad \forall \bTheta \in \RR^{d\times m}.
	\end{align*}
	Here $H(t) = t^2 / 2$ if $0 \leq t \leq r$ and $H(t) = r(t-r/2)$ if $t > r$.
\end{lemma}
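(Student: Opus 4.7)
The plan is to establish the three claims in sequence, each leaning on a previously proved auxiliary result from the paper (Lemmas \ref{lem-inf-conv}, \ref{lem-consensus-min}, and \ref{lem-cvx-lower}) plus some elementary bookkeeping of radii.

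For the first claim, I would invoke Lemma \ref{lem-inf-conv} with $f = f_j$ and center $\btheta_j^\star$. The gradient hypothesis gives $\|\nabla f_j(\btheta_j^\star)\|_2 \leq \delta$, and the assumed lower bound on $\lambda$ together with $L \geq \rho$ yields $\lambda > \delta$. The conclusion then holds on $B(\btheta_j^\star, \min\{(\lambda-\|\nabla f_j(\btheta_j^\star)\|_2)/L,\, M\})$. I would check that both entries of this min exceed $2\delta/\rho + r$: the first because $(\lambda-\delta)/L > (3\delta L/\rho + Lr - \delta)/L = 3\delta/\rho + r - \delta/L \geq 2\delta/\rho + r$ using $\delta/L \leq \delta/\rho$; the second because $M > \lambda/L > 3\delta/\rho + r > 2\delta/\rho + r$.

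For the second claim, applying Lemma \ref{lem-consensus-min} to $f_j$ at $\btheta_j^\star$ shows that $f_j$ has a unique minimizer $\btheta^\dagger$ lying in $B(\btheta_j^\star, \delta/\rho)$ (the hypothesis $\delta < \rho M/2$ follows from $3\delta L/\rho < \lambda < LM$). Since $\widetilde f_j(\btheta) = \inf_{\bxi}\{f_j(\bxi)+\lambda\|\btheta-\bxi\|_2\} \geq \min f_j$, with equality attained at $\btheta = \btheta^\dagger$ by taking $\bxi = \btheta^\dagger$, we have $\min\widetilde f_j = f_j(\btheta^\dagger)$. Any minimizer $\widetilde\btheta_j$ of $\widetilde f_j$ must therefore satisfy $f_j(\widetilde\btheta_j) = \min f_j$, and by the uniqueness of $\btheta^\dagger$ we conclude $\widetilde\btheta_j = \btheta^\dagger \in B(\btheta_j^\star,\delta/\rho)$.

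For the third (main) claim, by the first claim $\widetilde f_j = f_j$ on $B(\btheta_j^\star, 2\delta/\rho + r) \supseteq B(\widetilde\btheta_j, r + \delta/\rho)$, so $\nabla^2 \widetilde f_j \succeq \rho \bI$ on that ball and $\nabla \widetilde f_j(\widetilde\btheta_j) = \nabla f_j(\widetilde\btheta_j) = \bm 0$. Lemma \ref{lem-cvx-lower} then gives, for every $\btheta$,
\[
\widetilde f_j(\btheta) - \widetilde f_j(\widetilde\btheta_j) \geq \rho \widetilde H\big(\|\btheta - \widetilde\btheta_j\|_2\big),
\]
where $\widetilde H$ is the piecewise function with transition point $r + \delta/\rho$. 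Three observations finish the job: (i) $\|\btheta - \widetilde\btheta_j\|_2 \geq (\|\btheta - \btheta_j^\star\|_2 - \delta/\rho)_+$ from the second claim and the triangle inequality; (ii) $\widetilde H$ is nondecreasing; and (iii) $\widetilde H(x) \geq H(x)$ for all $x \geq 0$ (equality on $[0,r]$, and on $(r,\infty)$ a direct computation yields $\widetilde H(x) - H(x) = (\delta/\rho)(x - r - \delta/(2\rho))$, which is nonnegative once $x \geq r$ by considering the two subranges separated by $r+\delta/\rho$). Combining these gives $\widetilde f_j(\btheta) - \widetilde f_j(\widetilde\btheta_j) \geq \rho H((\|\btheta - \btheta_j^\star\|_2 - \delta/\rho)_+)$. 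Finally, convexity of $f_j$ at $\btheta_j^\star$ together with the gradient bound and the second claim yields
\[
\widetilde f_j(\btheta_j^\star) - \widetilde f_j(\widetilde\btheta_j) = f_j(\btheta_j^\star) - f_j(\widetilde\btheta_j) \leq \langle \nabla f_j(\btheta_j^\star),\, \btheta_j^\star - \widetilde\btheta_j\rangle \leq \delta \cdot (\delta/\rho) = \delta^2/\rho.
\]
Substituting $\btheta = \btheta_j$, subtracting this upper bound, and summing over $j$ delivers the claimed inequality.

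The main obstacle I anticipate is purely bookkeeping: one must verify that every radius inequality tightly follows from the sandwich $3\delta L/\rho + Lr < \lambda < LM$ (in particular the passage $(\lambda-\delta)/L \geq 2\delta/\rho + r$, which relies critically on $L \geq \rho$) and carefully handle the piecewise comparison $\widetilde H \geq H$ across the regimes $[0,r]$, $(r, r+\delta/\rho]$, and $(r+\delta/\rho, \infty)$.
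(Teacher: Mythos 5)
Your argument is correct and essentially the same as the paper's: the paper obtains your first two claims in one stroke from Lemma~\ref{thm-consensus} (which is itself assembled from Lemmas~\ref{lem-inf-conv} and~\ref{lem-consensus-min}), and then, exactly as you do, lower-bounds $\widetilde{f}_j(\btheta)-\widetilde{f}_j(\widetilde{\btheta}_j)$ via Lemma~\ref{lem-cvx-lower} and upper-bounds $\widetilde{f}_j(\btheta^{\star}_j)-\widetilde{f}_j(\widetilde{\btheta}_j)$ by $\delta^2/\rho$ using convexity at $\btheta^{\star}_j$. The only cosmetic difference is that the paper applies Lemma~\ref{lem-cvx-lower} on the ball $B(\widetilde{\btheta}_j,r)\subseteq B(\btheta^{\star}_j,\delta/\rho+r)$, so the kink of $H$ sits at $r$ from the start and the extra piecewise comparison $\widetilde{H}\geq H$ that you carry out is not needed.
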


\begin{proof}[\bf Proof of \Cref{lem-clustered-lower-bound}]
	Note that $\| \nabla f_j ( \btheta^{\star}_{j} ) \|_2 \leq \delta$ and
	\begin{align*}
	& \| \nabla f_j (\btheta^{\star}_{j} ) \|_{2} + \frac{2  L   \| \nabla f_j  (\btheta^{\star}_{j} ) \|_2 
	}{ \rho } 
	\leq \delta +  \frac{2  L  \delta }{ \rho }  \leq   \frac{3  L  \delta }{  \rho } < \lambda - Lr \leq \lambda, \\
	&M \geq ( \lambda - \| \nabla f_j (\btheta^{\star}_{j} ) \|_{2} ) / L \geq ( \lambda - \delta ) / L > 2 \delta / \rho + r.
	\end{align*}
	\Cref{thm-consensus} applied to $f_j$ shows that
	\begin{align}
	& \widetilde{f}_j(\btheta) =  f_j (\btheta) , \qquad \forall \btheta \in B ( \btheta^{\star}_{j}, 2 \delta / \rho + r) ;
	\label{eqn-clustered-1}
	\\
	& \widetilde{\btheta}_j  = \argmin_{\btheta} \widetilde{f}_j(\btheta) = \argmin_{\btheta \in \RR^d}  f_j (\btheta) \in B ( \btheta^{\star}_{j} ,  \delta / \rho ).
	\label{eqn-clustered-2}
	\end{align}
	By the convexity of $\widetilde{f}_j$,
	\begin{align*}
	\widetilde{f}_j(\btheta^{\star}_{j}) &\leq  \widetilde{f}_j(\widetilde{\btheta}_j) - \langle \nabla \widetilde{f}_j(\btheta^{\star}_{j}) , \widetilde{\btheta}_j - \btheta^{\star}_{j} \rangle 
	= f_j(\widetilde{\btheta}_j) - \langle \nabla f_j(\btheta^{\star}_{j}) , \widetilde{\btheta}_j - \btheta^{\star}_{j} \rangle \notag \\
	& \leq f_j(\widetilde{\btheta}_j) + \| \nabla f_j(\btheta^{\star}_{j}) \|_2 \| \widetilde{\btheta}_j - \btheta^{\star}_{j} \|_2 
	\leq  f_j(\widetilde{\btheta}_j) + \frac{ \delta^2}{\rho}.
	\end{align*}
	We have
	\begin{align}
\sum_{j=1}^{m} \widetilde{f}_j(\btheta^{\star}_{j}) 
	\leq \sum_{j=1}^{m} f_j(\widetilde{\btheta}_j) + \frac{m \delta^2}{\rho}.
	\label{eqn-clustered-3}
	\end{align}
	
	We now establish a lower bound on $\sum_{j=1}^{m} \widetilde{f}_j ( \btheta_{j} ) $ for general $\bTheta$. By (\ref{eqn-clustered-1}) and (\ref{eqn-clustered-2}),
	\begin{align*}
	\widetilde{f}_j(\btheta) =  f_j (\btheta) , \qquad \forall \btheta \in B ( \widetilde{\btheta}_j, r) .
	\end{align*}
	By $\nabla^2 f_j (\btheta)  \succeq \rho \bI$ for $\btheta \in B(\btheta^{\star}_j, M)$, $M \geq r$, $\nabla f_j (\widetilde{\btheta}_j) = \bm{0}$ and \Cref{lem-cvx-lower},
	\begin{align*}
	& \widetilde{f}_j(\btheta) - \widetilde{f}_j(\widetilde{\btheta}_j) \geq \rho \cdot H ( \| \btheta - \widetilde{\btheta}_j \|_2 )
	\quad\text{where}\quad
	H(x) = \begin{cases}
	x^2 / 2 ,&  \mbox{if } 0 \leq x \leq r \\
	r(x - r/2) ,&  \mbox{if } x > r
	\end{cases}.
	\end{align*}
	We get a lower bound
	\begin{align}
 \sum_{j =1}^m \widetilde{f}_j(\btheta_{j}) \geq 
	\sum_{j=1}^{m} f_j(\widetilde{\btheta}_j) + \rho \sum_{j =1}^m H ( \| \btheta_{j} - \widetilde{\btheta}_j \|_2 ), \qquad \forall \bTheta .
	\label{eqn-clustered-5}
	\end{align}
	
	By the triangle's inequality and (\ref{eqn-clustered-2}),
	\begin{align*}
	\| \btheta_{j} - \widetilde{\btheta}_j \|_2& \geq \Big|
	\| \btheta_{j} - \btheta^{\star}_{j} \|_2 - \| \btheta^{\star}_{j} - \widetilde{\btheta}_j \|_2
	\Big| \geq (\| \btheta_{j} - \btheta^{\star}_{j} \|_2 - \| \btheta^{\star}_{j} - \widetilde{\btheta}_j \|_2
	)_{+} \\
	& \geq (\| \btheta_{j} - \btheta^{\star}_{j} \|_2 - \delta/\rho
	)_{+} .
	\end{align*}
	Since $H(\cdot)$ is increasing in $[0, +\infty)$, the proof is finished by combining \Cref{eqn-clustered-3} and \Cref{eqn-clustered-5}.
\end{proof}

\begin{lemma}\label{lem-frac}
	Let $\bA \in \RR^{d\times K} \backslash \{ \bm{0} \}$, $\{ \bz_{j} \}_{j=1}^m \subseteq \RR^K$ and $\{ w_j \}_{j=1}^m \subseteq [0,+\infty)$. Suppose that $\max_{j \in [m]} \| \bz_j \|_2 \leq \alpha_1$, $\sum_{j=1}^{m} w_j \bz_j \bz_j^{\top} \succeq ( \alpha_2^2 \sum_{j=1}^{m} w_j / K ) \bI_K$ for some $0 < \alpha_2 \leq \alpha_1$ and $\sum_{j=1}^{m} w_j > 0$. We have
	\[
	\sum_{ j \in [m]:~\| \bA \bz_j \|_2 > t \| \bA \|_2} w_j \geq \frac{\alpha_2^2 / K - t^2}{\alpha_1^2 - t^2} \sum_{j = 1}^m w_j , \qquad \forall t \in [0, \alpha_2 / \sqrt{K}] .
	\]
\end{lemma}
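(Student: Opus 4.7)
The plan is to compare two bounds on the weighted second moment $\Sigma := \sum_{j=1}^m w_j \|\bA \bz_j\|_2^2$, one from below using the spectral lower bound on $\sum_j w_j \bz_j \bz_j^\top$, the other from above by splitting the sum over the set $S = \{j : \|\bA \bz_j\|_2 > t\|\bA\|_2\}$ and its complement.

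For the lower bound, I would rewrite $\Sigma = \mathrm{tr}\bigl(\bA^\top \bA \sum_j w_j \bz_j \bz_j^\top\bigr)$ and use the PSD inequality $\sum_j w_j \bz_j \bz_j^\top \succeq (\alpha_2^2 W / K) \bI_K$, where $W := \sum_j w_j$. Since $\bA^\top \bA$ is PSD, this gives $\Sigma \geq (\alpha_2^2 W / K)\mathrm{tr}(\bA^\top \bA) \geq (\alpha_2^2 W/K)\|\bA\|_2^2$, where the last step uses $\mathrm{tr}(\bA^\top\bA) \geq \|\bA\|_2^2$ (the spectral norm is dominated by the Frobenius norm squared).

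For the upper bound, I would bound $\|\bA \bz_j\|_2^2$ by $\alpha_1^2 \|\bA\|_2^2$ on $S$ (using $\|\bz_j\|_2 \leq \alpha_1$) and by $t^2 \|\bA\|_2^2$ on $S^c$ (by definition), yielding $\Sigma \leq \|\bA\|_2^2[(\alpha_1^2 - t^2)W_S + t^2 W]$, where $W_S := \sum_{j \in S} w_j$. Combining the two bounds and dividing by $\|\bA\|_2^2 > 0$ gives $(\alpha_2^2/K - t^2) W \leq (\alpha_1^2 - t^2) W_S$, which rearranges to the claim since $t^2 \leq \alpha_2^2/K \leq \alpha_1^2$ makes the denominator $\alpha_1^2 - t^2 \geq 0$ (strictly positive unless $t = \alpha_1 = \alpha_2/\sqrt{K}$, a boundary case that can be handled by a limit or by noting $W_S \geq 0$ suffices there).

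There is no real obstacle here: the proof is a two-line inequality chain once the right quantity $\Sigma$ is identified. The only mild subtlety is the boundary case $t = \alpha_1$ (where $W_S \geq 0$ trivially works) and ensuring $\|\bA\|_2^2 \leq \mathrm{tr}(\bA^\top\bA)$ is applied correctly.
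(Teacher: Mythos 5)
Your proof is correct and follows essentially the same route as the paper: both compare the lower bound $\sum_j w_j\|\bA\bz_j\|_2^2 = \langle \bA^\top\bA, \sum_j w_j\bz_j\bz_j^\top\rangle \geq (\alpha_2^2 W/K)\|\bA\|_2^2$ with the upper bound obtained by splitting over $S$ and $S^c$, then rearrange. The handling of the boundary case and the use of $\mathrm{tr}(\bA^\top\bA)\geq\|\bA\|_2^2$ are both fine.
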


\begin{proof}[\bf Proof of \Cref{lem-frac}]
	Without loss of generality, assume that $\sum_{j=1}^{m} w_j = 1$. On the one hand,
	\[
	\sum_{j = 1}^m w_j \| \bA \bz_j \|_2^2 = \bigg\langle \bA^{\top} \bA , \sum_{j=1}^{m} w_j \bz_j \bz_j^{\top} \bigg\rangle \geq \frac{\alpha_2^2}{K} \| \bA \|_2^2.
	\]
	On the other hand, for $S = \{ j \in [m]:~\| \bA \bz_j \|_2 > t \| \bA \|_2 \}$ we have
	\begin{align*}
	\sum_{j = 1}^m w_j \| \bA \bz_j \|_2^2 & \leq \sum_{j \in S} w_j (\alpha_1 \| \bA \|_2)^2 + \sum_{j \in S^c} w_j (t \| \bA \|_2 )^2  \\
	& =\| \bA\|_2^2 \bigg[
	\alpha_1^2 \sum_{j \in S} w_j + \bigg( 1 - \sum_{j \in S} w_j \bigg) t^2
	\bigg].
	\end{align*}
	The claim directly follows from these estimates.
\end{proof}

{
\bibliographystyle{ims}
\bibliography{bib}
}

\end{document}